\documentclass{article} % For LaTeX2e
\usepackage{capt-of}
\usepackage{booktabs}
\usepackage{makecell}
\usepackage{cancel}
\usepackage{pifont}
\usepackage{enumitem}
\usepackage{wrapfig}
\usepackage{xspace}
\usepackage{float}
\usepackage{comment}
\usepackage{amsmath,amssymb,amsthm,mathtools,bm}
\usepackage{hyperref}
\usepackage{url}
\usepackage{amsmath,amssymb,amsfonts}
\usepackage{tikz}
\usepackage{pgfplots}
\usetikzlibrary{pgfplots.groupplots}
\pgfplotsset{compat=1.18}
\usepackage{caption}
\usepackage{booktabs}
\usepackage{multirow}
\usepackage{tabularx}
\usepackage{graphicx}
\usepackage{subcaption}
\usepackage{microtype}
\usepackage{subcaption}
\usepackage{hyperref}
\usepackage{amssymb}
\usepackage{mathtools}
\usepackage{amsthm}
\usepackage{aliascnt}
\usepackage{cancel}
\usepackage[capitalize,noabbrev]{cleveref}
\usepackage{algorithm,algorithmic}
\usepackage{graphicx}
\usepackage{subcaption}
\usepackage{cleveref}
\usepackage{xcolor}

\crefname{assumption}{Assumption}{Assumptions}

\newtheorem{assumption}{Assumption}
\newtheorem{definition}{Definition}
\newtheorem{proposition}{Proposition}
\newtheorem{corollary}{Corollary}
\newtheorem{theorem}{Theorem}
\newtheorem{lemma}{Lemma}

\newcommand{\vect}[1]{\mathbf{#1}}

\newcommand{\0}{\mat{0}}

\newcommand{\mat}[1]{\mathbf{#1}}

\newcommand{\z}{\vect{z}}
\newcommand{\X}{\mat{X}}

\newcommand{\C}{\mat{C}}

\newcommand{\y}{\vect{y}}

\newcommand{\I}{\mat{I}}

\newcommand{\tr}{\mathrm{Tr}}
\newcommand{\trace}{\tr}
\def\mSigma{{\mat{\Sigma}}}

\newcommand{\codeindent}{\hspace{1.1em}}

\newcommand{\dd}{\mathop{}\!\mathrm{d}}

\def\reals{\mathbb{R}}

\newcommand{\algmargin}{\hspace{-.9em}}

\newcommand{\explain}[1]{\left[\substack{#1}\right]}

\newcommand{\prn}[1]{\left({#1}\right)}

\newcommand{\biggprn}[1]{\bigg({#1}\bigg)}

\newcommand{\norm}[1]{\left\Vert{#1}\right\Vert}

\newcommand{\featnoise}{v_x} % kappa
\newcommand{\labelnoise}{v_z} % epsilon
\newcommand{\ratio}{\alpha} % gamma
\newcommand{\mnist}{\texttt{MNIST}\xspace}

\DeclareMathOperator*{\argmin}{argmin}
\makeatletter
\newenvironment{recall}[1][\proofname]{\par
\normalfont \topsep6\p@\@plus6\p@\relax
\trivlist
\item\relax
{\bfseries
Recall #1}%
{\bfseries\@addpunct{.}}\hspace\labelsep\ignorespaces
}
\makeatother

\usepackage{titletoc} % <-- this provides \startcontents and \printcontents
\usepackage{ICLR/iclr2027_conference,times}

\usepackage{amsmath,amsfonts,bm}

\def\eqref#1{equation~\ref{#1}}
\def\1{\bm{1}}

\def\mSigma{{\bm{\Sigma}}}

\DeclareMathAlphabet{\mathsfit}{\encodingdefault}{\sfdefault}{m}{sl}
\SetMathAlphabet{\mathsfit}{bold}{\encodingdefault}{\sfdefault}{bx}{n}

\newcommand{\E}{\mathbb{E}}

\usepackage{hyperref}
\usepackage{url}
\iclrfinalcopy

\title{Optimal Explicit and Implicit Regularization Strengths in High-Dimensional Continual Linear Regression}

\author{Gilad Karpel \\
Department of Data and Decisions Sciences, Technion\\
\texttt{karpelgilad@gmail.com} \\
\AND
Daniel Soudry  \\
Department of Electrical and Computer Engineering, Technion \\
\texttt{daniel.soudry@gmail.com}
}

\begin{document}

\maketitle

\begin{abstract}
We study continual linear regression in a high-dimensional regime, where a learner observes a sequence of tasks and updates its parameters sequentially. In this setting, we derive an explicit closed-form expression for the generalization loss of a model trained with gradient descent for a fixed number of steps per task, which holds for arbitrary teachers. We prove that the optimal fixed step size decays nearly inversely with the number of tasks, specifically as $\ln T / T$, for a single teacher and, more generally, i.i.d.~teachers. Under this scaling, we show that the resulting generalization loss is asymptotically equivalent to that achieved by optimally tuned L2 (isotropic) regularization.
Furthermore, we extend our analysis to generalized ridge regression with diagonal regularization matrices, obtaining a closed-form characterization of the generalization loss and identifying necessary and sufficient conditions under which the optimal regularizer is isotropic.
Finally, we identify a regime where the optimal step size remains finite even as the number of tasks tends to infinity, thus revealing a non-trivial tradeoff between task variability and implicit regularization, in contrast to the i.i.d.~teacher scaling. Our theoretical predictions are corroborated by experiments on both linear models and neural networks, providing practical guidance for the design of continual learning systems.
\end{abstract}

\section{Introduction}
In continual learning, the learner is expected to learn sequentially from data, without `forgetting' previous tasks. In recent years, driven by the remarkable success of artificial intelligence (AI) systems \citep{vaswani2017attention,saharia2022photorealistic,rombach2022high}, continual learning has attracted significant attention, as most modern models are static in nature and do not adapt to newly arriving data.
Unlike conventional machine learning models, continual learning involves learning from dynamic data distributions and faces a challenge known as catastrophic forgetting~\citep{mccloskey1989catastrophic,french1999catastrophic}, where adaptation to new distributions often leads to a substantial loss of performance on previously learned ones. 

Many continual learning works aimed to reduce forgetting by \emph{explicitly} penalizing changes in parameter space using per-parameter regularization terms \citep[e.g.,][]{kirkpatrick2017overcoming,zenke2017continual,aljundi2018memory,chaudhry2018riemannian}. Others \citep{lubana2022quadratic,smith2023closer} showed empirically that isotropic L2 regularization results in similar performance. Practically, it is also common to minimize the current task's \emph{unregularized} loss
with a gradient algorithm for a \emph{finite} number of steps 
(in contrast to theoretically learning to convergence).
This cheaper alternative \emph{implicitly} regularizes the model, even in stationary settings \citep{ali2019continuous}. 
To better understand catastrophic forgetting and how to mitigate it, most theoretical papers focus on a simplified setting, such as continual linear regression. This simplified yet analytically tractable setting captures essential features of sequential learning and has proven useful for analyzing diverse continual learning systems \citep{li2023fixed,hiratani2024disentangling,levinstein2026optimal,evron2025continual}.

Existing optimization-focused work \citep{evron2022catastrophic,evron2025continual,levinstein2026optimal,tsipory2026greedy} had to make a limiting assumption of joint realizability—i.e., that training data can be perfectly fitted by a single noiseless ‘teacher’ model. In contrast, we adopt a \emph{statistical} framework, focusing on the generalization loss. Such analysis allows adopting a closed-form characterization of the generalization loss that naturally accommodates multiple teachers and label noise, thereby better capturing real-world scenarios. Moreover, our framework yields \emph{exact} generalization expressions, whereas optimization-focused works only allow worst-case upper and lower bounds. Such expressions better align with empirical experiments, highlighting the explanatory role of this framework \citep{lin2023theory,karpel2026optimal}. Building on this framework, in this paper we investigate several remaining mysteries in continual learning regression. Our contributions are the following:
% In this paper, we extend this framework to examine several remaining mysteries in continual learning regression. 

% \begin{itemize}
%     \item What is the relation between the implicit regularization of using Gradient Descent (GD) for a finite number of steps, and fully optimizing the L2-penalized objective?
%     \item When can per-parameter regularization terms be useful, and when is isotropic regularization enough?
%     \item What is the optimal L2 regularization in settings when the teachers are not i.i.d., but correlated between tasks?
% \end{itemize}

% Specifically, in this setting,
\begin{itemize}
    \item We derive an exact closed-form generalization expression for early-stopped gradient descent that accommodates arbitrary teachers and label noise, and show that the optimal step decays as $\ln T/T$, for a single teacher and, more generally, i.i.d.~teachers. 
    \item We identify a teacher regime under which the optimal step size need not scale with $T$.
    \item We derive the exact generalization loss under task-dependent diagonal regularization in the single-teacher setting, and characterize when isotropic regularization remains optimal.
    \item We empirically validate our theoretical findings on continual linear models and neural networks, using both synthetic and \mnist-based data.
\end{itemize}

\vspace{-0.5em}

\section{Setting: regularized continual learning with implicit or explicit regularization}

\paragraph{Learning scheme.} We study the continual linear regression setting, which has been widely studied in recent years \citep{evron2022catastrophic,evron2025continual,zhao2024statistical,zhu2025global,levinstein2026optimal}.

In our setting the learner is exposed to $T$ sequential tasks, that is, $(\X_1,\y_1), (\X_2,\y_2), \ldots, (\X_T, \y_T)$, where each $\X_t\in \mathbb R^{n\times d}, \y_t\in \mathbb R^n$.
% \footnote{For simplicity we consider a fixed number of $n$ samples for all tasks, though our analysis can extend to varying samples.} 
During task $t$, the learner observes only the data $(\X_t,\y_t)$ associated with that task and updates its predictor accordingly, without having access to data from \emph{any} other task.
To be capable of learning sequentially, many continual learning algorithms incorporate a regularization towards the last iterate. A prominent line of work proposes using the Fisher information to weight this regularization \citep{kirkpatrick2017overcoming,benzing2022unifying}.
However, subsequent studies have observed that a much simpler isotropic regularization performs comparably well in practice \citep{Hsu2018Baselines,lubana2022quadratic}.
Following recent theoretical works \citep{levinstein2026optimal,zhao2025highdimensional} we focus on L2 (isotropic) regularization.
\vspace{-0.5em}
{
\begin{algorithm}[H]
  \caption{Continual linear regression with L2 regularization\label{schm:regularized}}
\begin{algorithmic}
   \STATE {
   \textbf{\algmargin\bfseries Input:} $\{(\X_i,\y_i)\}_{i=1}^T$, regularization strength $\lambda$
   } 
   \STATE {\algmargin Initialize} 
   $\mathbf{w}_0\in \mathbb R^d$
   \STATE { \algmargin
   \textbf{For each} iteration $t=1,\dots,T$: 
   }
   \STATE {
   $\quad \displaystyle
        \mathbf w_t \gets
        {\arg\min}_{\mathbf w}\Bigl\{\frac{1}{2n}\Vert\mathbf X_{t}\mathbf w-\mathbf y_{t}\Vert^2
        +\frac{\lambda}{2} \Vert\mathbf w-\mathbf w_{t-1}\Vert^2\Bigr\}$
    }
   \STATE {\algmargin
   \textbf{Output:} $\mathbf{w}_{T}$
   } 
\end{algorithmic}
\end{algorithm}
\vspace{-1em}
}
Employing explicit regularization may become \emph{impractical} in high-dimensional settings or over long task horizons, as it incurs a high computational cost (as we have to fully minimize the objective in \ref{schm:regularized}). For this reason, we also study a gradient algorithm run for a \emph{finite} number of steps, which naturally induces a form of implicit regularization \citep{ali2019continuous,sonthalia2024regularization}, while maintaining a relative small cost. In line with recent theoretical works \citep{ding2024understanding,levinstein2026optimal}, we will mainly focus on this scheme and its relation to the optimal regularized L2 (isotropic) solution.

\begin{algorithm}[H]
   \caption{Continual linear regression with finite step budgets (Early-stopped GD)}
   \label{schm:early_stopped}
\begin{algorithmic}
    \STATE { 
    \algmargin\bfseries Input:}
    $\{(\X_i,\y_i)\}_{i=1}^{T}$,
    inner step count $N$, and step size $\gamma$
   \STATE {\algmargin Initialize} 
   $\mathbf{w}_0\in \mathbb R^d$
   \STATE {\algmargin For each task $t = 1,\dots,T$:}
   \STATE {\algmargin \codeindent Initialize $\mathbf{w}^{(0)} \leftarrow \mathbf{w}_{t-1}$}
   \STATE {\algmargin \codeindent For $s = 1,\dots,N$:}
   \STATE {\algmargin \codeindent\codeindent
   $\mathbf{w}^{(s)} 
   \leftarrow 
   \mathbf{w}^{(s-1)} - \gamma 
   \nabla 
   {\frac{1}{2n}} \left\Vert \X_{t} \mathbf{w}^{(s-1)} - \y_{t}\right\Vert^2$
   }
   \STATE {\algmargin \codeindent $\mathbf{w}_t \leftarrow \mathbf{w}^{(N)}$}
   \STATE {\algmargin Output $\mathbf{w}_T$}
\end{algorithmic}
\end{algorithm}
\vspace{-1em}

A widely used alternative that has been popular in recent empirical and theoretical works
\citep{kirkpatrick2017overcoming,schwarz2018progress,zhao2024statistical}
considers continual learning through a quadratic regularization that penalizes
deviations from the previous iterate as
$\Vert\mathbf w - \mathbf w_{t-1}\Vert^2_{\mathbf M_t}=(\mathbf w - \mathbf w_{t-1})^\top \mathbf M_t (\mathbf w - \mathbf w_{t-1})$,
where $\mathbf M_t \succeq 0$ is a regularization matrix.
This formulation generalizes the L2 regularization in \cref{schm:regularized}, which is recovered by setting $\mathbf M_t = \lambda \mathbf I$, it also encompasses regularization schemes such as Fisher-information methods.
\vspace{-0.5em}

\begin{algorithm}[H]
  \caption{Generalized L2 continual learning scheme\label{schm:generalized}}
\begin{algorithmic}
   \STATE {\algmargin
   \textbf{Input:} $\{(\X_i,\y_i)\}_{i=1}^T$, regularization matrices $\{\mathbf M_t\}_{t=1}^{T}$
   }   
   \STATE {\algmargin Initialize} 
   $\mathbf{w}_0\in \mathbb R^d$
   \STATE { \algmargin
   \textbf{For each} iteration $t=1,\dots,T$: 
   }
   \STATE {\algmargin
   $\quad \displaystyle
        \mathbf w_t \gets
        {\arg\min}_{\mathbf w}\Bigl\{\frac{1}{2n}\Vert\mathbf X_{t}\mathbf w-\mathbf y_{t}\Vert^2
        +\frac{1}{2}\Vert\mathbf w-\mathbf w_{t-1}\Vert^2_{\mathbf M_t}\Bigr\}$
    }
   \STATE {\algmargin
   \textbf{Output:} $\mathbf{w}_{T}$
   } 
\end{algorithmic}
\end{algorithm}
\vspace{-1em}
In practice, due to memory and computational constraints, continual learning systems typically employ diagonal regularization matrices to selectively penalize deviations from previously learned predictors \citep{kirkpatrick2017overcoming,schwarz2018progress,aljundi2018memory,chaudhry2018riemannian}, rather than maintaining a full $d\times d$ regularization matrix. Indeed, a full matrix can, in principle, eliminate forgetting entirely; see \citet[Proposition 5.5]{evron2023classification}.
%, rendering the problem essentially trivial. 
% As a result, in our work we consider diagonal regularization matrices.
\section{Setup: high dimensional data}
\paragraph{Data model.}
\label{sec:statistical_setting}
In this paper, we consider a high-dimensional random design with independent feature matrices $\X_1,\ldots,\X_T$, where each $\X_t\in \mathbb R^{n \times d}$ has i.i.d.\ entries with zero mean, and variance $\featnoise>0$ with finite $4+\epsilon$ moment for some $\epsilon>0$, and an arbitrary initialization $\mathbf{w}_0\triangleq\mathbf w^\star_0$. Moreover, the problem dimensions grow to infinity at a proportional rate $ \alpha:=\frac{n}{d}\in (0,\infty)$.

For each task $t\in [1, T]\triangleq\left\{1,\dots,T\right\}$, labels are generated from a corresponding teacher vector $\mathbf w^\star_t\in\mathbb{R}^{d}$ using a noisy linear model. That is,
$$
\y_{t}=\X_{t}\mathbf w^\star_t+\z_{t}
,
$$
where $\z_t$ is a label noise vector, with
$\mathbb{E}[\z_t]=0$ and $\text{Cov}(\z_t)=\labelnoise \I$ for some noise variance $\labelnoise \geq 0$, sampled independently of all tasks.
To ensure a well-behaved high-dimensional limit, we assume that the teacher
correlations converge to finite deterministic limits and are uniformly
integrable, i.e.
$$\left(\mathbf w^\star_i-\mathbf w^\star_j\right)^\top\left(\mathbf w^\star_k-\mathbf w^\star_l\right) \xrightarrow[d\to\infty]{} {}r_{i,j,k,l} ,\qquad \forall
i,j,k,l\in[0,T]  .$$

% To facilitate the analysis, we adopt a standard assumption in theoretical continual learning in which all tasks share a single predictor, learned using a noisy linear model \cite{li2023fixed,zhao2025highdimensional}. This assumption remains more permissive than continual learning works which typically assume joint realizability across \emph{training} sets~\cite{levinstein2026optimal,tsipory2026greedy,goldfarb2024theJointEffect,evron2022catastrophic,jung2025convergence}.

\paragraph{Evaluation metrics.} Following recent theoretical work in continual learning \citep{evron2022catastrophic,levinstein2026optimal,evron2025continual}, we evaluate the model error using the mean-squared-error (MSE) loss, that is, $\frac{1}{n}\left\Vert\X_i\mathbf{w} - \y_i\right\Vert^2$, however, under our data model, averaging over independent test data and label noise, gives,
\begin{equation}
\label{eq:expected_loss}
\begin{aligned}
&\frac{1}{n}\mathbb{E}
\big\Vert\X_{i}\mathbf{w} - \y_{i}\big\Vert^2
= 
\frac{1}{n}\mathbb{E}\big\Vert\X_{i}(\mathbf{w}-\mathbf{w}_{i}^\star) - \z_{i}\big\Vert^2 
%\\&
= 
\featnoise  
\norm{\mathbf{w} - \mathbf{w}_{i}^{\star}}^{2}
+
\labelnoise.
\end{aligned}
\end{equation}
Thus, the dependence of the expected loss on the predictor $\mathbf{w}$ is entirely captured by the distance $\left\Vert\mathbf{w}-\mathbf{w}_i^\star\right\Vert$. Consequently, following recent works \citep{lin2023theory,zhao2024statistical,karpel2026optimal}, we measure performance through the distance between the predictor $\mathbf{w}$
and the corresponding teacher $\mathbf{w}_i^\star$.
This reduction allows us to formally define our metrics of interest.

\begin{definition}[Task loss]
\label{def:loss}
For any predictor $\mathbf{w} \in \reals^d$, the loss of the predictor in task $i\in[1, T]$ is defined as the distance between the predictor and the teacher of the associated task, that is,
$$\mathcal{L}_i(\mathbf{w})=\left\Vert \mathbf{w}-\mathbf w^\star_i\right\Vert ^{2}.$$
\end{definition}
\begin{definition}[Generalization error]
\label{def:gen}
At iteration $t$, the generalization error is defined as the average task loss of the predictor $\mathbf{w}_t \in \reals^d$ over all tasks observed up to time $t$, that is,
$$
G_t\triangleq\frac{1}{t}\sum_{i=1}^{t} \mathcal{L}_i(\mathbf{w}_t)=\frac{1}{t}\sum_{i=1}^{t}\left\Vert\mathbf{w}_t-\mathbf w^\star_i\right\Vert^{2}.$$
\end{definition}
\vspace{-1em}

% \paragraph{Model Implications.}
% High dimensionality enables the use of explicit closed-form expressions based on the Marchenko–Pastur law \cite{marchenko1967pastur} and recent refinements thereof \cite{silverstein1995empirical,liao2023rmtml,zhu2025global}.
\vspace{-1em}
\paragraph{Notations.} Throughout the paper, we use $\xrightarrow[]{P}$ for convergence in probability. 
We let $G^{\mathrm{ridge}}_T$, $G^{\mathrm{esgd}}_T$, and $G^{\mathrm{gen}}_T$ denote the generalization loss after $T$ iterations by Algorithms~\ref{schm:regularized}, \ref{schm:early_stopped}, and \ref{schm:generalized}, respectively.
% \paragraph{Marchenko-Pastur law} Consider $\X\in \reals^{n\times d}$ with i.i.d. entries having zero mean, variance $\featnoise$, and consider the sample covariance matrix $\mathbf W = \frac{1}{n}\X^\top\X$. The results of the Marchenko-Pastur law apply when the dimensions increase to infinity while the ratio converges to a constant, i.e., $n,d\to \infty$ and $\frac{n}{d}\to \ratio \in (0,\infty)$. Then, almost surely, the empirical spectral distribution of $\mathbf W$ converges weakly to the Marchenko--Pastur distribution with aspect ratio $\ratio$, whose density is given by
% \begin{align*}
% &f_{\ratio}(x)=\frac{\ratio}{2\pi x\featnoise}\sqrt{(\ratio_+-x)(x-\ratio_-)} \mathbf 1_{[\ratio_-,\ratio_+]}(x)
% \end{align*}
% with $\ratio_{\pm}=\featnoise\left(1\pm \sqrt\frac{1}{\ratio}\right)^2$. If $\ratio<1$, then an additional point mass of size $1-\ratio$ appears at zero.

\section{Early-stopped GD generalization loss and the optimal step-size}
\label{sec:1}
In this section, we derive an \emph{exact} closed-form expression for the generalization loss attained by the early-stopped GD algorithm (\cref{schm:early_stopped}). Through this section, we consider only the step sizes that ensure the convergence of the early-stopped GD algorithm, that is, $\gamma<\frac{2}{{\lambda_{\text{max}}}(\frac{1}{n}\X^\top\X)}$, for generic $\X$ following the statistical setting described in \cref{sec:statistical_setting}. Our key technical result is the following.
\begin{theorem}[Generalization loss]
\label{thm:closedloss}
Recall the high-dimensional setting where the number of samples and the dimension grow at proportional rate $\alpha:=\frac{n}{d}$. Moreover, recall that $\featnoise,\labelnoise$ are the variances of features and labels, respectively. Then, the generalization loss after $T$ tasks of the early-stopped GD algorithm (\cref{schm:early_stopped}) satisfies, as $n, d\to\infty$,
\vspace{-0.2em}
\[
\begin{aligned}
G_{T}^{\mathrm{esgd}}&\xrightarrow{P}\labelnoise c'\frac{1-a'^{T}}{1-a'}+\frac{1}{T}\sum_{i=1}^{T}\left\Vert \mathbf{w}_{T}^{\star}-\mathbf{w}_{i}^{\star}\right\Vert ^{2}+2\big(\mathbf{w}_{T}^{\star}-\bar{\mathbf{w}}^{\star}\big)^{\top}\sum_{i=1}^{T}b'^{T-i+1}\big(\mathbf{w}_{i-1}^{\star}-\mathbf{w}_{i}^{\star}\big)\\&\quad+\sum_{i=1}^{T}\sum_{i'=1}^{T}a'^{T-\max(i,i')+1}b'^{|i-i'|}\big(\mathbf{w}_{i-1}^{\star}-\mathbf{w}_{i}^{\star}\big)^{\top}\big(\mathbf{w}_{i'-1}^{\star}-\mathbf{w}_{i'}^{\star}\big),
\end{aligned}
\]
where we denote $\mathbf w^\star_0=\mathbf{w}_0$ and $\bar {\mathbf w}^\star=\frac{1}{T}\sum_i \mathbf w^\star_i$,

\begin{equation}
\begin{aligned}
&a'=1-2N\featnoise\gamma
+O\big((N\gamma)^{2}\big),\qquad
b'=1-N\featnoise\gamma
+O\big((N\gamma)^{2}\big),
\\&\hspace{10em}
c'
=
\frac{N^{2}}{\ratio}\featnoise\gamma^{2}
+O\big((N\gamma)^{3}\big).
\end{aligned}
\end{equation}

\end{theorem}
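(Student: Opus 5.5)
We will write the per-task update in closed form and reduce the generalization loss to traces of products of independent random matrices, which in the proportional limit become scalar moments of the Marchenko--Pastur law. Let $\mathbf W_t=\frac1n\X_t^\top\X_t$ and $\mathbf P_t=(\I-\gamma\mathbf W_t)^N$. Running $N$ GD steps on task $t$ gives
\[
\mathbf w_t-\mathbf w_t^\star=\mathbf P_t(\mathbf w_{t-1}-\mathbf w_t^\star)+\mathbf Q_t\tfrac1n\X_t^\top\z_t,\qquad \mathbf Q_t=\gamma\textstyle\sum_{s=0}^{N-1}(\I-\gamma\mathbf W_t)^s .
\]
Writing $\mathbf e_t=\mathbf w_t-\mathbf w_t^\star$ and $\bm\delta_t=\mathbf w_{t-1}^\star-\mathbf w_t^\star$, this is the linear recursion $\mathbf e_t=\mathbf P_t(\mathbf e_{t-1}+\bm\delta_t)+\bm\xi_t$ with $\mathbf e_0=0$. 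Unrolling,
\[
\mathbf e_T=\sum_{i=1}^T\mathbf P_T\cdots\mathbf P_i\,\bm\delta_i+\sum_{i=1}^T\mathbf P_T\cdots\mathbf P_{i+1}\bm\xi_i .
\]
Then we decompose $G_T=\frac1T\sum_i\|\mathbf e_T+\mathbf w_T^\star-\mathbf w_i^\star\|^2=\|\mathbf e_T\|^2+2\mathbf e_T^\top(\mathbf w_T^\star-\bar{\mathbf w}^\star)+\frac1T\sum_i\|\mathbf w_T^\star-\mathbf w_i^\star\|^2$. The last term is deterministic and matches the statement.

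The key steps, in order, are as follows.

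\emph{Step 1: noise cross terms.} Since the $\z_i$ are independent and centered, they do not interact with the $\bm\delta$ terms or with each other. A standard quadratic-form concentration argument shows these cross terms vanish in probability.

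\emph{Step 2: linear term.} For the linear term $\mathbf u^\top\mathbf P_T\cdots\mathbf P_i\bm\delta_i$ with deterministic $\mathbf u,\bm\delta_i$, we use rotational-type deterministic equivalents. For independent Wishart-like matrices, $\mathbf P_T\cdots\mathbf P_i$ has deterministic equivalent $\prod_j m_1\I$, where $m_1=\int(1-\gamma x)^N d\mu_{MP}(x)$. This gives $b'=m_1=1-N\gamma\featnoise+O((N\gamma)^2)$, using the first moment $\int x\,d\mu=\featnoise$.

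\emph{Step 3: quadratic term.} For $\bm\delta_i^\top\mathbf P_i^\top\cdots\mathbf P_T^\top\mathbf P_T\cdots\mathbf P_{i'}\bm\delta_{i'}$ with $i\le i'$, we peel matrices from the outside in. Each shared layer $t\ge i'$ contributes a second-moment factor $a'=\int(1-\gamma x)^{2N}d\mu=1-2N\gamma\featnoise+O((N\gamma)^2)$. Concretely, we condition on $\mathbf W_T$, use independence, and show that $\mathbf P_t^\top\mathbf A\mathbf P_t\approx a'\,\mathbf A$ in the relevant trace/quadratic-form sense whenever $\mathbf A$ is independent of $\mathbf P_t$ and has bounded norm. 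The unshared layers $i\le t<i'$ each contribute a first moment $b'$. The count is $a'^{T-i'+1}b'^{i'-i}$, matching $a'^{T-\max+1}b'^{|i-i'|}$.

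\emph{Step 4: noise term.} The noise term $\sum_i\E\|\mathbf P_T\cdots\mathbf P_{i+1}\bm\xi_i\|^2$ equals $\labelnoise\frac1n\tr(\cdots)$. By the same peeling it gives $\sum_i a'^{T-i}c'$, with $c'=\frac{\labelnoise^{-1}}{d}\E\|\bm\xi\|^2$ normalized as $\frac{d}{n}\int x\,q(x)^2d\mu$. Here $q(x)=\gamma\sum_s(1-\gamma x)^s\approx N\gamma$, so $c'\approx N^2\gamma^2\featnoise/\alpha$. Summing the geometric series yields $c'\frac{1-a'^T}{1-a'}$.

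The main obstacle is Step 3: justifying rigorously that $\mathbf P_t^\top\mathbf A\mathbf P_t$ acts as multiplication by a scalar on quadratic forms $\mathbf v^\top(\cdot)\mathbf v'$ built from the other independent layers. This is needed even though $\mathbf P_t$ is not Haar-rotated, since the entries are only i.i.d. with $4+\epsilon$ moments.

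I would first handle the Gaussian case, where orthogonal invariance makes the peeling exact in expectation. I would then pass to general entries via anisotropic local-law/deterministic-equivalent results for polynomials in $\mathbf W_t$, valid because $(1-\gamma x)^N$ is a fixed polynomial. Concentration would upgrade expectations to convergence in probability. The uniform integrability assumption on the teacher correlations $r_{i,j,k,l}$ guarantees that the deterministic vectors $\bm\delta_i$ and $\mathbf w_T^\star-\bar{\mathbf w}^\star$ have convergent inner products, so the finite sum over $i,i'$ (with $T$ fixed) passes to the limit.

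Finally, the stated $O(\cdot)$ expansions of $a',b',c'$ follow from Taylor-expanding the polynomials in $\gamma x$ and using $\int x\,d\mu_{MP}=\featnoise$.
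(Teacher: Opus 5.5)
Your overall plan matches the paper's proof. You unroll the per-task update (the paper's Lemma~\ref{lem:1}), discard the noise cross terms, and peel off one independent task matrix at a time, so that shared layers contribute $a'=\int(1-\gamma x)^{2N}\dd\mu$ and unshared ones contribute $b'=\int(1-\gamma x)^{N}\dd\mu$. You then read off $c'$ as an MP moment. Your split $G_T=\|\mathbf e_T\|^2+2\mathbf e_T^\top(\mathbf w_T^\star-\bar{\mathbf w}^\star)+\frac1T\sum_i\|\mathbf w_T^\star-\mathbf w_i^\star\|^2$ is only tidier bookkeeping than the paper's per-$i$ expansion.

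\textbf{The gap.} The peeling lemma you single out as the main obstacle is false as stated. The claim ``$\mathbf P_t^\top\mathbf A\mathbf P_t\approx a'\mathbf A$ whenever $\mathbf A$ is independent of $\mathbf P_t$ with bounded norm'' fails for the deterministic $\mathbf A=\mathbf v\mathbf v^\top$ with $\|\mathbf v\|=1$:
\begin{itemize}
\item the left side gives $\mathbf v^\top\mathbf P_t\mathbf A\mathbf P_t\mathbf v=(\mathbf v^\top\mathbf P_t\mathbf v)^2\to b'^2$;
\item the right side gives $a'\,\mathbf v^\top\mathbf A\mathbf v=a'$;
\item $b'^2<a'$ strictly, because $(1-\gamma x)^N$ is not constant on the MP support.
\end{itemize}
The correct equivalent has the two-term form $\mathbf P_t\mathbf A\mathbf P_t\asymp b'^2\mathbf A+(a'-b'^2)\tfrac1d\mathrm{Tr}(\mathbf A)\,\I$ (the polynomial analogue of Theorem~\ref{thm:zhao_resolvent}). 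This reduces to $a'\mathbf A$ only when $\mathbf A$ is itself asymptotically scalar. Your Gaussian route has the same problem: Haar averaging of $\mathbf P_t\mathbf A\mathbf P_t$ gives a combination of $\mathbf A$ and $\tfrac1d\mathrm{Tr}(\mathbf A)\I$, not $a'\mathbf A$. So the peeling is ``exact in expectation'' only once you already know $\E\mathbf A\propto\I$.

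\textbf{The repair.} Peel in the opposite order, as the paper does. Condition on $\mathbf W_1,\dots,\mathbf W_{T-1}$ (not on $\mathbf W_T$) and remove the most recent factor first. In every term it appears as $\mathbf P_T^2$ (or as $\mathbf P_T$ in the linear term), flanked by vectors such as $\mathbf P_{T-1}\cdots\mathbf P_i\bm\delta_i$, $\mathbf P_{T-1}\cdots\mathbf P_{i'}\bm\delta_{i'}$, or $\mathbf P_{T-1}\cdots\mathbf P_{j+1}\bm\xi_j$. These are independent of $\mathbf W_T$ and have bounded norm. You then only need the case $\mathbf A=\I$: $\mathbf a^\top f(\mathbf W)\mathbf b\to(\int f\dd\mu)\,\mathbf a^\top\mathbf b$ for a fixed polynomial $f$ and independent bounded $\mathbf a,\mathbf b$. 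This is the paper's Lemma~\ref{lem:fix}, proved from the isotropic resolvent law, Vitali's theorem and a contour integral, with no Gaussian comparison. Iterating it gives the factors $a'^{T-\max(i,i')+1}b'^{|i-i'|}$, $b'^{T-i+1}$ and $a'^{T-j}$ directly.

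A minor slip in Step~4: $\E\|\bm\xi\|^2$ is already $O(1)$, so $c'=\labelnoise^{-1}\E\|\bm\xi\|^2\to\frac1\alpha\int x\,q(x)^2\dd\mu$, without the extra factor $1/d$.
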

The proof of \cref{thm:closedloss}, along with the remaining proofs in this section, is provided in Appendix~\ref{pr:sec1}, where we also give the full expressions for $a'$, $b'$, and $c'$ (Eqs.
(\ref{eq:13}), (\ref{eq:35}), (\ref{eq:36}), respectively). To aid interpretation, we include a numerical validation of the predictions of \cref{thm:closedloss} in Appendix \ref{sec:val1}.

Thus far, the exact generalization prediction in \cref{thm:closedloss} has accommodated \emph{arbitrary} teacher setup. To obtain more meaningful results, following prior works \citep{zhu2025global,karpel2026optimal}, we now consider teachers that are sampled i.i.d. from a common distribution.

\begin{assumption}[i.i.d.~teachers]
\label{assump:iid_teachers}
Considering the statistical setting of \cref{sec:statistical_setting}, the teachers
$\mathbf{w}_{1}^{\star},\dots,\mathbf{w}_{T}^{\star}$ are drawn i.i.d. from a distribution with a mean $\mathbf{w}^{\star}$ and covariance $\mathbf{\Sigma}$ where $\Vert\mathbf{\Sigma}\Vert_*\leq C<\infty$, independently of all other random variables (features and labels).
\end{assumption}

This assumption yet remains more permissive than theoretical works in continual learning which typically assume a single teacher \citep{li2023fixed,zhao2025highdimensional} or joint realizability across \emph{training} sets \citep{evron2022catastrophic,evron2025continual,levinstein2026optimal,tsipory2026greedy}.

A theory-guided selection of the step size $\gamma$
and the number of gradient steps $N$ is crucial in settings where exhaustive tuning is impractical or infeasible, as in continual learning. This motivates the need for principled tuning rules. Such a rule follows from our next result, which we prove in Appendix \ref{pr:opt}, by characterizing the optimal parameters that minimize the generalization loss of \cref{thm:closedloss}.
\begin{theorem}[Optimal step size]
\label{thm:opt}
Under the i.i.d.~teacher setting (\cref{assump:iid_teachers}) with a nontrivial mean teacher $\mathbf w^\star\neq\mathbf{w}_0$\footnote{Having $\mathbf w^\star=\mathbf{w}_0$ would directly imply optimal step size $\gamma^\star\to0$.}, as $n,d\to\infty$ the optimal fixed step size $\gamma^\star$ that minimizes the generalization loss of the early-stopped GD algorithm (\cref{schm:early_stopped}) after $T$ tasks satisfies $\gamma^\star\asymp \frac{\ln T}{NT}$. Specifically, for all sufficiently large $T>T_0=\Theta\big(\frac{1}{\epsilon^2\ln^2(1/\epsilon)}\big)$\footnote{We take the limits sequentially: first we take $n,d\to\infty$, and only then we let $T$ grow.},
the optimal parameters satisfy
\begin{align*}   
\frac{1}{1+\epsilon}\frac{\ln\left(T\right)+\ln\left(\Gamma\right)}{2N\featnoise T}
<\gamma^\star
<\frac{1}{1-\epsilon}\frac{\ln\left(T\right)+\ln\left(\Gamma\right)}{2N\featnoise T},
\end{align*}
where $\Gamma=\frac{4\ratio\featnoise\left\Vert \mathbf{w}^{\star}-\mathbf{w}_0\right\Vert^{2}}{v_{z}+\featnoise\mathrm{Tr}(\boldsymbol{\Sigma})\left(\ratio+1\right)}$. In the degenerate noiseless case of $\mathrm{Tr}(\boldsymbol\Sigma)=\labelnoise=0$, the optimal choice is any $\gamma^\star\in \big(0,2\featnoise^{-1}\left(1+{\alpha}^{-1/2}\right)^{-2}\big)$ and $N\to\infty$.

Moreover, if $\gamma\asymp \frac{\ln T}{NT}$ the resulting generalization loss of the early-stopped GD matches that of optimally tuned L2 regularization up to $O(1/T)$ factor.
\end{theorem}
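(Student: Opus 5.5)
Starting from \cref{thm:closedloss}, the plan is to take expectations over the i.i.d.\ teachers of \cref{assump:iid_teachers}. By independence and concentration of the quadratic forms in the high-dimensional limit, each inner product becomes deterministic. Write $\mathbf{d}_i=\mathbf{w}^\star_{i-1}-\mathbf{w}^\star_i$. Then $\mathbf{d}_1=\mathbf{w}_0-\mathbf{w}_1^\star$ has squared norm $\Vert\mathbf{w}^\star-\mathbf{w}_0\Vert^2+\mathrm{Tr}(\boldsymbol\Sigma)$. For $i\ge2$ we have $\Vert\mathbf{d}_i\Vert^2=2\mathrm{Tr}(\boldsymbol\Sigma)$, and $\mathbf{d}_i^\top\mathbf{d}_{i+1}=-\mathrm{Tr}(\boldsymbol\Sigma)$; all other cross terms vanish, except that $\mathbf{d}_1^\top\mathbf{d}_2=-\mathrm{Tr}(\boldsymbol\Sigma)$. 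The cross term with $\mathbf{w}_T^\star-\bar{\mathbf{w}}^\star$ likewise reduces to $O(\mathrm{Tr}(\boldsymbol\Sigma))$ coefficients times powers of $b'$. The term $\frac1T\sum_i\Vert\mathbf{w}^\star_T-\mathbf{w}^\star_i\Vert^2$ equals $2\mathrm{Tr}(\boldsymbol\Sigma)(1-1/T)$ and does not depend on $\gamma$.

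Summing the geometric series gives an explicit deterministic function of $a',b',c'$:
\[
G_T\approx \mathrm{const}+\labelnoise c'\frac{1-a'^T}{1-a'}+a'^{T}\Vert\mathbf{w}^\star-\mathbf{w}_0\Vert^2+\mathrm{Tr}(\boldsymbol\Sigma)\,\Phi(a',b')+(\text{terms of order } b'^{T}),
\]
where $\Phi$ collects the banded double sum. Its leading part is $\sum_{i}a'^{T-i+1}\big(2-2b'\big)\approx \frac{2(1-b')}{1-a'}$. Since $1-b'\approx\frac12(1-a')$, this quantity is $\Theta(1)$ and combines with $c'$ to give the variance-type term.

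Next, substitute $x:=N\featnoise\gamma$, which will be small, into the expansions of \cref{thm:closedloss}. This gives $a'\approx 1-2x$ and $a'^T\approx e^{-2xT}$, while $c'/(1-a')\approx \frac{N^2\featnoise\gamma^2}{2\ratio N\featnoise\gamma}=\frac{x}{2\ratio\featnoise}$. Up to higher-order terms the objective then becomes
\[
F(x)=e^{-2xT}\Vert\mathbf{w}^\star-\mathbf{w}_0\Vert^2+\frac{x}{2\ratio\featnoise}\big(\labelnoise+\featnoise\mathrm{Tr}(\boldsymbol\Sigma)(\ratio+1)\big)+\mathrm{const}.
\]
The factor $\featnoise\mathrm{Tr}(\boldsymbol\Sigma)(\ratio+1)$ is my guess for how the teacher-variance contributions, coming from $\Phi$ and the $\mathrm{Tr}(\boldsymbol\Sigma)$ part of the label-like noise, assemble so as to reproduce $\Gamma$. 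Verifying this constant is a bookkeeping step. Setting $F'(x)=0$ gives $e^{-2xT}=\frac{1}{4\ratio\featnoise T\Vert\mathbf{w}^\star-\mathbf{w}_0\Vert^2}(\labelnoise+\dots)$, so $2xT=\ln(T\Gamma)$ and $\gamma^\star=\frac{\ln T+\ln\Gamma}{2N\featnoise T}$.

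The main obstacle is making this rigorous with the $(1\pm\epsilon)$ window. The plan is to show that $F$ has a unique minimizer, is decreasing for $x<\frac{\ln(T\Gamma)}{(1+\epsilon)2T}$ and increasing above $\frac{\ln(T\Gamma)}{(1-\epsilon)2T}$, and that the true objective, including the $O(x^2)$ corrections in $a',b',c'$ and the $b'^T$ boundary terms, is dominated by these derivative gaps. The decisive check is at the edge of the window. There the exponential term's derivative differs from the linear term's by a factor $(T\Gamma)^{\pm\epsilon/(1\pm\epsilon)}-1\approx\epsilon\ln T$, while the corrections are of relative size $O(x)=O(\ln T/T)$ and $O(T^{-c})$. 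Requiring $\epsilon\ln T\gtrsim$ correction size, with the square-root-type fluctuation of the truncated geometric sums, should yield $T_0=\Theta(1/(\epsilon^2\ln^2(1/\epsilon)))$. Separately, it must be ruled out that large $x$ (outside the small-$x$ regime, but still below the stability threshold) is better. That holds because the variance term is then $\Theta(1)$, whereas $F$ at $x^\star$ is $\Theta(\ln T/T)$.

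In the noiseless case $\mathrm{Tr}(\boldsymbol\Sigma)=\labelnoise=0$, only $a'^T\Vert\mathbf{w}^\star-\mathbf{w}_0\Vert^2$ survives. Any stable $\gamma$ with $N\to\infty$ drives $a'\to0$, which gives the stated range of optimal step sizes. Finally, for the comparison with ridge, the same computation will be repeated for \cref{schm:regularized}, where the analogous constants are $a=\mathbb{E}$ of the squared resolvent shrinkage factor, and so on. Ridge yields a function of the same form $e^{-2\lambda' T}\Vert\cdot\Vert^2+\lambda'\cdot(\text{same constant})$ at small effective strength. Both optima are therefore $\frac{\ln(T\Gamma)+1}{2T}\cdot(\text{constant})+\mathrm{const}$, and they agree up to $O(1/T)$.
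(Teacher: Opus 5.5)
Your route is the paper's: the i.i.d.\ closed form (\cref{cor:closed}), a small-$N\gamma$ expansion, an exponential-versus-linear stationarity condition in $q=2xT$, the $(1\pm\epsilon)$ window obtained by evaluating at $q^\star/(1\pm\epsilon)$ against a $T^{-1/2}$ correction (which is what produces $T_0$), and separate arguments for large $\gamma$, the noiseless case, and ridge. Your $F$ is in fact correct, including the constant you guessed. The gap is that this constant, which is the entire quantitative content of $\Gamma$, cannot come out of the step you describe. With the expansions stated in \cref{thm:closedloss}, $a'=1-2x+O(x^2)$ and $b'=1-x+O(x^2)$, you only get $2\mathrm{Tr}(\boldsymbol\Sigma)\frac{1-b'}{1-a'}=\mathrm{Tr}(\boldsymbol\Sigma)(1+O(x))$, with the $O(x)$ coefficient undetermined. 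That slope is exactly the $\mathrm{Tr}(\boldsymbol\Sigma)$ part of $\Gamma$, and it is a second-order effect. You need the $\gamma^2$ coefficients $N(2N-1)$ and $N(N-1)/2$, multiplied by the second Marchenko--Pastur moment $\featnoise^2(1+1/\ratio)$ (\cref{eq:33}). They give $2\frac{1-b'}{1-a'}=1+\frac{1}{2}(1+\frac{1}{\ratio})x+O(x^2)$, and the $N$-dependence cancels. Together with $\labelnoise\frac{c'}{1-a'}=\frac{\labelnoise x}{2\ratio\featnoise}(1+O(x))$, this yields the slope $\frac{\labelnoise+\featnoise\mathrm{Tr}(\boldsymbol\Sigma)(\ratio+1)}{2\ratio\featnoise}$. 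Your ridge comparison depends on the same computation. Ridge's second-order coefficients are $3$ for $a$ and $1$ for $b$, in units of $\featnoise^2(1+1/\ratio)/\lambda^2$. These differ from those of $a'$ and $b'$, which are $2-1/N$ and $(N-1)/(2N)$. Only the slope of $2\frac{1-b}{1-a}$ coincides, so ``the same form'' must be verified at second order.

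Your bookkeeping must also be exact at order $x$, and as written it is not. The cross term equals $2\mathrm{Tr}(\boldsymbol\Sigma)(b'^T/T-b')$. Its $i=T$ piece, $-2\mathrm{Tr}(\boldsymbol\Sigma)b'$, is order one and depends on $x$, so it is not a ``$b'^T$'' term. The banded sum carries $\sum_{j=1}^{T-1}a'^j=\frac{a'(1-a'^{T-1})}{1-a'}$, not $\frac{1}{1-a'}$. Each of these two simplifications moves the slope by $2\mathrm{Tr}(\boldsymbol\Sigma)$, in opposite directions, so they cancel only jointly. Once $\frac{1}{T}\sum_i\Vert\mathbf w_T^\star-\mathbf w_i^\star\Vert^2$ is included, the exact combination is $2\mathrm{Tr}(\boldsymbol\Sigma)(1-b')\frac{1-a'^T}{1-a'}$ plus $a'^T$ and $1/T$ terms. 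There are also three smaller issues.
\begin{itemize}
\item The $T^{-1/2}$ correction is not a ``fluctuation'', since everything is deterministic once $n,d\to\infty$. It is the $\frac{\mathrm{Tr}(\boldsymbol\Sigma)}{T}b'^T$ term, with $b'^T\approx e^{-xT}=(T\Gamma)^{-1/2}$ at the optimum.
\item $G_T$ at $x^\star$ is $\mathrm{Tr}(\boldsymbol\Sigma)+\Theta(\ln T/T)$, not $\Theta(\ln T/T)$. Ruling out $\gamma=\Theta(1)$ therefore means showing that $\labelnoise\frac{c'}{1-a'}+\mathrm{Tr}(\boldsymbol\Sigma)\big(\frac{2(1-b')}{1-a'}-1\big)$ stays bounded away from $0$. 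The second term is not obviously nonnegative, and it is the only term when $\labelnoise=0$. You need $a'\ge b'^2$ (Jensen on the MP moments), which gives $\frac{2(1-b')}{1-a'}-1\ge\frac{(1-b')^2}{1-a'}>0$; this is the inequality the paper uses.
\item In the noiseless case $a'\to(1-\ratio)_+$, not $0$, because of the zero eigenvalues when $\ratio<1$. The argument should be that $a'\ge(1-\ratio)_+$ for every $(\gamma,N)$, and that this bound is approached as $N\to\infty$.
\end{itemize}
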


\paragraph{Experiments on Synthetic Regression Data.}
We empirically validate the theoretical predictions of \cref{thm:opt} using finite dimensional data. For each task $i\in[1,T]$ we sample a teacher \linebreak $\mathbf{w}_i^\star\sim \mathcal N(\mathbf w^\star, \frac{1}{d}\I)$. We use $d=80$ and $n=50$ as the dimensions, and we sample $X_{i,j}\sim \mathcal N(0, 1)$. We fix the number of gradient steps as $N=20$.
\begin{figure}[H]
    \centering
    \begin{subfigure}{.49\columnwidth}
        \centering
        \includegraphics[width=.88\linewidth]{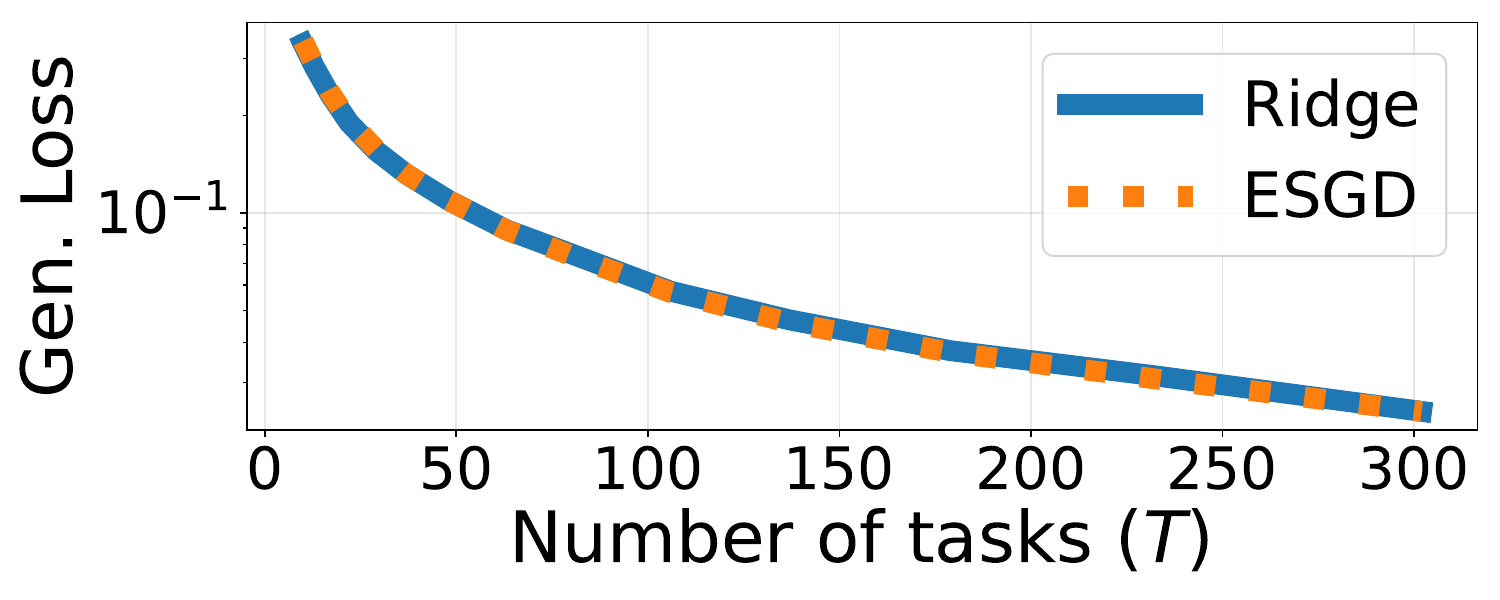}
        \caption{Under optimal tuning, L2 regularization and early-stopped GD coincide as predicted by \cref{thm:opt}.}
    \end{subfigure}
    \hfill
    \begin{subfigure}{.49\columnwidth}
        \centering
        \includegraphics[width=.88\linewidth]{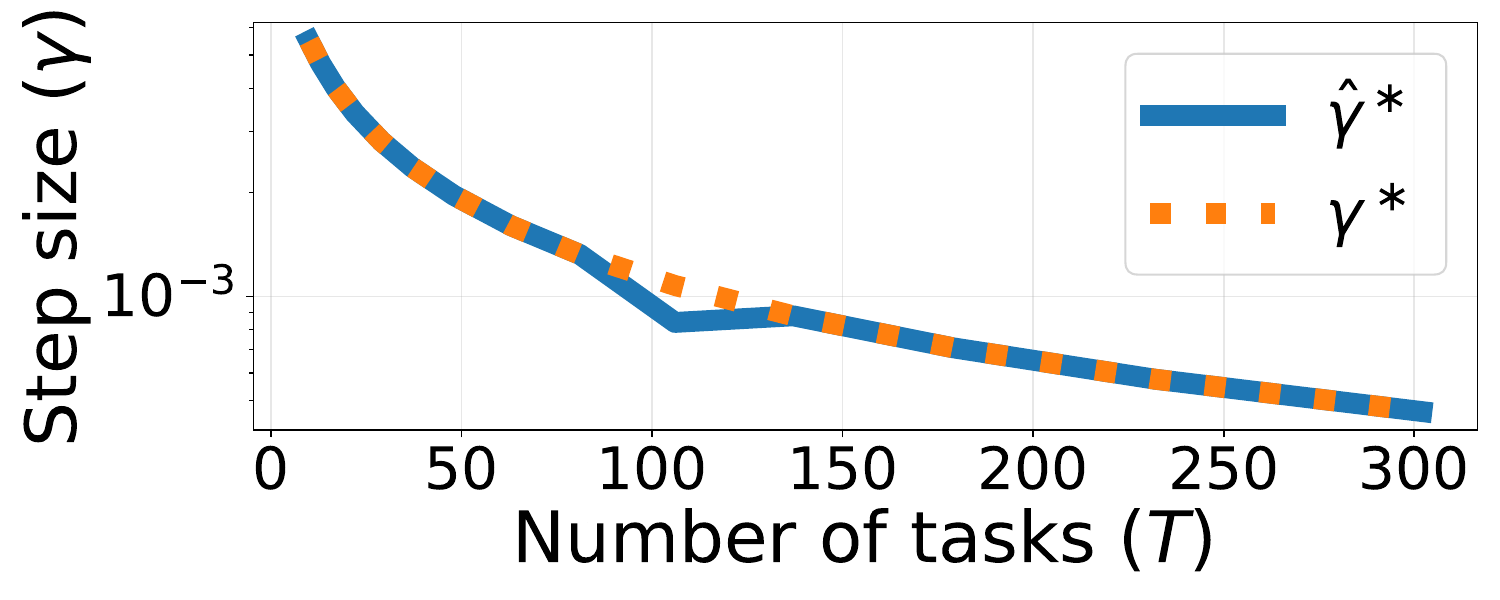}
        \caption{Optimal step size strength decays as predicted by \cref{thm:opt}.}
    \end{subfigure}
    \caption{The plots provide empirical evidence for the \cref{thm:opt}. Here, we set $\labelnoise=1$; $\hat\gamma^\star$ denotes the empirically optimal value, while $\gamma^\star$ is the theoretical prediction by \cref{thm:opt}. We compute the empirical optimal $\hat \gamma^\star$, $\hat \lambda^\star$ with an exhaustive search. Empirical curves are averaged over 50 runs.}
\end{figure}
Up to this point, \cref{thm:opt} yields a practical rule of thumb for choosing the number of gradient steps $N$ and the step size $\gamma$ in which we find that $\gamma^\star \asymp\frac{\ln T}{NT}$. In practice, this allows one to tune the parameter on tasks with smaller task horizons to identify this scaling, and then extrapolate it to larger task horizons, for which direct tuning would be computationally infeasible. For example, for $T = 1000$, one can tune at $T = 10$ and then set $\gamma(1000)=\frac{10\ln(1000)}{1000\ln(10)}\hat\gamma^\star(10)\approx 0.03\hat\gamma^\star(10)$. Moreover, under this choice, the resulting generalization loss remains close to the one attained with optimally tuned L2 regularization while achieving substantially lower computational cost. This result quantifies the close connection between early-stopped GD and optimally tuned isotropic regularization, which has been shown to perform comparably well to standard baselines in continual learning \citep{Hsu2018Baselines,smith2023closer}. At the same time, many continual learning methods introduce more expressive diagonal-based regularization terms to selectively penalize deviations from previously learned predictors \citep{kirkpatrick2017overcoming,schwarz2018progress,aljundi2018memory}, this naturally raises the question of when such simple isotropic regularization is sufficient. To capture this class of methods, we next study ridge regression with a diagonal regularization matrix.

\section{Explicit diagonal regularization: when is it better?}

\label{sec:2}
In this section, we consider the generalized ridge regression algorithm (\cref{schm:generalized}) with arbitrary positive-definite diagonal regularization matrices $\{\mathbf M_t\}_{t=1}^T$ (spectrally bounded\footnote{i.e., there exist constants $0<l\le u$ independent of $n,d$ such that $l\mathbf I\preceq \mathbf M_t\preceq u\mathbf I$ for all $1\leq t\leq T$.}). Importantly, this formulation already captures the core structure of the algorithms that are widely used in practice \citep{kirkpatrick2017overcoming,zenke2017continual,chaudhry2018riemannian}. Our goal is to derive an \emph{exact} generalization analysis, under the following technical assumption. 
\begin{assumption}[Realizability]
\label{assum:1}
Considering the statistical setting of \cref{sec:statistical_setting}, we assume a single teacher across all tasks, that is, $$\mathbf w^\star_1=\mathbf w^\star_2=\ldots=\mathbf w^\star_T=\mathbf w^\star,$$ drawn independently of all random variables (features and labels).
\end{assumption}
The single-teacher assumption is standard in prior theoretical work \citep{li2023fixed,ding2024understanding,zhao2025highdimensional}. In our setting, it is primarily adopted for analytical simplicity rather than as a restriction, and our framework naturally extends to general (task-varying) teachers.
This leads to the following closed-form characterization of the generalization loss.

\begin{theorem}[Generalization loss]
\label{thm:4}
Under \cref{assum:1}, the generalization loss after T tasks of the generalized ridge algorithm (\cref{schm:generalized}), specialized to a fixed regularization matrix $\mathbf M$, satisfies, as $n,d\to\infty$,  
\begin{align*}
&G_{T}^{\mathrm{gen}}\xrightarrow{P}\left(\mathbf{w}_{0}-\mathbf{w}^{\star}\right)^{\top}\left(\bar{\mathbf{Q}}^{2T}+\featnoise\sum_{k=1}^{T}\bar{\mathbf{Q}}^{2k}\tilde{\mu}\tilde{\rho}_{k}\right)\left(\mathbf{w}_{0}-\mathbf{w}^{\star}\right)\\&+\frac{\labelnoise}{n}\mathrm{Tr}\Bigg[\sum_{k=1}^{T}\left[\mathbf{I}-\bar{\mathbf{Q}}\left[\mathbf{I}+\frac{\featnoise^{2}}{n}\mathrm{Tr}\left[\mathbf{M}^{-1}\bar{\mathbf{Q}}^{2}\right]\tilde{\mu}\mathbf{M}^{-1}\right]\right]\left[\bar{\mathbf{Q}}^{2\left(T-k\right)}+\featnoise\sum_{i=k+1}^{T}\bar{\mathbf{Q}}^{2\left(i-k\right)}\tilde{\mu}\tilde{\rho}_{i}\right]\mathbf{M}^{-1}\bar{\mathbf{Q}}\Bigg]
\end{align*}
where $\bar{\mathbf Q}=\big(\mathbf I+\featnoise \tilde m_p(-1)\mathbf M^{-1}\big)^{-1},$
with $\tilde m_p(-1)$ being the unique positive solution of
$$
\frac{1}{\tilde m_p(-1)}=1+\frac{1}{n}\sum_{i=1}^d\frac{\featnoise  M_{i,i}^{-1}}{1+\tilde m_p(-1)\featnoise M_{i,i}^{-1}},
$$
and,\ $\hspace{7em}\begin{aligned}[t]
\tilde{\rho}_{k}&=\frac{\featnoise}{n}\mathrm{Tr}\Big[\mathbf{M}^{-2}\Big(\bar{\mathbf{Q}}^{2(T-k+1)}+\featnoise\tilde{\mu}\sum_{i=k+1}^{T}\tilde{\rho}_{i}\bar{\mathbf{Q}}^{2(i-k+1)}\Big)\Big],\\&\tilde{\mu}=\left[\left[1+\frac{\featnoise}{n}\mathrm{Tr}\left(\mathbf{M}^{-1}\bar{\mathbf{Q}}\right)\right]^{2}-\frac{\featnoise^2}{n}\mathrm{Tr}\left(\mathbf{M}^{-1}\bar{\mathbf{Q}}\right)^{2}\right]^{-1}.
\end{aligned}$
\end{theorem}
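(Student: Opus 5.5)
Under \cref{assum:1} and a fixed $\mathbf M$, the update of \cref{schm:generalized} is linear in $\mathbf w_{t-1}$. Setting $\mathbf e_t=\mathbf w_t-\mathbf w^\star$, the first-order condition gives
\[
\mathbf e_t=\mathbf Q_t\mathbf e_{t-1}+\tfrac1n\mathbf Q_t\mathbf M^{-1}\X_t^\top\z_t,\qquad \mathbf Q_t=\big(\mathbf I+\tfrac1n\mathbf M^{-1}\X_t^\top\X_t\big)^{-1}.
\]
Since all teachers coincide, $G_T^{\mathrm{gen}}=\Vert\mathbf e_T\Vert^2$. Unrolling yields
\[
\mathbf e_T=\mathbf Q_T\cdots\mathbf Q_1\mathbf e_0+\sum_{k=1}^T\mathbf Q_T\cdots\mathbf Q_{k+1}\,\tfrac1n\mathbf Q_k\mathbf M^{-1}\X_k^\top\z_k .
\]
The noise vectors are independent and mean zero, so cross terms vanish after averaging over $\z$, with concentration giving convergence in probability. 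This leaves a bias term $\mathbf e_0^\top\mathbf Q_1^\top\cdots\mathbf Q_T^\top\mathbf Q_T\cdots\mathbf Q_1\mathbf e_0$ and, for each $k$, a variance term
\[
\frac{\labelnoise}{n^2}\mathrm{Tr}\big[\X_k\mathbf M^{-1}\mathbf Q_k^\top\,\mathbf P_{k}\,\mathbf Q_k\mathbf M^{-1}\X_k^\top\big],\qquad \mathbf P_k=(\mathbf Q_T\cdots\mathbf Q_{k+1})^\top(\mathbf Q_T\cdots\mathbf Q_{k+1}).
\]
These two pieces should produce the two lines of the statement.

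The core tool is a deterministic-equivalent calculus for quadratic forms in products of independent resolvents. I will use the anisotropic Marchenko–Pastur result for $\big(\mathbf M^{-1/2}\tfrac1n\X^\top\X\mathbf M^{-1/2}+\mathbf I\big)^{-1}$, which gives $\mathbf Q\simeq\bar{\mathbf Q}=(\mathbf I+\featnoise\tilde m_p(-1)\mathbf M^{-1})^{-1}$, with $\tilde m_p(-1)$ solving the stated fixed-point equation. First-order equivalents are not enough here, because $\mathbf Q_t$ appears twice, as $\mathbf Q_t^\top\mathbf A\mathbf Q_t$ with $\mathbf A$ independent of $\X_t$. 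I will therefore prove a lemma: for a deterministic (or independent) symmetric $\mathbf A$ with bounded norm,
\[
\mathbf Q\mathbf A\mathbf Q\simeq\bar{\mathbf Q}\mathbf A\bar{\mathbf Q}+\featnoise\tilde\mu\,\tfrac{\featnoise}{n}\mathrm{Tr}\big[\mathbf M^{-2}\bar{\mathbf Q}\mathbf A\bar{\mathbf Q}\big]\,\bar{\mathbf Q}^2 .
\]
Here the rank-one correction comes from a leave-one-out (Sherman–Morrison) expansion on the rows of $\X$. The factor $\tilde\mu$ is the inverse of $1+\tfrac{\featnoise}{n}\mathrm{Tr}(\mathbf M^{-1}\bar{\mathbf Q})$ squared minus the second-moment term, which appears after resolving the self-consistent equation for $\mathbb E[\mathbf Q\mathbf A\mathbf Q]$. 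Since $\mathbf M$ and $\bar{\mathbf Q}$ are diagonal and commute, this lemma yields scalars attached to powers of $\bar{\mathbf Q}$.

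I then apply the lemma from the outside in. Conditioning on $\X_{T},\dots,\X_{t+1}$, the inner matrix $\mathbf P_t$ is independent of $\X_t$. By induction, its deterministic equivalent has the form $\bar{\mathbf Q}^{2(T-t)}+\featnoise\sum_{i>t}\bar{\mathbf Q}^{2(i-t)}\tilde\mu\tilde\rho_i$. Peeling off one more factor with the lemma reproduces the same form one index lower. The trace generated at each step is exactly $\tilde\rho_k$ as defined in the statement, which is why $\tilde\rho_k$ obeys a backward recursion. Applying this at $t=0$ to $\mathbf e_0$ gives the bias line. For the noise term, I write $\tfrac1n\X_k\mathbf M^{-1}\mathbf Q_k^\top\mathbf P_k\mathbf Q_k\mathbf M^{-1}\X_k^\top$ using the identity $\tfrac1n\mathbf Q_k\mathbf M^{-1}\X_k^\top\X_k=\mathbf I-\mathbf Q_k$. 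This reduces the trace to terms of the form $\mathrm{Tr}[(\mathbf I-\mathbf Q_k)\mathbf P_k\mathbf Q_k\mathbf M^{-1}]$. The deterministic equivalent of $\mathbf Q_k\mathbf P_k\mathbf Q_k$ in this mixed product brings in the correction $\bar{\mathbf Q}\big[\mathbf I+\tfrac{\featnoise^2}{n}\mathrm{Tr}(\mathbf M^{-1}\bar{\mathbf Q}^2)\tilde\mu\mathbf M^{-1}\big]$ of the statement.

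The main obstacle is the second-order lemma and its use inside products. I have to show that equivalents compose: replacing $\mathbf P_t$ by its deterministic equivalent inside the next quadratic form costs only $o(1)$ in normalized trace and in the $\mathbf e_0$-quadratic form. This needs uniform operator-norm bounds ($\Vert\mathbf Q_t\Vert\le1$, together with the spectral bounds $l\mathbf I\preceq\mathbf M\preceq u\mathbf I$) and concentration of the quadratic forms, using the finite $4+\epsilon$ moments. For fixed $T$, a finite induction keeps the errors $o(1)$. I would first verify the lemma in the isotropic case $\mathbf M=\lambda\mathbf I$ against known second-order ridge formulas before treating the diagonal case.
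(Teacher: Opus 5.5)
Your proposal is correct and follows the paper's proof: unroll the linear recursion, drop the noise cross terms, and peel the products from the outside in with a second-order deterministic equivalent for a resolvent sandwich around an independent matrix, which produces the backward recursion for $\tilde\rho_k$. The noise term is then reduced with $\tfrac1n\mathbf Q_k\mathbf M^{-1}\mathbf X_k^\top\mathbf X_k=\mathbf I-\mathbf Q_k$, which is the paper's identity $\mathbf R_k\mathbf X_k^\top\mathbf X_k\mathbf R_k=n(\mathbf R_k-\mathbf R_k\mathbf M\mathbf R_k)$ written for $\mathbf R_k=(\tfrac1n\mathbf X_k^\top\mathbf X_k+\mathbf M)^{-1}=\mathbf Q_k\mathbf M^{-1}$.

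There are two minor differences. First, your $\mathbf Q_t$ is not symmetric, so your lemma should read $\mathbf Q^\top\mathbf A\mathbf Q$, and the noise trace should be $\mathrm{Tr}[(\mathbf I-\mathbf Q_k^\top)\mathbf P_k\mathbf Q_k\mathbf M^{-1}]$ with your $\mathbf P_k$; after conjugating by $\mathbf M^{\pm1/2}$ this is exactly the paper's symmetric lemma for the whitened data $\mathbf X\mathbf M^{-1/2}$ with covariance $v_x\mathbf M^{-1}$, and your placement of $\mathbf M^{-2}$ inside the trace is right. Second, you plan to prove that lemma by a leave-one-out expansion, whereas the paper gets it more cheaply by differentiating the perturbed resolvent $(\tfrac1n\mathbf X^\top\mathbf X+\lambda\mathbf I+\lambda\rho\mathbf A)^{-1}$ in $\rho$ at $\rho=0$, using only the first-order anisotropic equivalent.
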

The closed-form expression to task dependent diagonal regularization matrices is given in \cref{eq:46}, along with the remaining proofs of this section, which appear in Appendix \ref{pr:sec5}.
To aid interpretation, we include a numerical validation of the predictions of \cref{thm:4} in Appendix \ref{sec:val4}.

We now exhibit a simple setting in which scalar (isotropic) regularization provably fails.

\begin{theorem}[Failure of isotropic ridge under sparse teachers]
\label{thm:sparse}
Assume that $\alpha:=\frac{n}{d}<1$. Let $\mathbf{w}_1^\star-\mathbf{w}_0$ be $k$-sparse with support $S$ so that $\mathbf{w}_1^\star\neq\mathbf{w}_0$, and assume that $\mathbf X_S$ (the submatrix of $\mathbf X$ consisting of the columns indexed by $S$) has full column rank, then as $n,d\to \infty$ after a single task with no noise, it holds almost surely that,
$$
\sup_{\mathbf M\in \mathcal M}\inf_{\lambda>0} \frac{G_1^{\mathrm{ridge}}}{G_1^{\mathrm{gen}}}=\infty.$$
where $\mathcal M=\bigcup_{0<l\leq u<\infty}
\left\{\mathbf M:l\mathbf I_d\preceq\mathbf M\preceq u\mathbf I_d\right\}$ and $u,l$ are constants independent of $n,d$, i.e. $\mathcal M$ is the family of all spectrally bounded matrices (which may depend on the support $S$).
\end{theorem}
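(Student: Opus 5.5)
The plan is to lower-bound the best isotropic loss by a quantity of order $\|\mathbf{w}_1^\star-\mathbf{w}_0\|^2$ that does not vanish as $n,d\to\infty$. In parallel, I will exhibit a family of spectrally bounded diagonal matrices $\mathbf M_\varepsilon\in\mathcal M$ whose loss is $O(\varepsilon)\|\mathbf{w}_1^\star-\mathbf{w}_0\|^2$. Letting $\varepsilon\to0$ after $n,d\to\infty$ then sends the ratio to $\infty$. Write $\boldsymbol\delta=\mathbf{w}_1^\star-\mathbf{w}_0$ and $\mathbf W=\frac1n\mathbf X^\top\mathbf X$. With one task and no noise, \cref{schm:generalized} gives $\mathbf{w}_1-\mathbf{w}_1^\star=-(\mathbf W+\mathbf M)^{-1}\mathbf M\boldsymbol\delta$, so $G_1^{\mathrm{gen}}=\|(\mathbf W+\mathbf M)^{-1}\mathbf M\boldsymbol\delta\|^2$. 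The ridge case is $\mathbf M=\lambda\mathbf I$, for which $G_1^{\mathrm{ridge}}=\|\lambda(\mathbf W+\lambda\mathbf I)^{-1}\boldsymbol\delta\|^2$.

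\emph{Lower bound for ridge.} Let $\Pi$ be the orthogonal projector onto the row space of $\mathbf X$, which has rank $n<d$. On $\ker\mathbf X$ the operator $\lambda(\mathbf W+\lambda\mathbf I)^{-1}$ acts as the identity, and it commutes with $\Pi$. Hence, for every $\lambda>0$,
\[
G_1^{\mathrm{ridge}}\ \ge\ \|(\mathbf I-\Pi)\boldsymbol\delta\|^2 .
\]
It remains to show $\boldsymbol\delta^\top\Pi\boldsymbol\delta\to\alpha\|\boldsymbol\delta\|^2$, or at least that $\limsup$ of this quantity is strictly below $\|\boldsymbol\delta\|^2$. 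Because the support $S$ has fixed size $k$ and the columns of $\mathbf X$ are i.i.d., each $\mathbf e_j^\top\Pi\mathbf e_j$ has mean $n/d$ by exchangeability, and the off-diagonal entries $\mathbf e_j^\top\Pi\mathbf e_{j'}$ vanish in the limit. One way to establish this is to write $\mathbf e_j^\top\Pi\mathbf e_j=\lim_{\lambda\downarrow0}\mathbf e_j^\top\mathbf W(\mathbf W+\lambda\mathbf I)^{-1}\mathbf e_j$ and use the deterministic equivalent of the resolvent, as in the proof of \cref{thm:4} with $\mathbf M=\lambda\mathbf I$. Alternatively, Theorem~\ref{thm:4} specialized to $T=1$, $\labelnoise=0$ gives the ridge limit directly as a function of $\lambda$, which one checks is bounded below by $(1-\alpha)\|\boldsymbol\delta\|^2$. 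Either way, $\inf_\lambda G_1^{\mathrm{ridge}}\ge(1-\alpha)\|\boldsymbol\delta\|^2-o(1)>0$.

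\emph{Upper bound for a diagonal regularizer.} Take $\mathbf M_\varepsilon$ diagonal with entries $\varepsilon$ on $S$ and $1$ off $S$, so that $\mathbf M_\varepsilon\in\mathcal M$ with $l=\varepsilon$, $u=1$. Since $\mathbf w_1$ minimizes $F(\mathbf w)=\frac1{2n}\|\mathbf X(\mathbf w-\mathbf w_1^\star)\|^2+\frac12\|\mathbf w-\mathbf w_0\|_{\mathbf M_\varepsilon}^2$, comparing with $F(\mathbf w_1^\star)=\frac{\varepsilon}{2}\|\boldsymbol\delta\|^2$ (this uses $\operatorname{supp}\boldsymbol\delta=S$) gives two bounds. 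First, $\|(\mathbf w_1-\mathbf w_0)_{S^c}\|^2\le\varepsilon\|\boldsymbol\delta\|^2$. Second, $\frac1n\|\mathbf X(\mathbf w_1-\mathbf w_1^\star)\|^2\le\varepsilon\|\boldsymbol\delta\|^2$. Decompose $\mathbf e=\mathbf w_1-\mathbf w_1^\star=\mathbf e_S+\mathbf e_{S^c}$, where $\mathbf e_{S^c}=(\mathbf w_1-\mathbf w_0)_{S^c}$. Then
\[
\tfrac1{\sqrt n}\|\mathbf X_S\mathbf e_S\|\le\sqrt{\varepsilon}\|\boldsymbol\delta\|+\tfrac1{\sqrt n}\|\mathbf X_{S^c}\|_{op}\|\mathbf e_{S^c}\| .
\]
By Bai--Yin, $\frac1{\sqrt n}\|\mathbf X\|_{op}\to\sqrt{\featnoise}(1+\alpha^{-1/2})$. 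Since $k$ is fixed and $\mathbf X_S$ has full column rank, $\frac1n\mathbf X_S^\top\mathbf X_S\to\featnoise\mathbf I_k$ by the law of large numbers, so $\sigma_{\min}(\mathbf X_S/\sqrt n)\to\sqrt{\featnoise}>0$. Combining these gives $\|\mathbf e_S\|^2\le C(\alpha,\featnoise)\,\varepsilon\|\boldsymbol\delta\|^2$, hence $G_1^{\mathrm{gen}}(\mathbf M_\varepsilon)\le C'\varepsilon\|\boldsymbol\delta\|^2$ almost surely in the limit.

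Putting the two bounds together, $\inf_\lambda G_1^{\mathrm{ridge}}/G_1^{\mathrm{gen}}(\mathbf M_\varepsilon)\ge(1-\alpha)/C'\varepsilon$, and taking the supremum over $\varepsilon\to0$ within $\mathcal M$ yields $\infty$. The main obstacle is the ridge lower bound, namely controlling $\boldsymbol\delta^\top\Pi\boldsymbol\delta$ for a \emph{fixed sparse} direction under only $4+\epsilon$ moments. Rotational invariance is not available, so I would rely on the column-exchangeability argument together with the resolvent deterministic equivalent already used for \cref{thm:4}. A weaker fallback suffices: $\limsup\boldsymbol\delta^\top\Pi\boldsymbol\delta<\|\boldsymbol\delta\|^2$ follows once each $\mathbf e_j^\top\Pi\mathbf e_j\to\alpha<1$, since $\boldsymbol\delta^\top\Pi\boldsymbol\delta\le\big(\sum_{j\in S}|\delta_j|\,\|\Pi\mathbf e_j\|\big)^2\le\|\boldsymbol\delta\|^2\max_{j\in S}\mathbf e_j^\top\Pi\mathbf e_j\cdot k$. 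To avoid the factor $k$, the off-diagonal entries $\mathbf e_j^\top\Pi\mathbf e_{j'}$ must also be shown to vanish.
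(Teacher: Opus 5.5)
Your proof is correct, but it takes a different route from the paper on both halves.

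\textbf{Ridge lower bound.} The paper computes the pointwise limit $G_1^{\mathrm{ridge}}(\lambda)\to\frac{\lambda^2}{\featnoise^2}m_\alpha'(-\lambda/\featnoise)\Vert\boldsymbol{\delta}\Vert^2$ from the Marchenko--Pastur law. It then imports monotonicity in $\lambda$ from prior work to identify the infimum with the $\lambda\downarrow0$ value $(1-\alpha)\Vert\boldsymbol{\delta}\Vert^2$. Your inequality $G_1^{\mathrm{ridge}}(\lambda)\ge\Vert(\mathbf I-\Pi)\boldsymbol{\delta}\Vert^2$ holds for all $\lambda$ and all finite $n,d$ at once. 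So you need only one limit, and the question of whether pointwise convergence in $\lambda$ controls $\inf_\lambda$ never arises; the paper leaves that step implicit.

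The obstacle you flag is not a real one. The isotropic resolvent law behind \cref{lem:fix} holds for any deterministic bounded vector, sparse ones included. Because $\alpha<1$, the atom of mass $1-\alpha$ at $0$ is separated from the bulk: the smallest nonzero eigenvalue of $\frac1n\mathbf X^\top\mathbf X$ tends to $\featnoise(\alpha^{-1/2}-1)^2>0$. So you can write $\mathbf I-\Pi$ as a contour integral of the resolvent around $0$ and repeat the Vitali argument of \cref{lem:fix}. This gives $\boldsymbol{\delta}^\top(\mathbf I-\Pi)\boldsymbol{\delta}\to(1-\alpha)\Vert\boldsymbol{\delta}\Vert^2$ directly, with no exchangeability or off-diagonal estimates. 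The same spectral gap is what justifies interchanging $\lambda\downarrow0$ with $n,d\to\infty$ in your resolvent formulation.

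\textbf{Diagonal regularizer.} The paper takes $\mathrm{diag}(\tilde\lambda\mathbf I_S,\psi\mathbf I_{S^c})$, eliminates the $S^c$ block by a Schur complement, and sends $\psi\to\infty$, $\tilde\lambda\to0$ at fixed $n,d$. It uses $\frac1n\mathbf X_S^\top\mathbf X_S\succ0$ only qualitatively, with no lower bound uniform in $n,d$. Your basic-inequality comparison with $F(\mathbf w_1^\star)=\frac{\varepsilon}{2}\Vert\boldsymbol{\delta}\Vert^2$ needs no blow-up off the support ($u=1$ suffices). It yields $G_1^{\mathrm{gen}}\le C'(\alpha,\featnoise)\,\varepsilon\Vert\boldsymbol{\delta}\Vert^2$ uniformly in $n,d$, which is exactly what makes the order of limits ($n,d\to\infty$ first, then $\varepsilon\to0$) rigorous.

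\textbf{Trade-off.} The paper's route buys exact information: the whole limiting ridge curve and the limiting generalized-ridge estimator. Yours gives only bounds, but with cleaner control of uniformity, and bounds are all the statement needs. Your law-of-large-numbers step for $\frac1n\mathbf X_S^\top\mathbf X_S$ uses fixed $k$; for $k=o(n)$ you would substitute the Bai--Yin lower edge for $\mathbf X_S$.
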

The assumption that $\mathbf X_S$ has a full rank column is easy to satisfy: if $k$ is fixed (or $k=o(d)$) and the entries $X_{i,j}$ are drawn from a continuous distribution, then almost surely $\mathbf X_S$ has a full rank column. Yet, it remains unclear under what conditions isotropic regularization remains optimal. Interestingly, in Appendix \ref{sec:nspr} we find that the \emph{necessary and sufficient} condition is that, for each coordinate, the teacher is equally far from the initialization, either deterministically, or, for a random teacher, with probability tending to $1$ as $n,d \to\infty$ (sufficiency is straightforward, necessity is less trivial).

So far, our analysis characterizes how to optimally tune the parameters of early-stopped GD under the i.i.d.~teacher setting, showing that the resulting generalization loss is asymptotically equivalent to that of optimally tuned isotropic regularization. Additionally, in the single teacher setting, our analysis identifies when such isotropic regularization remains optimal within the broader class of diagonal regularization schemes which are widely used in practice. Beyond the i.i.d. setting, a natural question is whether these conclusions persist under more structured, non-stationary task sequences.

\section{Beyond i.i.d.~teachers: optimal step-size need not scale with $T$}
\label{sec:3}
Finding the optimal parameter tuning in early-stopped GD had to make the simplifying assumption of i.i.d.\ teachers (\cref{assump:iid_teachers}). We now ask whether the optimal scaling identified by \cref{thm:opt} which states that $\gamma^\star \asymp \frac{\ln T}{NT}$ persists beyond the i.i.d.\ teacher regime. We address it via a counterexample.

\begin{assumption}
\label{assump:sin_teachers}
We assume a sinusoidal model across the teachers, that is, for all $i\in [1,T]$,
\begin{align*}
\mathbf w^\star_{i}=\mathbf A+\sin\left(2\pi fi\right)\cdot \mathbf B,
\end{align*}
for some $\mathbf A,\mathbf B\in \mathbb R^{d}$ and frequency $f\in \mathbb R$.
\end{assumption}

This type of model is a fundamental building block for representing and analyzing periodic structure in learning \citep{vaswani2017attention}, control systems \citep{aastrom2008feedback}, and signal processing \citep{cooley1965algorithm}. In continual learning, recent works have been studying the effect of repetition.
A few studies have proposed solutions that aim to exploit the repetition in the stream to learn continually \citep{liu2023periodically}. This has been demonstrated to significantly improve performance over standard task-split methods, emphasizing the effectiveness of cyclic strategies in continual learning settings.

Under the sinusoidal model (\cref{assump:sin_teachers}), the sequence of tasks is non-stationary, and the teachers do not concentrate around a single mean. As a result, the generalization measure as defined in \cref{def:gen}, which averages the loss with respect to all past teachers, is no longer informative.

Instead, we evaluate generalization with respect to teachers belonging to the same class (or phase) as the final task. Let $A \subseteq \{1,\dots,T\}$ denote the index set of such teachers. We define the generalization error at time $T$ as
$$
G_T
\triangleq\frac{1}{|A|}\sum_{i \in A}\left\lVert \mathbf{w}_T - \mathbf{w}_i^\star \right\rVert^2.
$$
However, when the task horizon grows to infinity, the generalization error may be asymptotically oscillatory.
We therefore analyze the \emph{worst-case} generalization loss, defined as
\begin{align*}
G_{\infty}
\triangleq
\limsup_{T\to\infty}
\frac{1}{|A_T|}
\sum_{i \in A_T}
\left\lVert \mathbf{w}_T - \mathbf{w}_i^\star \right\rVert^2,
\end{align*}
where $A_T \subseteq \{1,\dots,T\}$ denotes the set of teachers belonging to the same class as the task at time $T$.

\begin{theorem}[Generalization with infinite task horizon]
\label{thm:sin_closed}
Under \cref{assump:sin_teachers}, the generalization loss after $T$ tasks of the early-stopped GD algorithm (\cref{schm:early_stopped}) satisfies, as $n, d\to\infty$,
$$
G_{\infty}
\xrightarrow[]{P}
C_{0}
+
\sqrt{C_{1}^{2}+C_{2}^{2}}
\sup_{T \ge 1}
\cos\left(4\pi f T + \varphi\right)
$$
\end{theorem}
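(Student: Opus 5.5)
The plan is to specialize \cref{thm:closedloss}, which holds for arbitrary teachers, to the sinusoidal teachers of \cref{assump:sin_teachers}, with the average over all past tasks replaced by the average over the class set $A_T$. The derivation of \cref{thm:closedloss} gives, for each $i$, the deterministic limit of $\norm{\mathbf w_T-\mathbf w^\star_i}^2$ before averaging. This limit is a noise term $\labelnoise c'\frac{1-a'^T}{1-a'}$, plus a bias term $\norm{\mathbf w^\star_T-\mathbf w^\star_i}^2$, plus a cross term $2(\mathbf w^\star_T-\mathbf w^\star_i)^\top\sum_j b'^{T-j+1}\boldsymbol\delta_j$, plus the double sum $\sum_{j,j'}a'^{T-\max(j,j')+1}b'^{|j-j'|}\boldsymbol\delta_j^\top\boldsymbol\delta_{j'}$. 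Here $\boldsymbol\delta_j=\mathbf w^\star_{j-1}-\mathbf w^\star_j$. The only change is that we average over $i\in A_T$ rather than over $[1,T]$. Teachers in the same class as task $T$ share the phase of $\sin(2\pi f T)$ (or its opposite, depending on how the class is defined). Hence $\mathbf w^\star_T-\mathbf w^\star_i$ is either $\mathbf 0$ or a fixed multiple of $\mathbf B$ determined by the phase of $T$. The averaged bias and cross terms therefore reduce to explicit trigonometric functions of $2\pi fT$.

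Next I would compute the increments explicitly. For $j\ge2$, $\boldsymbol\delta_j=(\sin 2\pi f(j-1)-\sin 2\pi fj)\mathbf B=-2\cos(2\pi f(j-\tfrac12))\sin(\pi f)\mathbf B$. The first increment $\boldsymbol\delta_1=\mathbf w_0-\mathbf A-\sin(2\pi f)\mathbf B$ is special. Since $|a'|,|b'|<1$ in the convergent step-size regime, every contribution involving $\boldsymbol\delta_1$ carries a factor $a'^{T}$ or $b'^{T}$ and vanishes as $T\to\infty$. The remaining sums are geometric sums of cosines, $\sum_j b'^{T-j+1}\cos(2\pi f(j-\tfrac12))$ and the double sum. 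I would write each cosine as $\mathrm{Re}\,e^{i\theta}$ and evaluate in closed form. The single sum tends to $\mathrm{Re}\big[\kappa\, e^{2\pi i fT}\big]$ with $\kappa=\frac{b'e^{-i\pi f}}{1-b'e^{-2\pi i f}}$. In the double sum, the product $\cos\theta_j\cos\theta_{j'}=\tfrac12[\cos(\theta_j-\theta_{j'})+\cos(\theta_j+\theta_{j'})]$ yields a constant part plus a part oscillating at frequency $2f$ in $T$. Similarly, the products of the cross term with $\mathbf w^\star_T-\mathbf w^\star_i\propto\sin(2\pi fT)\mathbf B$, and the bias term $\propto\sin^2(2\pi fT)\norm{\mathbf B}^2$, split into a constant plus a $\cos(4\pi fT)$ and $\sin(4\pi fT)$ component.

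Collecting terms, the limit is $C_0+C_1\cos(4\pi fT)+C_2\sin(4\pi fT)+o(1)$. The constant $C_0$ gathers the noise limit $\labelnoise c'/(1-a')$ and all non-oscillating parts. $C_1$ and $C_2$ are explicit in $a',b',f,\norm{\mathbf B}^2$, and $\mathbf B^\top(\mathbf A-\mathbf w_0)$ only enters transients, which vanish. Using the harmonic addition identity $C_1\cos x+C_2\sin x=\sqrt{C_1^2+C_2^2}\cos(x+\varphi)$ with $\varphi=-\operatorname{atan2}(C_2,C_1)$, and taking $\limsup_{T\to\infty}$, the limsup of $\cos(4\pi fT+\varphi)$ over integers $T\to\infty$ equals its supremum over the orbit $\{4\pi fT \bmod 2\pi\}$. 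For rational $f$ this orbit is periodic, so the limsup equals $\sup_{T\ge1}$. For irrational $f$ the orbit is dense and both sides equal $1$. This gives the claimed form.

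The main obstacle is the interchange of limits together with the bookkeeping. Convergence in probability as $n,d\to\infty$ holds for each fixed $T$, whereas $G_\infty$ is a limsup in $T$. I would follow the sequential-limit convention of \cref{thm:opt}: fix $T$, take $n,d\to\infty$, then pass $T\to\infty$ in the deterministic expression, using uniform geometric decay to control the transients. The algebraic difficulty lies in the double sum with the $\max$ and $|j-j'|$ exponents. I would split it into the diagonal $j=j'$ and the two symmetric off-diagonal parts, and sum the geometric series in $j'$ first, then in $j$.
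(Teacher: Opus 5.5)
Your proposal is correct and is essentially the paper's proof. You specialize the per-$i$ limit underlying \cref{thm:closedloss} to $\mathbf w^\star_i=\mathbf A+\sin(2\pi f i)\mathbf B$, use $\mathbf w^\star_{k-1}-\mathbf w^\star_k=-2\sin(\pi f)\cos(\pi f(2k-1))\mathbf B$, and discard the $\mathbf w_0-\mathbf w^\star_1$ transients. Strictly, those transients are sums $\sum_{k}a'^{T-k+1}b'^{k-1}$ bounded by $T\max(a',|b'|)^{T}\to0$, not terms carrying a literal $a'^T$ or $b'^T$ factor. You then sum the trigonometric geometric series and apply harmonic addition, exactly as the paper does. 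The paper takes $A_T$ to be the indices with $\mathbf w^\star_i=\mathbf w^\star_T$, so the bias and cross terms vanish identically. Its $\varphi=2\pi f-\operatorname{atan2}(C_2,C_1)$ just absorbs the shift in $4\pi fT+2\pi f$ that you fold into $C_1,C_2$.

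Your periodicity/density argument for $\limsup_{T\to\infty}\cos(4\pi fT+\varphi)=\sup_{T\ge1}\cos(4\pi fT+\varphi)$, and your explicit sequential-limit convention, fill in steps the paper leaves implicit.
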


The proof, together with the explicit expressions for $C_{0}$, $C_{1}$, $C_{2}$ and $\varphi$, appears in Appendix~\ref{pr:sin} (see \cref{eq:40}).
We further note that when $f$ is irrational, then $\sup_{T \ge 1}\cos\left(4\pi f T + \varphi\right)=1$.

We empirically evaluate the optimal step size under a sinusoidal model.

\begin{figure}[H]
  \centering
  \includegraphics[width=0.94\linewidth]{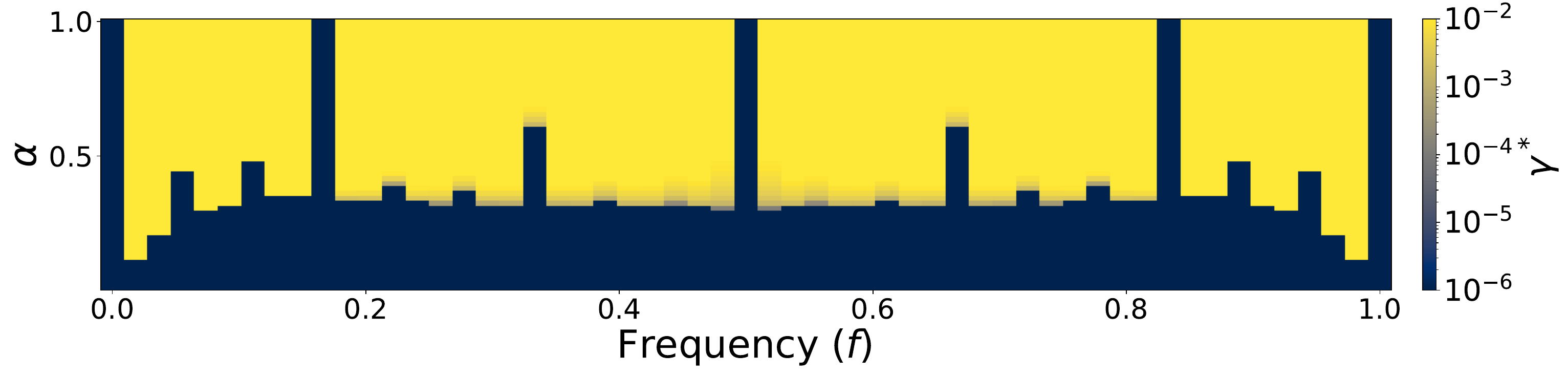}
  \caption{\textbf{The optimal step size remains finite as $T\to\infty$.} We plot the optimal step size $\gamma^\star$ of the objective $G_\infty$ as a function of the ratio ($\ratio$) and the frequency ($f$), fixing $\left\Vert\mathbf B \right\Vert=1$, $\labelnoise=1$, $\featnoise=1$, $d=100$ and $N=5$. We compute $\gamma^\star$ with an exhaustive search.}
  \label{fig:sin}
\end{figure}
\vspace{-1em}

Under the i.i.d.~teacher setting, our theoretical analysis (\cref{thm:opt}) shows that the optimal step size scales as $\ln T/T$, so we would expect $\gamma^\star \to 0$ in this regime. By contrast, as illustrated in \cref{fig:sin}, for the sinusoidal teacher model the optimal step size also remains finite as $T \to \infty$.
% To better understand the trends observed in the figure, we now discuss two representative regimes and their implications for the optimal regularization.

% \paragraph{Special case.}
% For $f \in \{0,\tfrac{1}{2},1\}$, the periodic teacher collapses to a 
% \emph{single-teacher} model. As identified in~\citet[Theorem 6]{karpel2026optimal},
% in a noisy regime (i.e., $\labelnoise > 0$), the optimal regularization becomes $\lambda^\star \to \infty$ as $T\to \infty$.

% \paragraph{More information implies weaker regularization.}
% If $\ratio \to 0$ the current task holds only limited information, therefore, higher regularization is needed.
% As $\ratio$ increases, data is aligned with the current teacher, and the optimal regularization weakens, as observed in \cref{fig:sin}.

\begin{figure}[!t]
\label{fig:exp}
\centering
% ----------------- Top row -----------------
\begin{subfigure}[t]{0.25\linewidth}
  \centering
  \includegraphics[width=\textwidth,keepaspectratio]{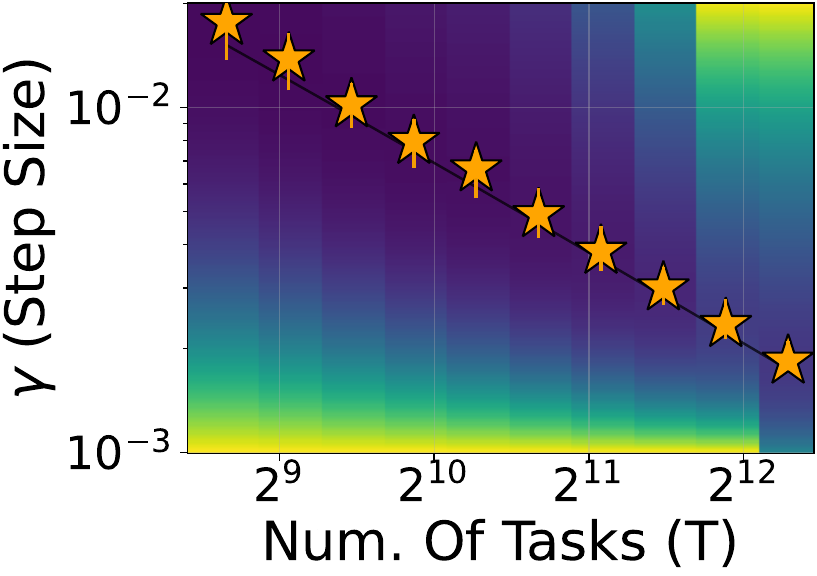}
  \caption{Single teacher (Linear)}
\label{fig:1}

\end{subfigure}\hfill
\begin{subfigure}[t]{0.23\linewidth}
  \centering
  \includegraphics[width=\textwidth,keepaspectratio]{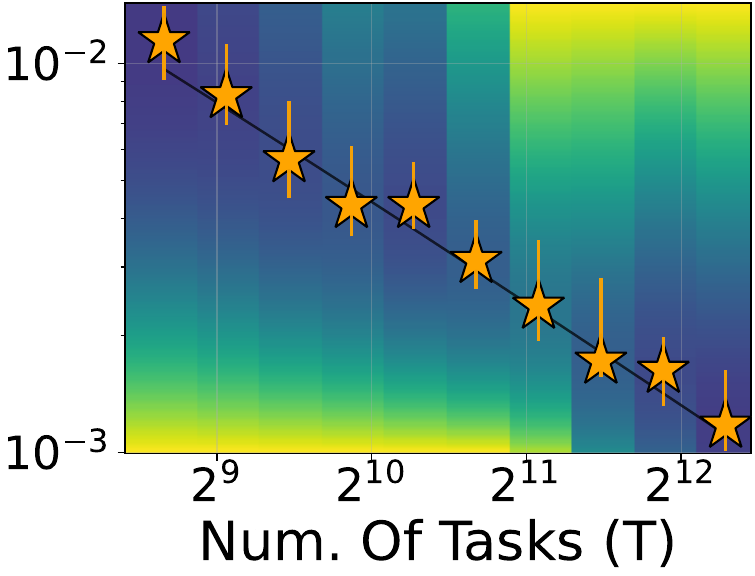}
  \caption{i.i.d.~teachers (Linear)}
\label{fig:2}

\end{subfigure}
% ----------------- Bottom row -----------------
\begin{subfigure}[t]{0.23\linewidth}
  \centering
  \includegraphics[width=\textwidth,keepaspectratio]{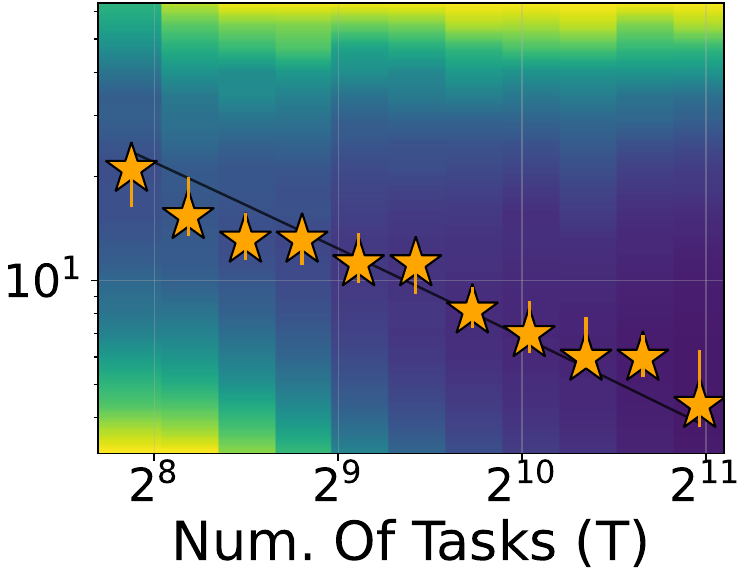}
  \caption{Single teacher (NN)}
\label{fig:3}

\end{subfigure}\hfill
\begin{subfigure}[t]{0.26\linewidth}
  \centering
  \includegraphics[width=\textwidth,keepaspectratio]{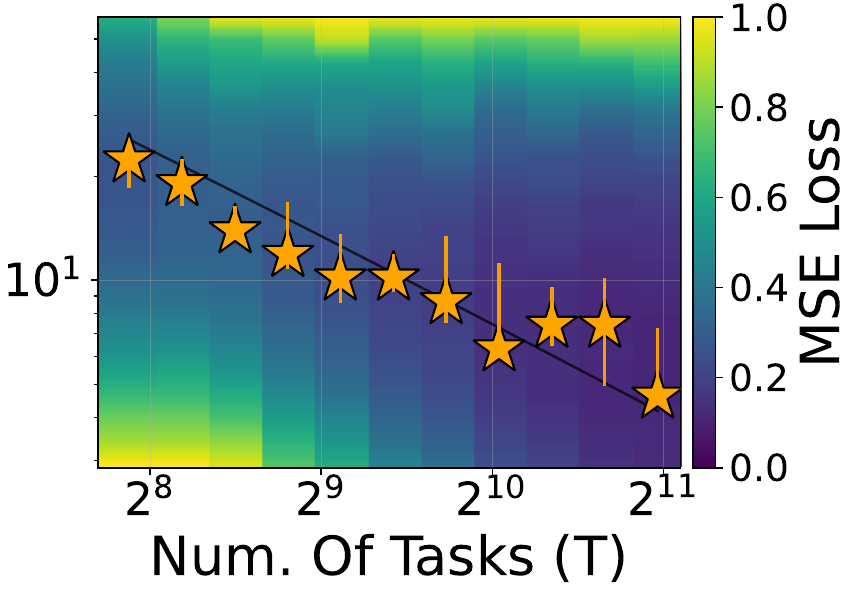}
  \caption{i.i.d.~teachers (NN)}
\label{fig:4}

\end{subfigure}
\caption{\textbf{Optimal step size scaling in linear models and neural networks (NN).} We plot the effect of step size on the mean squared error (MSE) after learning $T$ tasks. Curves are averaged over 5 and 15 random seeds for the linear model and neural network, respectively. The star marks the minimizer of the mean curve; vertical bars indicate the 95\% credible interval for the minimizer, and the black line indicates a fitted $\ln T/T$ scaling. The empirical MSE is normalized by the maximum MSE across all step sizes for the specific task horizon $T$. }

\end{figure}

\section{Empirical experiments on \mnist-data}
\label{sec:5}

While our analysis assumes i.i.d. features and a linear model, we empirically validate our theoretical predictions on a finite-dimensional \mnist-data, rather than high-dimensional i.i.d. features, where the assumptions of \cref{sec:statistical_setting} do not hold. We evaluate both linear models and ReLU networks. In all cases, we observe that the scaling $\gamma^\star \asymp \ln T/T$ as predicted by \cref{thm:opt} is preserved. 

\textbf{Experimental Setup.} We convert the \mnist data set labels into a binary classification problem where the digits 0-4 are converted to class 0 and 5-9 are converted to class 1. Moreover, we standardize the data to zero mean and unit variance.

\textbf{Teacher models.} In the experimental setup, we consider two types of teacher models.
\begin{enumerate}
    \item \textbf{A single teacher.} We train a single teacher using the same architecture as the evaluated model---in the linear regression, the single teacher is trained over a subset of 500 samples (to fit a perfect linear teacher), and in the neural networks the teacher is trained until convergence over all \mnist data. In this case, each task suffers from label noise.
    
    \item \textbf{i.i.d. noisy teachers.} Each task draws a teacher by adding Gaussian noise to the single teacher's weights. In addition to teacher noise, the labels also suffer from label noise.
\end{enumerate}

\textbf{Noisy Labels.} Each task is affected by additive Gaussian noise with variance $\labelnoise$ applied to the binary labels.
Specifically, we consider noise variances of $\labelnoise=0.1$ and, more challengingly, $\labelnoise=0.4$ for the linear and neural-network experiments, respectively.

\textbf{Evaluation Metric.} We evaluate generalization performance using the Mean Squared Error (MSE) computed over an independent test set.

\textbf{Linear Regression Experiments.}
We employ the early-stopped GD algorithm (\cref{schm:early_stopped}), trained by performing $N=10$ gradient steps on the mean-squared error loss, to predict \emph{noisy} labels from a single teacher (\cref{fig:1}) or multiple i.i.d.~teachers (\cref{fig:2}). Evidently, the predicted $\ln T / T$ scaling is preserved, consistent with \cref{thm:opt}, even in the absence of i.i.d. features.

\textbf{Neural Networks Experiments.}
We also consider a two-layer ReLU neural network ($784 \to 200\to 1$), with a sigmoid output for the teacher and a linear output for the student (to predict noisy labels), trained by performing $N=10$ gradient steps per task on the mean-squared error loss. Similarly, we repeat the previous experiments under both single-teacher (\cref{fig:3}) and multiple-teachers (\cref{fig:4}) settings. Notably, we again observe the same $\ln T / T$ scaling, which persists even in the substantially more complex NN model, matching the predicted scaling from \cref{thm:opt}.
\subsection{Diagonal regularization experiments}

We additionally experiment with the generalized L2 algorithm (\cref{schm:generalized}) in the linear regression setting. We consider two artificial teachers: a \emph{sparse} teacher with $100$ randomly selected coordinates equal to one and the remaining entries zero, and an \emph{isotropic} teacher with all coordinates equal to one.

We use the optimal diagonal regularizer obtained by optimizing the theoretical objective in \cref{thm:4}. Throughout this experiment, we set $\labelnoise = 0.4$, initialize $\mathbf w_0=\0$ and narrow the \mnist training data to $100$ samples for each task.
As observed, in the sparse-teacher setting the optimal diagonal regularizer substantially outperforms the scalar regularizer (\cref{fig:6}), in agreement with the prediction of \cref{thm:sparse}. In contrast, for an isotropic teacher (\cref{fig:5}), the scalar regularizer is optimal, as anticipated by the condition found in \cref{sec:2}, which requires the teacher vector to have coordinates equally far from the initialization.

\begin{figure}[H]
    \centering
    \begin{subfigure}[t]{0.415\linewidth}
        \centering
        \includegraphics[width=\linewidth]{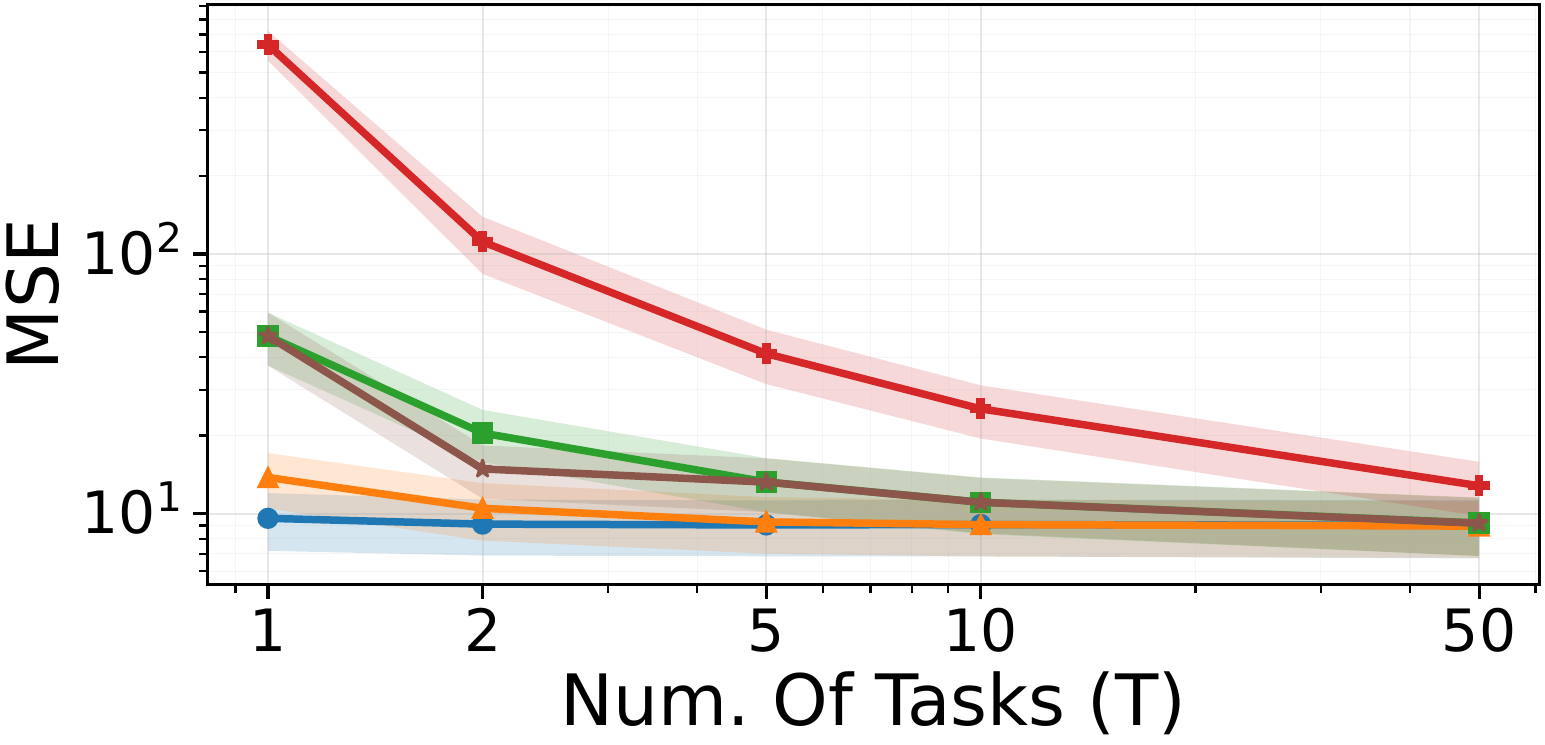}
        \caption{Isotropic teacher}
        \label{fig:m_reg_left}
        \label{fig:5}
    \end{subfigure}
    \hfill
    \begin{subfigure}[t]{0.5\linewidth} 
        \centering
        \includegraphics[width=\linewidth]{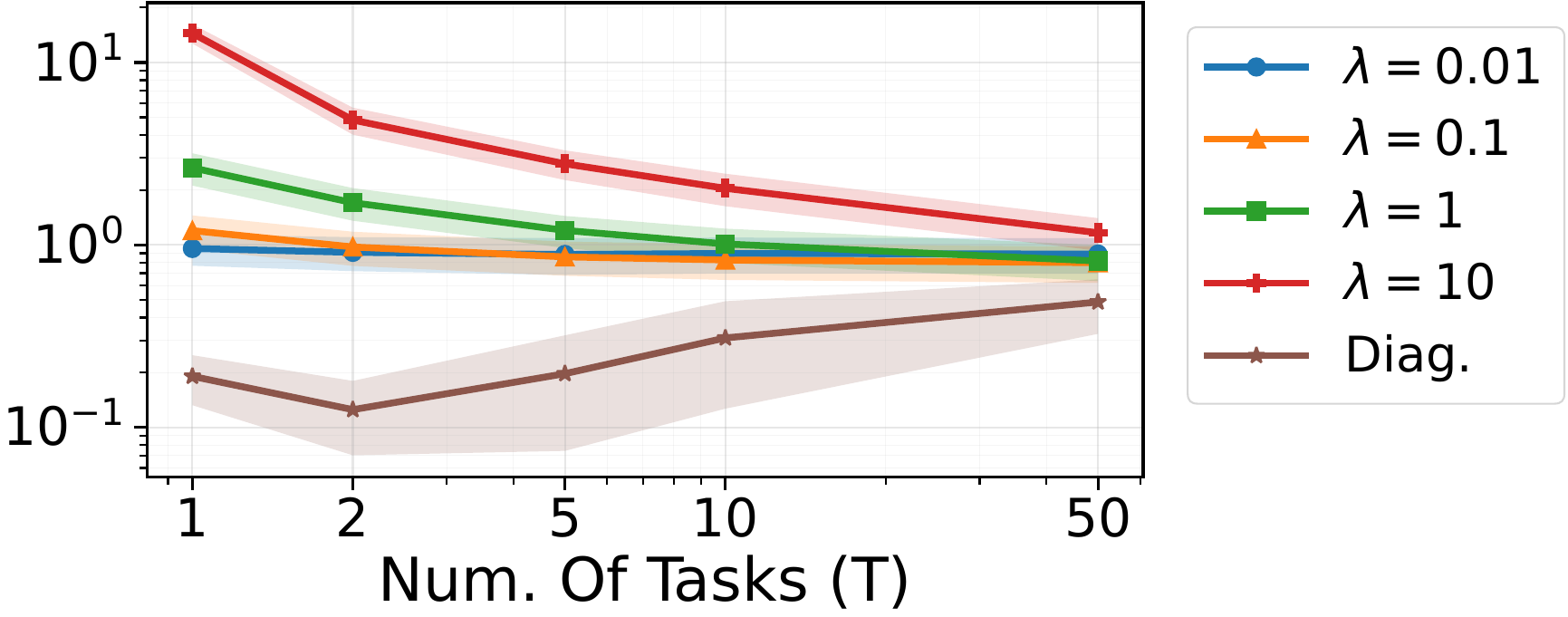}
        \caption{Sparse teacher}
        \label{fig:m_reg_right}
    \label{fig:6}
    \end{subfigure}

    \caption{\textbf{Optimal isotropic regularization conditions.} We plot the mean MSE with varying regularization strengths and the optimal diagonal regularizer obtained by optimizing the theoretical objective in \cref{thm:4}. Error bars denote the standard deviation across 50 random seeds.}
    \label{fig:m_reg_both}
\end{figure}
\vspace{-1em}

\section{Related work and discussion}
\label{sec:4}

\vspace{-0.5em}

\textbf{Summary.} In this paper, we studied continual linear regression under a simplified yet analytically tractable setting, assuming isotropic feature covariance, i.i.d. features, and either i.i.d. or single teacher. Despite these simplifying assumptions, our analysis yields meaningful theoretical insights that appear to persist beyond the linear regime. In particular, we identify a simple scaling law, $\gamma^\star \asymp \ln T/T$, which is supported both theoretically and empirically. This scaling provides a practical rule of thumb and significantly reduces the need for extensive hyperparameter tuning. Notably, we observe that this behavior extends beyond linear models, holding empirically even for neural networks. Next, we compare our results with existing work and discuss potential future directions.

\textbf{Implicit regularization.} \citet{jung2025convergence} study continual learning on multiple linear \emph{classification} tasks, where each task is trained with gradient descent for a fixed budget of iterations per task. They consider both cyclic and random task ordering, for which they proved the convergence of the training loss to zero. For cyclic ordering, they provided the non-asymptotic rate convergence, whereas for random task ordering, they provided the asymptotic loss convergence.
\citet{levinstein2026optimal} studied the optimal scaling of ridge regression and early-stopped GD in continual linear regression with arbitrary fixed features and a single noiseless teacher. Their analysis provides near-optimal \emph{worst-case} convergence guarantees. In particular, they consider a fixed step-size $\gamma \in (0, 1/R^2)$ (where $R$ is the data radius) and choose the iteration budget as $N=\frac{\ln \left(1-\frac{1}{\ln T}\right)}{\ln\left(1-\gamma R^2\right)}$, though it still remains unclear whether other values could perform better. Interestingly, in a similar single teacher noiseless setting our analysis dictates that the optimal number of gradient steps is $N\to\infty$. Only in the presence of label noise or teacher noise a finite optimal $N$ emerge. Indeed, in our setting, early-stopping slows convergence, yet it mitigates the asymptotic error arising from label noise or variation in the teachers.

\textbf{Regularization matrices.} \citet{zhao2024statistical,zhu2025global} construct a task-dependent regularization matrix from the empirical covariance at each task, which requires storing a $d \times d$ matrix. However, maintaining such a matrix is computationally expensive. In principle, this can eliminate forgetting; see \citet[Proposition 5.5]{evron2023classification}.
Moreover, their analysis is restricted to a specific design of regularization matrices. In contrast, we study arbitrary diagonal regularization matrix which is considered a typical choice in continual learning \citep{kirkpatrick2017overcoming,schwarz2018progress}.

% \paragraph{Statistical analysis and data model.} 
% So far, prior optimization-focused work \citep{evron2022catastrophic,evron2025continual,levinstein2026optimal} had to make a limiting assumption of joint realizability—i.e., that training data can be perfectly fitted by a single noiseless ‘teacher’ model. In contrast, our analysis derives a closed-form expression for the generalization loss of early-stopped gradient descent that naturally accommodates multiple teachers and label noise, which can more closely characterize real-world scenarios, and covers a wider family of setups. Our statistical analysis has an average-case nature—allowing \emph{exact} generalization expressions, while the optimization-focused body of work accommodates worst-case analysis that only allows upper and lower bounds. 

\textbf{Optimal scaling.} \citet{karpel2026optimal} showed that, under isotropic data and i.i.d.~teachers, the optimal L2 regularization minimizing the expected generalization loss in the overparameterized regime scales nearly linearly with the number of tasks. In this paper, we significantly strengthen and extend their results by considering both overparameterized and underparameterized regimes, and providing almost-sure or high-probability guarantees. Furthermore, our analysis shows that even when each task is trained for many gradient steps, the predicted scaling of the optimal L2 regularization emerges only after sufficiently many tasks (see \cref{fig:Gen_T}, in Appendix \ref{sec:exl2}). Finally, we show that isotropic regularization can fail altogether even under a simple setting and we derive necessary and sufficient conditions for its optimality within the broader class of diagonal regularizers that are practically used.

\textbf{Future directions.} Several natural directions remain open. Future work may relax the isotropy and independence assumptions by considering non-isotropic covariances, temporally correlated features, or non-i.i.d.~teacher sequences. Additional extensions include classification problems (e.g., logistic regression), nonlinear models, and more general non-stationary environments.

\subsubsection*{Acknowledgments}
The research of DS was funded by the European Union (ERC, A-B-C-Deep, 101039436). Views and opinions expressed are however those of the author only and do not necessarily reflect those of the European Union or the European Research Council Executive Agency (ERCEA). Neither the European Union nor the granting authority can be held responsible for them. The research of DS was also supported by the Council for Higher Education in Israel under the Moonshot Project.

% \subsection*{AI use statement}

% In this work, we used generative AI tools for polishing the writing within some of the proofs write-up.
% We have not used generative AI tools for helping develop theoretical models or conceptual frameworks, formulating mathematical claims, providing critical ingredients for proving mathematical claims, proposing or refining hypotheses, designing or providing feedback on research methodology or experiments, implementing methods, cleaning and generating synthetic data sets, cleaning and reformatting dataset, assisting with translation, supporting qualitative and thematic data analysis, are not applicable to this work.
% Additionally, we used generative AI tools for creating or editing software code. We have reviewed all AI-assisted work. Any LLM-generated code or text was verified and tested for correctness by the authors. We take responsibility for the final content of this work,
% including text, claims or artifacts produced with the aid of generative AI.

% \subsection*{Reproducibility statement}

% The required details to reproduce the main experimental results of the paper appear in \cref{sec:5}. For theoretical results, we provide clear explanations of any assumptions and a complete
% proof for each one of the claims, see \cref{sec:statistical_setting,sec:1,sec:2,sec:3}.

\bibliography{ICLR/iclr2027_conference}

@inproceedings{ali2019continuous,
  title={A continuous-time view of early stopping for least squares regression},
  author={Ali, Alnur and Kolter, J Zico and Tibshirani, Ryan J},
  booktitle={The 22nd international conference on artificial intelligence and statistics},
  pages={1370--1378},
  year={2019},
  organization={PMLR}
}

@article{kirkpatrick2017overcoming,
  title={Overcoming catastrophic forgetting in neural networks},
  author={Kirkpatrick, James and Pascanu, Razvan and Rabinowitz, Neil and Veness, Joel and Desjardins, Guillaume and Rusu, Andrei A and Milan, Kieran and Quan, John and Ramalho, Tiago and Grabska-Barwinska, Agnieszka and others},
  journal={Proceedings of the national academy of sciences},
  volume={114},
  number={13},
  pages={3521--3526},
  year={2017},
  publisher={National Acad Sciences}
}

@incollection{mccloskey1989catastrophic,
  title={Catastrophic interference in connectionist networks: The sequential learning problem},
  author={McCloskey, Michael and Cohen, Neal J},
  booktitle={Psychology of learning and motivation},
  volume={24},
  pages={109--165},
  year={1989},
  publisher={Elsevier}
}

@inproceedings{lubana2022quadratic,
  title={How do quadratic regularizers prevent catastrophic forgetting: The role of interpolation},
  author={Lubana, Ekdeep Singh and Trivedi, Puja and Koutra, Danai and Dick, Robert},
  booktitle={Conference on Lifelong Learning Agents},
  pages={819--837},
  year={2022},
  organization={PMLR}
}

@inproceedings{benzing2022unifying,
  title={Unifying importance based regularisation methods for continual learning},
  author={Benzing, Frederik},
  booktitle={International Conference on Artificial Intelligence and Statistics},
  pages={2372--2396},
  year={2022},
  organization={PMLR}
}

@inproceedings{evron2022catastrophic,
  title={How catastrophic can catastrophic forgetting be in linear regression?},
  author={Evron, Itay and Moroshko, Edward and Ward, Rachel and Srebro, Nathan and Soudry, Daniel},
  booktitle={Conference on Learning Theory (COLT)},
  pages={4028--4079},
  year={2022},
  organization={PMLR}
}

@inproceedings{zenke2017continual,
  title={Continual learning through synaptic intelligence},
  author={Zenke, Friedemann and Poole, Ben and Ganguli, Surya},
  booktitle={International Conference on Machine Learning},
  pages={3987--3995},
  year={2017},
  organization={PMLR}
}

@inproceedings{aljundi2018memory,
  title={Memory aware synapses: Learning what (not) to forget},
  author={Aljundi, Rahaf and Babiloni, Francesca and Elhoseiny, Mohamed and Rohrbach, Marcus and Tuytelaars, Tinne},
  booktitle={Proceedings of the European Conference on Computer Vision (ECCV)},
  pages={139--154},
  year={2018}
}

@InProceedings{evron2023classification,
  title = 	 {Continual Learning in Linear Classification on Separable Data},
  author =       {Evron, Itay and Moroshko, Edward and Buzaglo, Gon and Khriesh, Maroun and Marjieh, Badea and Srebro, Nathan and Soudry, Daniel},
  booktitle = 	 {Proceedings of the 40th International Conference on Machine Learning},
  pages = 	 {9440--9484},
  year = 	 {2023},
  volume = 	 {202},
  series = 	 {Proceedings of Machine Learning Research},
  month = 	 {23--29 Jul},
  publisher =    {PMLR},
}

@inproceedings{jung2025convergence,
title={Convergence and Implicit Bias of Gradient Descent on Continual Linear Classification},
author={Jung, Hyunji and Cho, Hanseul and Yun, Chulhee},
booktitle={The Thirteenth International Conference on Learning Representations},
year={2025},
}

@InProceedings{lin2023theory,
  title = 	 {Theory on Forgetting and Generalization of Continual Learning},
  author =       {Lin, Sen and Ju, Peizhong and Liang, Yingbin and Shroff, Ness},
  booktitle = 	 {Proceedings of the 40th International Conference on Machine Learning},
  pages = 	 {21078--21100},
  year = 	 {2023},
  volume = 	 {202},
  series = 	 {Proceedings of Machine Learning Research},
  month = 	 {23--29 Jul},
  publisher =    {PMLR},
}

@inproceedings{ding2024understanding,
title={Understanding Forgetting in Continual Learning with Linear Regression},
author={Meng Ding and Kaiyi Ji and Di Wang and Jinhui Xu},
booktitle={Forty-first International Conference on Machine Learning},
year={2024}
}

@inproceedings{zhao2024statistical,
title={A Statistical Theory of Regularization-Based Continual Learning},
author={Xuyang Zhao and Huiyuan Wang and Weiran Huang and Wei Lin},
booktitle={Forty-first International Conference on Machine Learning},
year={2024}
}

@inproceedings{hiratani2024disentangling,
title={Disentangling and mitigating the impact of task similarity for continual learning},
author={Naoki Hiratani},
booktitle={The Thirty-eighth Annual Conference on Neural Information Processing Systems},
year={2024}
}

@InProceedings{li2023fixed,
  title = 	 {Fixed Design Analysis of Regularization-Based Continual Learning},
  author =       {Li, Haoran and Wu, Jingfeng and Braverman, Vladimir},
  booktitle = 	 {Proceedings of The 2nd Conference on Lifelong Learning Agents},
  pages = 	 {513--533},
  year = 	 {2023},
  editor = 	 {Chandar, Sarath and Pascanu, Razvan and Sedghi, Hanie and Precup, Doina},
  volume = 	 {232},
  series = 	 {Proceedings of Machine Learning Research},
  month = 	 {22--25 Aug},
  publisher =    {PMLR},
}

@inproceedings{evron2025continual,
    title={From Continual Learning to {SGD} and Back: Better Rates for Continual Linear Models},
    author={Itay Evron and Ran Levinstein and Matan Schliserman and Uri Sherman and Tomer Koren and Daniel Soudry and Nathan Srebro},
    booktitle={37th International Conference on Algorithmic Learning Theory},
    year={2026},
}

@inproceedings{smith2023closer,
  title={A closer look at rehearsal-free continual learning},
  author={Smith, James Seale and Tian, Junjiao and Halbe, Shaunak and Hsu, Yen-Chang and Kira, Zsolt},
  booktitle={2023 IEEE/CVF Conference on Computer Vision and Pattern Recognition Workshops (CVPRW)},
  pages={2410--2420},
  year={2023},
  organization={IEEE}
}

@inproceedings{chaudhry2018riemannian,
  title={Riemannian walk for incremental learning: Understanding forgetting and intransigence},
  author={Chaudhry, Arslan and Dokania, Puneet K and Ajanthan, Thalaiyasingam and Torr, Philip HS},
  booktitle={Proceedings of the European conference on computer vision (ECCV)},
  pages={532--547},
  year={2018}
}

@article{french1999catastrophic,
  title={Catastrophic forgetting in connectionist networks},
  author={French, Robert M},
  journal={Trends in cognitive sciences},
  volume={3},
  number={4},
  pages={128--135},
  year={1999},
  publisher={Elsevier}
}

@article{cooley1965algorithm,
  title={An algorithm for the machine calculation of complex Fourier series},
  author={Cooley, James W and Tukey, John W},
  journal={Mathematics of computation},
  volume={19},
  number={90},
  pages={297--301},
  year={1965},
  publisher={JSTOR}
}

@book{aastrom2008feedback,
    author = {Astrom, Karl Johan and Murray, Richard M.},
    title = {Feedback Systems: An Introduction for Scientists and Engineers},
    year = {2008},
    publisher = {Princeton University Press},
}

@article{vaswani2017attention,
  title={Attention is all you need},
  author={Vaswani, Ashish and Shazeer, Noam and Parmar, Niki and Uszkoreit, Jakob and Jones, Llion and Gomez, Aidan N and Kaiser, {\L}ukasz and Polosukhin, Illia},
  journal={Advances in neural information processing systems},
  volume={30},
  year={2017}
}

@INPROCEEDINGS{liu2023periodically,
    author = { Liu, Qipeng and Lin, Luojun and Shen, Zhifeng and Yang, Zhifeng },
    booktitle = { 2023 IEEE/CVF International Conference on Computer Vision (ICCV) },
    title = {{ Periodically Exchange Teacher-Student for Source-Free Object Detection }},
    year = {2023},
    volume = {},
    ISSN = {},
    pages = {6391-6401},
    publisher = {IEEE Computer Society},
}

@article{Hsu2018Baselines,
  title={Re-evaluating continual learning scenarios: A categorization and case for strong baselines},
  author={Hsu, Yen-Chang and Liu, Yen-Cheng and Ramasamy, Anita and Kira, Zsolt},
  journal={arXiv preprint arXiv:1810.12488},
  year={2018}
}

@inproceedings{levinstein2026optimal,
    title={Optimal Rates in Continual Linear Regression via Increasing Regularization},
    author={Ran Levinstein and Amit Attia and Matan Schliserman and Uri Sherman and Daniel Soudry and Tomer Koren and Itay Evron},
    booktitle={The Thirty-ninth Annual Conference on Neural Information Processing Systems},
    year={2025}
}

@article{zhu2025global,
  title={Global convergence of continual learning on non-IID data},
  author={Zhu, Fei and Liu, Yujing and Liu, Wenzhuo and Zhang, Zhaoxiang},
  journal={arXiv preprint arXiv:2503.18511},
  year={2025}
}

@article{zhao2025highdimensional,
  title={High-dimensional Asymptotics of Generalization Performance in Continual Ridge Regression},
  author={Zhao, Yihan and Su, Wenqing and Yang, Ying},
  journal={arXiv preprint arXiv:2508.15494},
  year={2025}
}

@book{liao2023rmtml,
  title={Random matrix methods for machine learning},
  author={Couillet, Romain and Liao, Zhenyu},
  year={2022},
  publisher={Cambridge University Press}
}

@inproceedings{tsipory2026greedy,
    title={Are Greedy Task Orderings Better Than Random in Continual Linear Regression?},
    author={Matan Tsipory and Ran Levinstein and Itay Evron and Mark Kong and Deanna Needell and Daniel Soudry},
    booktitle={The Thirty-ninth Annual Conference on Neural Information Processing Systems},
    year={2025}
}

@article{sonthalia2024regularization,
  title={On regularization via early stopping for least squares regression},
  author={Sonthalia, Rishi and Lok, Jackie and Rebrova, Elizaveta},
  journal={arXiv preprint arXiv:2406.04425},
  year={2024}
}

@InProceedings{schwarz2018progress,
  title = {Progress \& Compress: A Scalable Framework for Continual Learning},
  author =       {Schwarz, Jonathan and Czarnecki, Wojciech and Luketina, Jelena and Grabska-Barwinska, Agnieszka and Teh, Yee Whye and Pascanu, Razvan and Hadsell, Raia},
  booktitle = 	 {Proceedings of the 35th International Conference on Machine Learning},
  pages = 	 {4528--4537},
  year = 	 {2018},
}

@article{silverstein1995empirical,
  title={On the empirical distribution of eigenvalues of a class of large dimensional random matrices},
  author={Silverstein, Jack W and Bai, Zhi Dong},
  journal={Journal of Multivariate analysis},
  volume={54},
  number={2},
  pages={175--192},
  year={1995},
  publisher={Elsevier}
}

@InProceedings{karpel2026optimal,
  title = 	 {Optimal L2 Regularization in High-dimensional Continual Linear Regression},
  author =       {Karpel, Gilad and Moroshko, Edward and Levinstein, Ran and Meir, Ron and Soudry, Daniel and Evron, Itay},
  booktitle = 	 {Proceedings of The 37th International Conference on Algorithmic Learning Theory},
  year = 	 {2026}
}

@inproceedings{rombach2022high,
  title={High-resolution image synthesis with latent diffusion models},
  author={Rombach, Robin and Blattmann, Andreas and Lorenz, Dominik and Esser, Patrick and Ommer, Bj{\"o}rn},
  booktitle={Proceedings of the IEEE/CVF conference on computer vision and pattern recognition},
  pages={10684--10695},
  year={2022}
}

@article{saharia2022photorealistic,
  title={Photorealistic text-to-image diffusion models with deep language understanding},
  author={Saharia, Chitwan and Chan, William and Saxena, Saurabh and Li, Lala and Whang, Jay and Denton, Emily L and Ghasemipour, Kamyar and Gontijo Lopes, Raphael and Karagol Ayan, Burcu and Salimans, Tim and others},
  journal={Advances in neural information processing systems},
  volume={35},
  pages={36479--36494},
  year={2022}
}

@article{patil2022mitigating,
  title={Mitigating multiple descents: A model-agnostic framework for risk monotonization},
  author={Patil, Pratik and Kuchibhotla, Arun Kumar and Wei, Yuting and Rinaldo, Alessandro},
  journal={arXiv preprint arXiv:2205.12937},
  year={2022}
}
\bibliographystyle{ICLR/iclr2027_conference}

\appendix
\newpage
\section{Numerical validations of the exact generalization predictions}
In this Appendix, we perform a numerical validations of the \emph{exact} generalization prediction produced by both \cref{thm:closedloss,thm:4}.

\subsection{Numerical validation of the prediction in \cref{thm:closedloss}}
\label{sec:val1}
We verify the theoretical prediction from \cref{thm:closedloss}.

In this experiment, we set $d=100$ and $v_x=1$, with $N=3$ and $\gamma=0.2$. We examine both i.i.d. Gaussian-perturbed teachers (where the perturbation has zero mean and standard deviation of $1/2$) and sinusoidal teachers as defined in \cref{assump:sin_teachers} (using $\mathbf B=\mathbf 1_d$ and $f=1/10$). In both configurations, the base teacher is initialized from a Bernoulli distribution with parameter $p=0.7$. All reported results are averaged over 50 random seeds.

\begin{figure}[h!]
    \centering
    \includegraphics[width=1\linewidth]{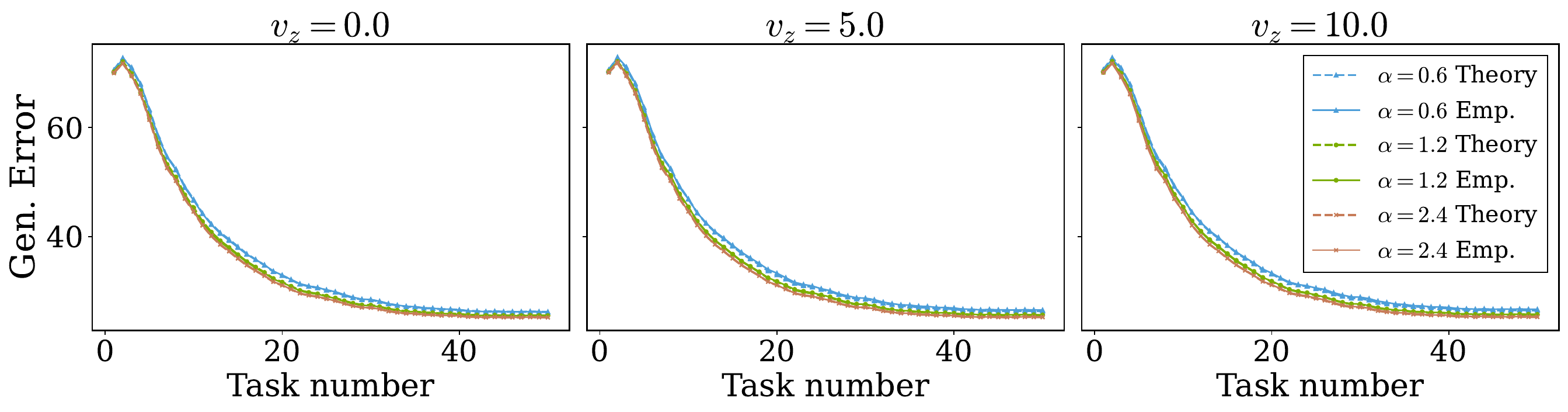}
    \caption{Continual linear regression with early-stopped GD (i.i.d.~teachers).}
\label{fig:11}
\end{figure}
\begin{figure}[h!]
    \centering
    \includegraphics[width=1\linewidth]{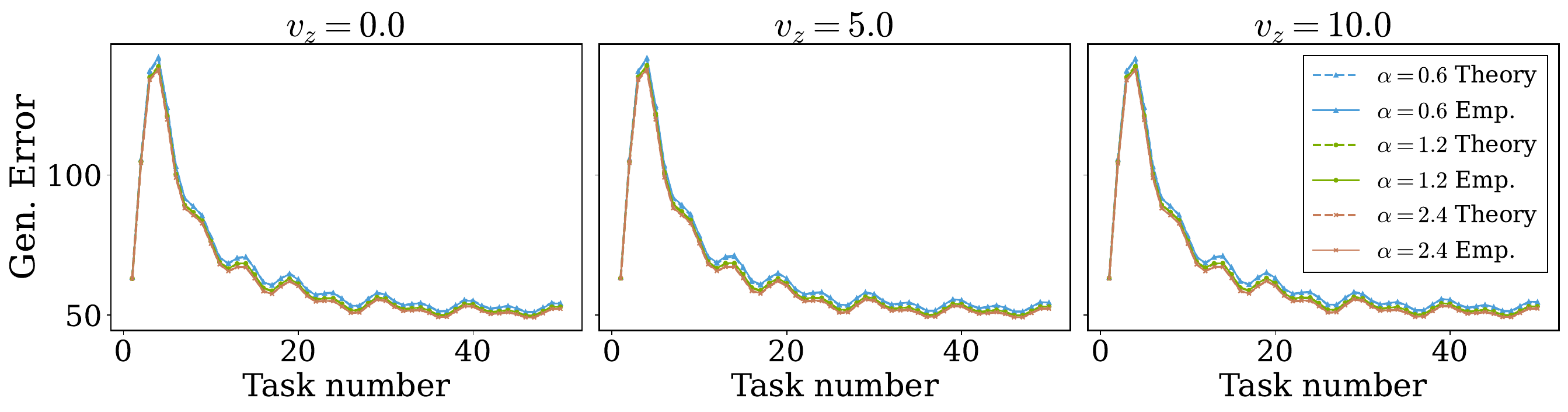}
    \caption{Continual linear regression with early-stopped GD (sinusoidal teachers).}
\label{fig:12}
\end{figure}
In both \cref{fig:11,fig:12} we observe a near identical matching between the theoretical and empirical generalization loss.

\newpage
\subsection{Numerical validation of the prediction in \cref{thm:4}}
\label{sec:val4}
We verify the theoretical prediction from \cref{thm:4}.

In this experiment, we set $d=100$, $v_x=1$ and use two diagonal regularizers. Ascending diagonal with entries $\mathrm{diag}(1/d,2/d,\dots,1)$ and descending diagonal with entries $\mathrm{diag}(1,(d-1)/d,\dots,1/d)$\footnote{Since $d$ is finite-dimensional, in this situation the smallest eigenvalue is bounded from below by $1/100$. In contrast, when $d\to \infty$, such diagonal regularization matrices lie outside the class of matrices examined in the paper, as the smallest eigenvalue is not bounded below by any positive scalar.}, in both configurations, the teacher is defined by setting the first $d/10$ components to 1 and all remaining components to 0. All results are averaged over 50 random seeds.

\begin{figure}[h!]
    \centering
    \includegraphics[width=1\linewidth]{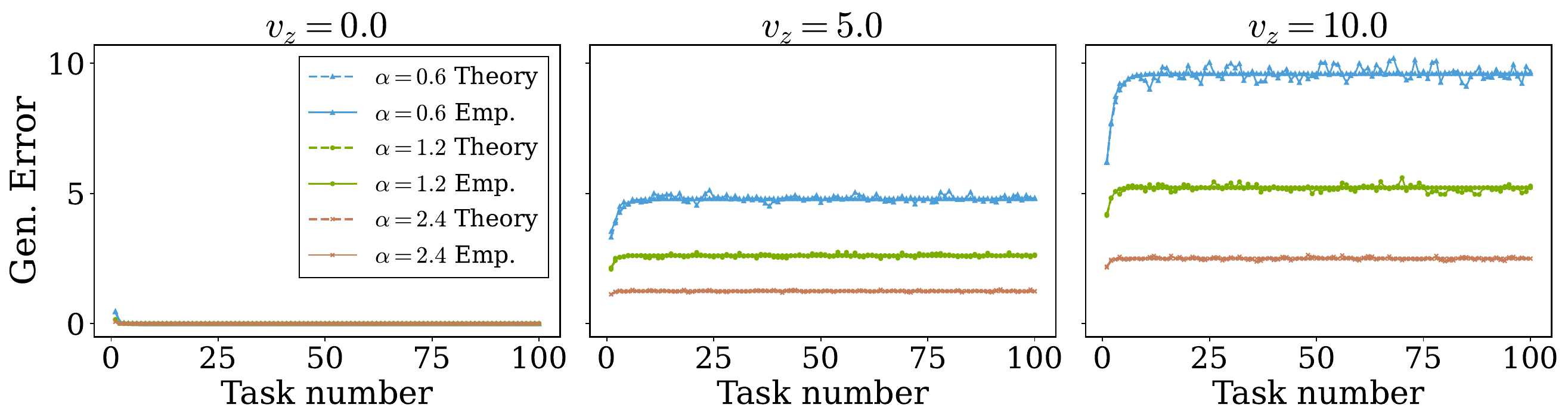}
    \caption{Continual Generalized L2 Regularization (ascending entries).}
\label{fig:13}
\end{figure}
\begin{figure}[h!]
    \centering
    \includegraphics[width=1\linewidth]{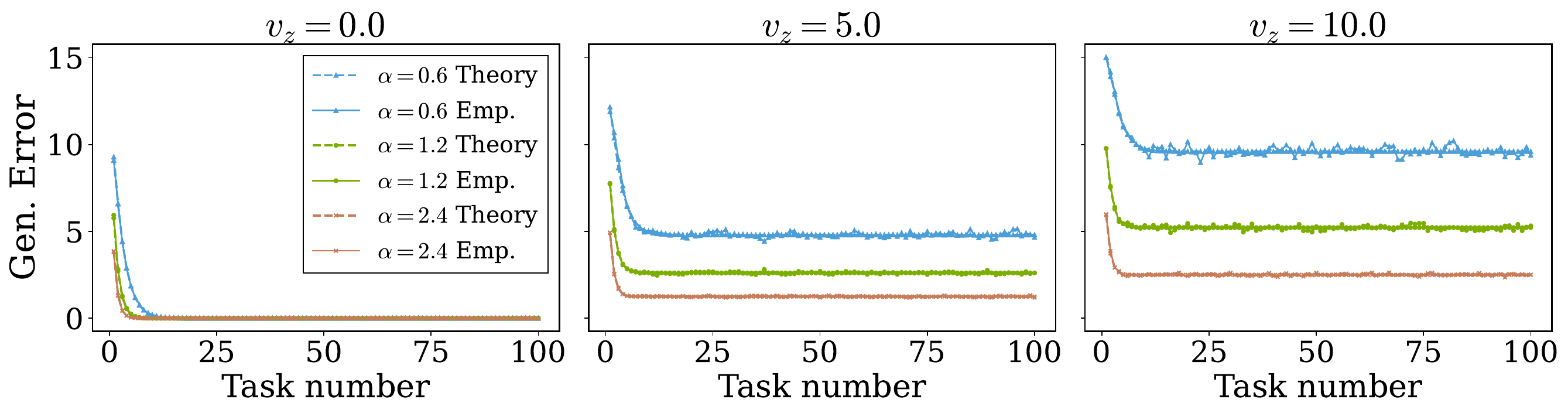}
    \caption{Continual Generalized L2 Regularization (descending entries).}
\label{fig:14}
\end{figure}
In both \cref{fig:13,fig:14} we observe a near identical matching between the theoretical and empirical generalization loss.

\newpage
\section{Additional Continual Linear Regression Experiments}
\subsection{Explicit Regularization Via an early-stopped GD On The Regularized Loss}
\label{sec:exl2}
We experiment with gradient descent on \mnist-based data (\cref{sec:5}) run for $N$ steps of over the \emph{regularized} objective
$\frac{1}{2n}\left\Vert\X\mathbf{w}-\y\right\Vert^2+\frac{\lambda}{2}\left\Vert\mathbf{w}-\mathbf{w}_{t-1}\right\Vert^2$. We visualize the empirical optimal regularizer over different numbers of steps and number of tasks.
Evidently, even for large $N$ (corresponding to near-convergent optimization within each task), the resulting $\lambda^\star(T)$ does not exhibit the predicted near linear scaling as predicted by \citet{karpel2026optimal}; such behavior only begins to emerge after a large number of tasks.

\begin{figure}[H]
    \centering
    \begin{subfigure}[t]{.44\columnwidth}
        \centering
        \includegraphics[width=\linewidth]{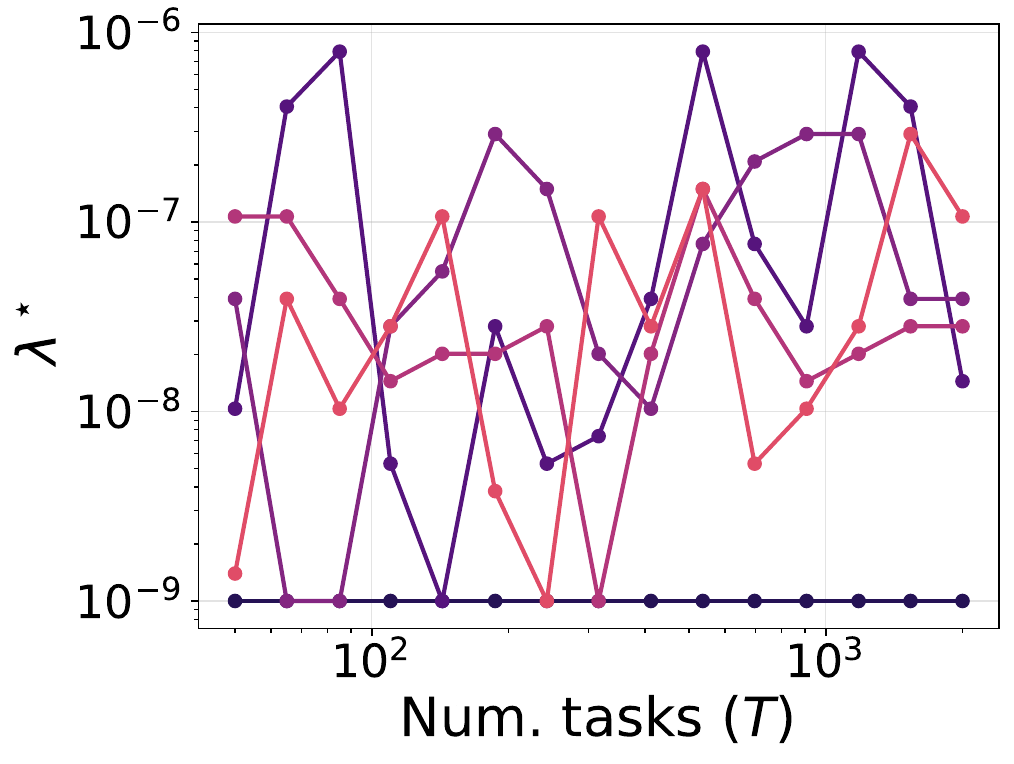}
        \caption{Effect of small steps ($N$) on the optimal regularizer.}
    \end{subfigure}
    \hfill
    \begin{subfigure}[t]{.55\columnwidth}
        \centering
        \includegraphics[width=\linewidth]{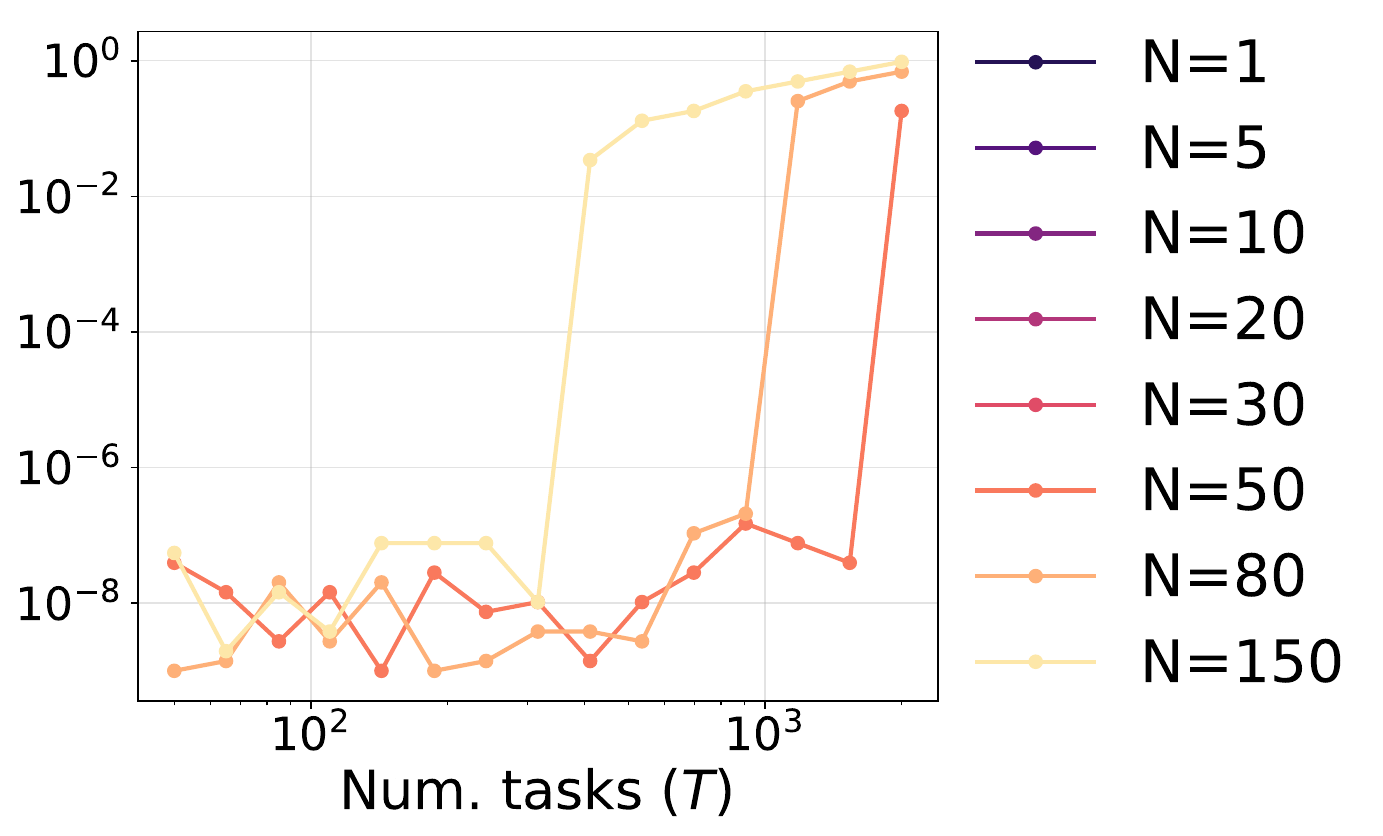}
        \caption{Effect of large steps ($N$) on the optimal regularizer.}
    \end{subfigure}
    \caption{
    \textbf{Explicit regularization does not replicate implicit regularization in continual learning.}
    We plot the optimal regularization parameter $\lambda^\star$ as a function of the number of tasks $T$ for different numbers of gradient steps $N$. We compute $\lambda^\star$ with an exhaustive search. 
    }
    \label{fig:Gen_T}
\end{figure}
\newpage
\section{Proofs for \cref{sec:1}: Early-Stopped GD}
\label{pr:sec1}
\subsection{Auxiliary Lemmas and definitions}
We begin by providing a few Lemmas, the following Lemma provides a closed-form solution to the iterates produced by early-stopped GD algorithm (\cref{schm:early_stopped}).
\begin{lemma}
\label{lem:1}
At task $t$ the predictor update under \cref{schm:early_stopped} after $k$ gradient steps is given by,
$$
\mathbf{w}^{k}_{t}=\left(\mathbf{I}-\gamma\frac{1}{n}\mathbf{X}_{t}^{\top}\mathbf{X}_{t}\right)^{k}\mathbf{w}_{t-1}+\gamma\frac{1}{n}\sum_{i=0}^{k-1}\left(\mathbf{I}-\gamma\frac{1}{n}\mathbf{X}_{t}^{\top}\mathbf{X}_{t}\right)^{i}\mathbf{X}_{t}^{\top}\mathbf{y}_{t}.
$$
\begin{proof}
Let $t\in [1, T]$, and let $k\in [1,N]$, then,
\begin{align}
    \mathbf{w}_{t}^{k}-\mathbf{w}_{t}^{\star}&=\mathbf{w}_{t}^{k-1}-\gamma\nabla\frac{1}{2n}\left\Vert \mathbf{X}_{t}\mathbf{w}_{t}^{k-1}-\mathbf{y}_{t}\right\Vert^2 -\mathbf{w}_{t}^{\star}\notag\\&=\mathbf{w}_{t}^{k-1}-\gamma\frac{1}{n}\mathbf{X}_{t}^{\top}\left(\mathbf{X}_{t}\mathbf{w}_{t}^{k-1}-\mathbf{y}_{t}\right)-\mathbf{w}_{t}^{\star}\notag\\&=\left(\mathbf{I}-\gamma\frac{1}{n}\mathbf{X}_{t}^{\top}\mathbf{X}_{t}\right)\mathbf{w}_{t}^{k-1}+\gamma\frac{1}{n}\mathbf{X}_{t}^{\top}\mathbf{y}_{t}-\left[\left(\mathbf{I}-\gamma\frac{1}{n}\mathbf{X}_{t}^{\top}\mathbf{X}_{t}\right)\mathbf{w}_{t}^{\star}+\gamma\frac{1}{n}\mathbf{X}_{t}^{\top}\mathbf{X}_{t}\mathbf{w}_{t}^{\star}\right]\notag\\&=\left(\mathbf{I}-\gamma\frac{1}{n}\mathbf{X}_{t}^{\top}\mathbf{X}_{t}\right)\left(\mathbf{w}_{t}^{k-1}-\mathbf{w}_{t}^{\star}\right)+\gamma\frac{1}{n}\mathbf{X}_{t}^{\top}\left(\mathbf{y}_{t}-\mathbf{X}_{t}\mathbf{w}_{t}^{\star}\right)\notag\\&=\ldots\notag\\&=\left(\mathbf{I}-\gamma\frac{1}{n}\mathbf{X}_{t}^{\top}\mathbf{X}_{t}\right)^{k}\left(\mathbf{w}_{t-1}-\mathbf{w}_{t}^{\star}\right)+\gamma\frac{1}{n}\sum_{i=0}^{k-1}\left(\mathbf{I}-\gamma\frac{1}{n}\mathbf{X}_{t}^{\top}\mathbf{X}_{t}\right)^{i}\mathbf{X}_{t}^{\top}\left(\mathbf{y}_{t}-\mathbf{X}_{t}\mathbf{w}_{t}^{\star}\right)\notag\\&=\left(\mathbf{I}-\gamma\frac{1}{n}\mathbf{X}_{t}^{\top}\mathbf{X}_{t}\right)^{k}\left(\mathbf{w}_{t-1}-\mathbf{w}_{t}^{\star}\right)+\gamma\frac{1}{n}\sum_{i=0}^{k-1}\left(\mathbf{I}-\gamma\frac{1}{n}\mathbf{X}_{t}^{\top}\mathbf{X}_{t}\right)^{i}\mathbf{X}_{t}^{\top}\mathbf{y}_{t}\notag\\&\hspace{12em}-\underbrace{\gamma\frac{1}{n}\sum_{i=0}^{k-1}\left(\mathbf{I}-\gamma\frac{1}{n}\mathbf{X}_{t}^{\top}\mathbf{X}_{t}\right)^{i}\mathbf{X}_{t}^{\top}\mathbf{X}_{t}\mathbf{w}_{t}^{\star}}_{\text{*}}\label{eq:1},
\end{align}
we continue with developing the last term in \cref{eq:1} (denoted by *), 
\begin{equation}
\label{eq:2}
\begin{aligned}
    \gamma\frac{1}{n}\sum_{i=0}^{k-1}\left(\mathbf{I}-\gamma\frac{1}{n}\mathbf{X}_{t}^{\top}\mathbf{X}_{t}\right)^{i}\mathbf{X}_{t}^{\top}\mathbf{X}_{t}\mathbf{w}_{t}^{\star}=&\gamma\frac{1}{n}\sum_{s=0}^{k-1}\sum_{p=0}^{s}{\binom{s}{p}}\left(-\gamma\frac{1}{n}\mathbf{X}_{t}^{\top}\mathbf{X}_{t}\right)^{p}\mathbf{X}_{t}^{\top}\mathbf{X}_{t}\mathbf{w}_{t}^{\star}\\=&\gamma\frac{1}{n}\mathbf{X}_{t}^{\top}\mathbf{X}_{t}\sum_{s=0}^{k-1}\sum_{p=0}^{s}{\binom{s}{p}}\left(-\gamma\frac{1}{n}\mathbf{X}_{t}^{\top}\mathbf{X}_{t}\right)^{p}\mathbf{w}_{t}^{\star}\\=&\gamma\frac{1}{n}\mathbf{X}_{t}^{\top}\mathbf{X}_{t}\sum_{s=0}^{k-1}\left(\mathbf{I}-\gamma\frac{1}{n}\mathbf{X}_{t}^{\top}\mathbf{X}_{t}\right)^{s}\mathbf{w}_{t}^{\star}\\=&\left(\mathbf{I}-\left(\mathbf{I}-\gamma\frac{1}{n}\mathbf{X}_{t}^{\top}\mathbf{X}_{t}\right)^{k}\right)\mathbf{w}_{t}^{\star},
\end{aligned}
\end{equation}

Then by \cref{eq:1,eq:2},
\begin{align*}
    &\mathbf{w}_{t}^{k}-\mathbf{w}_{t}^{\star}=\left(\mathbf{I}-\gamma\frac{1}{n}\mathbf{X}_{t}^{\top}\mathbf{X}_{t}\right)^{k}\left(\mathbf{w}_{t-1}-\mathbf{w}_{t}^{\star}\right)\\&\hspace{6em}+\gamma\frac{1}{n}\sum_{i=0}^{k-1}\left(\mathbf{I}-\gamma\frac{1}{n}\mathbf{X}_{t}^{\top}\mathbf{X}_{t}\right)^{i}\mathbf{X}_{t}^{\top}\mathbf{y}_{t}-\left(\mathbf{I}-\left(\mathbf{I}-\gamma\frac{1}{n}\mathbf{X}_{t}^{\top}\mathbf{X}_{t}\right)^{k}\right)\mathbf{w}_{t}^{\star}\\&=\left(\mathbf{I}-\gamma\frac{1}{n}\mathbf{X}_{t}^{\top}\mathbf{X}_{t}\right)^{k}\mathbf{w}_{t-1}+\gamma\frac{1}{n}\sum_{i=0}^{k-1}\left(\mathbf{I}-\gamma\frac{1}{n}\mathbf{X}_{t}^{\top}\mathbf{X}_{t}\right)^{i}\mathbf{X}_{t}^{\top}\mathbf{y}_{t}-\mathbf{w}_{t}^{\star}.
\end{align*}
The lemma is an immediate consequence of the equality stated above.
\end{proof}
\end{lemma}
The following Lemma provides the closed-from solution to the iterates produced by ridge-regression algorithm (\cref{schm:regularized}).
\begin{lemma}
\label{lem:2}
At each iteration $t$ the predictor update under \cref{schm:regularized} is given by 
$$\mathbf{w}_{t}=\left(\frac{1}{n}\X_{t}^{\top}\X_{t}+\lambda \I\right)^{-1}\left(\frac{1}{n}\X_{t}^{\top}\mathbf{y}_{t}+\lambda \mathbf{w}_{t-1}\right).$$
\end{lemma}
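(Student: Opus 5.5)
This is the first-order optimality condition of the strictly convex quadratic objective in \cref{schm:regularized}, so I would verify it directly. Write
\[
F_t(\mathbf w)=\frac{1}{2n}\Vert\X_t\mathbf w-\y_t\Vert^2+\frac{\lambda}{2}\Vert\mathbf w-\mathbf w_{t-1}\Vert^2 .
\]
Its Hessian is $\frac1n\X_t^\top\X_t+\lambda\I$. For $\lambda>0$ this Hessian is positive definite, because $\frac1n\X_t^\top\X_t\succeq 0$. Hence $F_t$ is strictly convex, and its $\arg\min$ is a single point, characterized by $\nabla F_t(\mathbf w)=0$.

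The plan then has two steps.

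\emph{Step 1 (gradient).} Compute the gradient:
\[
\nabla F_t(\mathbf w)=\frac1n\X_t^\top(\X_t\mathbf w-\y_t)+\lambda(\mathbf w-\mathbf w_{t-1}).
\]
Setting it to zero gives the linear system
\[
\Big(\frac1n\X_t^\top\X_t+\lambda\I\Big)\mathbf w=\frac1n\X_t^\top\y_t+\lambda\mathbf w_{t-1}.
\]

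\emph{Step 2 (invert).} The matrix $\frac1n\X_t^\top\X_t+\lambda\I$ has all eigenvalues at least $\lambda>0$, so it is invertible. Multiplying the system on the left by its inverse gives the stated formula for $\mathbf w_t$.

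Uniqueness of the stationary point, together with convexity, identifies it as the algorithm's update. No step here is a real obstacle; the only point to state explicitly is the standing assumption $\lambda>0$, which guarantees invertibility and uniqueness even when $n<d$ and $\X_t^\top\X_t$ is singular.
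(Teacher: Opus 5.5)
Your proposal is correct and follows the same route as the paper: compute the gradient $\frac1n\X_t^\top(\X_t\mathbf w-\y_t)+\lambda(\mathbf w-\mathbf w_{t-1})$, set it to zero, and invert $\frac1n\X_t^\top\X_t+\lambda\I$ using $\lambda>0$. Your remark that the Hessian is positive definite, so the minimizer is unique, slightly tightens the paper's argument, which invokes only convexity.
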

\begin{proof}
    
Let
$$
J_t(\mathbf{w})=\frac{1}{2n}\left\Vert\X_t\mathbf{w}-\y_t\right\Vert^2+\frac{\lambda}{2} \left\Vert\mathbf{w}-\mathbf{w}_{t-1}\right\Vert^2
.
$$
this function is convex, and its gradient is given by,
$$
\nabla_\mathbf{w} J_t(\mathbf{w})
= \frac{1}{n}\X_t^\top(\X_t\mathbf{w}-\y_t)+\lambda (\mathbf{w}-\mathbf{w}_{t-1})
= \left(\frac{1}{n}\X_t^\top\X_t+\lambda \I\right)\mathbf{w}-\left(\frac{1}{n}\X_t^\top\y_t+\lambda \mathbf{w}_{t-1}\right).
$$
At an optimal point $\mathbf{w}_t$, the first-order optimality (stationarity) condition must hold, hence,

$$
\left(\frac{1}{n}\X_t^\top\X_t+\lambda \mathbf I\right)\mathbf{w}_t=\frac{1}{n}\X_t^\top\y_t+\lambda \mathbf{w}_{t-1}.
$$
If $\lambda>0$, then $\frac{1}{n}\X_t^\top\X_t+\lambda \I$ is symmetric positive definite and invertible, so
$$
\mathbf{w}_t
= \left(\frac{1}{n}\X_t^\top\X_t+\lambda \mathbf I\right)^{-1}\left(\frac{1}{n}\X_t^\top\y_t+\lambda \mathbf{w}_{t-1}\right).
$$
\end{proof}
\begin{lemma}
\label{lem:3}
    For any $\mathbf u, \mathbf v, \mathbf w\in \mathbb R^d$ and $\eta>0$ we have $$\left\Vert \mathbf u+\mathbf v+\mathbf w\right\Vert ^{2}\leq \left\Vert \mathbf{v}\right\Vert ^{2}\left(1+\eta\right)+2\left(\left\Vert \mathbf{u}\right\Vert ^{2}+\left\Vert \mathbf{w}\right\Vert ^{2}\right)\left(1+\frac{1}{\eta}\right).$$
\end{lemma}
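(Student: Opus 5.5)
Expand the left-hand side and bound the cross terms with Young's inequality. Writing
\[
\left\Vert \mathbf u+\mathbf v+\mathbf w\right\Vert^2=\left\Vert \mathbf v\right\Vert^2+2\mathbf v^\top(\mathbf u+\mathbf w)+\left\Vert \mathbf u+\mathbf w\right\Vert^2 ,
\]
we group $\mathbf u+\mathbf w$ as a single vector $\mathbf s$ and treat $\mathbf v$ separately. This is the natural split, because the target bound gives $\mathbf v$ the coefficient $(1+\eta)$ and $\mathbf u,\mathbf w$ the coefficient $2(1+1/\eta)$.

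The first step handles the cross term. By Cauchy--Schwarz and the weighted AM--GM inequality $2ab\le \eta a^2+b^2/\eta$, valid for any $\eta>0$,
\[
2\mathbf v^\top\mathbf s\le \eta\left\Vert \mathbf v\right\Vert^2+\frac{1}{\eta}\left\Vert \mathbf s\right\Vert^2 .
\]
Hence $\left\Vert \mathbf v+\mathbf s\right\Vert^2\le(1+\eta)\left\Vert \mathbf v\right\Vert^2+(1+1/\eta)\left\Vert \mathbf s\right\Vert^2$.

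The second step bounds $\left\Vert \mathbf s\right\Vert^2=\left\Vert \mathbf u+\mathbf w\right\Vert^2\le 2\left(\left\Vert \mathbf u\right\Vert^2+\left\Vert \mathbf w\right\Vert^2\right)$. This is the parallelogram identity: $\left\Vert \mathbf u+\mathbf w\right\Vert^2+\left\Vert \mathbf u-\mathbf w\right\Vert^2=2\left\Vert \mathbf u\right\Vert^2+2\left\Vert \mathbf w\right\Vert^2$, and the term $\left\Vert \mathbf u-\mathbf w\right\Vert^2$ is nonnegative. Substituting this into the first step yields exactly the stated inequality.

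There is no real obstacle here. The only choice that matters is the grouping: $\mathbf u$ and $\mathbf w$ must be paired before applying Young's inequality, so that the factor $(1+\eta)$ lands only on $\left\Vert \mathbf v\right\Vert^2$. Presumably the lemma will later be applied with $\mathbf v$ as the dominant (contracting) term and $\eta$ chosen small, while the other two terms absorb the large factor $1+1/\eta$.
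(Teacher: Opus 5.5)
Your proof is correct and follows essentially the same route as the paper: expand around $\mathbf v$ and $\mathbf u+\mathbf w$, bound the cross term by Cauchy--Schwarz and Young's inequality, then use $\Vert\mathbf u+\mathbf w\Vert^2\le 2(\Vert\mathbf u\Vert^2+\Vert\mathbf w\Vert^2)$. Your parallelogram-identity justification of that last step is a small detail the paper leaves implicit.
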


\begin{proof}
\begin{align*}
   \left\Vert \mathbf{u}+\mathbf{v}+\mathbf{w}\right\Vert ^{2}&=\left\Vert \mathbf{v}\right\Vert ^{2}+2\langle\mathbf{v},\mathbf{u}+\mathbf{w}\rangle+\left\Vert \mathbf{u}+\mathbf{w}\right\Vert ^{2}\\&\stackrel{\explain{1}}{\leq}\left\Vert \mathbf{v}\right\Vert ^{2}+2\left\Vert \mathbf{v}\right\Vert \left\Vert \mathbf{u}+\mathbf{w}\right\Vert +\left\Vert \mathbf{u}+\mathbf{w}\right\Vert ^{2}\\&\stackrel{\explain{2}}{\leq}\left\Vert \mathbf{v}\right\Vert ^{2}+\eta\left\Vert \mathbf{v}\right\Vert ^{2}+\frac{1}{\eta}\left\Vert \mathbf{u}+\mathbf{w}\right\Vert ^{2}+\left\Vert \mathbf{u}+\mathbf{w}\right\Vert ^{2}\\&=\left(1+\eta\right)\left\Vert \mathbf{v}\right\Vert ^{2}+\left(1+\frac{1}{\eta}\right)\left\Vert \mathbf{u}+\mathbf{w}\right\Vert ^{2}\\&\leq\left(1+\eta\right)\left\Vert \mathbf{v}\right\Vert ^{2}+2\left(1+\frac{1}{\eta}\right)\left(\left\Vert \mathbf{u}\right\Vert ^{2}+\left\Vert \mathbf{w}\right\Vert ^{2}\right), 
\end{align*}
where [1] follows by Cauchy-Schwarz inequality, and [2] follows by Young inequality for all $\eta>0$.
\end{proof}
\begin{lemma}
\label{lem:fix}
Let $\X\in\reals^{n\times d}$ satisfy the high-dimensional setting of \cref{sec:statistical_setting}, and define the sample covariance $\mathbf S=\frac1n \mathbf X^\top\mathbf X$ with the Marchenko-Pastur distribution $\mu_{\mathrm{MP},\frac{1}{\ratio}}$. Let $f$ be a fixed polynomial, and let $\mathbf a,\mathbf b\in\reals^{d}$ be deterministic (or independent) vectors with bounded norm, i.e., $\Vert\mathbf a\Vert=\Vert\mathbf b\Vert= O(1)$ (where for random quantities it holds almost surely).
Then as $n,d \to \infty$, we have almost surely that, 
\begin{equation}\label{eq:expectation_isotropic_poly_mp}
\mathbf a^\top f(\mathbf S)\mathbf b \to
\Big(\int f(\lambda)\dd\mu_{\mathrm{MP},\frac{1}{\ratio}}(\lambda)\Big)\mathbf a^\top\mathbf  b.
\end{equation}
\end{lemma}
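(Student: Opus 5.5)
By linearity in $f$, it suffices to prove the claim for monomials $f(\lambda)=\lambda^p$ with $p\ge 0$ fixed. The statement then becomes the isotropic (deterministic-equivalent) law
\[
\mathbf a^\top\mathbf S^p\mathbf b-m_p\,\mathbf a^\top\mathbf b\to 0 \quad\text{a.s.},
\qquad m_p=\int\lambda^p\,\dd\mu_{\mathrm{MP},\frac1\ratio}(\lambda).
\]
The case $p=0$ is trivial. By polarization, $\mathbf a^\top\mathbf S^p\mathbf b=\tfrac14\big[(\mathbf a+\mathbf b)^\top\mathbf S^p(\mathbf a+\mathbf b)-(\mathbf a-\mathbf b)^\top\mathbf S^p(\mathbf a-\mathbf b)\big]$, so it is enough to treat quadratic forms $\mathbf u^\top\mathbf S^p\mathbf u$ with $\Vert\mathbf u\Vert=O(1)$. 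If $\mathbf u$ is random but independent of $\mathbf X$, we condition on it and apply the deterministic result; Fubini then upgrades this to almost sure convergence jointly.

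\textbf{Step 1: resolvent reduction.} I would not expand $\mathbf S^p$ combinatorially. Instead I would use the isotropic local/deterministic-equivalent law for the resolvent $\mathbf Q(z)=(\mathbf S-z\mathbf I)^{-1}$. For $z$ off the real axis it gives $\mathbf u^\top\mathbf Q(z)\mathbf u-m(z)\Vert\mathbf u\Vert^2\to0$ almost surely, where $m$ is the Stieltjes transform of the Marchenko–Pastur law with ratio $1/\ratio$. The isotropic covariance $\featnoise\mathbf I$ is what makes the deterministic equivalent a scalar multiple of $\mathbf I$. This is standard under i.i.d.\ entries with a finite $4+\epsilon$ moment: one proves concentration of the quadratic form through a martingale difference decomposition over the rows of $\mathbf X$, and identifies the mean through the leave-one-out (Sherman–Morrison) argument of \citet{liao2023rmtml} and \citet{silverstein1995empirical}. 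Equivalently, one can write $\mathbf u^\top\mathbf Q(z)\mathbf u=\int(\lambda-z)^{-1}\dd\nu_{\mathbf u}(\lambda)$, where $\nu_{\mathbf u}$ is the spectral measure of $\mathbf S$ weighted by $\mathbf u$. Resolvent convergence on a countable set of $z$ with an accumulation point then yields $\nu_{\mathbf u}\Rightarrow\Vert\mathbf u\Vert^2\mu_{\mathrm{MP}}$ weakly, almost surely.

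\textbf{Step 2: from weak convergence to polynomial moments.} Polynomials are unbounded, so weak convergence alone does not give $\int\lambda^p\,\dd\nu_{\mathbf u}\to\Vert\mathbf u\Vert^2 m_p$. Here I would invoke the Bai–Yin theorem: under the finite fourth moment assumption, $\lambda_{\max}(\mathbf S)\to\featnoise(1+\ratio^{-1/2})^2$ almost surely. Hence almost surely $\nu_{\mathbf u}$ is eventually supported on a fixed compact interval $[0,K]$. On that interval $\lambda^p$ is bounded and continuous, and the total mass $\nu_{\mathbf u}(\reals)=\Vert\mathbf u\Vert^2$ is bounded. Weak convergence therefore gives convergence of $\int\lambda^p\,\dd\nu_{\mathbf u}$, which proves the claim for monomials and hence for $f$.

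\textbf{Main obstacle.} The delicate step is the isotropic resolvent law with almost sure convergence for general (non-Gaussian) entries, in particular controlling the fluctuations of $\mathbf u^\top\mathbf Q\mathbf u$ at rate $o(1)$ summably in $n$ so that Borel–Cantelli applies. A truncation of the entries at level $n^{1/2-\delta}$, justified by the $4+\epsilon$ moment, gives bounded entries. After that, high-moment bounds on the martingale differences give variance $O(n^{-1})$ and fourth central moment $O(n^{-2})$, which is summable. As a fallback I would argue directly by the moment method: compute $\mathbb E[\mathbf u^\top\mathbf S^p\mathbf u]$ by expanding over closed walks, where the leading non-crossing terms reproduce $m_p\Vert\mathbf u\Vert^2$. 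I would then bound the fourth central moment by $O(n^{-2})$ to obtain almost sure convergence, again after truncation.
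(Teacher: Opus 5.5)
Your proposal is correct. It departs from the paper's proof in how it passes from the resolvent to $f(\mathbf S)$.

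\textbf{Shared input.} Both arguments rest on the almost-sure isotropic resolvent law $\mathbf u^\top(\mathbf S-z\mathbf I)^{-1}\mathbf u-m(z)\Vert\mathbf u\Vert^2\to0$. The paper simply quotes this from Couillet--Liao, so your ``main obstacle'' paragraph re-derives something that can be cited.

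\textbf{The paper's route.} It argues analytically. It invokes spectral confinement, $\mathrm{spec}(\mathbf S_d)\subset\mathrm{supp}(\mu_{\mathrm{MP},1/\ratio})+o(1)$, to make $g_d(z)=\mathbf a^\top(\mathbf S_d-z\mathbf I)^{-1}\mathbf b$ analytic and uniformly bounded on compacts of $\mathbb C\setminus\mathrm{supp}(\mu_{\mathrm{MP},1/\ratio})$. Vitali then upgrades convergence on a countable dense set to locally uniform convergence. Finally it evaluates $\mathbf a^\top f(\mathbf S_d)\mathbf b=-\frac{1}{2\pi i}\oint_\Gamma f(z)g_d(z)\dd z$ on a contour around the support.

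\textbf{Your route.} You argue measure-theoretically. Polarization reduces everything to quadratic forms. You read $\mathbf u^\top\mathbf Q(z)\mathbf u$ as the Stieltjes transform of the $\mathbf u$-weighted spectral measure $\nu_{\mathbf u}$, and the Stieltjes continuity theorem gives weak convergence. The unboundedness of $\lambda^p$ is handled by an upper bound on $\lambda_{\max}(\mathbf S)$ alone.

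\textbf{What each buys.} Your argument needs less spectral information: a norm bound instead of the two-sided confinement the paper invokes, which also controls the smallest (nonzero) eigenvalue. It also gives the conclusion for any $f$ continuous on $[0,K]$. The paper's argument gives locally uniform convergence in $z$. As written, it also covers functions analytic near the support but singular elsewhere, e.g.\ $f(\lambda)=1/\lambda$ when $\ratio>1$.

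\textbf{Two points to tighten.}
\begin{enumerate}
\item Since $\Vert\mathbf u\Vert$ is only $O(1)$ and need not converge, state the limit as $\int h\,\dd\nu_{\mathbf u}-\Vert\mathbf u\Vert^2\int h\,\dd\mu_{\mathrm{MP},1/\ratio}\to0$, or work along subsequences on which $\Vert\mathbf u\Vert^2$ converges.
\item If you prove the isotropic law yourself, truncation at $n^{1/2-\delta}$ cannot be made almost sure by a plain union bound over $nd$ fresh entries under only a $4+\epsilon$ moment. The bound $nd\cdot n^{-(1/2-\delta)(4+\epsilon)}$ is not summable for small $\epsilon$, so this step needs the more careful truncation argument used in the Bai--Yin literature.
\end{enumerate}
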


\begin{proof}
Let
$$
g_d(z) := \mathbf a^\top (\mathbf S_d - z\mathbf I)^{-1} \mathbf b.
$$
For each $d$, the function $g_d$ is analytic on the domain $D=\mathbb{C} \setminus \mathrm{supp}(\mu_{\mathrm{MP},1/\alpha}),$
and by spectral confinement, almost surely for all sufficiently large $d$, $\mathrm{spec}(\mathbf S_d) \subset \mathrm{supp}(\mu_{\mathrm{MP},1/\alpha}) + o(1)$,
so that $g_d$ is analytic on $D$ for all large $d$.
Moreover, for any compact set $K \subset D$, there exists $c_K > 0$ such that
$\mathrm{dist}(z,\mathrm{spec}(\mathbf S_d)) \ge c_K$ for all $z \in K$ and all large $d$,
almost surely. Hence, $|g_d(z)|\le \Vert\mathbf a\Vert\Vert\mathbf b\Vert\Vert(\mathbf S_d - z\mathbf I)^{-1}\Vert\le \frac{\Vert\mathbf a\Vert\Vert\mathbf b\Vert}{c_K},$ so that $(g_d)$ is uniformly bounded on compact subsets of $D$ almost surely.

Let $E \subset D$ be a countable dense subset. By the isotropic resolvent law (see, e.g. \citet[Theorem 2.6]{liao2023rmtml}),
for each fixed $z \in E$,
$$
g_d(z)=\mathbf a^\top (\mathbf S_d - z\mathbf I)^{-1} \mathbf b\to m_{1/\alpha}(z)\mathbf a^\top \mathbf b,
$$
almost surely. By taking a countable intersection over $z \in E$, this convergence holds
simultaneously for all $z \in E$ on an event of probability one.

Therefore, by Vitali's convergence theorem, it follows that
$
g_d(z) \to m_{1/\alpha}(z)\mathbf a^\top \mathbf b
,$ uniformly on compact subsets of $D$,
almost surely.
Let $\Gamma \subset D$ be a contour enclosing $\mathrm{supp}(\mu_{\mathrm{MP},1/\alpha})$ with positive margin.
By the holomorphic functional calculus,
$$
\mathbf a^\top f(\mathbf S_d)\mathbf b=-\frac{1}{2\pi i}\oint_\Gamma f(z)g_d(z)\dd z.
$$
By uniform convergence of $g_d$ on $\Gamma$, we may pass the limit inside the integral:
$$
\mathbf a^\top f(\mathbf S_d)\mathbf b\to-\frac{1}{2\pi i}\oint_\Gamma f(z)m_{1/\alpha}(z)\dd z\mathbf a^\top \mathbf b,
$$
almost surely.
Finally, using the Stieltjes transform representation $m_{1/\alpha}(z)=\int \frac{1}{\lambda - z}\dd\mu_{\mathrm{MP},1/\alpha}(\lambda),$
and Cauchy's integral formula, we obtain $-\frac{1}{2\pi i}\oint_\Gamma f(z)m_{1/\alpha}(z)\dd z=\int f(\lambda)\dd\mu_{\mathrm{MP},1/\alpha}(\lambda).$
This completes the proof.
\end{proof}

We aim to establish the tight relationship between ridge regression and early-stopped gradient descent. To do so, we partially rely on the results developed in \cite{karpel2026optimal}. Note that our algorithmic formulation of ridge regression (\cref{schm:regularized}) differs slightly from the one used in \cite{karpel2026optimal}; nonetheless, the following lemma establishes the connection between the two formulations.

\begin{lemma}
\label{lem:updates}
Let $t\in [1, T]$ and consider the update $$\mathbf w_t \gets
{\argmin}_{\mathbf w}\Bigl\{\frac{1}{n}\Vert\mathbf X_{t}\mathbf w-\mathbf y_{t}\Vert^2+\lambda\Vert\mathbf w-\mathbf w_{t-1}\Vert^2\Bigr\}$$
and the update
$$\mathbf w_t' \gets{\argmin}_{\mathbf w}\Bigl\{\Vert\mathbf X_{t}\mathbf w-\mathbf y_{t}\Vert^2+\tilde\lambda d\Vert\mathbf w-\mathbf w_{t-1}\Vert^2\Bigr\}$$ 
then if $\tilde \lambda = \frac{n}{d}\lambda$ so $\mathbf w_t'=\mathbf w_t$.
\end{lemma}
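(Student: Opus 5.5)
The two objectives differ only by a positive rescaling, so they share a minimizer. I will show that the second objective is exactly $n$ times the first once $\tilde\lambda=\frac{n}{d}\lambda$.

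Denote the objectives by
\[
J_t(\mathbf w)=\frac{1}{n}\Vert\X_t\mathbf w-\y_t\Vert^2+\lambda\Vert\mathbf w-\mathbf w_{t-1}\Vert^2,
\qquad
J_t'(\mathbf w)=\Vert\X_t\mathbf w-\y_t\Vert^2+\tilde\lambda d\Vert\mathbf w-\mathbf w_{t-1}\Vert^2 .
\]
With $\tilde\lambda=\frac{n}{d}\lambda$ we have $\tilde\lambda d=n\lambda$. Hence
\[
J_t'(\mathbf w)=n\Bigl(\frac{1}{n}\Vert\X_t\mathbf w-\y_t\Vert^2+\lambda\Vert\mathbf w-\mathbf w_{t-1}\Vert^2\Bigr)=nJ_t(\mathbf w)
\]
for every $\mathbf w\in\reals^d$.

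Since $n>0$, multiplying a function by $n$ does not change its set of minimizers, so $\argmin J_t'=\argmin J_t$.

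It remains to check that this minimizer is unique, so that $\mathbf w_t$ and $\mathbf w_t'$ are well defined and equal. For $\lambda>0$ the function $J_t$ is strictly convex, because its Hessian is $\frac{2}{n}\X_t^\top\X_t+2\lambda\I\succ 0$. Therefore the minimizer is unique. Explicitly, the same computation as in \cref{lem:2}, with the factor $2$ cancelling, gives
\[
\mathbf w_t=\Bigl(\tfrac{1}{n}\X_t^\top\X_t+\lambda\I\Bigr)^{-1}\Bigl(\tfrac{1}{n}\X_t^\top\y_t+\lambda\mathbf w_{t-1}\Bigr).
\]
For $J_t'$ the normal equations read $\bigl(\X_t^\top\X_t+n\lambda\I\bigr)\mathbf w=\X_t^\top\y_t+n\lambda\mathbf w_{t-1}$. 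Dividing both sides by $n$ recovers the equation above, so $\mathbf w_t'=\mathbf w_t$.

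Both objectives are anchored at the same previous iterate $\mathbf w_{t-1}$. If the two schemes are started from the same $\mathbf w_0$, induction on $t$ then shows that the two iterate sequences coincide for all $t$.

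There is no real obstacle here. The only point to state carefully is uniqueness of the minimizer, which needs $\lambda>0$; this is assumed throughout.
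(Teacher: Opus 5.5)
Your proposal is correct and essentially matches the paper's proof, which takes the closed form from \cref{lem:2}, multiplies numerator and denominator through by $n$, and substitutes $\tilde\lambda d=n\lambda$. That is exactly your normal-equations check. Your observation that $J_t'=nJ_t$ pointwise is the same fact stated at the level of the objectives, and it makes the conclusion immediate without needing the explicit solution.
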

\begin{proof}
Using \cref{lem:2} we have, 
\begin{align*} 
&\mathbf{w}_{t}=\left(\frac{1}{n}\X_{t}^{\top}\X_{t}+\lambda\mathbf{I}\right)^{-1}\left(\frac{1}{n}\X_{t}^{\top}\y_{t}+\lambda\mathbf{w}_{t-1}\right)\\&=\left(\X_{t}^{\top}\X_{t}+\lambda n\mathbf{I}\right)^{-1}\left(\X_{t}^{\top}\y_{t}+\lambda n\mathbf{w}_{t-1}\right)=\left(\X_{t}^{\top}\X_{t}+\tilde{\lambda}d\mathbf{I}\right)^{-1}\left(\X_{t}^{\top}\y_{t}+\tilde{\lambda}d\mathbf{w}_{t-1}\right)=\mathbf{w}_{t}'.
\end{align*}
which is the exact update solution for $\mathbf{w}_{t}'$ after redefining $\tilde \lambda = \frac{n}{d}\lambda$.
\end{proof}
By \cref{lem:updates}, if we adopt the algorithmic formulation of ridge regression from \cite{karpel2026optimal} and simply replace $\tilde \lambda$ with $\tilde\lambda := \frac{n}{d}\lambda = \ratio \lambda$, where $\lambda$ denotes the regularization parameter used in ridge-regression (\cref{schm:regularized}), then the iterates coincide at every step. Consequently, the expression in \cref{cor:closed} also remains unchanged under our algorithmic formulation (after substituting $\tilde\lambda$).

\begin{lemma}
\label{lem:differentiable}
Let $h$ be analytic at $\infty$, and define $h(x)=g(1/x)$, where $g$ is analytic in a neighborhood of $0$.
Let the Taylor expansion of $g$ at $0$ be
\[
g(x)=P_k(x)+R(x),
\qquad
R(x)=O(x^{k+1}),
\]
where $P_k$ is a polynomial of degree at most $k$.
Then, as $x\to\infty$,
\[
\frac{\dd}{\dd x}h(x)
=
-\frac{1}{x^{2}} P_k'\left(\frac{1}{x}\right)
+
O\left(\frac{1}{x^{k+2}}\right).
\]
\end{lemma}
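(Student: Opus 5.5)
The plan is a direct computation: write the derivative of $h$ through the chain rule, then control the remainder term using analyticity of $g$ at $0$.

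Since $h(x)=g(1/x)$ and $g$ is analytic near $0$, the chain rule gives, for all sufficiently large $x$ (so that $1/x$ lies in the disc of analyticity),
\[
h'(x)=-\frac{1}{x^{2}}\,g'\!\left(\frac{1}{x}\right)=-\frac{1}{x^{2}}P_k'\!\left(\frac{1}{x}\right)-\frac{1}{x^{2}}R'\!\left(\frac{1}{x}\right).
\]
It therefore suffices to show that $R'(u)=O(u^{k})$ as $u\to0$. Substituting $u=1/x$ then turns the last term into $O(x^{-2}\cdot x^{-k})=O(x^{-(k+2)})$, which is exactly the claimed remainder.

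The only step that needs care is this derivative bound. For an arbitrary function, $R(u)=O(u^{k+1})$ does \emph{not} imply $R'(u)=O(u^{k})$, so analyticity must be used. Since $g$ is analytic at $0$, it has a power series $g(u)=\sum_{j\ge0}g_ju^j$ with positive radius $\rho$. The identity $g=P_k+R$ with $R=O(u^{k+1})$ forces $P_k$ to be the degree-$k$ Taylor polynomial. If $P_k$ had any other coefficient below degree $k+1$, the difference $g-P_k$ would contain a lower-order term and could not be $O(u^{k+1})$. Hence
\[
R(u)=\sum_{j\ge k+1}g_ju^j=u^{k+1}\phi(u),
\]
where $\phi$ is analytic on $|u|<\rho$. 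Differentiating gives
\[
R'(u)=(k+1)u^{k}\phi(u)+u^{k+1}\phi'(u).
\]
Both $\phi$ and $\phi'$ are bounded on $|u|\le\rho/2$, so $R'(u)=O(u^{k})$ there. Equivalently, one can differentiate the power series term by term within its radius of convergence, or apply a Cauchy estimate to $R$ on a small circle.

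Combining this bound with the chain-rule identity completes the argument for all $x>2/\rho$, which gives the stated asymptotic as $x\to\infty$.
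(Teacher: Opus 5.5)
Your proof is correct and follows essentially the same route as the paper: apply the chain rule $h'(x)=-x^{-2}g'(1/x)$, then use analyticity of $g$ at $0$ to show that the differentiated remainder is $O(u^{k})$. The paper gets that bound by differentiating the power series term by term and bounding the tail by $C r^{-k}|u|^{k}$ on $|u|\le r<\rho$, which is equivalent to your factorization $R(u)=u^{k+1}\phi(u)$. Your remark that $R=O(u^{k+1})$ forces $P_k$ to be the Taylor polynomial is a detail the paper takes for granted.
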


\begin{proof}
Since $g$ is analytic at $0$, there exists $\rho>0$ such that for all $|x|<\rho$,
\[
g(x)
=
\sum_{n=0}^{k}\frac{g^{(n)}(0)}{n!}x^{n}
+
\sum_{n=k+1}^{\infty}\frac{g^{(n)}(0)}{n!}x^{n}.
\]
Power series may be differentiated term wise within their radius of convergence, and the resulting series converges uniformly on any closed interval $|x|\le r<\rho$.
Differentiating, we obtain
\[
g'(x)
=
\sum_{n=0}^{k}\frac{g^{(n+1)}(0)}{n!}x^{n}
+
\sum_{n=k+1}^{\infty}\frac{g^{(n)}(0)}{(n-1)!}x^{n-1},
\qquad |x|\le r.
\]

For $|x|<r$, write $|x|=r\theta$ with $\theta\in[0,1]$. Then
\begin{align*}
\left|
g'(x)-P_k'(x)
\right|
&=
\sum_{n=k+1}^{\infty}\frac{|g^{(n)}(0)|}{(n-1)!}|x|^{n-1}
=
\sum_{n=k+1}^{\infty}\frac{|g^{(n)}(0)|}{(n-1)!}(r\theta)^{n-1}
\\
&\le
\theta^{k}
\sum_{n=k+1}^{\infty}\frac{|g^{(n)}(0)|}{(n-1)!}r^{n-1}
=
C\theta^{k}
=
\frac{C}{r^{k}}|x|^{k},
\end{align*}
where the series is convergent since $r<\rho$.
Thus,
$$
g'(x)
=
P_k'(x)+O(x^{k}),
\qquad |x|\le r.
$$

Finally, using $h(x)=g(1/x)$ and the chain rule, for $|x|\ge \frac{1}{r},$
\begin{align*}
\frac{\dd}{\dd x}h(x)
&=
g'\left(\frac{1}{x}\right)\left(-\frac{1}{x^{2}}\right)
=
\left[
P_k'\left(\frac{1}{x}\right)
+
O\left(\frac{1}{x^{k}}\right)
\right]
\left(-\frac{1}{x^{2}}\right)
=
-\frac{1}{x^{2}}P_k'\left(\frac{1}{x}\right)
+
O\left(\frac{1}{x^{k+2}}\right).
\end{align*}
\end{proof}
\begin{lemma}
\label{lem:small}
    Let $f,g:\mathbb R\to \mathbb R$, be analytic functions at $\infty$, assume that as $x\to\infty$, $f(x)=\sum_{s=0}^k a_sx^{-s}+O(x^{-k-1})$ and $g(x)=\sum_{s=0}^k a_sx^{-s}+O(x^{-k-1})$ with some non-zero coefficient which is not $a_0$, then $\left|\frac{\dd}{\dd x}f(x)-\frac{\dd}{\dd x}g(x)\right|=o\left(\frac{\dd}{\dd x}g(x)\right)$.
\end{lemma}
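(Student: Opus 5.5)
Since $f$ and $g$ agree through order $k$, the plan is to split the derivative of $f$ into that of the common truncated series and that of a remainder, and then apply \cref{lem:differentiable} to each.

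Write $f(x)=F(1/x)$ and $g(x)=G(1/x)$, with $F,G$ analytic near $0$; this is possible because both are analytic at $\infty$. Both Taylor expansions share the polynomial $P_k(u)=\sum_{s=0}^k a_s u^s$. Applying \cref{lem:differentiable} to each function gives
\[
f'(x)=-\frac{1}{x^{2}}P_k'\Big(\frac1x\Big)+O\Big(\frac{1}{x^{k+2}}\Big),\qquad g'(x)=-\frac{1}{x^{2}}P_k'\Big(\frac1x\Big)+O\Big(\frac{1}{x^{k+2}}\Big).
\]
Subtracting, the common polynomial term cancels, so $|f'(x)-g'(x)|=O(x^{-k-2})$.

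Next comes a lower bound on $|g'(x)|$. Let $s_0$ be the smallest index with $1\le s_0\le k$ and $a_{s_0}\neq 0$; this is the assumed non-zero coefficient other than $a_0$. Then
\[
-\frac{1}{x^{2}}P_k'\Big(\frac1x\Big)=-\sum_{s=1}^{k}s\,a_s x^{-s-1}=-s_0 a_{s_0}x^{-s_0-1}\big(1+O(1/x)\big).
\]
Combined with the remainder $O(x^{-k-2})$, and since $s_0+1\le k+1<k+2$, this yields $|g'(x)|\ge \tfrac12 s_0|a_{s_0}|\,x^{-s_0-1}$ for all sufficiently large $x$.

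Dividing the two estimates,
\[
\frac{|f'(x)-g'(x)|}{|g'(x)|}=O\big(x^{s_0-k-1}\big).
\]
This tends to $0$ because $s_0\le k$, which is exactly the claim $|f'-g'|=o(g')$.

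The main obstacle is the degenerate case: if the only non-zero coefficient were $a_0$, or if $s_0$ could exceed $k$, the leading term of $g'$ would be hidden in the remainder and the ratio bound would fail. The hypothesis rules this out, so the only thing to check carefully is that $s_0\le k$ keeps the leading term $x^{-s_0-1}$ strictly dominant over the $O(x^{-k-2})$ error.
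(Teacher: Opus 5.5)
Your proof is correct and follows the same route as the paper: you apply \cref{lem:differentiable} to both $f$ and $g$ so that the common term $-x^{-2}P_k'(1/x)$ cancels, giving $|f'(x)-g'(x)|=O(x^{-k-2})$, and then divide by a lower bound on $|g'(x)|$. The only difference is that you derive that lower bound, $|g'(x)|\ge \tfrac12 s_0|a_{s_0}|\,x^{-s_0-1}$ with $1\le s_0\le k$, whereas the paper simply asserts $g'(x)=\Omega(x^{-k-1})$.
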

\begin{proof}
Since $f,g$ are analytic function, we use \cref{lem:differentiable} to obtain, 
\begin{align*}
    \left|\frac{\dd}{\dd x}f(x)-\frac{\dd}{\dd x}g(x)\right|=O(x^{-k-2}).
\end{align*}
Since $\frac{\dd}{\dd x}g(x)=\Omega(x^{-k-1})$ we get $\left|\frac{\dd}{\dd x}f(x)-\frac{\dd}{\dd x}g(x)\right|=o\left(\frac{\dd}{\dd x}g(x)\right)$.
\end{proof}

\newpage

\subsection{Proof of \cref{thm:closedloss}: closed-form expression for the generalization loss from the early-stopped GD algorithm}
\label{pr:thm1}
\begin{recall}[\cref{thm:closedloss}]
Recall the high-dimensional setting where the number of samples and the dimension grow at proportional rate $\alpha:=\frac{n}{d}$. Moreover, recall that $\featnoise,\labelnoise$ are the variances of features and labels, respectively. Then, the generalization loss after $T$ tasks of the early-stopped GD algorithm (\cref{schm:early_stopped}) satisfies, as $n, d\to\infty$,
\vspace{-0.2em}
\[
\begin{aligned}
G^{\mathrm{esgd}}_T 
  &\xrightarrow[]{P} \labelnoise c'\frac{1-a'^{T}}{1-a'}
     +\frac{1}{T}\sum_{i=1}^{T}
     \left\Vert \mathbf{w}_{T}^{\star}-\mathbf{w}_{i}^{\star}\right\Vert^{2}
  %    \\[6pt]
  % &
  % \quad 
  + 2\big(\mathbf{w}_{T}^{\star}-\bar{\mathbf{w}}^{\star}\big)^{\top}
     \sum_{i=1}^{T} b'^{T-i+1}
     \big(\mathbf{w}_{i-1}^{\star}-\mathbf{w}_{i}^{\star}\big)\\[4pt]
  &
  \quad + \sum_{i=1}^{T}\sum_{i'=1}^{T}
     a'^{T-\max(i,i')+1}b'^{|i-i'|}
     \big(\mathbf{w}_{i-1}^{\star}-\mathbf{w}_{i}^{\star}\big)^{\top}
     \big(\mathbf{w}_{i'-1}^{\star}-\mathbf{w}_{i'}^{\star}\big)
     ,
\end{aligned}
\]
where we denote $\mathbf w^\star_0=\mathbf{w}_0$ and $\bar {\mathbf w}^\star=\frac{1}{T}\sum_i \mathbf w^\star_i$,
% \begin{equation}
% \begin{aligned}
% a' &= 1 - 2N\gamma\featnoise + O\big((N\gamma)^2\big),\\
% b' &= 1 - N\gamma\featnoise + O\big((N\gamma)^2\big),\\
% c' &= \frac{(N\gamma)^2}{\ratio}\featnoise+ O\big((N\gamma)^3\big).
% \end{aligned}
% \end{equation}
\begin{equation}
\begin{aligned}
&a'=1-2N\featnoise\gamma
+O\big((N\gamma)^{2}\big),\qquad
b'=1-N\featnoise\gamma
+O\big((N\gamma)^{2}\big),
\\&\hspace{10em}
c'
=
\frac{N^{2}}{\ratio}\featnoise\gamma^{2}
+O\big((N\gamma)^{3}\big).
\end{aligned}
\end{equation}
\end{recall}

Recall \cref{lem:1},
\[
\mathbf{w}^{\mathrm{esgd}}_{T}
=
\left(\mathbf{I}-\gamma\frac{1}{n}\mathbf{X}_{T}^{\top}\mathbf{X}_{T}\right)^{N}\mathbf{w}^{\mathrm{esgd}}_{T-1}
+\gamma\frac{1}{n}\sum_{s=0}^{N-1}
\left(\mathbf{I}-\gamma\frac{1}{n}\mathbf{X}_{T}^{\top}\mathbf{X}_{T}\right)^{s}\mathbf{X}_{T}^{\top}\mathbf{y}_{T},
\]
and let $\mathbf P_{k}=\left(\mathbf{I}-\gamma\frac{1}{n}\mathbf{X}_{k}^{\top}\mathbf{X}_{k}\right)$, with the convention $\mathbf{w}_0^\star=\mathbf{w}^{\mathrm{esgd}}_0$.

Then,
\begin{align*}
\mathbf{w}^{\mathrm{esgd}}_{T}
&=
\left(\mathbf{I}-\gamma\frac{1}{n}\mathbf{X}_{T}^{\top}\mathbf{X}_{T}\right)^{N}\mathbf{w}^{\mathrm{esgd}}_{T-1}
+\gamma\frac{1}{n}\sum_{s=0}^{N-1}
\left(\mathbf{I}-\gamma\frac{1}{n}\mathbf{X}_{T}^{\top}\mathbf{X}_{T}\right)^{s}\mathbf{X}_{T}^{\top}\mathbf{y}_{T}\\
&=
\mathbf P_T^{N}\mathbf{w}^{\mathrm{esgd}}_{T-1}
+\gamma\frac{1}{n}\sum_{s=0}^{N-1}\mathbf P_T^{s}\mathbf{X}_{T}^{\top}\left(\mathbf{X}_{T}\mathbf{w}_{T}^{\star}+\mathbf{z}_{T}\right)\\
&=
\mathbf P_T^{N}\mathbf{w}^{\mathrm{esgd}}_{T-1}
+\gamma\frac{1}{n}\sum_{s=0}^{N-1}\mathbf P_T^{s}\mathbf{X}_{T}^{\top}\mathbf{X}_{T}\mathbf{w}_{T}^{\star}
+\gamma\frac{1}{n}\sum_{s=0}^{N-1}\mathbf P_T^{s}\mathbf{X}_{T}^{\top}\mathbf{z}_{T}.
\end{align*}
Using $\mathbf I-\mathbf P_T=\gamma\frac{1}{n}\mathbf X_T^\top\mathbf X_T$, we obtain
\[
\gamma\frac{1}{n}\sum_{s=0}^{N-1}\mathbf P_T^{s}\mathbf{X}_{T}^{\top}\mathbf{X}_{T}
=
\sum_{s=0}^{N-1}\mathbf P_T^{s}(\mathbf I-\mathbf P_T)
=
\mathbf I-\mathbf P_T^{N}.
\]
Therefore,
\begin{align}
\label{eq:41}
\mathbf{w}^{\mathrm{esgd}}_{T}
&=
\mathbf P_T^{N}\mathbf{w}^{\mathrm{esgd}}_{T-1}
+\left(\mathbf I-\mathbf P_T^{N}\right)\mathbf w_T^\star
+\gamma\frac{1}{n}\sum_{s=0}^{N-1}\mathbf P_T^{s}\mathbf{X}_{T}^{\top}\mathbf{z}_{T}.
\end{align}

Since $G_{T}^{\mathrm{esgd}}=\frac{1}{T}\sum_{i=1}^T\Vert\mathbf{w}_{T}^{\mathrm{esgd}}-\mathbf{w}_i^\star\Vert^2$,
we therefore develop the corresponding expression for $\mathbf{w}^{\mathrm{esgd}}_{T}-\mathbf{w}_{i}^{\star}$, where $i\in[1,T]$. Using \cref{eq:41},
\begin{align*}
\mathbf{w}^{\mathrm{esgd}}_{T}-\mathbf{w}_{i}^{\star}
&=
\mathbf P_T^{N}\left(\mathbf{w}^{\mathrm{esgd}}_{T-1}-\mathbf{w}_{i}^{\star}\right)
+\left(\mathbf I-\mathbf P_T^{N}\right)\left(\mathbf w_T^\star-\mathbf w_i^\star\right)
+\gamma\frac{1}{n}\sum_{s=0}^{N-1}\mathbf P_T^{s}\mathbf{X}_{T}^{\top}\mathbf{z}_{T}.
\end{align*}
To rewrite this term note that for all $k\in [1,T]$, 
\begin{align*}
\left(\mathbf I-\mathbf P_k^{N}\right)\left(\mathbf w_k^\star-\mathbf w_i^\star\right)
&=
\Bigl[\left(\mathbf w_k^\star-\mathbf w_i^\star\right)-\mathbf P_k^{N}\left(\mathbf w_{k-1}^\star-\mathbf w_i^\star\right)\Bigr]
+\mathbf P_k^{N}\left(\mathbf w_{k-1}^\star-\mathbf w_k^\star\right).
\end{align*}
Unrolling the recursion and using the telescoping of the bracketed term yields
\begin{align*}
\mathbf{w}^{\mathrm{esgd}}_{T}-\mathbf{w}_{i}^{\star}
&=
\sum_{k=1}^{T}\left[\prod_{m=T}^{k+1}\mathbf P_m^{N}\right]\boldsymbol\gamma\frac{1}{n}\sum_{s=0}^{N-1}\mathbf P_k^{s}\mathbf{X}_{k}^{\top}\mathbf{z}_{k}
+\sum_{k=1}^{T}\left[\prod_{m=T}^{k}\mathbf P_m^{N}\right]\left(\mathbf w_{k-1}^\star-\mathbf w_k^\star\right)
+\left(\mathbf w_T^\star-\mathbf w_i^\star\right),
\end{align*}

so,
\begin{equation}
\label{eq:18}
\begin{aligned}
    &\left\Vert \mathbf{w}_{T}^{\mathrm{esgd}}-\mathbf{w}_{i}^{\star}\right\Vert ^{2}\\&=\bigg(\sum_{k=1}^{T}\left[\prod_{m=T}^{k+1}\mathbf{P}_{m}^{N}\right]\boldsymbol{\gamma}\frac{1}{n}\sum_{s=0}^{N-1}\mathbf{P}_{k}^{s}\mathbf{X}_{k}^{\top}\mathbf{z}_{k}+\sum_{k=1}^{T}\left[\prod_{m=T}^{k}\mathbf{P}_{m}^{N}\right]\left(\mathbf{w}_{k-1}^{\star}-\mathbf{w}_{k}^{\star}\right)+\left(\mathbf{w}_{T}^{\star}-\mathbf{w}_{i}^{\star}\right)\bigg)^{\top}&\\&\hspace{4em}\Bigg(\sum_{k=1}^{T}\left[\prod_{m=T}^{k+1}\mathbf{P}_{m}^{N}\right]\boldsymbol{\gamma}\frac{1}{n}\sum_{s=0}^{N-1}\mathbf{P}_{k}^{s}\mathbf{X}_{k}^{\top}\mathbf{z}_{k}+\sum_{k=1}^{T}\left[\prod_{m=T}^{k}\mathbf{P}_{m}^{N}\right]\left(\mathbf{w}_{k-1}^{\star}-\mathbf{w}_{k}^{\star}\right)+\left(\mathbf{w}_{T}^{\star}-\mathbf{w}_{i}^{\star}\right)\Bigg)\\&\stackrel{\explain{1}}{\xrightarrow[]{P}}\underbrace{\sum_{j=1}^{T}\gamma\frac{1}{n}\sum_{s=0}^{N-1}\mathbf{z}_{j}^{\top}\mathbf{X}_{j}\mathbf{P}_{j}^{s}\left(\prod_{k=j+1}^{T}\mathbf{P}_{k}^{N}\right)^{\top}\sum_{j'=1}^{T}\gamma\frac{1}{n}\left(\prod_{k=j'+1}^{T}\mathbf{P}_{k}^{N}\right)\sum_{s=0}^{N-1}\mathbf{P}_{j'}^{s}\mathbf{X}_{j'}^{\top}\mathbf{z}_{j'}}_{\text{Term 1}}\\&+\underbrace{\left(\sum_{k=1}^{T}\left[\prod_{m=k}^{T}\mathbf{P}_{m}^{N}\right]\left(\mathbf{w}_{k-1}^{\star}-\mathbf{w}_{k}^{\star}\right)+\left(\mathbf{w}_{T}^{\star}-\mathbf{w}_{i}^{\star}\right)\right)^{\top}\left(\sum_{k=1}^{T}\left[\prod_{m=k}^{T}\mathbf{P}_{m}^{N}\right]\left(\mathbf{w}_{k-1}^{\star}-\mathbf{w}_{k}^{\star}\right)+\left(\mathbf{w}_{T}^{\star}-\mathbf{w}_{i}^{\star}\right)\right)}_{\text{Term 2}},\\
\end{aligned}
\end{equation}
where [1] follows since $\mathbb E[\mathbf z_k]=\mathbf 0$ for all $k\in[1, T]$, and the noise variables are sampled independently of all tasks, so by the weak law of large numbers cross terms vanish as $n\to\infty$.

Let $\boldsymbol \xi_r^l=\left(\prod_{k=r+1}^{l}\mathbf{P}_{k}^{N}\right)\left[\sum_{s=0}^{N-1}\mathbf{P}_{r}^{s}\mathbf{X}_{r}^{\top}\right]\mathbf z_r$.
Then, the first term becomes,
\begin{align}
&\sum_{j=1}^{T}\left[\gamma\frac{1}{n}\sum_{s=0}^{N-1}\mathbf{z}_{j}^{\top}\mathbf{X}_{j}\mathbf{P}_{j}^{s}\right]\left(\prod_{k=j+1}^{T}\mathbf{P}_{k}^{N}\right)^{\top}\sum_{j'=1}^{T}\left(\prod_{k=j'+1}^{T}\mathbf{P}_{k}^{N}\right)\left[\gamma\frac{1}{n}\sum_{s=0}^{N-1}\mathbf{P}_{j'}^{s}\mathbf{X}_{j'}^{\top}\mathbf{z}_{j'}\right]\notag\\&=\sum_{j=1}^{T}\sum_{j'=1}^{T}\left[\gamma\frac{1}{n}\sum_{s=0}^{N-1}\mathbf{z}_{j}^{\top}\mathbf{X}_{j}\mathbf{P}_{j}^{s}\right]\left(\prod_{k=j+1}^{T}\mathbf{P}_{k}^{N}\right)^{\top}\left(\prod_{k=j'+1}^{T}\mathbf{P}_{k}^{N}\right)\left[\gamma\frac{1}{n}\sum_{s=0}^{N-1}\mathbf{P}_{j'}^{s}\mathbf{X}_{j'}^{\top}\mathbf{z}_{j'}\right]\notag\\&\stackrel{\explain{1}}{\xrightarrow{P}}\sum_{j=1}^{T}\mathbf{z}_{j}^{\top}\left[\gamma\frac{1}{n}\sum_{s=0}^{N-1}\mathbf{X}_{j}\mathbf{P}_{j}^{s}\right]\left(\prod_{k=j+1}^{T}\mathbf{P}_{k}^{N}\right)^{\top}\left(\prod_{k=j+1}^{T}\mathbf{P}_{k}^{N}\right)\left[\gamma\frac{1}{n}\sum_{s=0}^{N-1}\mathbf{P}_{j}^{s}\mathbf{X}_{j}^{\top}\right]\mathbf{z}_{j}\notag\\&=\gamma^{2}\sum_{j=1}^{T}\left[\frac{{\mathbf{\boldsymbol{\xi}}_{j}^{(T-1)}}^{\top}}{n}\mathbf{P}_{T}^{2N}\frac{\mathbf{\boldsymbol{\xi}}_{j}^{(T-1)}}{n}\right]\notag\\&\stackrel{\explain{2}}{\xrightarrow{P}}\frac{\gamma^{2}}{n^{2}}\sum_{j=1}^{T}a'{\mathbf{\boldsymbol{\xi}}_{j}^{(T-1)}}^{\top}\boldsymbol{\xi}_{j}^{(T-1)}\notag\\&=\frac{\gamma^{2}}{n^{2}}\sum_{j=1}^{T}a'\mathbf{z}_{j}^{\top}\left[\sum_{s=0}^{N-1}\mathbf{X}_{j}\mathbf{P}_{j}^{s}\right]\left(\prod_{k=j+1}^{T-1}\mathbf{P}_{k}^{N}\right)^{\top}\left(\prod_{k=j+1}^{T-1}\mathbf{P}_{k}^{N}\right)\left[\sum_{s=0}^{N-1}\mathbf{P}_{j}^{s}\mathbf{X}_{j}^{\top}\right]\mathbf{z}_{j}\notag\\&=\ldots\notag\xrightarrow{P}\frac{\gamma^{2}}{n^{2}}\sum_{j=1}^{T}a'^{T-j}\mathbf{z}_{j}^{\top}\sum_{s=0}^{N-1}\mathbf{X}_{j}\mathbf{P}_{j}^{s}\sum_{s=0}^{N-1}\mathbf{P}_{j}^{s}\mathbf{X}_{j}^{\top}\mathbf{z}_{j}&\notag\\&=\frac{\gamma^{2}}{n^{2}}
\sum_{j=1}^{T}a'^{T-j}
\mathbf z_j^\top \mathbf X_j
\left(\sum_{s,s'=0}^{N-1}\mathbf P_j^{s+s'}\right)
\mathbf X_j^\top \mathbf z_j.\label{eq:16}
\end{align}
where [1] follows since under the statistical design (\cref{sec:statistical_setting}) in which
$\mathbb E[\mathbf z_k]=\mathbf 0$ for all $k\in[1, T]$, and the noise variables are sampled independently of all tasks, by the weak law of large numbers cross terms vanish as $n\to\infty$. And [2] follows from \cref{lem:fix}, by the independence of the random variable $\boldsymbol \xi_j^{t-1}$ with $\mathbf P_t$ and since $\left\Vert\boldsymbol\xi_j^t\right\Vert^2/n<\infty$ almost surely for all $t\in [1, T]$. Also, note that we substituted $a'$ as the $2N$-th moment of the Marchenko-Pastur law.

Using the closed-form expression for the $k$-th moment of the Marchenko-Pastur law, we have (for all $j\in [1,T]$),
\begin{align}  &\frac{\gamma^{2}}{n^{2}}\mathbf{z}_{j}^{\top}\mathbf{X}_{j}\left(\sum_{s,s'=0}^{N-1}\mathbf{P}_{j}^{s+s'}\right)\mathbf{X}_{j}^{\top}\mathbf{z}_{j}=\frac{\gamma^{2}}{n^{2}}\mathrm{Tr}\left(\mathbf{z}_{j}^{\top}\mathbf{X}_{j}\sum_{s=0}^{2N-2}m_{s}\mathbf{P}_{j}^{s}\mathbf{X}_{j}^{\top}\mathbf{z}_{j}\right)\notag&\\&\stackrel{\explain{1}}{\xrightarrow{P}}v_{z}\frac{\gamma^{2}}{n^{2}}\mathrm{Tr}\left(\mathbf{X}_{j}^{\top}\mathbf{X}_{j}\sum_{s=0}^{2N-2}m_{s}\mathbf{P}_{j}^{s}\right)=\frac{1}{n}\sum_{s=0}^{2N-2}m_{s}\sum_{l=0}^{s}{\binom{s}{l}}\left(-\gamma\right)^{l+2}\frac{1}{n}\mathrm{Tr}\Bigg[\left(\frac{1}{n}\mathbf{X}_{j}^{\top}\mathbf{X}_{j}\right)^{l+1}\Bigg]\notag\\&\stackrel{\explain{2}}{\xrightarrow{P}}\labelnoise\sum_{s=0}^{2N-2}m_{s}\sum_{l=0}^{s}{\binom{s}{l}}\left(-\gamma\right)^{l+2}\sum_{r=0}^{l}\frac{\featnoise^{l+1}}{r+1}{\binom{l+1}{r}}{\binom{l}{r}}\frac{1}{\ratio^{r+1}}:=\labelnoise c'\left(\gamma,N\right):=\labelnoise c',\label{eq:36}
\end{align}
where,
$$
m_s \coloneqq 
\#\Big\{(u,v)\in\{0,\ldots,N-1\}^2 : u+v = s \Big\}
=
\begin{cases}
s+1, & 0 \le s \le N-1,\\[4pt]
2N-1-s, & N \le s \le 2N-2,\\[4pt]
0, & \text{otherwise}.
\end{cases},$$
and [1] follows by the weak law of large numbers. Moreover, [2] follows by \cref{lem:fix} with the closed-form expression. Similarly, 
\begin{align}
\label{eq:13}
a'=a'\left(\gamma, N\right)=1+\sum_{k=0}^{2N}{\binom{2N}{k}}\left(-\gamma\right)^{k}\sum_{r=0}^{k-1}\frac{\featnoise^{k}}{r+1}{\binom{k}{r}}{\binom{k-1}{r}}\frac{1}{\ratio^r} 
\end{align}
Then the first term in \cref{eq:16,eq:36} becomes,
\begin{align*}
&\sum_{j=1}^{T}a'^{T-j}\frac{\gamma^{2}}{n}\mathbf{z}_{j}^{\top}\sum_{s,s'=0}^{N-1}\mathbf{P}_{j}^{s+s'}\frac{1}{n}\mathbf{X}_{j}^{\top}\mathbf{X}_{j}\mathbf{z}_{j}=\labelnoise c'\sum_{j=1}^{T}a'^{T-j}=\labelnoise c'\frac{1-a'^{T}}{1-a'}
\end{align*}

And the second term in \cref{eq:18} becomes,
\begin{equation}
 \label{eq:27}   
\begin{aligned}
    &\left(\sum_{k=1}^{T}\left(\mathbf{w}_{k-1}^{\star}-\mathbf{w}_{k}^{\star}\right)^{\top}\left[\prod_{m=T}^{k}\mathbf{P}_{m}^{N}\right]+\left(\mathbf{w}_{T}^{\star}-\mathbf{w}_{i}^{\star}\right)^{\top}\right)\left(\sum_{k=1}^{T}\left[\prod_{m=k}^{T}\mathbf{P}_{m}^{N}\right]\left(\mathbf{w}_{k-1}^{\star}-\mathbf{w}_{k}^{\star}\right)+\left(\mathbf{w}_{T}^{\star}-\mathbf{w}_{i}^{\star}\right)\right)&\\&=\sum_{k=1}^{T}\left(\mathbf{w}_{k-1}^{\star}-\mathbf{w}_{k}^{\star}\right)^{\top}\left[\prod_{m=T}^{k}\mathbf{P}_{m}^{N}\right]\sum_{k=1}^{T}\left[\prod_{m=k}^{T}\mathbf{P}_{m}^{N}\right]\left(\mathbf{w}_{k-1}^{\star}-\mathbf{w}_{k}^{\star}\right)\\&\hspace{10em}+2\sum_{k=1}^{T}\left(\mathbf{w}_{k-1}^{\star}-\mathbf{w}_{k}^{\star}\right)^{\top}\left[\prod_{m=T}^{k}\mathbf{P}_{m}^{N}\right]\left(\mathbf{w}_{T}^{\star}-\mathbf{w}_{i}^{\star}\right)+\left\Vert \mathbf{w}_{T}^{\star}-\mathbf{w}_{i}^{\star}\right\Vert ^{2}.
\end{aligned}
\end{equation}

The first term in \cref{eq:27} becomes,
\begin{align*}
    &\sum_{k=1}^{T}\sum_{k'=1}^{T}\left(\mathbf{w}_{k-1}^{\star}-\mathbf{w}_{k}^{\star}\right)^{\top}\left[\prod_{m=k}^{T}\mathbf{P}_{m}^{N}\prod_{m=T}^{k'}\mathbf{P}_{m}^{N}\right]\left(\mathbf{w}_{k'-1}^{\star}-\mathbf{w}_{k'}^{\star}\right)\\&=\sum_{k=1}^{T}\sum_{k'=1}^{T}\left(\mathbf{w}_{k-1}^{\star}-\mathbf{w}_{k}^{\star}\right)^{\top}\left[\prod_{m=\left(k\lor k'\right)}^{T}\mathbf{P}_{m}^{N}\prod_{m=T}^{\left(k\lor k'\right)}\mathbf{P}_{m}^{N}\prod_{m=\left(k\lor k'\right)-1}^{\left(k'\land k\right)}\mathbf{P}_{m}^{N}\right]\left(\mathbf{w}_{k'-1}^{\star}-\mathbf{w}_{k'}^{\star}\right)\\&\stackrel{\explain{1}}{\xrightarrow[]{P}}\sum_{k=1}^{T}\sum_{k'=1}^{T}a'^{T\left(k\lor k'\right)+1}b'^{\left|k-k'\right|}\left(\mathbf{w}_{k'-1}^{\star}-\mathbf{w}_{k'}^{\star}\right)^\top \left(\mathbf{w}_{k-1}^{\star}-\mathbf{w}_{k}^{\star}\right), 
\end{align*}
To justify [1] we similarly define $\boldsymbol \xi_r^i=\prod_{m=i}^{r}\mathbf{P}_{m}^{N}\left(\mathbf{w}_{r-1}^{\star}-\mathbf{w}_{r}^{\star}\right)$ and $\boldsymbol {\tilde \xi}_r^i=\prod_{m=r}^{i}\mathbf{P}_{m}^{N}\left(\mathbf{w}_{r-1}^{\star}-\mathbf{w}_{r}^{\star}\right)$ (where products over empty index sets are defined to be the identity). Then for all $k,k'\in [1, T]$ we have,
\begin{equation}
\label{eq:28}
\begin{aligned}
    &\left(\mathbf{w}_{k-1}^{\star}-\mathbf{w}_{k}^{\star}\right)^{\top}\left[\prod_{m=\left(k\lor k'\right)}^{T}\mathbf{P}_{m}^{N}\prod_{m=T}^{\left(k\lor k'\right)}\mathbf{P}_{m}^{N}\prod_{m=\left(k\lor k'\right)-1}^{\left(k'\land k\right)}\mathbf{P}_{m}^{N}\right]\left(\mathbf{w}_{k'-1}^{\star}-\mathbf{w}_{k'}^{\star}\right)&\\&=\left(\boldsymbol{\xi}_{T-1}^{\left(k\lor k'\right)}\right)^{\top}\mathbf{P}_{T}^{2N}\tilde{\boldsymbol{\xi}}_{T-1}^{\left(k\land k'\right)}\stackrel{[1]}{\xrightarrow{P}}a'\left(\boldsymbol{\xi}_{T-1}^{\left(k\lor k'\right)}\right)^{\top}\tilde{\boldsymbol{\xi}}_{T-1}^{\left(k\land k'\right)}=a'\left(\boldsymbol{\xi}_{T-2}^{\left(k\lor k'\right)}\right)^{\top}\mathbf{P}_{T-1}^{2N}\tilde{\boldsymbol{\xi}}_{T-2}^{\left(k\land k'\right)}\\&=\ldots\\&\xrightarrow{P}a'^{T-\left(k\lor k'\right)+1}\left(\mathbf{w}_{k-1}^{\star}-\mathbf{w}_{k}^{\star}\right)^{\top}\prod_{m=\left(k\lor k'\right)-1}^{\left(k'\land k\right)}\mathbf{P}_{m}^{N}\left(\mathbf{w}_{k'-1}^{\star}-\mathbf{w}_{k'}^{\star}\right)\\&=a'^{T-\left(k\lor k'\right)+1}\left[\left(\boldsymbol{\xi}_{\left(k\lor k'\right)-1}^{\left(k\lor k'\right)}\right)^{\top}\mathbf{P}_{\left(k\lor k'\right)-1}^{N}\tilde{\boldsymbol{\xi}}_{\left(k\lor k'\right)-1}^{\left(k\land k'\right)}\right]\\&\stackrel{[1]}{\xrightarrow{P}}a'^{T-\left(k\lor k'\right)+1}b'\left(\boldsymbol{\xi}_{\left(k\lor k'\right)-1}^{\left(k\lor k'\right)}\right)^{\top}\tilde{\boldsymbol{\xi}}_{\left(k\lor k'\right)-1}^{\left(k\land k'\right)}\\&=\ldots\\&\xrightarrow{P}a'^{T-\left(k\lor k'\right)+1}b'^{\left|k-k'\right|}\left(\mathbf{w}_{k-1}^{\star}-\mathbf{w}_{k}^{\star}\right)^{\top}\left(\mathbf{w}_{k'-1}^{\star}-\mathbf{w}_{k'}^{\star}\right),
\end{aligned}
\end{equation}
Here, [1] follows from \cref{lem:fix} and from the independence of $\boldsymbol \xi_{t-1}^{\left(k\lor k'\right)}$ and $\tilde{\boldsymbol \xi}_{t-1}^{\left(k\land k'\right)}$ from $\mathbf P_t$, together with the facts that $\left\Vert\boldsymbol \xi_{t-1}^{\left(k\lor k'\right)}\right\Vert\in O(1)$ and $\left\Vert\tilde{\boldsymbol \xi}_{t-1}^{\left(k\land k'\right)}\right\Vert \in O(1)$ almost surely, which follows since $\left\Vert\mathbf w^\star_{k-1}-\mathbf w^\star_k\right\Vert \in O(1)$ (for all $k\in [1, T]$, see \cref{sec:statistical_setting}) and $\left\Vert \mathbf P_t\right\Vert_2 < 1$ for all $t \in [1, T]$ almost surely, as $\gamma <\frac{2}{\featnoise\left(1+\sqrt\frac{1}{\ratio}\right)^2}$.
Furthermore, using the closed expression for the $N$-th moment in the Marchenko-Pastur law, \begin{align}
\label{eq:35}
b'=b'\left(\gamma, N\right)=1+\sum_{k=0}^{N}{\binom{N}{k}}\left(-\gamma\right)^{k}\sum_{r=0}^{k-1}\frac{\featnoise^{k}}{r+1}{\binom{k}{r}}{\binom{k-1}{r}}\frac{1}{\ratio^r}
\end{align}
Since $\left(\mathbf{w}_{k-1}^{\star}-\mathbf{w}_{k}^{\star}\right)^{\top}\left[\prod_{m=T}^{k}\mathbf{P}_{m}^{N}\right]\left(\mathbf{w}_{T}^{\star}-\mathbf{w}_{i}^{\star}\right)=b'^{T-k+1}\left(\mathbf{w}_{k-1}^{\star}-\mathbf{w}_{k}^{\star}\right)^{\top}\left(\mathbf{w}_{T}^{\star}-\mathbf{w}_{i}^{\star}\right)$ which similarly follows by the identity in \cref{eq:28}. Then, substituting back to \cref{eq:18}, yields,
\begin{align}
\begin{split}
    &\left\Vert \mathbf{w}^{\mathrm{esgd}}_{T}-\mathbf{w}_{i}^{\star}\right\Vert ^{2}\xrightarrow[]{P}\labelnoise c'\frac{1-a'^{T}}{1-a'}+\sum_{k=1}^{T}\sum_{k'=1}^{T}a'^{T-(k \lor k')+1}b'^{\left|k-k'\right|}\left(\mathbf{w}_{k-1}^{\star}-\mathbf{w}_{k}^{\star}\right)^{\top}\left(\mathbf{w}_{k'-1}^{\star}-\mathbf{w}_{k'}^{\star}\right)
    \\&\hspace{12em}+2\sum_{k=1}^{T}b'^{T-k+1}\left(\mathbf{w}_{T}^{\star}-\mathbf{w}_{i}^{\star}\right)^{\top}\left(\mathbf{w}_{k-1}^{\star}-\mathbf{w}_{k}^{\star}\right)+\left\Vert \mathbf{w}_{T}^{\star}-\mathbf{w}_{i}^{\star}\right\Vert ^{2}.
\label{eq:mainresult1}
\end{split}
\end{align}
And therefore,
\[
\begin{aligned}
&G^{\mathrm{esgd}}_T = \frac{1}{T}\sum_{i=1}^{T}\left\Vert\mathbf{w}_T-\mathbf{w}_i^\star\right\Vert^2
   \\& \xrightarrow[]{P} \labelnoise c'\frac{1-a'^{T}}{1-a'}
     +\frac{1}{T}\sum_{i=1}^{T}
     \left\Vert \mathbf{w}_{T}^{\star}-\mathbf{w}_{i}^{\star}\right\Vert^{2}
  + 2\big(\mathbf{w}_{T}^{\star}-\bar{\mathbf{w}}^{\star}\big)^{\top}
     \sum_{i=1}^{T} b'^{T-i+1}
     \big(\mathbf{w}_{i-1}^{\star}-\mathbf{w}_{i}^{\star}\big)
  \\& \hspace{4em} +\sum_{i=1}^{T}\sum_{i'=1}^{T}
     a'^{T-\max(i,i')+1}b'^{|i-i'|}
     \big(\mathbf{w}_{i-1}^{\star}-\mathbf{w}_{i}^{\star}\big)^{\top}
     \big(\mathbf{w}_{i'-1}^{\star}-\mathbf{w}_{i'}^{\star}\big)
     ,
\end{aligned}
\]
Here, $\bar{\mathbf{w}}^{\star}$ denotes the average teacher, i.e., $\bar{\mathbf{w}}^{\star}=\frac{1}{T}\sum_{i=1}^T\mathbf{w}_i^\star$.

By taking the Taylor expansion around $N\gamma\to 0$ for $a',b',c'$, we get,
\begin{equation}
\label{eq:33}
\begin{aligned}
&a'(\gamma,N)=1-2N\featnoise\gamma
+N(2N-1)\featnoise^{2}\left(1+\frac{1}{\ratio}\right)\gamma^{2}
+O\big((N\gamma)^{3}\big),
\\&
b'(\gamma,N)=1-N\featnoise\gamma
+\frac{N(N-1)}{2}\featnoise^{2}\left(1+\frac{1}{\ratio}\right)\gamma^{2}
+O\big((N\gamma)^{3}\big),
\\&
c'(\gamma,N)
=
\frac{N^{2}}{\ratio}\featnoise\gamma^{2}
+O\big((N\gamma)^{3}\big).
\end{aligned}
\end{equation}
\newpage
Before deriving the result of the optimal step size in early-stopped GD we simplify the expression of the generalization loss given in \cref{thm:closedloss} under the assumption of the i.i.d.~teachers (\cref{assump:iid_teachers}).

\begin{corollary}
\label{cor:closed}
Under the i.i.d. setting  (\cref{assump:iid_teachers}), and for all $T>1$, the generalization loss of \cref{schm:regularized} becomes, as $n,d\to\infty$
\begin{align}
G^{\mathrm{ridge}}_{T}\xrightarrow[]{P}\labelnoise{c'}\frac{1-a'^{T}}{1-a'}+a'^{T}\left\Vert \mathbf{w}^{\star}-\mathbf{w}_0\right\Vert ^{2}+2\trace(\boldsymbol{\Sigma})\prn{\frac{1-b'}{1-a'}+\frac{2b'^{T}-1}{T}+\frac{a'^{T}\prn{-a' +2b' -1}}{2(1-a')}}.
\end{align}
where 
\[
\begin{aligned}
  &a'=1+\sum_{k=0}^{2N}{\binom{2N}{k}}\left(-\gamma\right)^{k}\sum_{r=0}^{k-1}\frac{\featnoise^{k}}{r+1}{\binom{k}{r}}{\binom{k-1}{r}}\frac{1}{\ratio^r},\\&b'=1+\sum_{k=0}^{N}{\binom{N}{k}}\left(-\gamma\right)^{k}\sum_{r=0}^{k-1}\frac{\featnoise^{k}}{r+1}{\binom{k}{r}}{\binom{k-1}{r}}\frac{1}{\ratio^r},\\c'&=\sum_{s=0}^{2N-2}m_{s}\sum_{l=0}^{s}{\binom{s}{l}}\left(-\gamma\right)^{l+2}\sum_{r=0}^{l}\frac{\featnoise^{l+1}}{r+1}{\binom{l+1}{r}}{\binom{l}{r}}\frac{1}{\ratio^{r+1}}.
\end{aligned}
\]
\end{corollary}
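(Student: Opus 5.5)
The plan is to specialize the general formula of \cref{thm:closedloss} by substituting the i.i.d.~structure of \cref{assump:iid_teachers} into every teacher inner product, and then summing the resulting geometric series in closed form. (Despite the label $G^{\mathrm{ridge}}_T$, the coefficients $a',b',c'$ displayed are those of \cref{eq:13,eq:35,eq:36}, so the computation is carried out on the early-stopped GD expression. By \cref{lem:updates} and \cite{karpel2026optimal}, the ridge iterates have the same algebraic structure, with the ridge analogues of $a',b',c'$, so the identical computation applies to that case as well.)

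\textbf{Step 1: reduce to second moments of the teacher fluctuations.} Write $\mathbf w_i^\star=\mathbf w^\star+\boldsymbol\varepsilon_i$, with the $\boldsymbol\varepsilon_i$ i.i.d., centered, and with covariance $\boldsymbol\Sigma$. Since $\Vert\boldsymbol\Sigma\Vert_*\le C$ and the teacher correlations are assumed to converge to deterministic limits $r_{i,j,k,l}$, I expect $\boldsymbol\varepsilon_i^\top\boldsymbol\varepsilon_j\to\mathrm{Tr}(\boldsymbol\Sigma)\,\mathbb 1[i=j]$ and $(\mathbf w_0-\mathbf w^\star)^\top\boldsymbol\varepsilon_i\to0$ in probability. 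The argument is a weak law for bilinear forms whose variance is controlled by $\mathrm{Tr}(\boldsymbol\Sigma^2)\le\Vert\boldsymbol\Sigma\Vert_*^2$, combined with the uniform integrability assumption. This concentration step is the only genuinely probabilistic point, and it is where I expect the main care to be needed: one has to check that the limits in \cref{thm:closedloss}, which are taken conditionally on the teachers, may be combined with this teacher-level concentration. Because $T$ is fixed while $n,d\to\infty$, a finite union of convergence-in-probability statements should suffice.

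\textbf{Step 2: tabulate the inner products.} Let $\boldsymbol\delta_i=\mathbf w_{i-1}^\star-\mathbf w_i^\star$. Then $\boldsymbol\delta_1=(\mathbf w_0-\mathbf w^\star)-\boldsymbol\varepsilon_1$, and $\boldsymbol\delta_i=\boldsymbol\varepsilon_{i-1}-\boldsymbol\varepsilon_i$ for $i\ge2$. In the limit, this gives:
\begin{itemize}[leftmargin=*]
\item $\Vert\boldsymbol\delta_1\Vert^2\to\Vert\mathbf w^\star-\mathbf w_0\Vert^2+\mathrm{Tr}(\boldsymbol\Sigma)$ and $\Vert\boldsymbol\delta_i\Vert^2\to2\mathrm{Tr}(\boldsymbol\Sigma)$ for $i\ge2$;
\item $\boldsymbol\delta_i^\top\boldsymbol\delta_{i+1}\to-\mathrm{Tr}(\boldsymbol\Sigma)$, and all other pairs vanish;
\item $\frac1T\sum_i\Vert\mathbf w_T^\star-\mathbf w_i^\star\Vert^2\to2\mathrm{Tr}(\boldsymbol\Sigma)\frac{T-1}{T}$;
\item $(\mathbf w_T^\star-\bar{\mathbf w}^\star)^\top\boldsymbol\delta_i$ tends to $-\mathrm{Tr}(\boldsymbol\Sigma)$ for $i=T$, to $\mathrm{Tr}(\boldsymbol\Sigma)/T$ for $i=1$, and to $0$ otherwise.
\end{itemize}

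\textbf{Step 3: sum the series.} The cross term becomes $2\mathrm{Tr}(\boldsymbol\Sigma)\,(b'^{T}/T-b')$. The double sum has two parts. Its diagonal part is $a'^{T}\Vert\mathbf w^\star-\mathbf w_0\Vert^2+\mathrm{Tr}(\boldsymbol\Sigma)\big(a'^T+2\sum_{i=2}^{T}a'^{T-i+1}\big)$. Its off-diagonal part consists of the pairs $(i,i+1)$ counted twice, namely $-2b'\mathrm{Tr}(\boldsymbol\Sigma)\sum_{i=1}^{T-1}a'^{T-i}$. The noise term $\labelnoise c'\frac{1-a'^T}{1-a'}$ is unchanged. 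I would then evaluate the geometric sums $\sum_{i=2}^{T}a'^{T-i+1}=\frac{a'-a'^T}{1-a'}$ and $\sum_{i=1}^{T-1}a'^{T-i}=\frac{a'-a'^T}{1-a'}$, and collect all $\mathrm{Tr}(\boldsymbol\Sigma)$ contributions. The goal is to regroup them into $\frac{1-b'}{1-a'}$, $\frac{2b'^T-1}{T}$, and $\frac{a'^T(-a'+2b'-1)}{2(1-a')}$. This regrouping is routine but error-prone bookkeeping. I would verify it by checking the cases $T=1$ and $T=2$ directly against \cref{thm:closedloss}.
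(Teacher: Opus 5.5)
\textbf{The final regrouping fails, and the reason is an error in the statement, not in your method.} Your route matches the paper's but is organized more cleanly. You substitute the i.i.d.\ structure directly into the $\bar{\mathbf w}^\star$-averaged formula of \cref{thm:closedloss}. The paper instead starts from the per-$i$ expression \cref{eq:mainresult1} and splits into the cases $i=T$, $1<i<T$, $i=1$. Every entry of your table is correct. So are the cross term $2\mathrm{Tr}(\boldsymbol\Sigma)(b'^T/T-b')$, the diagonal and off-diagonal parts of the double sum, and the two geometric sums.

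The step that fails is the one you deferred: the regrouping cannot produce $\frac{2b'^T-1}{T}$. Collecting your own pieces, the coefficient of $\mathrm{Tr}(\boldsymbol\Sigma)$ is $2(1-b')\frac{1-a'^T}{1-a'}+\frac{2(b'^T-1)}{T}+a'^T$. This equals $2\left(\frac{1-b'}{1-a'}+\frac{b'^T-1}{T}+\frac{a'^T(-a'+2b'-1)}{2(1-a')}\right)$. So your argument proves the displayed formula with $b'^T-1$ in place of $2b'^T-1$. The statement as printed overshoots the true limit by $2\mathrm{Tr}(\boldsymbol\Sigma)\,b'^T/T$.

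The error arises in the paper's $i=1$ case (\cref{eq:t>1}). There the contribution of $\boldsymbol\delta_1$, namely $2b'^T\mathrm{Tr}(\boldsymbol\Sigma)$, is written once as a separate term and again inside $2(b'^T-b'^{T-1}-b')\mathrm{Tr}(\boldsymbol\Sigma)$. In fact the $k\ge 2$ sum contributes only $-2(b'^{T-1}+b')\mathrm{Tr}(\boldsymbol\Sigma)$ for $i=1$. This double count is what produces $2b'^T+T-1-Tb'$ in \cref{eq:finite:t}.

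The checks you proposed would catch it:
\begin{itemize}
\item At $T=2$, the printed formula differs from \cref{thm:closedloss} by exactly $b'^2\mathrm{Tr}(\boldsymbol\Sigma)$.
\item As $\gamma\to0$ we have $\mathbf w_T=\mathbf w_0$, $\frac{1-b'}{1-a'}\to\frac12$ and $-a'+2b'-1=O(\gamma^2)$. The true value is then $\Vert\mathbf w^\star-\mathbf w_0\Vert^2+\mathrm{Tr}(\boldsymbol\Sigma)$. The corrected formula reproduces this, while the printed one adds $2\mathrm{Tr}(\boldsymbol\Sigma)/T$.
\end{itemize}
So complete the bookkeeping and state the corrected identity, which in fact also holds at $T=1$. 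In the proof of \cref{thm:opt}, the correction only changes the coefficient of the subleading $e^{-q/2}$ term.

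\textbf{A smaller point in Step 1.} The bound $\mathrm{Tr}(\boldsymbol\Sigma^2)\le\Vert\boldsymbol\Sigma\Vert_*^2$ keeps the variance of $\boldsymbol\varepsilon_i^\top\boldsymbol\varepsilon_j$ bounded but does not make it vanish; a rank-one $\boldsymbol\Sigma$ is a counterexample. So it is not a weak law by itself. What actually delivers the limits is the standing assumption that teacher correlations converge to deterministic limits. Uniform integrability then identifies the Gram matrix of $\{\mathbf w_i^\star-\mathbf w_0\}$ as $\Vert\mathbf w^\star-\mathbf w_0\Vert^2+\mathrm{Tr}(\boldsymbol\Sigma)$ on the diagonal and $\Vert\mathbf w^\star-\mathbf w_0\Vert^2$ off it. That Gram matrix is all the formula in \cref{thm:closedloss} depends on.
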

\begin{proof}

Under the i.i.d.~teacher assumption \cref{assump:iid_teachers}, each task-specific teacher can be written as
$$
\mathbf{w}_i^\star = \mathbf{w}^\star + \boldsymbol{\xi}_i, \qquad i \in [T],
$$
where the random vectors $\boldsymbol{\xi}_i$ are i.i.d.\ with mean $\0_d$ and covariance
$\mathrm{Cov}(\boldsymbol{\xi}_i)=\boldsymbol{\Sigma}$.
Substituting this representation into the general expression derived in
\cref{eq:mainresult1}, yields the following
explicit decomposition of the squared error,
\begin{align*}
\left\Vert \mathbf{w}^{\mathrm{esgd}}_{T}-\mathbf{w}_{i}^{\star}\right\Vert^{2}\xrightarrow[]{P}&a'^{T}\left\Vert \mathbf{w}^{\star}-\mathbf{w}_{0}+\boldsymbol{\xi}_{1}\right\Vert^{2}+2b'^{T}\left(\boldsymbol{\xi}_{i}-\boldsymbol{\xi}_{T}\right)^{\top}\left(\mathbf{w}^{\star}-\mathbf{w}_{0}+\boldsymbol{\xi}_{1}\right)\\&\qquad
+\labelnoise c'\frac{1-a'^{T}}{1-a'}+2\sum_{k=2}^{T}a'^{T-k+1}b'^{k-1}\left(\boldsymbol{\xi}_{k}-\boldsymbol{\xi}_{k-1}\right)^{\top}\left(\mathbf{w}^{\star}-\mathbf{w}_{0}+\boldsymbol{\xi}_{1}\right)\\&\qquad+\sum_{k=2}^{T}\sum_{k'=2}^{T}a'^{T-(k \lor k')+1}b'^{(k \lor k')-(k \land k')}\left(\boldsymbol{\xi}_{k-1}-\boldsymbol{\xi}_{k}\right)^{\top}\left(\boldsymbol{\xi}_{k'-1}-\boldsymbol{\xi}_{k'}\right)\\&\qquad+2\sum_{k=2}^{T}b'^{T-k+1}\left(\boldsymbol{\xi}_{T}-\boldsymbol{\xi}_{i}\right)^{\top}\left(\boldsymbol{\xi}_{k-1}-\boldsymbol{\xi}_{k}\right)+\left\Vert \boldsymbol{\xi}_{T}-\boldsymbol{\xi}_{i}\right\Vert ^{2}.&
\end{align*}
Additionally, for all $i, k\in [T]$ so that $i\neq k$ we have,
\begin{equation}
\begin{aligned}
&\boldsymbol{\xi}_{i}^{\top}\boldsymbol{\xi}_{i}=\sum_{j=1}^{d}{\xi}_{i,j}^{2}\xrightarrow[]{P}\trace\left(\boldsymbol{\Sigma}\right), \quad \boldsymbol{\xi}_{i}^{\top}\boldsymbol{\xi}_{k}=\sum_{j=1}^{d}{\xi}_{i,j}{\xi}_{k,j}\xrightarrow[]{P}0, \\& \quad\boldsymbol{\xi}_{i}^{\top}\mathbf w^\star=\sum_{j=1}^{d}{\xi}_{i,j}w^\star_j\xrightarrow[]{P}0,\quad\boldsymbol{\xi}_{i}^{\top}\mathbf w_0=\sum_{j=1}^{d}{\xi}_{i,j}{w_0}_j\xrightarrow[]{P}0,
\end{aligned}
\label{eq:xi:id}
\end{equation}

where the convergence in probability in each one of the cases follows by the uniform integrability of teacher correlations and from independence across coordinates with $\E[\boldsymbol{\xi}_i]=\E[\boldsymbol{\xi}_k]=0$, and $\mathrm{Cov}(\boldsymbol{\xi}_i)=\mathbf \Sigma.$
Using the identities in \cref{eq:xi:id}, we evaluate the resulting expression
by separating the analysis into three cases.

In the special case, when $i=T$:
\begin{align}
&\left\Vert \mathbf{w}^{\mathrm{esgd}}_{T}-\mathbf{w}_{i}^{\star}\right\Vert^{2}\notag
\xrightarrow[]{P}\left(a'^{T}-2a'^{T-1}b'\right)\trace\left(\boldsymbol{\Sigma}\right)+a'^{T}\left\Vert \mathbf{w}^{\star}-\mathbf{w}_{0}\right\Vert ^{2}
+\\&\hspace{12em}c'\labelnoise \frac{1-a'^{T}}{1-a'}+2\trace\left(\boldsymbol{\Sigma}\right)\left(\sum_{k=2}^{T}a'^{T-k+1}-b'\sum_{k=3}^{T}a'^{T-k+1}\right)\notag\\&=\left(a'^{T}-2a'^{T-1}b'\right)\trace\left(\boldsymbol{\Sigma}\right)+a'^{T}\left\Vert \mathbf{w}^{\star}-\mathbf{w}_{0}\right\Vert ^{2}\notag\\&\hspace{3em}+c'\labelnoise\frac{1-a'^{T}}{1-a'}+2\trace\left(\boldsymbol{\Sigma}\right)\left(\frac{a'\left(1-a'^{T-1}\right)}{1-a'}-b'\frac{a'\left(1-a'^{T-2}\right)}{1-a'}\right).
\label{eq:i=t}
\end{align}

When $1<i<T$:
\begin{align}
&\left\Vert\mathbf{w}^{\mathrm{esgd}}_{T}-\mathbf{w}_{i}^{\star}\right\Vert^{2}\xrightarrow[]{P}\left(a'^{T}-2a'^{T-1}b'\right)\trace\left(\boldsymbol{\Sigma}\right)+a'^{T}\left\Vert \mathbf{w}^{\star}-\mathbf{w}_{0}\right\Vert ^{2}+c'\labelnoise\frac{1-a'^{T}}{1-a'}\notag\\&\hspace{2em}+2\trace\left(\boldsymbol{\Sigma}\right)\left(\frac{a'\left(1-a^{T-1}\right)}{1-a'}-b'\frac{a'\left(1-a^{T-2}\right)}{1-a'}+1\right)+2\left(b'^{T-i+1}-b'^{T-i}-b'\right)\trace\left(\boldsymbol{\Sigma}\right).
\label{eq:i>1}
\end{align}

When $i=1$:
\begin{align}
&\left\Vert \mathbf{w}^{\mathrm{esgd}}_{T}-\mathbf{w}_{i}^{\star}\right\Vert^{2}\xrightarrow[]{P}\left(a'^{T}-2a'^{T-1}b'\right)\trace\left(\boldsymbol{\Sigma}\right)+a'^{T}\Vert\mathbf{w}^{\star}-\mathbf{w}_0\Vert^{2}+2b'^{T}\trace\left(\boldsymbol{\Sigma}\right)+c'\labelnoise\frac{1-a'^{T}}{1-a'}\notag\\&\hspace{2em}+2\trace\left(\boldsymbol{\Sigma}\right)\left(\frac{a'\left(1-a'^{T-1}\right)}{1-a'}-b'\frac{a'\left(1-a'^{T-2}\right)}{1-a'}+1\right)+2\bigl(b'^{T}-b'^{T-1}-b'\bigr)\trace\left(\boldsymbol{\Sigma}\right).
\label{eq:t>1}
\end{align}

Using \cref{eq:i=t,eq:i>1,eq:t>1} when summing over all cases,
\begin{align}
&\E\left[G_{T}\right]=\frac{1}{T}\sum_{i=1}^{T}\E\left\Vert \mathbf{w}^{\mathrm{esgd}}_{T}-\mathbf{w}_{i}^{\star}\right\Vert ^{2}\xrightarrow[]{P}&\notag\\&\left(a'^{T}-2a'^{T-1}b'\right)\trace\left(\boldsymbol{\Sigma}\right)+a^{T}\left\Vert \mathbf{w}^{\star}-\mathbf{w}_{0}\right\Vert^{2}+2\trace\left(\boldsymbol{\Sigma}\right)\left(\frac{a\left(1-a'^{T-1}\right)}{1-a'}-b'\frac{a'\left(1-a'^{T-2}\right)}{1-a'}\right)\notag+\\&c\labelnoise\frac{1-a'^{T}}{1-a'}+\frac{2}{T}\left(2b'^{T}+T-1-Tb'\right)\trace\left(\boldsymbol{\Sigma}\right)\label{eq:finite:t}\\=&c'\labelnoise\frac{1-a'^{T}}{1-a'}+a'^{T}\left\Vert \mathbf{w}^{\star}-\mathbf{w}_{0}\right\Vert ^{2}+\notag\\&2\trace(\boldsymbol{\Sigma})\prn{1-b'+\frac{2b'^{T}-1}{T}+\frac{a'}{1-a'}\left(1-a'^{T-1}-b\left(1-a'^{T-2}\right)\right)+\frac{a^{T-1}\left(a'-2b\right)}{2}}
\notag\\
=&c'\labelnoise\frac{1-a^{T}}{1-a}+a^{T}\left\Vert \mathbf{w}^{\star}-\mathbf{w}_{0}\right\Vert ^{2}+\notag\\&2\trace(\boldsymbol{\Sigma})\prn{1-b'+\frac{2b'^{T}-1}{T}+\frac{a'}{1-a'}\prn{1-b'+a'^{T-2}\prn{b'-a'}}+\frac{a'^{T-1}\left(a'-2b'\right)}{2}}%
\notag\\
=&c\labelnoise\frac{1-a'^{T}}{1-a'}+a'^{T}\left\Vert \mathbf{w}^{\star}-\mathbf{w}_{0}\right\Vert ^{2}+\notag\\&2\trace(\boldsymbol{\Sigma})\prn{\frac{(1-b')(1-a')}{1-a'}+\frac{a'(1-b')}{1-a'}+\frac{2b'^{T}-1}{T}+\frac{a'^{T-1}}{1-a'}\prn{b'-a'}+\frac{a'^{T-1}\left(a'-2b'\right)}{2}}
\notag\\
=&c\labelnoise\frac{1-a'^{T}}{1-a'}+a'^{T}\left\Vert \mathbf{w}^{\star}-\mathbf{w}_{0}\right\Vert^{2}+2\trace(\boldsymbol{\Sigma})\prn{\frac{1-b'}{1-a'}+\frac{2b'^{T}-1}{T}+\frac{a'^{T}\left(-a'+2b'-1\right)}{2\left(1-a'\right)}}.
\notag
\end{align}
\end{proof}
\newpage
\subsection{Proof of \cref{thm:opt}: Optimal Step Size in Early-Stopped GD} \label{pr:opt}
\begin{recall}[\cref{thm:opt}]
Under the i.i.d.~teacher setting (\cref{assump:iid_teachers}) with a nontrivial mean teacher $\mathbf w^\star\neq\mathbf{w}_0$\footnote{Having $\mathbf w^\star=\mathbf{w}_0$ would directly imply optimal step size $\gamma^\star\to0$.}, as $n,d\to\infty$ the optimal fixed step size $\gamma^\star$ that minimizes the generalization loss of the early-stopped GD algorithm (\cref{schm:early_stopped}) after $T$ tasks satisfies $\gamma^\star\asymp \frac{\ln T}{NT}$. Specifically, for all sufficiently large $T>T_0=\Theta\big(\frac{1}{\epsilon^2\ln^2(1/\epsilon)}\big)$,
the optimal parameters satisfy
\begin{align*}   
\frac{1}{1+\epsilon}\frac{\ln\left(T\right)+\ln\left(\Gamma\right)}{2N\featnoise T}
<\gamma^\star
<\frac{1}{1-\epsilon}\frac{\ln\left(T\right)+\ln\left(\Gamma\right)}{2N\featnoise T},
\end{align*}
where $\Gamma=\frac{4\ratio\featnoise\left\Vert \mathbf{w}^{\star}-\mathbf{w}_0\right\Vert^{2}}{v_{z}+\featnoise\mathrm{Tr}(\boldsymbol{\Sigma})\left(\ratio+1\right)}$. In the degenerate noiseless case of $\mathrm{Tr}(\boldsymbol\Sigma)=\labelnoise=0$, the optimal choice is any $\gamma^\star\in \big(0,2\featnoise^{-1}\left(1+{\alpha}^{-1/2}\right)^{-2}\big)$ and $N\to\infty$.

Moreover, if $\gamma\asymp \frac{\ln T}{NT}$ the resulting generalization loss of the early-stopped GD matches that of optimally tuned L2 regularization up to $O(1/T)$ factor.
\end{recall}
\begin{proof}
The proof of the main theorem proceeds as follows. We first show that no optimal regularizer can be obtained in the regime where $T$ is 'significantly' larger than $\lambda$. Next, by employing exact Laurent analysis in the regime where $T$ increases more slowly than $\lambda$, we derive the approximately optimal value for the step size. Finally, in a suitable regime, we find a close connection between the ridge objective and the objective of early-stopped gradient descent.

Recall \cref{thm:closedloss},
\begin{align*}
    &G^{\mathrm{esgd}}_T 
     \xrightarrow{P} \labelnoise c'\frac{1-a'^{T}}{1-a'}
     +\frac{1}{T}\sum_{i=1}^{T}
     \left\Vert \mathbf{w}_{T}^{\star}-\mathbf{w}_{i}^{\star}\right\Vert^{2}
  %    \\[6pt]
  % &
  % \quad 
  + 2\big(\mathbf{w}_{T}^{\star}-\bar{\mathbf{w}}^{\star}\big)^{\top}
     \sum_{i=1}^{T} b'^{T-i+1}
     \big(\mathbf{w}_{i-1}^{\star}-\mathbf{w}_{i}^{\star}\big)\\[4pt]
  &
  \quad +\sum_{i=1}^{T}\sum_{i'=1}^{T}
     a'^{T-\max(i,i')+1}b'^{|i-i'|}
     \big(\mathbf{w}_{i-1}^{\star}-\mathbf{w}_{i}^{\star}\big)^{\top}
     \big(\mathbf{w}_{i'-1}^{\star}-\mathbf{w}_{i'}^{\star}\big)
     ,
\end{align*}
For convenience, throughout the proof we use \(G_T^{\mathrm{esgd}},G_T^{\mathrm{ridge}}\) to denote their corresponding asymptotic deterministic equivalent, to which they converge in probability as \(n,d \to \infty\).

To further simplify this expression in the i.i.d.~teacher setting we use \cref{cor:closed}.
We conclude that under early-stopped GD (\cref{schm:early_stopped}) the expected generalization after $T>1$ steps satisfies,
\begin{align}
\label{eq:37}
G^{\mathrm{esgd}}_{T}=\labelnoise{c'}\frac{1-a'^{T}}{1-a'}+a'^{T}\left\Vert \mathbf{w}^{\star}-\mathbf{w}_{0}\right\Vert ^{2}+2\trace(\boldsymbol{\Sigma})\prn{\frac{1-b'}{1-a'}+\frac{2b'^{T}-1}{T}+\frac{a'^{T}\prn{-a' +2b' -1}}{2(1-a')}}.
\end{align}
By taking the Taylor expansion around $N\gamma\to 0$ for $a',b',c'$, as in \cref{eq:33}, we obtain,
\begin{equation}
\label{eq:19}
\begin{aligned}
&a'(\gamma,N)=1-2N\featnoise\gamma
+N(2N-1)\featnoise^{2}\left(1+\frac{1}{\ratio}\right)\gamma^{2}
+O\big((N\gamma)^{3}\big),
\\&
b'(\gamma,N)=1-N\featnoise\gamma
+\frac{N(N-1)}{2}\featnoise^{2}\left(1+\frac{1}{\ratio}\right)\gamma^{2}
+O\big((N\gamma)^{3}\big),
\\&
c'(\gamma,N)
=
\frac{N^{2}}{\ratio}\featnoise\gamma^{2}
+O\big((N\gamma)^{3}\big).
\end{aligned}
\end{equation}

We first show that the optimal step size $\gamma$ and number of gradient steps $N$, can not be in any region where $\frac{1}{N\gamma}:=\lambda \in o(T^\rho)$, for all $\rho<1$.
First, we consider the limit of the objective (\cref{eq:37}), i.e.
\begin{align*}
\lim\limits_{T\to\infty}G^{\mathrm{esgd}}_T=c'\labelnoise\frac{1}{1-a'}  +2\trace(\boldsymbol{\Sigma})\prn{\frac{1-b'}{1-a'}},
\end{align*}
Taking the derivative of the limiting expression,  \begin{align}
\label{eq:56}\frac{\dd}{\dd\lambda}\lim\limits_{T\to\infty}G^{\mathrm{esgd}}_{T}&= \underbrace{\frac{\dd}{\dd\lambda}\left(c'\right)\labelnoise\frac{1}{1-a'}+\frac{\dd}{\dd\lambda}\frac{1}{1-a'}c'\labelnoise}_{\text{term 1}}+\underbrace{2\trace\left(\mSigma\right)\left(\frac{1-b'}{(1-a')^{2}}\frac{\dd}{\dd\lambda}a'-\frac{1}{1-a'}\frac{\dd}{\dd\lambda}b'\right)}_{\text{term 2}}.\end{align}

Using the Taylor expansion in \cref{eq:19}, and since $a'$, $b'$, $c'$ are analytic at $\gamma=0$ (all are polynomial), then using \cref{lem:differentiable} and the relation $\lambda = \frac{1}{N\gamma}$, term 1 becomes,
\begin{align}
&\frac{\dd}{\dd\lambda}\left(c'\right)\labelnoise\frac{1}{1-a'}+\frac{\dd}{\dd\lambda}\left(\frac{1}{1-a'}\right)c'\labelnoise\notag&\\&=\left(\frac{-2\featnoise}{\lambda^{3}\ratio}+O(\lambda^{-4})\right)\frac{\labelnoise}{\frac{2\featnoise}{\lambda}+O(\lambda^{-2})}+\frac{\dd a'}{\dd\lambda}\frac{\dd}{\dd a'}\left(\frac{1}{1-a'}\right)c'\labelnoise\notag\\&=\left(\frac{-2\featnoise}{\lambda^{3}\ratio}+O(\lambda^{-4})\right)\frac{\labelnoise}{\frac{2\featnoise}{\lambda}+O(\lambda^{-2})}+\left(\frac{2\featnoise}{\lambda^{2}}+O(\lambda^{-3})\right)\frac{1}{\left(\frac{2\featnoise}{\lambda}+O(\lambda^{-2})\right)^{2}}\left(\frac{\featnoise}{\lambda^{2}\ratio}+O(\lambda^{-3})\right)\labelnoise\notag\\&=\left(\frac{-\labelnoise}{\lambda^{2}\ratio}+O(\lambda^{-3})\right)+\left(\frac{1}{2\featnoise}+O(\lambda^{-2})\right)\left(\frac{\featnoise}{\lambda^{2}\ratio}+O(\lambda^{-3})\right)\labelnoise\notag\\&=\frac{-\labelnoise}{\lambda^{2}\ratio}+\frac{\labelnoise}{2\lambda^{2}\ratio}+O\left(\lambda^{-3}\right)=\frac{-\labelnoise}{2\lambda^{2}\ratio}+O\left(\lambda^{-3}\right).\label{eq:20}
\end{align}
And term 2 becomes,
\begin{align}
&2\trace(\mSigma)\Bigg(\frac{1-b'}{(1-a')^{2}}\frac{\dd a'}{\dd\lambda}-\left(\frac{1}{1-a'}\right)\frac{\dd b'}{\dd\lambda}\Bigg)\notag\\&=2\trace(\mSigma)\Bigg(\left[\frac{\featnoise}{\lambda}-\frac{N-1}{2N}\featnoise^{2}\left(1+\frac{1}{\ratio}\right)\frac{1}{\lambda^{2}}+O(\lambda^{-3})\right]\notag\\&\qquad\left[\frac{\lambda^{2}}{4\featnoise^{2}}+\frac{2N-1}{4N}\left(1+\frac{1}{\ratio}\right)\frac{\lambda}{\featnoise}+O(1)\right]\left[\frac{2\featnoise}{\lambda^{2}}-\frac{2(2N-1)}{N}\featnoise^{2}\left(1+\frac{1}{\ratio}\right)\frac{1}{\lambda^{3}}+O(\lambda^{-4})\right]\notag\\&\qquad-\left[\frac{\lambda}{2\featnoise}+\frac{2N-1}{4N}\left(1+\frac{1}{\ratio}\right)+O(\lambda^{-1})\right]\left[\frac{\featnoise}{\lambda^{2}}-\frac{N-1}{N}\featnoise^{2}\left(1+\frac{1}{\ratio}\right)\frac{1}{\lambda^{3}}+O(\lambda^{-4})\right]\Bigg)\notag\\&=2\trace(\mSigma)\notag\\&\quad\Bigg(\left[\frac{1}{2\lambda}+\frac{1}{\lambda^{2}}\left(-\frac{N-1}{4N}\featnoise\left(1+\frac{1}{\ratio}\right)+\frac{2N-1}{4N}\featnoise\left(1+\frac{1}{\ratio}\right)-\frac{2N-1}{2N}\featnoise\left(1+\frac{1}{\ratio}\right)\right)\right]\notag\\&\hspace{4em}-\left[\frac{1}{2\lambda}+\frac{1}{\lambda^{2}}\left(\frac{2N-1}{2N}\featnoise\left(1+\frac{1}{\ratio}\right)-\frac{N-1}{2N}\featnoise\left(1+\frac{1}{\ratio}\right)\right)\right]+O(\lambda^{-3})\Bigg)\notag\\&=2\trace(\mSigma)\notag\\&\Bigg(\cancel{\frac{1}{2\lambda}}-\cancel{\frac{1}{2\lambda}}+\frac{1}{\lambda^{2}}\left(-\frac{N-1}{4N}+\frac{2N-1}{2N}-\frac{2N-1}{2N}-\frac{2N-1}{4N}+\frac{N-1}{2N}\right)\featnoise\left(1+\frac{1}{\ratio}\right)+O(\lambda^{-3})\Bigg)\notag\\&=2\trace(\mSigma)\left(-\frac{1}{4}\featnoise\left(1+\frac{1}{\ratio}\right)\frac{1}{\lambda^{2}}+O(\lambda^{-3})\right)=-\frac{\trace(\mSigma)}{2}\featnoise\left(1+\frac{1}{\ratio}\right)\frac{1}{\lambda^{2}}+O(\lambda^{-3}).\label{eq:29}
\end{align}
Therefore, the derivative is strictly negative (for all sufficiently large $\lambda$) and is of order $\Theta(\lambda^{-2})$.
We aim to show that the distance between $\frac{\dd}{\dd \lambda}G^{\mathrm{esgd}}_T$ is exponentially small in the aforementioned regime, and thereby showing that $\frac{\dd}{\dd \lambda}G^{\mathrm{esgd}}_T$ is of the same order as $\frac{\dd}{\dd \lambda}\lim\limits_{T\to\infty}G^{\mathrm{esgd}}_T$.
First note that,
\begin{equation}
\begin{aligned}
&G^{\mathrm{esgd}}_{T}-\lim_{T\to\infty}G^{\mathrm{esgd}}_{T}\\&=\labelnoise{c'}\frac{-a'^{T}}{1-a'}+a'^{T}\left\Vert \mathbf{w}^{\star}-\mathbf{w}_{0}\right\Vert ^{2}+2\trace(\boldsymbol{\Sigma})\prn{\frac{2b'^{T}-1}{T}+\frac{a'^{T}\prn{-a'+2b'-1}}{2(1-a')}}.
\end{aligned}
\label{eq:38}
\end{equation}
Then by taking the derivative,
\begin{align*}
&\frac{\dd}{\dd\lambda}\left[G^{\mathrm{esgd}}_T-\lim\limits_{T\to\infty}G^{\mathrm{esgd}}_T\right]=-\frac{\dd}{\dd\lambda}\frac{a'^{T}}{1-a'}\labelnoise c'-\frac{a'^{T}}{1-a'}\labelnoise\frac{\dd}{\dd\lambda}c'+\frac{\dd a}{\dd\lambda}Ta'^{T-1}\left\Vert \mathbf{w}^{\star}-\mathbf{w}_{0}\right\Vert ^{2}\\&=\frac{Ta'^{T-1}\left(1-a'\right)+a'^{T}}{\left(1-a'\right)^{2}}\frac{\dd a'}{\dd\lambda}\labelnoise c'-\frac{a'^{T}}{1-a'}\labelnoise\frac{\dd c'}{\dd\lambda}+\frac{\dd a'}{\dd\lambda}Ta'^{T-1}\left\Vert \mathbf{w}^{\star}-\mathbf{w}_{0}\right\Vert ^{2}\end{align*}\begin{align*}&\hspace{4em}+2\trace(\boldsymbol{\Sigma})\Bigg(2b'^{T-1}\frac{\dd b'}{\dd\lambda}+\frac{1}{2(1-a')}\Bigg[Ta'^{T-1}\frac{\dd a'}{\dd\lambda}\big(-a'+2b'-1\big)+a'^{T}\left(-\frac{\dd a'}{\dd\lambda}+2\frac{\dd b'}{\dd\lambda}\right)\Bigg]\\&\hspace{8em}+\frac{a'^{T}\big(-a'+2b'-1\big)}{2(1-a')^{2}}\frac{\dd a'}{\dd\lambda}\Bigg)
\end{align*}
Since $a',b',c'$ and their corresponding derivatives are bounded (for all sufficiently large $\lambda$) by \cref{eq:19,lem:differentiable}, and $a',b'=1-\Theta(\frac{1}{\lambda})$ by \cref{eq:19}, then $a'^T,b'^T=\left(1-\Theta\left(\frac{1}{\lambda}\right)\right)^T\leq \exp\left(-\beta\frac{T}{\lambda}\right)$, for some $\beta>0$ (uniformly). So, 
\begin{align*}
\frac{\dd}{\dd\lambda}\left[G^{\mathrm{esgd}}_T-\lim\limits_{T\to\infty}G^{\mathrm{esgd}}_T\right]\leq \exp\left(-\beta\frac{T}{\lambda}\right)\cdot \text{Poly}(\lambda^{-1}, T).
\end{align*}
where $\mathrm{Poly}(\lambda^{-1},T)$ denotes a polynomial in $(\lambda^{-1},T)$ (with bounded degree).
Assume $\lambda=O(T^\rho)$ for some $\rho<1$.
Then $\exp(-\beta T/\lambda)\mathrm{Poly}(\lambda^{-1},T)=o(\lambda^{-2})$. Since
$\frac{\dd}{\dd\lambda}\lim\limits_{T\to\infty}G^{\mathrm{esgd}}_T=\Omega(\lambda^{-2}),$ by \cref{eq:20},
it follows that
$\frac{\dd}{\dd\lambda}G^{\mathrm{esgd}}_T=\frac{\dd}{\dd\lambda}\lim\limits_{T\to\infty}G^{\mathrm{esgd}}_T+o(\lambda^{-2}),$
and in particular the finite-$T$ derivative has the same leading-order behavior. 

Now we show, that if $\frac{1}{N\gamma}\in \omega\left(T^{\frac{1}{2}}\right)$, then the difference between the objectives' derivative is not 'large', i.e. $\left|\frac{\dd}{\dd \lambda}G^{\mathrm{esgd}}_T-\frac{\dd}{\dd \lambda}G^{\mathrm{ridge}}_T \right|=o\left(\frac{\dd}{\dd \lambda} G^{\mathrm{ridge}}_T \right)$ (almost everywhere). We develop the corresponding Laurent expansion using the relation of $\frac{1}{N \gamma}=\lambda$ and \citet[Lemma D.18]{karpel2026optimal},
\begin{equation}
\label{eq:42}
\begin{aligned}
&G_{T}^{\mathrm{esgd}}-\lim\limits_{T\to\infty}G_{T}^{\mathrm{esgd}}\\&=a'^{T}\left\Vert\mathbf{w}^{\star}-\mathbf{w}_{0}\right\Vert ^{2}+\labelnoise c'\frac{-a'^{T}}{1-a'}+2\trace(\boldsymbol{\Sigma})\prn{\frac{2b'^{T}-1}{T}+\frac{a'^{T}\prn{-a'+2b'-1}}{2(1-a')}}&\\&=\exp\left(-\frac{2Tv_{x}}{\lambda}\right)\left(1+O\left(T\lambda^{-2}\right)\right)\Vert\mathbf{w}^{\star}-\mathbf{w}_0\Vert^{2}\\&\hspace{2em}-v_{z}\left(\frac{\lambda}{2v_{x}}+O\left(1\right)\right)\left(\frac{\featnoise}{\ratio\lambda^{2}}+O\left(\lambda^{-3}\right)\right)\exp\left(-\frac{2Tv_{x}}{\lambda}\right)\left(1+O\left(T\lambda^{-2}\right)\right)\\&\hspace{4em}+\frac{2\mathrm{Tr}(\boldsymbol{\Sigma})}{T}\left(2\exp\left(-\frac{Tv_{x}}{\lambda}\right)\left(1+O\left(T\lambda^{-2}\right)\right)-1\right)\\&\hspace{6em}+\mathrm{Tr}(\boldsymbol{\Sigma})\left(\exp\left(-\frac{2Tv_{x}}{\lambda}\right)\left(1+O\left(T\lambda^{-2}\right)\right)\right)O\left(\lambda^{-1}\right).
\end{aligned}
\end{equation}

An analogous statement holds for $G^{\mathrm{ridge}}_T -\lim\limits_{T\to\infty} G^{\mathrm{ridge}}_T $.
In this case, one replaces $a',b',c'$ by $a,b,c$, respectively, because the corresponding coefficients admit identical expansions\footnote{While their expression is stated for the expectation (including expectation on the features and labels), their analysis can be naturally extended to yield the corresponding convergence-in-probability statement under the same asymptotic regime.}. Furthermore, the limiting objective's derivative attains the following expansion $$\frac{\dd}{\dd\lambda}\lim\limits_{T\to\infty}G^{\mathrm{ridge}}_T =-\frac{\featnoise}{2}\left(\frac{\labelnoise}{\featnoise}+\trace(\mSigma)(\ratio+1)\right)\lambda^{-2}
+O(\lambda^{-3}),$$ by~\citet[Lemma E.3]{karpel2026optimal}, which coincides with the expansion we had for $\frac{\dd}{\dd\lambda}\lim\limits_{T\to\infty}G^{\mathrm{esgd}}_T$ (under the substitution of $\lambda\alpha$, see \cref{lem:updates,eq:20}). Therefore, $G^{\mathrm{ridge}}_T $ admits the same expansion as $G^{\mathrm{esgd}}_T$, under the substitution of $N\gamma=\frac{1}{\lambda}$.

So far our analysis accommodated large enough $\lambda$ (or equivalently, small enough $N\gamma$), to deal with the case of $\gamma\in\Theta (1)$ but is small enough to ensure convergence (i.e. $\gamma\in \big(0,2\featnoise^{-1}\left(1+{\alpha}^{-1/2}\right)^{-2}\big)$), it is enough to show that these points can not be the optimal of the objective $\lim\limits_{T\to\infty}G^{\mathrm{esgd}}_T$, that is, there exists a point which attains a strictly smaller error, then for all sufficiently large $T$, it will remain non-optimal for the finite $T$ objective $G^{\mathrm{esgd}}_T$, since the two objectives can be made arbitrary close (see \cref{eq:38}). Let $\gamma \in \Theta(1)$ where $\gamma\in \big(0,2\featnoise^{-1}\left(1+{\alpha}^{-1/2}\right)^{-2}\big)$.

Recall that $a'$ and $b'$ are the corresponding $2N$'th and $N$'th moment of the MP law. Therefore, by Jensen's inequality $a'\geq b'^2$, and,
\begin{align*}
    \frac{1-b'}{1-a'}\geq \frac{1-b'}{1-b'^2}=\frac{1}{1+b'}.
\end{align*}
This yields, 
\begin{align*}
    \lim\limits_{T\to \infty}G^{\mathrm{esgd}}_T \geq \labelnoise c'\frac{1}{1-a'}+\frac{2}{1+b'}\trace(\mathbf \Sigma)>\trace(\mathbf \Sigma).
\end{align*}
This follows since $a', b'<1$ in the range $\gamma\in \big(0,2\featnoise^{-1}\left(1+{\alpha}^{-1/2}\right)^{-2}\big)$ (and $\labelnoise, c'>0$ always). 

Denote by $\Delta>0$ as $\lim\limits_{T\to \infty}\E G^{\text{esgd}}_T-\trace(\mathbf \Sigma)=\Delta$.
As $\lambda \to \infty$ we have $\lim\limits_{T\to \infty}G^{\mathrm{esgd}}_T=\trace\left(\mathbf \Sigma\right)$ (as $\frac{1-b'}{1-a'}\to\frac{1}{2}$, and $\frac{c'}{1-a'}\to 0$ which follows immediately by the Taylor approximations of $a',b',c'$), take $\tilde \lambda$ large enough (or $N\tilde \gamma$ small enough) such that $\left|\lim\limits_{T\to \infty}G^{\mathrm{esgd}}_T-\trace\left(\mathbf \Sigma\right)\right| < \frac{\Delta}{2}$, then 
$$
G^{\mathrm{esgd}}_T(\gamma)-G^{\mathrm{esgd}}_T(\tilde\gamma)> \left(\lim\limits_{T\to \infty}G^{\mathrm{esgd}}_T(\gamma)-\trace\left(\mathbf \Sigma\right)\right)-\left(\lim\limits_{T\to \infty}G^{\mathrm{esgd}}_T(\tilde\gamma)-\trace\left(\mathbf \Sigma\right)\right)\geq \frac{\Delta}{2}>0
$$

therefore there exists $\lambda$ sufficiently large that attains a smaller error.
If both $\mathrm{Tr}(\boldsymbol\Sigma)=0$ and $\labelnoise=0$, then by \cref{thm:closedloss},
$$
G_T^{\mathrm{esgd}}
= \bigl\Vert\mathbf{w}_T - \mathbf{w}^\star\bigr\Vert^{2}
= (a')^{T}\bigl\Vert\mathbf{w}^{\star}-\mathbf{w}_{0}\bigr\Vert^{2}.
$$
When $N\to\infty$ and $\gamma$ is finite (but ensures the convergence of early-stopped GD algorithm, i.e. any $\gamma\in \big(0,2\featnoise^{-1}\left(1+{\alpha}^{-1/2}\right)^{-2}\big)$) then $a'=\lim_{N\to\infty}\frac{1}{d}\operatorname{Tr}\left(\mathbf I-\gamma\frac{1}{n}\mathbf X^\top\mathbf X\right)^{2N}=(1-\ratio)_+$
since for every nonzero eigenvalue $\lambda_i$ of $\frac1n\mathbf X^\top\mathbf X$, we have $(1-\gamma\lambda_i)^{2N}\to 0$, while zero eigenvalues remain equal to one. Hence $a'$ attains its minimal possible value, corresponding to the irreducible nullspace component.

Otherwise, the optimal step size
$\frac{1}{N\gamma^\star}=\Theta(T/\ln T)$. Recall \cref{eq:38},
\begin{align*}
G_{T}^{\mathrm{esgd}}-\lim\limits_{T\to\infty}G_{T}^{\mathrm{esgd}}
&=
\underbrace{\frac{2\mathrm{Tr}\left({\mSigma}\right)}{T}\left(2(b')^{T}-1\right)}_{\text{term 1}}
+\underbrace{\mathrm{Tr}\left({\mSigma}\right)(a')^T\left(1-2\frac{1-b'}{1-a'}\right)}_{\text{term 2}}\\
&\quad
+\underbrace{(a')^{T}\left\Vert \mathbf{w}^{\star}-\mathbf{w}_{0}\right\Vert ^{2}+\labelnoise c'\frac{-(a')^{T}}{1-a'}}_{\text{term 3}}.
\end{align*}

We develop a Laurent expansion for each term. Term 1 becomes,
\begin{align}
\frac{2\mathrm{Tr}(\boldsymbol{\mSigma})}{T}\left(2(b')^{T}-1\right)
&=
\frac{2\mathrm{Tr}(\boldsymbol{\mSigma})}{T}\left(
2\exp\left(-\frac{v_x T}{\lambda}\right)
\left(1+O(T\lambda^{-2})\right)
-1\right).
\label{eq:/t:new}
\end{align}

Term 2 is,
\begin{align}
&\mathrm{Tr}\left(\boldsymbol{\mSigma}\right)(a')^{T}\left[1-2\left(\frac{1-b'}{1-a'}\right)\right]\notag\\
&=
\mathrm{Tr}\left(\boldsymbol{\mSigma}\right)
\left(
\exp\left(-\frac{2v_xT}{\lambda}\right)
\left(1+O\left(T\lambda^{-2}\right)\right)
\right)
O\left(\lambda^{-1}\right).
\label{eq:at:new}
\end{align}

Term 3 is,
\begin{align}
&(a')^{T}\Vert\mathbf{w}^{\star}-\mathbf{w}_{0}\Vert^{2}+v_zc'\frac{-(a')^{T}}{1-a'}\notag&\\&=\exp\left(-\frac{2v_{x}T}{\lambda}+O\left(T\lambda^{-2}\right)\right)\Vert\mathbf{w}^{\star}-\mathbf{w}_{0}\Vert^{2}\notag\\&\quad-\frac{\lambda v_{z}}{4\alpha v_{x}^{2}}\left(\frac{2\alpha}{\left(\frac{\lambda\alpha}{v_{x}}+\alpha+1\right)^{2}}+O\left(\lambda^{-4}\right)\right)\exp\left(-\frac{2v_{x}T}{\lambda}+O\left(T\lambda^{-2}\right)\right)\notag\\&=\exp\left(-\frac{2v_{x}T}{\lambda}\right)\left(1+O\left(T\lambda^{-2}\right)\right)\Vert\mathbf{w}^{\star}-\mathbf{w}_{0}\Vert^{2}\notag\\&\qquad-\frac{v_{z}}{2\ratio\lambda}\left(1+O(\lambda^{-1})\right)\exp\left(-\frac{2v_{x}T}{\lambda}\right)\left(1+O\left(T\lambda^{-2}\right)\right).
\label{eq:st:new}
\end{align}

Using \cref{lem:differentiable}, we differentiate with respect to $\lambda$, term by term.

Differentiating \cref{eq:/t:new},
\begin{align}
&\frac{\dd}{\dd\lambda}\frac{2\mathrm{Tr}(\boldsymbol{\mSigma})}{T}\left(2(b')^{T}-1\right)\notag\\&=\frac{2\mathrm{Tr}(\boldsymbol{\mSigma})}{T}\left(2\frac{v_{x}T}{\lambda^{2}}\exp\left(-\frac{v_{x}T}{\lambda}\right)\left(1+O\left(T\lambda^{-2}\right)\right)\right)\notag\\&\quad+\frac{2\mathrm{Tr}(\boldsymbol{\mSigma})}{T}\left(2\exp\left(-\frac{v_{x}T}{\lambda}\right)O\left(T\lambda^{-3}\right)\right).
\label{eq:dtdz:new}
\end{align}

Differentiating \cref{eq:at:new},
\begin{align}
&\frac{\dd}{\dd\lambda}\mathrm{Tr}\left(\boldsymbol{\mSigma}\right)(a')^{T}\left[1-2\left(\frac{1-b'}{1-a'}\right)\right]\notag\\&=\mathrm{Tr}\left(\boldsymbol{\mSigma}\right)\Bigg(\frac{2v_{x}T}{\lambda^{2}}\exp\left(-\frac{2v_{x}T}{\lambda}\right)\left(1+O\left(T\lambda^{-2}\right)\right)\notag\\&\hspace{5em}+\exp\left(-\frac{2v_{x}T}{\lambda}\right)O\left(T\lambda^{-3}\right)\Bigg)O(\lambda^{-1})\notag\\&\quad+\mathrm{Tr}\left(\boldsymbol{\mSigma}\right)\left(\exp\left(-\frac{2v_{x}T}{\lambda}\right)\left(1+O\left(T\lambda^{-2}\right)\right)\right)O(\lambda^{-2}).
\label{eq:dzdtr:new}
\end{align}

Differentiating \cref{eq:st:new},
\begin{align}
&\frac{\dd}{\dd\lambda}\left[(a')^{T}\Vert\mathbf{w}^{\star}-\mathbf{w}_{0}\Vert^{2}+f\frac{-(a')^{T}}{1-a'}\right]\notag&\\&=\frac{v_{z}}{2\ratio\lambda^{2}}\left(1+O(\lambda^{-1})\right)\exp\left(-\frac{2v_{x}T}{\lambda}\right)\left(1+O\left(T\lambda^{-2}\right)\right)\notag\\&\quad-\frac{v_{z}}{2\ratio\lambda}\left(1+O(\lambda^{-1})\right)\Bigg(\frac{2v_{x}T}{\lambda^{2}}\exp\left(-\frac{2v_{x}T}{\lambda}\right)\left(1+O\left(T\lambda^{-2}\right)\right)+\exp\left(-\frac{2v_{x}T}{\lambda}\right)O\left(T\lambda^{-3}\right)\Bigg)\notag\\&\quad+\frac{2v_{x}T}{\lambda^{2}}\exp\left(-\frac{2v_{x}T}{\lambda}\right)\left(1+O\left(T\lambda^{-2}\right)\right)\Vert\mathbf{w}^{\star}-\mathbf{w}_{0}\Vert^{2}+\exp\left(-\frac{2v_{x}T}{\lambda}\right)O\left(T\lambda^{-3}\right)\Vert\mathbf{w}^{\star}-\mathbf{w}_{0}\Vert^{2}.
\label{eq:dzds:new}
\end{align}

Retaining only the leading-order terms,
\begin{align*}
&\frac{\dd}{\dd\lambda}\left[\mathbb{E}\left[G_{T}^{\mathrm{esgd}}\right]-\lim\limits_{T\to\infty}\mathbb{E}\left[G_{T}^{\mathrm{esgd}}\right]\right]=\frac{2 v_{x}T}{\lambda^{2}}\exp\left(-q\right)\Vert\mathbf{w}^{\star}-\mathbf{w}_0\Vert^{2}+\frac{4\mathrm{Tr}(\boldsymbol{\mSigma})}{\lambda^{2}} v_{x}\exp\left(-\frac{q}{2}\right),
\end{align*}
where $q=\frac{2v_{x}T}{\lambda}.$
Taking the Laurent expansion in \cref{eq:56}, yields
\[
\frac{\dd}{\dd\lambda}\lim\limits_{T\to\infty}G_{T}^{\mathrm{esgd}}=-\frac{1}{2}\left(\frac{v_{z}}{\ratio}+v_{x}\mathrm{Tr}(\boldsymbol{\Sigma})\left(1+\frac{1}{\ratio}\right)\right)\lambda^{-2}+O\left(\lambda^{-3}\right).
\]
Therefore, the stationary point satisfies
\begin{align*}
&\frac{2v_{x}T}{\lambda^{2}}\exp\left(-q\right)\Vert\mathbf{w}^{\star}-\mathbf{w}_{0}\Vert^{2}+\frac{4\mathrm{Tr}(\boldsymbol{\mSigma})}{\lambda^{2}}v_{x}\exp\left(-\frac{q}{2}\right)&\\&\hspace{16em}=\frac{1}{2}\left(\frac{v_{z}}{\ratio}+v_{x}\mathrm{Tr}(\boldsymbol{\Sigma})\left(1+\frac{1}{\ratio}\right)\right)\lambda^{-2}+O\left(\lambda^{-3}\right).
\end{align*}

Multiplying by $\frac{2\lambda^2}{v_xv_z}$, we define
\[
\widetilde{H}(q)=\frac{4 T\Vert\mathbf{w}^{\star}-\mathbf{w}_0\Vert^{2}}{v_{z}}e^{-q}+\frac{8\mathrm{Tr}(\mSigma)}{v_{z}}e^{-q/2}+z,
\]
where $z=-\frac{1}{v_{z}}\left(\frac{v_{z}}{v_{x}\ratio}+\mathrm{Tr}(\boldsymbol{\Sigma})\left(1+\frac{1}{\ratio}\right)\right).$
The derivative satisfies
\[
\widetilde{H}'(q)=-\frac{4 T\Vert\mathbf{w}^{\star}-\mathbf{w}_0\Vert^{2}}{v_{z}}e^{-q}-\frac{4\mathrm{Tr}(\mSigma)}{v_{z}}e^{-q/2}
<0.
\]
Hence \(\widetilde H\) is strictly decreasing.
Moreover, if \(q\in o(\ln T)\), then
\[
\widetilde{H}(q)=\frac{4 T\Vert\mathbf{w}^{\star}-\mathbf{w}_0\Vert^{2}}{v_{z}}e^{-q}+z+o(1),
\]
which is positive for all sufficiently large \(T\). On the other hand, if
\(q\in \omega(\ln T)\), then $\widetilde H(q)=z+o(1)<0.$
Therefore, it suffices to identify \(q^\star\) satisfying $\widetilde H(q^\star)\approx0.$
Retaining only the leading-order term in the regime
\(q^\star\in\Theta(\ln T)\), we obtain $\frac{4\alpha T\Vert\mathbf{w}^\star-\mathbf{w}_0\Vert^2}{v_z}e^{-q^\star}\approx-z,$ which gives $q^{\star}\approx\ln\frac{4 T\Vert\mathbf{w}^{\star}-\mathbf{w}_0\Vert^{2}}{-v_{z}z}.$
Let \(\epsilon>0\). Then,
\begin{align}
\label{eqq:1}
\widetilde{H}\left(\frac{q^{\star}}{1+\epsilon}\right)=-z\left(\frac{4 T\Vert\mathbf{w}^{\star}-\mathbf{w}_0\Vert^{2}}{-v_{z}z}\right)^{\frac{\epsilon}{1+\epsilon}}+z+o(1)\xrightarrow[T\to\infty]{}+\infty,
\end{align}
and hence $\widetilde H\left(\frac{q^\star}{1+\epsilon}\right)>0$ for all sufficiently large \(T\).
Similarly,
\begin{align}
\label{eqq:2}
\widetilde{H}\left(\frac{q^{\star}}{1-\epsilon}\right)=-z\left(\frac{4 T\Vert\mathbf{w}^{\star}-\mathbf{w}_0\Vert^{2}}{-v_{z}z}\right)^{-\frac{\epsilon}{1-\epsilon}}+z+o(1)\xrightarrow[T\to\infty]{}z<0,
\end{align}
and therefore $\widetilde H\left(\frac{q^\star}{1-\epsilon}\right)<0$ for all sufficiently large \(T\). This proves that the optimal parameters satisfy
\begin{align*}   
\frac{1}{1+\epsilon}\frac{\ln\left(T\right)+\ln\left(\frac{4\ratio\featnoise\left\Vert \mathbf{w}^{\star}-\mathbf{w}_{0}\right\Vert ^{2}}{v_{z}+\featnoise\mathrm{Tr}(\boldsymbol{\Sigma})\left(\ratio+1\right)}\right)}{2N\featnoise T}<\gamma^{\star}<\frac{1}{1-\epsilon}\frac{\ln\left(T\right)+\ln\left(\frac{4\ratio\featnoise\left\Vert \mathbf{w}^{\star}-\mathbf{w}_{0}\right\Vert ^{2}}{v_{z}+\featnoise\mathrm{Tr}(\boldsymbol{\Sigma})\left(\ratio+1\right)}\right)}{2N\featnoise T}.
\end{align*}

To satisfy $\tilde H\left(\frac{1}{1+\epsilon}q^\star\right)>0$, we begin with \cref{eqq:1}. In this case, the non-leading terms contribute $O\left(T^{-\frac{1}{2(1+\epsilon)}}\right)$ --- as $\exp(-q/2)$, the dominating term becomes $O\left(T^{-\frac{1}{2(1+\epsilon)}}\right)$.
\begin{align*}
&-z\cdot\left(\left(\frac{4 T\left\Vert \mathbf{w}^{\star}-\mathbf{w}_{0}\right\Vert ^{2}}{-v_{z}z}\right)^{\frac{\epsilon}{1+\epsilon}}-1\right)+O\left(T^{-\frac{1}{2(1+\epsilon)}}\right)\\&=-z\cdot\left(\exp\left(\frac{\epsilon}{1+\epsilon}\ln\left(\frac{4 T\left\Vert \mathbf{w}^{\star}-\mathbf{w}_{0}\right\Vert ^{2}}{-v_{z}z}\right)\right)-1\right)+O\left(T^{-\frac{1}{2(1+\epsilon)}}\right)\\
\explain{e^{x}-1\geq x}
&
\geq
-z\cdot\frac{\epsilon}{1+\epsilon}\ln\left(\frac{4 T\left\Vert \mathbf{w}^{\star}-\mathbf{w}_{0}\right\Vert ^{2}}{-v_{z}z}\right)+O\left(T^{-\frac{1}{2(1+\epsilon)}}\right)\\&\gtrsim-z\cdot\frac{\epsilon}{1+\epsilon}\ln\left(T\right)+O\left(T^{-\frac{1}{2(1+\epsilon)}}\right).
\end{align*}
There exists a constant $C_1$ such that for all sufficiently large $T$ we have $O\left(T^{-\frac{1}{2(1+\epsilon)}}\right)\leq C_1T^{-\frac{1}{2(1+\epsilon)}},$
and hence it suffices to require $-z\frac{\epsilon}{1+\epsilon}\ln T\gtrsim C_1T^{-\frac{1}{2(1+\epsilon)}}$. This holds for $T>T_0=\Theta\left(\frac{1}{\epsilon^2\ln^2(1/\epsilon)}\right)$, with a sufficiently large constant, since at this scale $\ln T_0\asymp\ln(1/\epsilon)$ and $T_0^{-\frac{1}{2(1+\epsilon)}}\asymp\epsilon\ln(1/\epsilon)$.
Note that $\frac{4 T\left\Vert \mathbf{w}^{\star}-\mathbf{w}_{0}\right\Vert ^{2}}{-v_{z}z}>0$.
We proceed with \cref{eqq:2}, in this case, the non-leading terms contribute $O\left(T^{-\frac{1}{2(1-\epsilon)}}\right)$ --- as $\exp(-q/2)$, the dominating term becomes $O\left(T^{-\frac{1}{2(1-\epsilon)}}\right)$).
Splitting it to cases.

\paragraph{Case (1)} When $\frac{\epsilon}{1-\epsilon}\ln\left(\frac{4 T\left\Vert \mathbf{w}^{\star}-\mathbf{w}_{0}\right\Vert ^{2}}{-v_{z}z}\right)\in (0,1)$,
\begin{align*}
&-z\cdot\left(\exp\left(-\frac{\epsilon}{1-\epsilon}\ln\left(\frac{4 T\left\Vert \mathbf{w}^{\star}-\mathbf{w}_{0}\right\Vert ^{2}}{-v_{z}z}\right)\right)-1\right)+O\left(T^{-\frac{1}{2(1-\epsilon)}}\right)\\
\explain{e^{-x}-1\leq-\frac{1}{2}x,~\forall x\in\left(0,1\right)}
&
\leq z\cdot\frac{\epsilon}{2\left(1-\epsilon\right)}\ln\left(\frac{4 T\left\Vert \mathbf{w}^{\star}-\mathbf{w}_{0}\right\Vert ^{2}}{-v_{z}z}\right)+O\left(T^{-\frac{1}{2(1-\epsilon)}}\right)\\&\lesssim z\cdot\frac{\epsilon}{1-\epsilon}\ln\left(T\right)+O\left(T^{-\frac{1}{2(1-\epsilon)}}\right).
\end{align*}

As before, there exists a constant $C_2$ such that for all sufficiently large $T$ we have $O\left(T^{-\frac{1}{2(1-\epsilon)}}\right)\leq C_2T^{-\frac{1}{2(1-\epsilon)}},$ and hence we require $-z\frac{\epsilon}{1-\epsilon}\ln T\gtrsim C_2T^{-\frac{1}{2(1-\epsilon)}}.$
Although this equation does not admit a closed-form solution for $T$, choosing $T_0$ so that $T>T_0=\Theta\left(\frac{1}{\epsilon^2\ln^2(1/\epsilon)}\right)$, with a sufficiently large implicit constant, ensures that $\ln T_0\asymp\ln(1/\epsilon)$ and $T_0^{-\frac{1}{2(1+\epsilon)}}\asymp\epsilon\ln(1/\epsilon)$.

\paragraph{Case (2)} 
When
$\frac{\epsilon}{1-\epsilon}\ln\left(\frac{4 T\left\Vert \mathbf{w}^{\star}-\mathbf{w}_{0}\right\Vert ^{2}}{-v_{z}z}\right)\in [1,\infty)$, 
\begin{align*}
&-z\left(\exp\left(-\frac{\epsilon}{1-\epsilon}\ln\left(\frac{4 T\left\Vert \mathbf{w}^{\star}-\mathbf{w}_{0}\right\Vert ^{2}}{-v_{z}z}\right)\right)-1\right)+O\left(T^{-\frac{1}{2(1-\epsilon)}}\right)\\&\leq -z\left(e^{-1}-1\right)+O\left(T^{-\frac{1}{2(1-\epsilon)}}\right).  
\end{align*}
Therefore, we require, 
\begin{align*}
-z\left(e^{-1}-1\right)\gtrsim K_{2}T^{-\frac{1}{2(1-\epsilon)}}
\end{align*}
which holds for all sufficiently large $T$ independent of $\epsilon$.

In this regime ($\gamma\asymp \frac{\ln T}{NT}$), the maximal gap between $G^{\mathrm{esgd}}_T$ and $G^{\mathrm{ridge}}_T$ is bounded by $O(\frac{\ln T}{T^{1+\beta}})$ (for some $\beta>0$, since they share the same expansion in \cref{eq:42}). Consequently, we have $G^{\mathrm{esgd}}_T-G^{\mathrm{ridge}}_T=O\left(\frac{1}{T}\right)$.

% By~\citet[Theorem 6]{karpel2026optimal} the optimal solution in the noiseless case of $\mathrm{Tr}(\boldsymbol\Sigma)=\labelnoise=0$, is $\lambda^\star\to 0$. 

% For $\X\in \mathbb R^{n\times d}$ define $B(\lambda)=(\mathbf X^\top \mathbf X+\lambda \mathbf I)^{-1}\mathbf X^\top$. By~\citet[][Theorem 4.3]{barata2012moore} we know that $B(\lambda)\to \X^{+}$ as $\lambda \to 0$ where $\X^{+}$ denotes the Moore–Penrose pseudoinverse. Furthermore, underdetermined least squares gradient descent can be shown to converge to the minimum Euclidean norm \cite{soudry2017implicit}, that is, the solution of $\X^{+}\y$  (otherwise, in the overdetermined case then $\X^{+}=(\X^\top \X)^{-1}\X$ is well defined, almost surely). Therefore, for any task $t\in [1, T]$ the optimal choice would be any $\gamma^\star\in \left(0,\frac{2}{\featnoise\left(1+\sqrt\frac{1}{\ratio}\right)^2}\right)$ and $N\to \infty$.

\end{proof}

\newpage

\section{Proofs for \cref{sec:2}: Generalized Ridge Regression}
\label{pr:sec5}
\subsection{Auxiliary Lemmas and definitions}
\begin{definition}[\citet{zhao2025highdimensional}, Definition 7]
\label{def:1}
Let $\mathbf A_n, \mathbf B_n \in \mathbb{R}^{n \times n}$ be sequences of random or deterministic symmetric matrices.  
We say that $\mathbf A_n$ and $\mathbf B_n$ are equivalent if, for any sequence of deterministic matrices
$\mathbf C_n \in \mathbb{R}^{n \times n}$ such that
$
\limsup_{n \to \infty} \Vert\mathbf C_n\Vert_{\mathrm{op}} < \infty,
$
and for any sequence of deterministic vectors $\mathbf v_n \in \mathbb{R}^n$ such that
$
\limsup_{n \to \infty} \Vert\mathbf v_n\Vert < \infty,
$
we have, almost surely,
$$
\frac{1}{n}\mathrm{Tr}\left[\mathbf C_n (\mathbf A_n - \mathbf B_n)\right] \to0,
\qquad
\mathbf v_n^\top (\mathbf A_n - \mathbf B_n) \mathbf v_n \to 0.
$$

Here, the operator norm is defined by
$
\Vert\mathbf A\Vert_{\mathrm{op}} := \sqrt{\lambda_{\max}(\mathbf A^\top \mathbf A)}
$
for any real matrix $\mathbf A \in \mathbb{R}^{p \times q}$.

\end{definition}
We write $\mathbf A_n \asymp \mathbf B_n$ if the sequences of matrices
$\{\mathbf A_n\}$ and $\{\mathbf B_n\}$ are equivalent.
For a random matrix sequence $\{\mathbf A_n\}$, if there exists a sequence of
deterministic matrices $\{\bar{\mathbf A}_n\}$ such that $\mathbf A_n \asymp \bar{\mathbf A}_n ,$
then $\bar{\mathbf A}_n$ is referred to as a \emph{deterministic equivalent} of
$\mathbf A_n$.
\begin{definition}[\citet{liao2023rmtml}, Notation 2]
\label{def:Z}
For $\mathcal A\subset \mathbb{C}, z\in \mathcal A$ and $m\in \mathbb{C}$, we denote $\mathcal{Z}(\mathcal A)$ the set of scalar pairs
\begin{align*}
\mathcal{Z}(A)
&=
\Bigl\{
(z,m)\in\mathcal A\times\mathbb{C},
\ \text{such that }
\Im[z]\cdot\Im[m] > 0 \ \text{ if } \Im[z]\neq 0
\\&
\text{or } (m>0 \text{ if } z<\inf\mathcal A \cap \mathbb{R})
\ \text{or } (m<0 \text{ if } z>\sup\mathcal A \cap \mathbb{R})
\Bigr\}.
\end{align*}

\end{definition}
\begin{proposition}[\citet{zhao2025highdimensional}, Proposition~8]
\label{prop:1}
\begin{enumerate}[label=\arabic*., itemsep=1ex]
    \item If $\mathbf A_n \asymp \mathbf B_n$ and $\mathbf C_n \asymp \mathbf D_n$, then
    $$
    \mathbf A_n + \mathbf C_n \asymp \mathbf B_n +\mathbf D_n.
    $$

    \item If $\mathbf A_n \asymp \mathbf B_n$ and $\{\mathbf E_n\}$ is a sequence of deterministic matrices such that
    $
    \Vert\mathbf E_n\Vert_{\mathrm{op}} < \infty,
    $
    then
    $$
    \mathbf E_n \mathbf A_n \mathbf E_n^\top \asymp \mathbf E_n \mathbf B_n \mathbf E_n^\top .
    $$

    \item If $\mathbf A_n \asymp \mathbf B_n$, then
    $$
    \frac{1}{n}\mathrm{Tr}[\mathbf A_n - \mathbf B_n] \xrightarrow{\text{a.s.}} 0.
    $$
\end{enumerate}

\end{proposition}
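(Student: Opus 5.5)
All three items follow directly from \cref{def:1}, because both defining functionals are linear in the matrix argument and stable under bounded deterministic changes of the test matrix and test vector. No random-matrix input is needed. Throughout, I fix an arbitrary deterministic sequence $\mathbf C_n$ with $\limsup_n\Vert\mathbf C_n\Vert_{\mathrm{op}}<\infty$ and an arbitrary deterministic sequence $\mathbf v_n$ with $\limsup_n\Vert\mathbf v_n\Vert<\infty$. I then check the two almost-sure limits required by \cref{def:1}.

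For item 1, I write $(\mathbf A_n+\mathbf C_n)-(\mathbf B_n+\mathbf D_n)=(\mathbf A_n-\mathbf B_n)+(\mathbf C_n-\mathbf D_n)$. By linearity of the trace, $\frac1n\mathrm{Tr}[\mathbf C_n(\cdot)]$ splits into two terms, each tending to $0$ almost surely by hypothesis. The quadratic form $\mathbf v_n^\top(\cdot)\mathbf v_n$ splits the same way. Intersecting the two probability-one events, the sum of the limits is $0$ almost surely.

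For item 2, I use cyclicity of the trace to move the conjugation onto the test objects:
$$\frac1n\mathrm{Tr}\big[\mathbf C_n\mathbf E_n(\mathbf A_n-\mathbf B_n)\mathbf E_n^\top\big]=\frac1n\mathrm{Tr}\big[(\mathbf E_n^\top\mathbf C_n\mathbf E_n)(\mathbf A_n-\mathbf B_n)\big].$$
Similarly, $\mathbf v_n^\top\mathbf E_n(\mathbf A_n-\mathbf B_n)\mathbf E_n^\top\mathbf v_n=\mathbf u_n^\top(\mathbf A_n-\mathbf B_n)\mathbf u_n$ with $\mathbf u_n=\mathbf E_n^\top\mathbf v_n$. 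The new test matrix $\tilde{\mathbf C}_n=\mathbf E_n^\top\mathbf C_n\mathbf E_n$ is deterministic, and submultiplicativity gives $\Vert\tilde{\mathbf C}_n\Vert_{\mathrm{op}}\le\Vert\mathbf E_n\Vert_{\mathrm{op}}^2\Vert\mathbf C_n\Vert_{\mathrm{op}}$. The new test vector is deterministic with $\Vert\mathbf u_n\Vert\le\Vert\mathbf E_n\Vert_{\mathrm{op}}\Vert\mathbf v_n\Vert$. Both have bounded $\limsup$, so they are admissible in \cref{def:1}, and the hypothesis $\mathbf A_n\asymp\mathbf B_n$ yields both limits. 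I will also note that $\mathbf E_n\mathbf A_n\mathbf E_n^\top$ stays symmetric, so the conclusion sits inside the class of \cref{def:1}.

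For item 3, I choose $\mathbf C_n=\mathbf I_n$ in \cref{def:1}. It has operator norm $1$, so $\frac1n\mathrm{Tr}[\mathbf A_n-\mathbf B_n]\to0$ almost surely immediately.

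The only point needing care is in item 2. There, ``$\Vert\mathbf E_n\Vert_{\mathrm{op}}<\infty$'' must be read as a uniform bound $\limsup_n\Vert\mathbf E_n\Vert_{\mathrm{op}}<\infty$, and $\mathbf E_n$ must be deterministic; otherwise $\tilde{\mathbf C}_n$ and $\mathbf u_n$ would not be admissible test objects. Should $\mathbf E_n$ be rectangular, of size $m\times n$, the trace normalization changes from $\frac1n$ to $\frac1m$. In that case I would additionally assume $m/n$ is bounded, so that the same argument goes through up to that constant factor.
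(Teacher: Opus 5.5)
Your argument is correct and complete for the statement as written. The paper gives no proof of this proposition; it simply cites it from \citet{zhao2025highdimensional}. So the comparison is between that citation and your direct verification from \cref{def:1}:
\begin{itemize}
\item item~1 by linearity of both test functionals;
\item item~2 by cyclicity of the trace, which moves the conjugation onto the test objects $\mathbf E_n^\top\mathbf C_n\mathbf E_n$ and $\mathbf E_n^\top\mathbf v_n$;
\item item~3 by the admissible choice $\mathbf C_n=\mathbf I_n$.
\end{itemize}
Your route shows that no random-matrix input is involved. It also isolates which hypotheses do the work: $\mathbf E_n$ must be deterministic and uniformly bounded, so that the transformed test objects are again admissible.

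That point matters because the paper later applies these rules beyond their literal scope. In the noise term of the proof of \cref{thm:4}, a random factor built from $\mathbf X_{k+1},\dots,\mathbf X_T$ acts as the test matrix for the $\mathbf X_k$-dependent factor. Such a step is justified only after conditioning on the independent factor, which is exactly the issue your remark on determinism identifies.

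There are two small slips.
\begin{itemize}
\item In item~1 you use $\mathbf C_n$ both for the fixed test matrix and for the matrix in the hypothesis $\mathbf C_n\asymp\mathbf D_n$. Rename the test matrix.
\item In your rectangular remark the condition is reversed. For $\mathbf E_n\in\mathbb R^{m\times n}$ one has $\frac{1}{m}\mathrm{Tr}[\mathbf C_m\mathbf E_n(\mathbf A_n-\mathbf B_n)\mathbf E_n^\top]=\frac{n}{m}\cdot\frac{1}{n}\mathrm{Tr}[\mathbf E_n^\top\mathbf C_m\mathbf E_n(\mathbf A_n-\mathbf B_n)]$. What you need is $n/m$ bounded, i.e.\ $m/n$ bounded away from zero. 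A large $m/n$ is harmless.
\end{itemize}
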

\begin{theorem}[Sample covariance matrix, 
\citet{silverstein1995empirical,liao2023rmtml}, Theorem 2.6]
\label{thm:romain}
Let $\mathbf X =\mathbf C^{1/2}\mathbf Z \in \mathbb{R}^{d \times n}$ with symmetric nonnegative definite
$\mathbf C \in \mathbb{R}^{d \times d}$ of bounded operator norm
(i.e., $\limsup_{d\to\infty}\Vert\mathbf C\Vert_{\mathrm{op}}<\infty$),
and $\mathbf Z \in \mathbb{R}^{d\times n}$ having independent zero-mean and unit-variance
entries with $4+\epsilon$ bounded moment for some $\epsilon>0$.
Then, as $n,d\to\infty$ with $d/n\to c\in(0,\infty)$, letting
\[
\mathbf{Q}(z)=\left(\frac{1}{n}\mathbf{X}\mathbf{X}^{\top}-z\mathbf{I}_{d}\right)^{-1},\qquad\widetilde{\mathbf{Q}}(z)=\left(\frac{1}{n}\mathbf{X}^{\top}\mathbf{X}-z\mathbf{I}_{n}\right)^{-1},
\]
we have
\[
\mathbf{Q}(z)\asymp\bar{\mathbf{Q}}(z)=-\frac{1}{z}\bigl(\mathbf{I}_{d}+\widetilde{m}_{d}(z)\mathbf{C}\bigr)^{-1},
\]
\[
\widetilde{\mathbf{Q}}(z)\asymp\overline{\widetilde{\mathbf{Q}}}(z)=\widetilde{m}_{d}(z)\mathbf{I}_{n},
\]
where $(z,\widetilde m_p(z))$ is the unique solution in
$\mathcal{Z}\left(\mathbb{C} \setminus \mathbb{R}_+\right)$ (see \cref{def:Z}) of
\begin{equation}
\widetilde{m}_{d}(z)=\left(-z+\frac{1}{n}\mathrm{Tr}\mathbf{C}\bigl(\mathbf{I}_{d}+\widetilde{m}_{d}(z)\mathbf{C}\bigr)^{-1}\right)^{-1}.
\end{equation}

In particular, if the empirical spectral measure of $\mathbf C$ converges,
i.e., $\mu_{\mathbf C} \Rightarrow \nu$ as $d\to \infty$, then
\[
\mu_{\frac{1}{n}\mathbf X\mathbf X^\top} \xrightarrow{\text{a.s.}} \mu,
\qquad
\mu_{\frac{1}{n}\mathbf X^\top \mathbf X} \xrightarrow{\text{a.s.}} \widetilde\mu
\quad \text{as } d,n \to \infty,
\]
where $\mu$ and $\widetilde\mu$ are the unique measures having Stieltjes transforms
$m(z)$ and $\widetilde m(z)$, respectively, with
\[
m(z)=\frac{1}{c}\widetilde m(z) + \frac{1-c}{c z},\qquad\widetilde m(z)=\left(-z + c \int \frac{t\nu(\mathrm{d}t)}{1+\widetilde m(z) t}\right)^{-1}.
\]
\end{theorem}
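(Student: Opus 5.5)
The plan is the standard leave-one-out resolvent argument, carried out first for $z\in\mathbb C\setminus\mathbb R_+$ and then transferred to spectral measures. Write $\frac1n\mathbf X\mathbf X^\top=\frac1n\sum_{i=1}^n \mathbf x_i\mathbf x_i^\top$ with columns $\mathbf x_i=\mathbf C^{1/2}\mathbf z_i$. Let $\mathbf Q_{-i}(z)$ be the resolvent with column $i$ removed.

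\emph{Step 1 (rank-one update).} The Sherman--Morrison formula gives
\[
\mathbf Q\mathbf x_i=\frac{\mathbf Q_{-i}\mathbf x_i}{1+\frac1n\mathbf x_i^\top\mathbf Q_{-i}\mathbf x_i}.
\]
Combining this with the resolvent identity $\mathbf Q\bigl(\frac1n\mathbf X\mathbf X^\top-z\mathbf I\bigr)=\mathbf I$ yields
\[
-z\,\mathbf Q=\mathbf I-\frac1n\sum_i\frac{\mathbf Q_{-i}\mathbf x_i\mathbf x_i^\top\mathbf Q}{1+\frac1n\mathbf x_i^\top\mathbf Q_{-i}\mathbf x_i}.
\]

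\emph{Step 2 (concentration of quadratic forms).} Since $\mathbf z_i$ is independent of $\mathbf Q_{-i}$ and has i.i.d.\ entries with a bounded $4+\epsilon$ moment, the trace lemma gives
\[
\tfrac1n\mathbf x_i^\top\mathbf Q_{-i}\mathbf x_i-\tfrac1n\mathrm{Tr}\,\mathbf C\mathbf Q_{-i}\to0 \quad\text{almost surely}.
\]
The convergence is uniform in $i$ after a Borel--Cantelli argument using higher-moment bounds, truncating the entries at $n^{1/2-\delta}$ first. In addition, $\frac1n\mathrm{Tr}\,\mathbf C(\mathbf Q_{-i}-\mathbf Q)=O(1/n)$ because the perturbation has rank one and $\|\mathbf Q\|\le 1/\mathrm{dist}(z,\mathbb R_+)$.

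\emph{Step 3 (fixed-point equation).} Replacing each denominator by the common quantity $1+\delta(z)$, with $\delta=\frac1n\mathrm{Tr}\,\mathbf C\mathbf Q$, gives
\[
\mathbf Q\asymp-\frac1z\Bigl(\mathbf I+\frac{\mathbf C}{1+\delta}\Bigr)^{-1}.
\]
The same computation on the co-resolvent, via $\mathrm{Tr}\,\widetilde{\mathbf Q}=\mathrm{Tr}\,\mathbf Q+\frac{d-n}{z}$ and the identity $\widetilde{\mathbf Q}_{ii}=-\bigl(z(1+\frac1n\mathbf x_i^\top\mathbf Q_{-i}\mathbf x_i)\bigr)^{-1}$, identifies
\[
\widetilde m_d(z)=-\frac{1}{z(1+\delta)}.
\]
This turns both the displayed formula for $\bar{\mathbf Q}$ and the diagonal equivalent $\widetilde m_d\mathbf I_n$ into the stated fixed-point equation. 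The estimate in the sense of \cref{def:1} (normalized traces against bounded $\mathbf C_n$, and quadratic forms against bounded $\mathbf v_n$) is then obtained by bounding
\[
\mathbb E\,\mathbf v^\top(\mathbf Q-\bar{\mathbf Q})\mathbf v
\]
using $\mathbf Q-\bar{\mathbf Q}=\mathbf Q(\bar{\mathbf Q}^{-1}-\mathbf Q^{-1})\bar{\mathbf Q}$ and the previous steps. Almost sure convergence follows from a martingale-difference (Burkholder) variance bound of order $O(n^{-2})$ on the fourth moment.

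\emph{Step 4 (uniqueness and spectral measures).} Uniqueness of the solution in $\mathcal Z(\mathbb C\setminus\mathbb R_+)$ follows from a contraction argument: for two solutions, the difference satisfies $|\widetilde m_1-\widetilde m_2|\le\kappa|\widetilde m_1-\widetilde m_2|$ with $\kappa<1$, where $\kappa<1$ is obtained from the imaginary parts via Cauchy--Schwarz. The result is extended to real $z<0$ by analytic continuation. Finally, if $\mu_{\mathbf C}\Rightarrow\nu$, the normalized traces $\frac1d\mathrm{Tr}\,\mathbf Q(z)$ converge almost surely for every $z$ in a countable dense set. A Vitali/normal-families argument, the same device used in \cref{lem:fix}, upgrades this to locally uniform convergence, and the Stieltjes continuity theorem gives weak convergence to $\mu$ and $\widetilde\mu$. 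The relation between $m$ and $\widetilde m$ is just the trace identity between the $d\times d$ and $n\times n$ resolvents.

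The main obstacle is Step 2 together with the almost-sure upgrade in Step 3 under only $4+\epsilon$ moments. Uniform control of $n$ quadratic forms requires truncation and recentering of the entries without changing the limiting spectrum, for instance via the rank inequality and a Bai--Silverstein-type truncation lemma. I would do this truncation first and then run the argument above on bounded entries.
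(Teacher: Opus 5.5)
The paper gives no proof of this statement. It simply imports it from Silverstein--Bai and Couillet--Liao (Theorem 2.6). Your leave-one-out outline is the standard argument behind those references, so the route is the right one: Sherman--Morrison, the trace lemma, the identity $\mathbf Q-\bar{\mathbf Q}=\mathbf Q(\bar{\mathbf Q}^{-1}-\mathbf Q^{-1})\bar{\mathbf Q}$, a Burkholder bound for the almost-sure upgrade, and uniqueness via the imaginary-part Cauchy--Schwarz estimate.

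Two displayed formulas are wrong, however, and the second breaks the consistency you claim at the end of Step 3.
\begin{itemize}
\item \textbf{Step 1.} The identity should read $-z\mathbf Q=\mathbf I-\frac1n\sum_i \mathbf Q_{-i}\mathbf x_i\mathbf x_i^\top/(1+\frac1n\mathbf x_i^\top\mathbf Q_{-i}\mathbf x_i)$, with no trailing $\mathbf Q$. It comes from $-z\mathbf Q=\mathbf I-\frac1n\sum_i\mathbf Q\mathbf x_i\mathbf x_i^\top$; your version has one resolvent factor too many.
\item \textbf{Step 3.} Taking expectations with $\mathbb E[\mathbf Q_{-i}\mathbf x_i\mathbf x_i^\top]=\mathbb E[\mathbf Q_{-i}]\mathbf C$ gives $\mathbf Q\asymp\bigl(\frac{\mathbf C}{1+\delta}-z\mathbf I\bigr)^{-1}=-\frac1z\bigl(\mathbf I-\frac{\mathbf C}{z(1+\delta)}\bigr)^{-1}$. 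It does not give $-\frac1z\bigl(\mathbf I+\frac{\mathbf C}{1+\delta}\bigr)^{-1}$.
\end{itemize}
Only the corrected form coincides with $-\frac1z(\mathbf I+\widetilde m_d\mathbf C)^{-1}$ under your (correct) identification $\widetilde m_d=-1/(z(1+\delta))$. Substituting it into $\delta=\frac1n\mathrm{Tr}\,\mathbf C\bar{\mathbf Q}$ then yields exactly $\widetilde m_d^{-1}=-z+\frac1n\mathrm{Tr}\,\mathbf C(\mathbf I+\widetilde m_d\mathbf C)^{-1}$. Your display as written would give a different, incorrect fixed-point equation.

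One caution on the step you flag as the main obstacle. The rank inequality controls normalized traces and spectral measures. It does not control the isotropic quadratic forms $\mathbf v^\top\mathbf Q\mathbf v$ that \cref{def:1} also requires, because changing entries in a single column can move such a form by $O(1)$. For that part you need a comparison that is not rank-based, for example showing that the truncated and original matrices coincide for all large $n$ almost surely, or a direct moment comparison.

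Also, the paper only uses the result at real $z<0$ (namely $z=-1$ and $z=-\lambda$). There your leave-one-out argument works directly, since the denominators are at least $1$ and $\Vert\mathbf Q\Vert\le 1/|z|$, so the analytic-continuation step is not needed for the paper's purposes.
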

\begin{theorem}
\label{thm:zhao_resolvent}
Assume $\mathbf{X}=\mathbf{Z}\mathbf{\Sigma}^{\frac{1}{2}}$, where $\mathbf{Z}\in\mathbb{R}^{n\times d}$ has i.i.d. entries with zero mean, unit variance and finite $4+\epsilon$ moment. Let $\lambda>0$ be a fixed real number, $\mathbf Q\left(-\lambda\right)=\left(\frac{1}{n}\mathbf{X}^{\top}\mathbf{X}+\lambda\mathbf{I}_{d}\right)^{-1}$ be the resolvent matrix, $\mathbf{A}\in\mathbb{R}^{d\times d}$ is a symmetric deterministic matrix or symmetric random matrix independent of $\mathbf Q\left(-\lambda\right)$, and having bounded operator norm. Suppose $\mathbf{\Sigma}\in\mathbb{R}^{d\times d}$ is a deterministic positive semi–definite matrix with bounded operator norm, then as $n,d\to\infty$  with $n/d\to\alpha\in\left(0,\infty\right)$ we have
$$
\mathbf{Q}\left(-\lambda\right)\mathbf{A}\mathbf{Q}\left(-\lambda\right)\asymp\bar{\mathbf{Q}}\mathbf{A}\bar{\mathbf{Q}}+\frac{1}{n}\mathrm{Tr}\left[\mathbf{\Sigma}\bar{\mathbf{Q}}\mathbf{A}\bar{\mathbf{Q}}\right]\left[\left[1+\frac{1}{n}\mathrm{Tr}\left(\mathbf{\Sigma}\bar{\mathbf{Q}}\right)\right]^{2}-\frac{1}{n}\mathrm{Tr}\left(\mathbf{\Sigma}\bar{\mathbf{Q}}\right)^{2}\right]^{-1}\bar{\mathbf{Q}}\mathbf{\Sigma}\bar{\mathbf{Q}}.$$
\end{theorem}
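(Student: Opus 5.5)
The plan is to obtain the second-order equivalent by differentiating a first-order equivalent along a matrix perturbation. This reduces the problem to the single-resolvent law (\cref{thm:romain}) plus an analyticity argument of the type used in \cref{lem:fix}. If $\mathbf A$ is random and independent of $\mathbf X$, I first condition on $\mathbf A$, so it can be treated as deterministic with $\Vert\mathbf A\Vert_{\mathrm{op}}\le a<\infty$.

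\textbf{Step 1: perturbed resolvent.} For a complex parameter $t$ with $|t|<\lambda/(2a)$, define
$$\mathbf Q_t=\left(\tfrac1n\mathbf X^\top\mathbf X+\lambda\mathbf I-t\mathbf A\right)^{-1},$$
so that $\frac{\dd}{\dd t}\mathbf Q_t\big|_{t=0}=\mathbf Q(-\lambda)\mathbf A\mathbf Q(-\lambda)$. The matrix $\lambda\mathbf I-t\mathbf A$ is a deterministic, well-conditioned shift. The standard leave-one-out/Sherman--Morrison argument behind \cref{thm:romain} then goes through verbatim, with $-z\mathbf I$ replaced by this shift. It gives
$$\mathbf Q_t\asymp\bar{\mathbf Q}_t:=\left(\frac{\mathbf\Sigma}{1+\delta(t)}+\lambda\mathbf I-t\mathbf A\right)^{-1},\qquad \delta(t)=\frac1n\mathrm{Tr}\left(\mathbf\Sigma\bar{\mathbf Q}_t\right).$$
The fixed point $\delta(t)$ is unique and analytic in $t$ near $0$ by the implicit function theorem. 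This uses that the contraction constant at $t=0$, namely $\frac1n\mathrm{Tr}(\mathbf\Sigma\bar{\mathbf Q})^2/(1+\delta)^2$, is strictly less than $1$. This strict inequality is exactly the positivity of the bracket $[(1+\delta)^2-\frac1n\mathrm{Tr}(\mathbf\Sigma\bar{\mathbf Q})^2]$ in the statement. At $t=0$ this recovers the paper's convention $\bar{\mathbf Q}=\bar{\mathbf Q}_0$.

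\textbf{Step 2: differentiate the deterministic equivalent.} Write $\delta=\delta(0)$ and $\delta'=\delta'(0)$. Differentiating $\bar{\mathbf Q}_t$ at $t=0$ gives
$$\bar{\mathbf Q}'=\bar{\mathbf Q}\mathbf A\bar{\mathbf Q}+\frac{\delta'}{(1+\delta)^2}\bar{\mathbf Q}\mathbf\Sigma\bar{\mathbf Q}.$$
Substituting into $\delta'=\frac1n\mathrm{Tr}(\mathbf\Sigma\bar{\mathbf Q}')$ yields a scalar linear equation for $\delta'$, whose solution is
$$\frac{\delta'}{(1+\delta)^2}=\frac{\frac1n\mathrm{Tr}\left[\mathbf\Sigma\bar{\mathbf Q}\mathbf A\bar{\mathbf Q}\right]}{(1+\delta)^2-\frac1n\mathrm{Tr}(\mathbf\Sigma\bar{\mathbf Q})^2}.$$
This is precisely the claimed formula, once $1+\delta$ is identified with $1+\frac1n\mathrm{Tr}(\mathbf\Sigma\bar{\mathbf Q})$ in the normalization of the statement.

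\textbf{Step 3: interchange limit and derivative (main obstacle).} This step carries the real content. Fix a deterministic $\mathbf C$ with bounded operator norm and a bounded vector $\mathbf v$. Consider the scalar functions
$$f_n(t)=\tfrac1n\mathrm{Tr}\left[\mathbf C(\mathbf Q_t-\bar{\mathbf Q}_t)\right],\qquad g_n(t)=\mathbf v^\top(\mathbf Q_t-\bar{\mathbf Q}_t)\mathbf v.$$
Both are analytic on the disc $|t|<\lambda/(2a)$. They are uniformly bounded there, because $\Vert\mathbf Q_t\Vert,\Vert\bar{\mathbf Q}_t\Vert\le 2/\lambda$.

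The argument then runs as in the proof of \cref{lem:fix}:
\begin{enumerate}
\item By Step 1, $f_n(t)\to0$ and $g_n(t)\to0$ almost surely for each fixed $t$.
\item Taking a countable dense set of $t$, the convergence holds simultaneously on that set on a single event of probability one.
\item Vitali's theorem upgrades this to uniform convergence on compact sub-discs.
\item Cauchy's integral formula then gives $f_n'(0)\to0$ and $g_n'(0)\to0$.
\end{enumerate}
These two limits are exactly the equivalence $\mathbf Q\mathbf A\mathbf Q\asymp\bar{\mathbf Q}'$ in the sense of \cref{def:1}.

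The delicate points are two. First, the first-order law in Step 1 must hold for complex non-Hermitian shifts $\lambda\mathbf I-t\mathbf A$. I would verify this by rerunning the rank-one perturbation estimates, which only need $\Vert\mathbf Q_t\Vert$ bounded and the concentration of quadratic forms $\mathbf x_i^\top\mathbf M\mathbf x_i/n$. The $4+\epsilon$ moment condition suffices for that concentration. Second, the denominator must stay bounded away from zero uniformly in $n$. This follows from $\frac1n\mathrm{Tr}(\mathbf\Sigma\bar{\mathbf Q})^2\le\Vert\mathbf\Sigma\bar{\mathbf Q}\Vert\cdot\frac1n\mathrm{Tr}(\mathbf\Sigma\bar{\mathbf Q})$ together with $\Vert\mathbf\Sigma\bar{\mathbf Q}\Vert<1+\delta$, which holds because $\lambda>0$.
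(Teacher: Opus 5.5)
Your proposal is correct and follows essentially the same route as the paper: perturb the resolvent by a multiple of $\mathbf A$, take the first-order deterministic equivalent of the perturbed resolvent, differentiate at zero, and solve the scalar linear equation obtained by implicitly differentiating the fixed point. Your $\delta'/(1+\delta)^2$ coincides exactly with the paper's $\partial_\rho\tilde m_p$, and the differences are only in packaging. The paper absorbs the perturbation into the covariance via the congruence $(\mathbf I+\rho\mathbf A)^{-1/2}$ so that \cref{thm:romain} applies directly, then cites a differentiation lemma for deterministic equivalents; you keep the shift $\lambda\mathbf I-t\mathbf A$, prove the limit--derivative interchange yourself via Vitali and Cauchy, and make explicit the complex-neighbourhood analyticity and the uniform lower bound on the denominator, which the paper only asserts.
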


\begin{proof}
If $\mathbf A$ is random, we condition throughout on $\mathbf A$. Since
$\mathbf A$ is independent of $\mathbf Q(-\lambda)$ and has bounded operator
norm, it may then be treated as deterministic. For $|\rho|$ sufficiently small,
$\mathbf I_p+\rho\mathbf A$ is positive definite.

The key observation (as also identified in \citet{patil2022mitigating}, Lemma S.6.10) is that
\begin{align}
\lambda^{2}\mathbf{Q}(-\lambda)\mathbf{A}\mathbf{Q}(-\lambda)&=-\left.\frac{\partial}{\partial\rho}\left[\lambda\left(\frac{1}{n}\mathbf{X}^{\top}\mathbf{X}+\lambda\mathbf{I}_{d}+\lambda\rho\mathbf{A}\right)^{-1}\right]\right|_{\rho=0}.
\label{eq:47}
\end{align}
Thus, it suffices to obtain a deterministic equivalent for the
parameterized resolvent in \cref{eq:47}, differentiate it
with respect to $\rho$, and set $\rho=0$.

We would like to make use of \cref{thm:romain}, note that,
\begin{equation}
\begin{aligned}
&\lambda\left(\frac{1}{n}\mathbf{X}^{\top}\mathbf{X}+\lambda\mathbf{I}_{d}+\lambda\rho\mathbf{A}\right)^{-1}=\lambda\left(\frac{1}{n}\mathbf{X}^{\top}\mathbf{X}+\lambda\left(\mathbf{I}_{d}+\rho\mathbf{A}\right)\right)^{-1}\\&=(\mathbf{I}_{d}+\rho\mathbf{A})^{-1/2}\lambda\left[\left(\mathbf{I}_{d}+\rho\mathbf{A}\right)^{-1/2}\frac{1}{n}\mathbf{X}^{\top}\mathbf{X}\left(\mathbf{I}_{d}+\rho\mathbf{A}\right)^{-1/2}+\lambda\mathbf{I}_{d}\right]^{-1}\left(\mathbf{I}_{d}+\rho\mathbf{A}\right)^{-1/2}.
\end{aligned}
\label{eq:48}
\end{equation}

The population covariance corresponding to the resolvent in the middle of
\cref{eq:48} is
\[
(\mathbf I_p+\rho\mathbf A)^{-1/2}\mathbf\Sigma(\mathbf I_p+\rho\mathbf A)^{-1/2}.
\]
Since its operator norm remains uniformly bounded for $\rho$ in a
sufficiently small neighborhood of zero, and since
$\mathbb E|Z_{ij}|^{4+\epsilon}<\infty$, we may apply
\cref{thm:romain}. It follows that
\begin{equation}
\begin{aligned}
&\lambda\left[(\mathbf{I}_{d}+\rho\mathbf{A})^{-1/2}\frac{1}{n}\mathbf{X}^{\top}\mathbf{X}(\mathbf{I}_{d}+\rho\mathbf{A})^{-1/2}+\lambda\mathbf{I}_{d}\right]^{-1}\\&\hspace{12em}\asymp\left[\widetilde{m}_{p}(-\lambda,\rho)(\mathbf{I}_{d}+\rho\mathbf{A})^{-1/2}\mathbf{\Sigma}(\mathbf{I}_{d}+\rho\mathbf{A})^{-1/2}+\mathbf{I}_{d}\right]^{-1},
\end{aligned}
\label{eq:49}
\end{equation}

where $\widetilde m_p(-\lambda,\rho)$ is the unique positive solution of the fixed-point equation from \cref{thm:romain},
\begin{align}
\frac{1}{\widetilde{m}_{p}(-\lambda,\rho)}=\lambda+\frac{1}{n}\mathrm{Tr}\left[\mathbf{\Sigma}\left(\widetilde{m}_{p}(-\lambda,\rho)\mathbf{\Sigma}+\mathbf{I}_{d}+\rho\mathbf{A}\right)^{-1}\right].
\label{eq:52}
\end{align}

Combining \cref{eq:48,eq:49} and using the
product rule for deterministic equivalents (\cref{prop:1}) gives
\begin{equation}
\begin{aligned}
&\lambda\left(\frac{1}{n}\mathbf{X}^{\top}\mathbf{X}+\lambda\mathbf{I}_{d}+\lambda\rho\mathbf{A}\right)^{-1}\\&=(\mathbf{I}_{d}+\rho\mathbf{A})^{-1/2}\lambda\left[\left(\mathbf{I}_{d}+\rho\mathbf{A}\right)^{-1/2}\frac{1}{n}\mathbf{X}^{\top}\mathbf{X}\left(\mathbf{I}_{d}+\rho\mathbf{A}\right)^{-1/2}+\lambda\mathbf{I}_{d}\right]^{-1}\left(\mathbf{I}_{d}+\rho\mathbf{A}\right)^{-1/2}.\\&\asymp(\mathbf{I}_{d}+\rho\mathbf{A})^{-1/2}\left[\widetilde{m}_{p}(-\lambda,\rho)(\mathbf{I}_{d}+\rho\mathbf{A})^{-1/2}\mathbf{\Sigma}(\mathbf{I}_{d}+\rho\mathbf{A})^{-1/2}+\mathbf{I}_{d}\right]^{-1}(\mathbf{I}_{d}+\rho\mathbf{A})^{-1/2}\\&=\left(\widetilde{m}_{p}(-\lambda,\rho)\mathbf{\Sigma}+\mathbf{I}_{d}+\rho\mathbf{A}\right)^{-1}
\end{aligned}
\label{eq:50}
\end{equation}
At $\rho=0$, 
\begin{align}
\left(\widetilde{m}_{p}(-\lambda,0)\mathbf{\Sigma}+\mathbf{I}_{d}\right)^{-1}=\lambda\bar{\mathbf{Q}}.
\label{eq:51}
\end{align}
Setting $\rho=0$ in \cref{eq:52} and using
\cref{eq:51} gives
\begin{align}
\frac{1}{\widetilde{m}_{p}(-\lambda,0)}=\lambda+\frac{\lambda}{n}\mathrm{Tr}\left(\mathbf{\Sigma}\bar{\mathbf{Q}}\right)=\lambda\left[1+\frac{1}{n}\mathrm{Tr}\left(\mathbf{\Sigma}\bar{\mathbf{Q}}\right)\right].
\label{eq:53}
\end{align}

Both sides of \cref{eq:50} are analytic in $\rho$ in a
neighborhood of zero and are uniformly bounded there, since $\lambda>0$
and $\Vert\mathbf A\Vert_{\mathrm{op}}=O(1)$. We use the differentiation technique for deterministic equivalents (see \citet{patil2022mitigating}, Lemma S.7.2 (5)). 
Let $\mathbf T\in\mathbb R^{d\times d}$ have trace norm uniformly bounded in $d$.
For $|\rho|$ sufficiently small such that
$\mathbf I_d+\rho\mathbf A\succ0$, we have
\begin{align*}
&\left|\trace\left[\mathbf{T}\lambda\left(\frac{1}{n}\mathbf{X}^{\top}\mathbf{X}+\lambda\mathbf{I}_{d}+\lambda\rho\mathbf{A}\right)^{-1}\mathbf{A}\right]\right|\le\Vert\mathbf{T}\Vert_{*}\left\Vert \lambda\left(\frac{1}{n}\mathbf{X}^{\top}\mathbf{X}+\lambda(\mathbf{I}_{d}+\rho\mathbf{A})\right)^{-1}\right\Vert _{\mathrm{op}}\Vert\mathbf{A}\Vert_{\mathrm{op}}.
\end{align*}
Since
\[
\frac{1}{n}\mathbf{X}^{\top}\mathbf{X}+\lambda(\mathbf{I}_{d}+\rho\mathbf{A})\succeq\lambda(\mathbf{I}_{d}+\rho\mathbf{A}),
\]
we obtain
\[
\left\Vert \lambda\left(\frac{1}{n}\mathbf{X}^{\top}\mathbf{X}+\lambda(\mathbf{I}_{d}+\rho\mathbf{A})\right)^{-1}\right\Vert _{\mathrm{op}}\le\Vert(\mathbf{I}_{d}+\rho\mathbf{A})^{-1}\Vert_{\mathrm{op}}.
\]
Therefore, for $\rho$ in a sufficiently small fixed neighborhood of zero,
\[
\left|\trace\left[\mathbf{T}\lambda\left(\frac{1}{n}\mathbf{X}^{\top}\mathbf{X}+\lambda\mathbf{I}_{d}+\lambda\rho\mathbf{A}\right)^{-1}\mathbf{A}\right]\right|\le M,
\]
for some constant $M<\infty$ independent of $n,d$. Thus, we can apply the differentiation rule from \citet[Lemma S.7.2 (5)]{patil2022mitigating}.
Differentiating
\cref{eq:50} at $\rho=0$ and using
\cref{eq:47} yields
\begin{equation}
\begin{aligned}
\lambda^{2}\mathbf{Q}(-\lambda)\mathbf{A}\mathbf{Q}(-\lambda)\asymp&\left(\widetilde{m}_{p}(-\lambda,0)\mathbf{\Sigma}+\mathbf{I}_{d}\right)^{-1}\left[\left.\frac{\partial}{\partial\rho}\widetilde{m}_{p}(-\lambda,\rho)\right|_{\rho=0}\mathbf{\Sigma}+\mathbf{A}\right]\left(\widetilde{m}_{p}(-\lambda,0)\mathbf{\Sigma}+\mathbf{I}_{d}\right)^{-1}.\\&
\end{aligned}
\label{eq:39}
\end{equation}

It remains to evaluate
$\left.\frac{\partial}{\partial\rho}
\widetilde m_p(-\lambda,\rho)\right|_{\rho=0}$.
Differentiating \cref{eq:52} with respect to $\rho$
and then setting $\rho=0$ gives
\begin{equation}
\begin{aligned}
&-\left.\frac{\partial}{\partial\rho}\widetilde{m}_{p}(-\lambda,\rho)\right|_{\rho=0}\frac{1}{\widetilde{m}_{p}(-\lambda,0)^{2}}\\&\quad=-\frac{1}{n}\mathrm{Tr}\Bigg[\mathbf{\Sigma}\left(\widetilde{m}_{p}(-\lambda,0)\mathbf{\Sigma}+\mathbf{I}_{d}\right)^{-1}\left(\left.\frac{\partial}{\partial\rho}\widetilde{m}_{p}(-\lambda,\rho)\right|_{\rho=0}\mathbf{\Sigma}+\mathbf{A}\right)\left(\widetilde{m}_{p}(-\lambda,0)\mathbf{\Sigma}+\mathbf{I}_{d}\right)^{-1}\Bigg].
\end{aligned}
\label{eq:54}
\end{equation}

Since $\mathbf\Sigma$ commutes with
$\left(\widetilde m_p(-\lambda,0)\mathbf\Sigma+\mathbf I_p\right)^{-1}$,
solving \cref{eq:54} gives
\begin{align}
\left.\frac{\partial}{\partial\rho}\widetilde{m}_{p}(-\lambda,\rho)\right|_{\rho=0}=\frac{\frac{1}{n}\mathrm{Tr}\left[\mathbf{\Sigma}\left(\widetilde{m}_{p}(-\lambda,0)\mathbf{\Sigma}+\mathbf{I}_{d}\right)^{-1}\mathbf{A}\left(\widetilde{m}_{p}(-\lambda,0)\mathbf{\Sigma}+\mathbf{I}_{d}\right)^{-1}\right]}{\frac{1}{\widetilde{m}_{p}(-\lambda,0)^{2}}-\frac{1}{n}\mathrm{Tr}\left[\mathbf{\Sigma}^{2}\left(\widetilde{m}_{p}(-\lambda,0)\mathbf{\Sigma}+\mathbf{I}_{d}\right)^{-2}\right]}.
\label{eq:55}
\end{align}

Using \cref{eq:51,eq:53}, the numerator of
\cref{eq:55} becomes
$\frac{\lambda^{2}}{n}\mathrm{Tr}\left[\mathbf{\Sigma}\bar{\mathbf{Q}}\mathbf{A}\bar{\mathbf{Q}}\right],$ while its denominator becomes $\lambda^{2}\left[\left(1+\frac{1}{n}\mathrm{Tr}\left(\mathbf{\Sigma}\bar{\mathbf{Q}}\right)\right)^{2}-\frac{1}{n}\mathrm{Tr}\left[\left(\mathbf{\Sigma}\bar{\mathbf{Q}}\right)^{2}\right]\right].$
Therefore,
\begin{align}
\left.\frac{\partial}{\partial\rho}\widetilde{m}_{p}(-\lambda,\rho)\right|_{\rho=0}=\frac{\frac{1}{n}\mathrm{Tr}\left[\mathbf{\Sigma}\bar{\mathbf{Q}}\mathbf{A}\bar{\mathbf{Q}}\right]}{\left(1+\frac{1}{n}\mathrm{Tr}\left(\mathbf{\Sigma}\bar{\mathbf{Q}}\right)\right)^{2}-\frac{1}{n}\mathrm{Tr}\left[\left(\mathbf{\Sigma}\bar{\mathbf{Q}}\right)^{2}\right]}.
\label{eq:55-final}
\end{align}

Finally, substituting
\cref{eq:51,eq:55-final} into
\cref{eq:39} and dividing by $\lambda^2$ yields
\begin{align*}
\mathbf{Q}(-\lambda)\mathbf{A}\mathbf{Q}(-\lambda)\asymp\bar{\mathbf{Q}}\mathbf{A}\bar{\mathbf{Q}}+\frac{{ \frac{1}{n}\mathrm{Tr}\left[\mathbf{\Sigma}\bar{\mathbf{Q}}\mathbf{A}\bar{\mathbf{Q}}\right]}}{{ \left(1+\frac{1}{n}\mathrm{Tr}\left(\mathbf{\Sigma}\bar{\mathbf{Q}}\right)\right)^{2}-\frac{1}{n}\mathrm{Tr}\left[\left(\mathbf{\Sigma}\bar{\mathbf{Q}}\right)^{2}\right]}}\bar{\mathbf{Q}}\mathbf{\Sigma}\bar{\mathbf{Q}},
\end{align*}
as claimed.
\end{proof}
The following Lemma provides the closed-from solution to the iterates produced by the generalized ridge regression algorithm (\cref{schm:generalized}).
\begin{lemma}
\label{lem:4}
At each iteration $t$ the predictor update under \cref{schm:generalized} is $$\mathbf{w}_{t}=\left(\frac{1}{n}\mathbf{X}_{t}^{\top}\mathbf{X}_{t}+\mathbf{M}\right)^{-1}\left(\frac{1}{n}\mathbf{X}_{t}^{\top}\mathbf{y}_{t}+\mathbf{M}\mathbf{w}_{t-1}\right).$$
\end{lemma}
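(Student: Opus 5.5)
The plan is to argue exactly as in \cref{lem:2}, replacing the scalar $\lambda\mathbf I$ by the matrix $\mathbf M$. We write the objective of \cref{schm:generalized} at step $t$ as
$$
J_t(\mathbf w)=\frac{1}{2n}\left\Vert\mathbf X_t\mathbf w-\mathbf y_t\right\Vert^2+\frac12(\mathbf w-\mathbf w_{t-1})^\top\mathbf M(\mathbf w-\mathbf w_{t-1}).
$$
This is a quadratic function of $\mathbf w$ whose Hessian is $\frac1n\mathbf X_t^\top\mathbf X_t+\mathbf M$.

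\textbf{Step 1: strict convexity.} Since $\mathbf M$ is spectrally bounded below, $\mathbf M\succeq l\mathbf I$ with $l>0$. Hence the Hessian satisfies $\frac1n\mathbf X_t^\top\mathbf X_t+\mathbf M\succeq l\mathbf I\succ0$. So $J_t$ is strictly convex, the argmin exists and is unique, and the Hessian is invertible.

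\textbf{Step 2: stationarity.} The gradient of $J_t$ is
$$
\nabla J_t(\mathbf w)=\frac1n\mathbf X_t^\top(\mathbf X_t\mathbf w-\mathbf y_t)+\mathbf M(\mathbf w-\mathbf w_{t-1}),
$$
where we use that $\mathbf M$ is symmetric (it is diagonal). Setting the gradient to zero gives
$$
\left(\frac1n\mathbf X_t^\top\mathbf X_t+\mathbf M\right)\mathbf w_t=\frac1n\mathbf X_t^\top\mathbf y_t+\mathbf M\mathbf w_{t-1}.
$$

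\textbf{Step 3: solve.} Inverting the positive definite matrix from Step 1 yields the claimed formula. If the matrix is task dependent, the same computation goes through with $\mathbf M$ replaced by $\mathbf M_t$.

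There is no real obstacle in this argument. The only point needing care is invertibility, and it is guaranteed by the lower spectral bound $l>0$ assumed in \cref{sec:2}. This matters in particular when $\alpha<1$, because then $\frac1n\mathbf X_t^\top\mathbf X_t$ is singular on its own and only the $\mathbf M$ term makes the Hessian positive definite.
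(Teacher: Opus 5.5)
Your proposal is correct and follows essentially the same route as the paper. Both arguments note that the objective is a convex quadratic, compute the gradient $\frac{1}{n}\mathbf{X}_t^\top(\mathbf{X}_t\mathbf{w}-\mathbf{y}_t)+\mathbf{M}(\mathbf{w}-\mathbf{w}_{t-1})$, set it to zero to obtain the normal equations, and invert $\frac{1}{n}\mathbf{X}_t^\top\mathbf{X}_t+\mathbf{M}$ using $\mathbf{M}\succ 0$; your use of the lower spectral bound $l>0$ is simply a slightly more explicit justification of the same positive-definiteness step.
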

\begin{proof}
Let $t\in [1, T]$ and consider the optimization problem
$$
\mathbf{w}_{t}\in\argmin_{\mathbf{w}\in\mathbb{R}^{d}}
\frac{1}{2n}\left\Vert \mathbf{X}_{t}\mathbf{w}-\mathbf{y}_{t}\right\Vert _{2}^{2}
+\frac{1}{2}\left\Vert \mathbf{w}-\mathbf{w}_{t-1}\right\Vert _{\mathbf{M}}^{2},
$$
where $\mathbf{M}\succ 0$. The objective is a convex quadratic function of
$\mathbf{w}$, hence the minimizer is unique in that case.

Expand the objective and differentiate with respect to $\mathbf{w}$:
\begin{align*}
f_t(\mathbf{w})
&:=\frac{1}{2n}\Vert\mathbf{X}_t\mathbf{w}-\mathbf{y}_t\Vert_2^2
+\frac{1}{2}(\mathbf{w}-\mathbf{w}_{t-1})^\top \mathbf{M}(\mathbf{w}-\mathbf{w}_{t-1}),\\
\nabla f_t(\mathbf{w})
&=\frac{1}{n}\mathbf{X}_t^\top(\mathbf{X}_t\mathbf{w}-\mathbf{y}_t)
+\mathbf{M}(\mathbf{w}-\mathbf{w}_{t-1}).
\end{align*}
Setting $\nabla f_t(\mathbf{w})=0$ gives the normal equations
\[
\Big(\tfrac{1}{n}\mathbf{X}_t^\top\mathbf{X}_t+\mathbf{M}\Big)\mathbf{w}
=
\tfrac{1}{n}\mathbf{X}_t^\top\mathbf{y}_t+\mathbf{M}\mathbf{w}_{t-1}.
\]
Since $\tfrac{1}{n}\mathbf{X}_t^\top\mathbf{X}_t+\mathbf{M}$ is invertible,
we obtain the closed-form solution
\[
\mathbf{w}_t
=
\Big(\tfrac{1}{n}\mathbf{X}_{t}^{\top}\mathbf{X}_{t}+\mathbf{M}\Big)^{-1}
\Big(\tfrac{1}{n}\mathbf{X}_{t}^{\top}\mathbf{y}_{t}+\mathbf{M}\mathbf{w}_{t-1}\Big).
\]
\end{proof}

\newpage

\subsection{Proof of \cref{thm:4}: Closed-Form Generalization Loss with Diagonal Regularization}
\begin{recall}[\cref{thm:4}]
Under \cref{assum:1}, the generalization loss after T tasks of the generalized ridge algorithm (\cref{schm:generalized}), specialized to a fixed regularization matrix $\mathbf M$, as $n,d\to\infty$, satisfies 
\begin{align*}
&G_{T}^{\mathrm{gen}}\xrightarrow{P}\left(\mathbf{w}_{0}-\mathbf{w}^{\star}\right)^{\top}\left(\bar{\mathbf{Q}}^{2T}+\featnoise\sum_{k=1}^{T}\bar{\mathbf{Q}}^{2k}\tilde{\mu}\tilde{\rho}_{k}\right)\left(\mathbf{w}_{0}-\mathbf{w}^{\star}\right)\\&+\frac{\labelnoise}{n}\mathrm{Tr}\Bigg[\sum_{k=1}^{T}\left[\mathbf{I}-\bar{\mathbf{Q}}\left[\mathbf{I}+\frac{\featnoise^{2}}{n}\mathrm{Tr}\left[\mathbf{M}^{-1}\bar{\mathbf{Q}}^{2}\right]\tilde{\mu}\mathbf{M}^{-1}\right]\right]\left[\bar{\mathbf{Q}}^{2\left(T-k\right)}+\featnoise\sum_{i=k+1}^{T}\bar{\mathbf{Q}}^{2\left(i-k\right)}\tilde{\mu}\tilde{\rho}_{i}\right]\mathbf{M}^{-1}\bar{\mathbf{Q}}\Bigg]
\end{align*}
where $\bar{\mathbf{Q}}=\big(\mathbf{I}+\featnoise\tilde{m}_{p}(-1)\mathbf{M}^{-1}\big)^{-1},$
with $\tilde m_p(-1)$ being the unique positive solution of
$$
\frac{1}{\tilde m_p(-1)}=1+\frac{1}{n}
\sum_{i=1}^d
\frac{\featnoise  M_{i,i}^{-1}}
{1+\tilde m_p(-1)\featnoise M_{i,i}^{-1}}.
$$
And,
\begin{align*}   
\tilde{\rho}_{k}=\frac{\featnoise}{n}\mathrm{Tr}\Big[\mathbf{M}^{-2}\Big(\bar{\mathbf{Q}}^{2(T-k+1)}+\featnoise\tilde{\mu}\sum_{i=k+1}^{T}\tilde{\rho}_{i}\bar{\mathbf{Q}}^{2(i-k+1)}\Big)\Big],\\\tilde{\mu}=\left[\left[1+\frac{\featnoise}{n}\mathrm{Tr}\left(\mathbf{M}^{-1}\bar{\mathbf{Q}}\right)\right]^{2}-\frac{\featnoise^2}{n}\mathrm{Tr}\left(\mathbf{M}^{-1}\bar{\mathbf{Q}}\right)^{2}\right]^{-1}.
\end{align*}
\end{recall}
\begin{proof}
We will develop a slightly more general result, the outcome would be able to accommodate general diagonal covariance matrix $\mathbf \Sigma$ (spectrally bounded), in contrast to the setting considered in \cref{sec:statistical_setting}. 
Specifically, here we may assume that the data for task $t$ is generated as $\mathbf X_t=\mathbf Z_t \mathbf \Sigma^{1/2}$, where $\mathbf Z_t \in \mathbb{R}^{n \times d}$ has i.i.d.\ entries with zero mean and unit variance.

For each $t\in [1, T]$, we are interested in solving 
$$\mathbf{w}^{\mathrm{gen}}_{t}=\min_{\mathbf{w}\in\mathbb{R}^{d}}\frac{1}{n}\left\Vert \mathbf{X}_{t}\mathbf{w}-\mathbf{y}\right\Vert _{2}^{2}+\left\Vert \mathbf{w}-\mathbf{w}_{t-1}\right\Vert^2 _{\mathbf{M}}.$$
By \cref{lem:4},
$$
\mathbf{w}^{\mathrm{gen}}_{t}=\left(\frac{1}{n}\mathbf{X}_{t}^{\top}\mathbf{X}_{t}+\mathbf{M}\right)^{-1}\left(\frac{1}{n}\mathbf{X}_{t}^{\top}\mathbf{y}_{t}+\mathbf{M}\mathbf{w}^{\mathrm{gen}}_{t-1}\right).
$$
Therefore,
\begin{align*}
    \mathbf{w}^{\mathrm{gen}}_{T}-\mathbf{w}^{\star}&=\left(\frac{1}{n}\mathbf{X}_{T}^{\top}\mathbf{X}_{T}+\mathbf{M}\right)^{-1}\left(\frac{1}{n}\mathbf{X}_{T}^{\top}\left(\mathbf{X}_{T}\mathbf{w}^{\star}+\mathbf{z}_{T}\right)+\mathbf{M}\mathbf{w}^{\mathrm{gen}}_{T-1}\right)-\mathbf{w}^{\star}\\&=\left(\frac{1}{n}\mathbf{X}_{T}^{\top}\mathbf{X}_{T}+\mathbf{M}\right)^{-1}\left(\frac{1}{n}\mathbf{X}_{T}^{\top}\left(\mathbf{X}_{T}\mathbf{w}^{\star}+\mathbf{z}_{T}\right)+\mathbf{M}\mathbf{w}^{\mathrm{gen}}_{T-1}-\left(\frac{1}{n}\mathbf{X}_{T}^{\top}\mathbf{X}_{T}+\mathbf{M}\right)\mathbf{w}^{\star}\right)\\&=\left(\frac{1}{n}\mathbf{X}_{T}^{\top}\mathbf{X}_{T}+\mathbf{M}\right)^{-1}\left(\frac{1}{n}\mathbf{X}_{T}^{\top}\mathbf{z}_{T}+\mathbf{M}\left(\mathbf{w}^{\mathrm{gen}}_{T-1}-\mathbf{w}^{\star}\right)\right)\\&=\mathbf{P}_{T}\mathbf{M}\left(\mathbf{w}^{\mathrm{gen}}_{T-1}-\mathbf{w}^{\star}\right)+\mathbf{P}_{T}\frac{1}{n}\mathbf{X}_{T}^{\top}\mathbf{z}_{T}\\&=\ldots\\&=\left[\prod_{m=T}^{1}\mathbf{P}_{m}\mathbf{M}\right]\left(\mathbf{w}_{0}-\mathbf{w}^{\star}\right)+\sum_{k=1}^{T}\left[\prod_{m=T}^{k+1}\mathbf{P}_{m}\mathbf{M}\right]\mathbf{P}_{k}\frac{1}{n}\mathbf{X}_{k}^{\top}\mathbf{z}_{k},
\end{align*}
where $\mathbf{P}_{t}=\left(\frac{1}{n}\mathbf{X}_{t}^{\top}\mathbf{X}_{t}+\mathbf{M}\right)^{-1}=\mathbf{M}^{-\frac{1}{2}}\left(\frac{1}{n}\mathbf{M}^{-\frac{1}{2}}\mathbf{X}_{t}^{\top}\mathbf{X}_{t}\mathbf{M}^{-\frac{1}{2}}+\mathbf{I}\right)^{-1}\mathbf{M}^{-\frac{1}{2}}$ for all $t \in [1,T]$.

Then, 
\begin{align}
&G^{\mathrm{gen}}_T=\left[\left[\prod_{m=T}^{1}\mathbf{P}_{m}\mathbf{M}\right]\left(\mathbf{w}_{0}-\mathbf{w}^{\star}\right)+\sum_{k=1}^{T}\left[\prod_{m=T}^{k+1}\mathbf{P}_{m}\mathbf{M}\right]\mathbf{P}_{k}\frac{1}{n}\mathbf{\Sigma}^{\frac{1}{2}}\mathbf{Z}_{k}^{\top}\mathbf{z}_{k}\right]^{\top}\notag\\&\hspace{12em}\left[\left[\prod_{m=T}^{1}\mathbf{P}_{m}\mathbf{M}\right]\left(\mathbf{w}_{0}-\mathbf{w}^{\star}\right)+\sum_{k=1}^{T}\left[\prod_{m=T}^{k+1}\mathbf{P}_{m}\mathbf{M}\right]\mathbf{P}_{k}\frac{1}{n}\mathbf{\Sigma}^{\frac{1}{2}}\mathbf{Z}_{k}^{\top}\mathbf{z}_{k}\right]\notag\\&\stackrel{\explain{1}}{=}\left(\mathbf{w}_{0}-\mathbf{w}^{\star}\right)^{\top}\underbrace{\left[\prod_{m=T}^{1}\mathbf{P}_{m}\mathbf{M}\right]^{\top}\left[\prod_{m=T}^{1}\mathbf{P}_{m}\mathbf{M}\right]}_{\text{term 1}}\left(\mathbf{w}_{0}-\mathbf{w}^{\star}\right)\notag\\&\hspace{8em}+\underbrace{\left[\frac{1}{n^{2}}\sum_{k=1}^{T}\mathbf{z}_{k}^{\top}\mathbf{Z}_{k}\mathbf{\Sigma}^{\frac{1}{2}}\mathbf{P}_{k}\left[\prod_{m=T}^{k+1}\mathbf{P}_{m}\mathbf{M}\right]^{\top}\sum_{k'=1}^{T}\left[\prod_{m=T}^{k'+1}\mathbf{P}_{m}\mathbf{M}\right]\mathbf{P}_{k'}\mathbf{\Sigma}^{\frac{1}{2}}\mathbf{Z}_{k'}^{\top}\mathbf{z}_{k'}\right]}_{\text{term 2}},\label{eq:22}
\end{align}
where [1] follows since under the statistical design (\cref{sec:statistical_setting}) in which 
$\mathbb E[\mathbf z_k]=\mathbf 0$ for all $k\in[1, T]$, and the noise variables are sampled independently of all tasks, so by the weak law of large numbers cross terms vanish as $n\to\infty.$

We begin by analyzing the first term.
Let $\mathbf{A}_{i}=\mathbf{M}^{\frac{1}{2}}\mathbf{P}_{i}\left[\prod_{m=T}^{i+1}\mathbf{P}_{m}\mathbf{M}\right]^{\top}\left[\prod_{m=T}^{i+1}\mathbf{P}_{m}\mathbf{M}\right]\mathbf{P}_{i}\mathbf{M}^{\frac{1}{2}}$. 
To build some intuition for the application of \cref{thm:zhao_resolvent}, note that in order to compute term 1 we would need to solve
\begin{align*}
&\left[\prod_{m=T}^{1}\mathbf{P}_{m}\mathbf{M}\right]^{\top}\left[\prod_{m=T}^{1}\mathbf{P}_{m}\mathbf{M}\right]=\mathbf{M}\mathbf{P}_{1}\ldots\mathbf{M}\mathbf{P}_{T}\mathbf{P}_{T}\mathbf M\ldots\mathbf{P}_{1}\mathbf{M}\\&=\mathbf{M}^{\frac{1}{2}}\underbrace{\underbrace{\left[\mathbf{M}^{\frac{1}{2}}\mathbf{P}_{1}\mathbf{M}^{\frac{1}{2}}\right]}_{\mathbf Q\left(-1\right)}\underbrace{\ldots\mathbf{M}\mathbf{P}_{T}\mathbf{M}^{\frac{1}{2}}\mathbf{M}^{-1}\mathbf{M}^{\frac{1}{2}}\mathbf{P}_{T}\mathbf{M}\ldots}_{\text{independent}}\underbrace{\left[\mathbf{M}^{\frac{1}{2}}\mathbf{P}_{1}\mathbf{M}^{\frac{1}{2}}\right]}_{\mathbf Q\left(-1\right)}}_{\mathbf Q\left(-1\right)\mathbf{A}_{2}\mathbf Q\left(-1\right)}\mathbf{M}^{\frac{1}{2}}.
\end{align*}
where $\mathbf Q(z) :=\left(\frac{1}{n}\mathbf{M}^{-\frac{1}{2}}\mathbf{\Sigma}^{\frac{1}{2}}\mathbf{Z}^{\top}\mathbf{Z}\mathbf{\Sigma}^{\frac{1}{2}}\mathbf{M}^{-\frac{1}{2}}-z\mathbf{I}\right)^{-1}$ is the resolvent matrix with $\mathbf Z$ having i.i.d. entries with zero mean and unit variance.
The middle block is a symmetric random matrix independent of $\mathbf Q(-1)$ with bounded operator norm, therefore we use \cref{thm:zhao_resolvent} to evaluate $\mathbf Q(-1)\mathbf A_2\mathbf Q(-1)$.

Using \cref{thm:romain}, $\mathbf Q(-1)$ admits a deterministic equivalent, where,
\begin{align*}
\bar{\mathbf{Q}}\left(-1\right)=\left(\mathbf{I}+\tilde{m}_{p}\left(-1\right)\mathbf{\Sigma}\mathbf{M}^{-1}\right)^{-1}\text{ and }\frac{1}{\tilde{m}_{p}\left(-1\right)}=1+\frac{1}{n}\sum_{i=1}^{d}\frac{\sigma_{i}m_i^{-1}}{1+\tilde{m}_{p}\left(-1\right)\sigma_{i}m_i^{-1}}.
\end{align*}
Here $\sigma_i,m_i$ denote respectively, the $i$'th eigenvalues of the matrices $\mSigma, \mathbf M$. For notational convenience, we set
$\mathbf Q:= \mathbf Q(-1)$, and $\bar{\mathbf Q}:=\bar {\mathbf Q}(-1)$. Both matrices are diagonal since both $\mathbf \Sigma$ and $\mathbf M$ are diagonal. Moreover, we note that under our standing assumption, we have $\sigma_i=\featnoise$ for all $i$. Nonetheless, as previously mentioned, our analysis also allows for non-isotropic covariance matrices.

For all $k\geq 1$, using \cref{thm:zhao_resolvent} we attain the following recurrence,
\begin{align}
\label{eq:21}
    \mathbf{A}_{k}\asymp\bar{\mathbf Q}\left(-1\right)\mathbf{A}_{k+1}\bar{\mathbf Q}\left(-1\right)+\tilde{\mu}\tilde{\rho}_{k}\mathbf{D}
\end{align}
where 
\begin{align*}
&\tilde{\rho}_{k}=\frac{1}{n}\mathrm{Tr}\left[\mathbf{\Sigma}\mathbf{M}^{-1}\bar{\mathbf Q}\mathbf{A}_{k+1}\bar{\mathbf Q}\right], \\&\tilde{\mu}=\bigg[\left[1+\frac{1}{n}\mathrm{Tr}\left(\mathbf{\Sigma}\mathbf{M}^{-1}\bar{\mathbf{Q}}\right)\right]^{2}-\frac{1}{n}\mathrm{Tr}\Big(\mathbf{\Sigma}\mathbf{M}^{-1}\bar{\mathbf{Q}}\Big)^{2}\Bigg]^{-1},\\&
\mathbf{D}=\bar{\mathbf Q}\mathbf{\Sigma}\mathbf{M}^{-1}\bar{\mathbf Q}.
\end{align*}

Using the recurrence from \cref{eq:21},

\begin{align*}
\mathbf{A}_{k}&\asymp\bar{\mathbf Q}\mathbf{A}_{k+1}\bar{\mathbf Q}+\tilde{\mu}\tilde{\rho}_{k}\mathbf{D}\\&\asymp\bar{\mathbf Q}\left[\bar{\mathbf Q}\mathbf{A}_{k+2}\bar{\mathbf Q}+\tilde{\mu}\tilde{\rho}_{k+1}\mathbf{D}\right]\bar{\mathbf Q}+\tilde{\mu}\tilde{\rho}_{k}\mathbf{D}\\&\asymp\ldots\\&\asymp\bar{\mathbf Q}^{T-k+1}\underbrace{\mathbf{A}_{T+1}}_{\mathbf{M}^{-1}}\bar{\mathbf Q}^{T-k+1}+\sum_{i=k}^{T}\bar{\mathbf Q}^{i-k}\tilde{\mu}\tilde{\rho_i}\bar{\mathbf Q}^{i-k}\mathbf{D}.
\end{align*}
Therefore, using \cref{prop:1},
\begin{align*}
\tilde{\rho}_{k}&=\frac{1}{n}\mathrm{Tr}\left[\mathbf{\Sigma}\mathbf{M}^{-1}\bar{\mathbf Q}\mathbf{A}_{k+1}\bar{\mathbf Q}\right]=\frac{1}{n}\mathrm{Tr}\left[\bar{\mathbf Q}\mathbf{\Sigma}\mathbf{M}^{-1}\bar{\mathbf Q}\mathbf{A}_{k+1}\right]\\&\to\frac{1}{n}\mathrm{Tr}\left[\bar{\mathbf Q}\mathbf{\Sigma}\mathbf{M}^{-1}\bar{\mathbf Q}\left[\bar{\mathbf Q}^{2\left(T-k\right)}\mathbf{M}^{-1}+\sum_{i=k+1}^{T}\bar{\mathbf Q}^{2\left(i-k-1\right)}\tilde{\mu}\tilde{\rho_i}\mathbf{D}\right]\right]\\&=\frac{1}{n}\mathrm{Tr}\left[\bar{\mathbf Q}^{2\left(T-k+1\right)}\mathbf{\Sigma}\mathbf{M}^{-2}\right]+\frac{1}{n}\mathrm{Tr}\left[\sum_{i=k+1}^{T}\bar{\mathbf Q}^{2\left(i-k+1\right)}\mathbf{\Sigma}^{2}\mathbf{M}^{-2}\tilde{\mu}\tilde{\rho_i}\right],
\end{align*}
almost surely.

And term 2 in \cref{eq:22}, becomes,
\begin{align*}
    &\frac{1}{n^{2}}\left[\sum_{k=1}^{T}\mathbf{z}_{k}^{\top}\mathbf{Z}_{k}\mathbf{\Sigma}^{\frac{1}{2}}\mathbf{P}_{k}\left[\prod_{m=T}^{k+1}\mathbf{P}_{m}\mathbf{M}\right]^{\top}\sum_{k'=1}^{T}\left[\prod_{m=T}^{k'+1}\mathbf{P}_{m}\mathbf{M}\right]\mathbf{P}_{k'}\mathbf{\Sigma}^{\frac{1}{2}}\mathbf{Z}_{k'}^{\top}\mathbf{z}_{k'}\right]&\\&=\frac{1}{n^{2}}\left[\sum_{k,k'=1}^{T}\mathbf{z}_{k}^{\top}\mathbf{Z}_{k}\mathbf{\Sigma}^{\frac{1}{2}}\mathbf{P}_{k}\left[\prod_{m=T}^{k+1}\mathbf{P}_{m}\mathbf{M}\right]^{\top}\left[\prod_{m=T}^{k'+1}\mathbf{P}_{m}\mathbf{M}\right]\mathbf{P}_{k'}\mathbf{\Sigma}^{\frac{1}{2}}\mathbf{Z}_{k'}^{\top}\mathbf{z}_{k'}\right]\\&\stackrel{\explain{1}}{\xrightarrow[]{P}}\frac{1}{n^{2}}\left[\sum_{k=1}^{T}\mathbf{z}_{k}^{\top}\mathbf{Z}_{k}\mathbf{\Sigma}^{\frac{1}{2}}\mathbf{P}_{k}\left[\prod_{m=T}^{k+1}\mathbf{P}_{m}\mathbf{M}\right]^{\top}\left[\prod_{m=T}^{k+1}\mathbf{P}_{m}\mathbf{M}\right]\mathbf{P}_{k}\mathbf{\Sigma}^{\frac{1}{2}}\mathbf{Z}_{k}^{\top}\mathbf{z}_{k}\right]\\&=\frac{1}{n^{2}}\left[\sum_{k=1}^{T}\mathrm{Tr}\left[\mathbf{z}_{k}^{\top}\mathbf{Z}_{k}\mathbf{\Sigma}^{\frac{1}{2}}\mathbf{P}_{k}\left[\prod_{m=T}^{k+1}\mathbf{P}_{m}\mathbf{M}\right]^{\top}\left[\prod_{m=T}^{k+1}\mathbf{P}_{m}\mathbf{M}\right]\mathbf{P}_{k}\mathbf{\Sigma}^{\frac{1}{2}}\mathbf{Z}_{k}^{\top}\mathbf{z}_{k}\right]\right]\\&=\frac{1}{n^{2}}\left[\sum_{k=1}^{T}\mathrm{Tr}\left[\mathbf{z}_{k}\mathbf{z}_{k}^{\top}\mathbf{Z}_{k}\mathbf{\Sigma}^{\frac{1}{2}}\mathbf{P}_{k}\left[\prod_{m=T}^{k+1}\mathbf{P}_{m}\mathbf{M}\right]^{\top}\left[\prod_{m=T}^{k+1}\mathbf{P}_{m}\mathbf{M}\right]\mathbf{P}_{k}\mathbf{\Sigma}^{\frac{1}{2}}\mathbf{Z}_{k}^{\top}\right]\right]\\&=\frac{1}{n^{2}}\sum_{k=1}^{T}\mathrm{Tr}\left[\mathbf{z}_{k}\mathbf{z}_{k}^{\top}\left[\mathbf{Z}_{k}\mathbf{\Sigma}^{\frac{1}{2}}\mathbf{P}_{k}\left[\prod_{m=T}^{k+1}\mathbf{P}_{m}\mathbf{M}\right]^{\top}\left[\prod_{m=T}^{k+1}\mathbf{P}_{m}\mathbf{M}\right]\mathbf{P}_{k}\mathbf{\Sigma}^{\frac{1}{2}}\mathbf{Z}_{k}^{\top}\right]\right]\\&\stackrel{\explain{2}}{\xrightarrow[]{P}}\frac{\labelnoise}{n^{2}}\left[\sum_{k=1}^{T}\mathrm{Tr}\left[\mathbf{Z}_{k}\mathbf{\Sigma}^{\frac{1}{2}}\mathbf{P}_{k}\left[\prod_{m=T}^{k+1}\mathbf{P}_{m}\mathbf{M}\right]^{\top}\left[\prod_{m=T}^{k+1}\mathbf{P}_{m}\mathbf{M}\right]\mathbf{P}_{k}\mathbf{\Sigma}^{\frac{1}{2}}\mathbf{Z}_{k}^{\top}\right]\right]\\&=\frac{\labelnoise}{n^{2}}\left[\sum_{k=1}^{T}\mathrm{Tr}\left[\mathbf{P}_{k}\mathbf{\Sigma}^{\frac{1}{2}}\mathbf{Z}_{k}^{\top}\mathbf{Z}_{k}\mathbf{\Sigma}^{\frac{1}{2}}\mathbf{P}_{k}\left[\prod_{m=T}^{k+1}\mathbf{P}_{m}\mathbf{M}\right]^{\top}\left[\prod_{m=T}^{k+1}\mathbf{P}_{m}\mathbf{M}\right]\right]\right]\\&=\frac{\labelnoise}{n^{2}}\mathrm{Tr}\left[\sum_{k=1}^{T}\left[\mathbf{P}_{k}\mathbf{\Sigma}^{\frac{1}{2}}\mathbf{Z}_{k}^{\top}\mathbf{Z}_{k}\mathbf{\Sigma}^{\frac{1}{2}}\mathbf{P}_{k}\right]\left[\left[\prod_{m=T}^{k+1}\mathbf{P}_{m}\mathbf{M}\right]^{\top}\left[\prod_{m=T}^{k+1}\mathbf{P}_{m}\mathbf{M}\right]\right]\right].
\end{align*}
Where [1] and [2] follows since the noise $\mathbf z_k$ is sample independently of the data matrices $\mathbf X_1,\ldots, \mathbf X_T$ and $\mathbb E[\mathbf z_k\mathbf z_k^\top]=\labelnoise\mathbf I$, with the weak law of large numbers. 
Note that, 
\begin{align*}
    &\mathbf{P}_{k}\mathbf{\Sigma}^{\frac{1}{2}}\mathbf{Z}_{k}^{\top}\mathbf{Z}_{k}\mathbf{\Sigma}^{\frac{1}{2}}\mathbf{P}_{k}&\\&=n^{2}\left(\mathbf{\Sigma}^{\frac{1}{2}}\mathbf{Z}_{k}^{\top}\mathbf{Z}_{k}\mathbf{\Sigma}^{\frac{1}{2}}+n\mathbf{M}\right)^{-1}\left(\mathbf{\Sigma}^{\frac{1}{2}}\mathbf{Z}_{k}^{\top}\mathbf{Z}_{k}\mathbf{\Sigma}^{\frac{1}{2}}+n\mathbf{M}-n\mathbf{M}\right)\left(\mathbf{\Sigma}^{\frac{1}{2}}\mathbf{Z}_{k}^{\top}\mathbf{Z}_{k}\mathbf{\Sigma}^{\frac{1}{2}}+n\mathbf{M}\right)^{-1}\\&=n\left(\mathbf{P}_{k}-\mathbf{P}_{k}\mathbf{M}\mathbf{P}_{k}\right)
\end{align*}
Therefore using \cref{thm:zhao_resolvent}, 
\begin{align*}
&\overline{\frac{1}{n}\mathbf{P}_{k}\mathbf{\Sigma}^{\frac{1}{2}}\mathbf{Z}_{k}^{\top}\mathbf{Z}_{k}\mathbf{\Sigma}^{\frac{1}{2}}\mathbf{P}_{k}}=\overline{\mathbf{P}_{k}-\mathbf{P}_{k}\mathbf{M}\mathbf{P}_{k}}=\mathbf{M}^{-\frac{1}{2}}\bar{\mathbf{Q}}\mathbf{M}^{-\frac{1}{2}}-\overline{\mathbf{M}^{-\frac{1}{2}}\mathbf{Q}^{2}\mathbf{M}^{-\frac{1}{2}}}\\&=\mathbf{M}^{-\frac{1}{2}}\left(\bar{\mathbf{Q}}-\left(\bar{\mathbf{Q}}^{2}+\frac{1}{n}\mathrm{Tr}\left[\mathbf\Sigma\mathbf{M}^{-1}\bar{\mathbf{Q}}^{2}\right]\left[\left[1+\frac{1}{n}\mathrm{Tr}\left(\mathbf\Sigma\mathbf{M}^{-1}\bar{\mathbf{Q}}\right)\right]^{2}-\frac{1}{n}\mathrm{Tr}\left(\mathbf\Sigma\mathbf{M}^{-1}\bar{\mathbf{Q}}\right)^{2}\right]^{-1}\mathbf\Sigma\mathbf{M}^{-1}\bar{\mathbf{Q}}^{2}\right)\right)\mathbf{M}^{-\frac{1}{2}}.
\end{align*}
Since both $\left[\prod_{m=T}^{k+1}\mathbf{P}_{m}\mathbf{M}\right]^{\top}\left[\prod_{m=T}^{k+1}\mathbf{P}_{m}\mathbf{M}\right]$ and $\mathbf{P}_{k}\mathbf{\Sigma}^{\frac{1}{2}}\mathbf{Z}_{k}^{\top}\mathbf{Z}_{k}\mathbf{\Sigma}^{\frac{1}{2}}\mathbf{P}_{k}$ have almost surely finite operator norm, then by the properties of deterministic equivalent we have 
\begin{align*}
&\frac{\labelnoise}{n^{2}}\mathrm{Tr}\left[\sum_{k=1}^{T}\mathbf{P}_{k}\mathbf{\Sigma}^{\frac{1}{2}}\mathbf{Z}_{k}^{\top}\mathbf{Z}_{k}\mathbf{\Sigma}^{\frac{1}{2}}\mathbf{P}_{k}\left[\prod_{m=T}^{k+1}\mathbf{P}_{m}\mathbf{M}\right]^{\top}\left[\prod_{m=T}^{k+1}\mathbf{P}_{m}\mathbf{M}\right]\right]\\&\xrightarrow[]{P}
\frac{\labelnoise}{n}\mathrm{Tr}\left[\sum_{k=1}^{T}\overline{\frac{1}{n}\mathbf{P}_{k}\mathbf{\Sigma}^{\frac{1}{2}}\mathbf{Z}_{k}^{\top}\mathbf{Z}_{k}\mathbf{\Sigma}^{\frac{1}{2}}\mathbf{P}_{k}}\left[\prod_{m=T}^{k+1}\mathbf{P}_{m}\mathbf{M}\right]^{\top}\left[\prod_{m=T}^{k+1}\mathbf{P}_{m}\mathbf{M}\right]\right]\\&\xrightarrow[]{P}
\frac{\labelnoise}{n}\mathrm{Tr}\left[\sum_{k=1}^{T}\overline{\frac{1}{n}\mathbf{P}_{k}\mathbf{\Sigma}^{\frac{1}{2}}\mathbf{Z}_{k}^{\top}\mathbf{Z}_{k}\mathbf{\Sigma}^{\frac{1}{2}}\mathbf{P}_{k}}\overline{\left[\prod_{m=T}^{k+1}\mathbf{P}_{m}\mathbf{M}\right]^{\top}\left[\prod_{m=T}^{k+1}\mathbf{P}_{m}\mathbf{M}\right]}\right],
\end{align*}
where in each step we condition on the event that each one of the matrices has finite operator norm and then use \cref{def:1}.
Using
\begin{align*}
\left[\prod_{m=T}^{k}\mathbf{P}_{m}\mathbf{M}_{m}\right]^{\top}\left[\prod_{m=T}^{k}\mathbf{P}_{m}\mathbf{M}_{m}\right]=\mathbf{M}^{\frac{1}{2}}\mathbf{A}_{k}\mathbf{M}^{\frac{1}{2}}\notag\asymp\bar{\mathbf{Q}}^{2\left(T-k+1\right)}+\sum_{i=k}^{T}\bar{\mathbf{Q}}^{2\left(i-k+1\right)}\tilde{\mu}\tilde{\rho}_{i},
\end{align*}
and combining all the terms in \cref{eq:22}.

\begin{align*}
&G^{\mathrm{gen}}_T\xrightarrow[]{P}\left(\mathbf{w}_{0}-\mathbf{w}^{\star}\right)^{\top}\left(\bar{\mathbf Q}^{2T}+\sum_{k=1}^{T}\bar{\mathbf Q}^{2k}\tilde{\mu}\tilde{\rho}_{k}\right)\left(\mathbf{w}_{0}-\mathbf{w}^{\star}\right)\\&+\frac{\labelnoise}{n}\mathrm{Tr}\left[\sum_{k=1}^{T}\mathbf{M}^{-1}\bar{\mathbf Q}\left[\mathbf{I}-\bar{\mathbf Q}\left[\mathbf{I}+\frac{1}{n}\mathrm{Tr}\left[\mathbf{\Sigma}\mathbf{M}^{-1}\bar{\mathbf Q}^{2}\right]\tilde{\mu}\mathbf{\Sigma}\mathbf{M}^{-1}\right]\right]\left[\bar{\mathbf Q}^{2\left(T-k\right)}+\sum_{i=k+1}^{T}\bar{\mathbf Q}^{2\left(i-k\right)}\tilde{\mu}\tilde{\rho}_{i}\right]\right].
\end{align*}
\end{proof}
\newpage
\subsection{Closed-Form Generalization Loss with Task-Dependent Diagonal Regularization}
\label{app:mul}
In this appendix, we provide an \emph{exact} generalization analysis under arbitrary positive-definite diagonal regularization matrices $\{\mathbf M_t\}_{t=1}^T$ (spectrally bounded). The resulting expression accommodates general task-dependent diagonal covariance matrices $\mathbf \Sigma_t$ (spectrally bounded), in contrast to the isotropic setting considered in \cref{sec:statistical_setting}. 
Specifically, here we may assume that the data for task $t$ is generated as $\mathbf X_t = \mathbf Z_t\mathbf \Sigma_t^{1/2}$, where $\mathbf Z_t \in \mathbb{R}^{n \times d}$ has i.i.d.\ entries with zero mean and unit variance.

\begin{theorem}    
Under \cref{assum:1}, the generalization loss after $T$ tasks of the generalized ridge algorithm (\cref{schm:generalized}) with task dependent diagonal regularization, satisfies, as $n,d\to\infty$, 
\begin{equation}
\label{eq:46}
\begin{aligned}
&G_{T}^{\mathrm{gen}}\xrightarrow{P}\left(\mathbf{w}_{0}-\mathbf{w}^{\star}\right)^{\top}\left[\left[\prod_{m=1}^{T}\bar{\mathbf{Q}}_{m}\right]^{2}+\sum_{i=2}^{T}\mathbf{\Sigma}_{i}\left[\prod_{r=1}^{i}\bar{\mathbf{Q}}_{r}\right]^{2}\tilde{\mu}_{i}\tilde{\rho}_{i}+\tilde{\rho}_{1}\tilde{\mu}_{1}\bar{\mathbf{Q}}_{1}^{2}\right]\left(\mathbf{w}_{0}-\mathbf{w}^{\star}\right)\\&\hspace{6em}+\frac{\labelnoise}{n}\mathrm{Tr}\Bigg[\sum_{k=1}^{T}\bar{\mathbf{Q}}_{k}\mathbf{M}_{k}^{-1}\left[\mathbf{I}-\bar{\mathbf{Q}}_{k}\left[\mathbf{I}+\frac{1}{n}\mathrm{Tr}\left[\mathbf{\Sigma}_{k}^{-1}\mathbf{M}_{k}^{-1}\bar{\mathbf{Q}}_{k}^{2}\right]\tilde{\mu}_{k}\mathbf{M}_{k}^{-1}\right]\right]\\&\hspace{12em}\left[\left[\prod_{m=k+1}^{T}\bar{\mathbf{Q}}_{m}\right]^{2}+\sum_{i=k+2}^{T}\mathbf{\Sigma}_{i}\left[\prod_{r=k+1}^{i}\bar{\mathbf{Q}}_{r}\right]^{2}\tilde{\mu}_{i}\tilde{\rho}_{i}+\tilde{\rho}_{k+1}\tilde{\mu}_{k+1}\bar{\mathbf{Q}}_{k+1}^{2}\right]\Bigg].
\end{aligned}
\end{equation}

where $\bar{\mathbf Q}_j=\left(\mathbf{I}+\tilde{m}^{j}_{p}\left(-1\right)\mathbf{\Sigma}_j\mathbf{M}_j^{-1}\right)^{-1},$
with $\tilde{m}^{j}_{p}\left(-1\right)$ being the unique positive solution of
$$
\frac{1}{\tilde{m}^{j}_{p}\left(-1\right)}=1+\frac{1}{n}\sum_{i=1}^{d}\frac{\sigma_{i}^j({m_{i}^{j}})^{-1}}{1+\tilde{m}^{j}_{p}\left(-1\right)\sigma_{i}^j({m_{i}^{j}})^{-1}},$$
and,
\begin{align*} 
&\tilde{\rho}_{j}=\frac{1}{n}\mathrm{Tr}\left[\mathbf{\Sigma}_{j}\mathbf{M}_{j}^{-2}\left[\left[\prod_{m=j}^{T}\bar{\mathbf{Q}}_{m}\right]^{2}+\sum_{i=j+1}^{T}\mathbf{\Sigma}_{i}\left[\prod_{r=j}^{i}\bar{\mathbf{Q}}_{r}\right]^{2}\tilde{\mu}_{i}\tilde{\rho}_{i}\right]\right]\\&\tilde{\mu}_{j}=\left[\left[1+\frac{1}{n}\mathrm{Tr}\left(\mathbf{\Sigma}_{j}\mathbf{M}_{j}^{-1}\bar{\mathbf{Q}}_{j}\right)\right]^{2}-\frac{1}{n}\mathrm{Tr}\left(\mathbf{\Sigma}_{j}\mathbf{M}_{j}^{-1}\bar{\mathbf{Q}}_{j}\right)^{2}\right]^{-1}.
\end{align*} 
Here $\sigma_i^j,m_i^j$ denote respectively, the $i$'th eigenvalues of the matrices $\mSigma_j, \mathbf M_j$. 
\end{theorem}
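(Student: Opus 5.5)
The plan is to repeat the proof of \cref{thm:4} with task-dependent $\mathbf M_t$ and $\mathbf\Sigma_t$, keeping track of the indices.

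\textbf{Step 1: unroll the recursion.} By \cref{lem:4} with $\mathbf M$ replaced by $\mathbf M_t$, and writing $\mathbf P_t=(\frac1n\mathbf X_t^\top\mathbf X_t+\mathbf M_t)^{-1}$, we have
\[
\mathbf w_T-\mathbf w^\star=\Big[\prod_{m=T}^{1}\mathbf P_m\mathbf M_m\Big](\mathbf w_0-\mathbf w^\star)+\sum_{k=1}^T\Big[\prod_{m=T}^{k+1}\mathbf P_m\mathbf M_m\Big]\mathbf P_k\tfrac1n\mathbf\Sigma_k^{1/2}\mathbf Z_k^\top\mathbf z_k .
\]
We expand $G_T^{\mathrm{gen}}=\Vert\mathbf w_T-\mathbf w^\star\Vert^2$ and use the weak law of large numbers, as in \cref{eq:22}. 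This removes all cross terms between the signal and the noise, and all noise terms with $k\neq k'$, because the noise vectors are independent and centered. We are left with a signal quadratic form and a sum of $T$ diagonal noise traces.

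\textbf{Step 2: backward recurrence.} Define
\[
\mathbf A_k=\mathbf M_k^{1/2}\mathbf P_k\Big[\prod_{m=T}^{k+1}\mathbf P_m\mathbf M_m\Big]^\top\Big[\prod_{m=T}^{k+1}\mathbf P_m\mathbf M_m\Big]\mathbf P_k\mathbf M_k^{1/2},
\]
so that $\mathbf M_k^{1/2}\mathbf P_k\mathbf M_k^{1/2}=\mathbf Q_k(-1)$ is a resolvent of $\frac1n\mathbf M_k^{-1/2}\mathbf\Sigma_k^{1/2}\mathbf Z_k^\top\mathbf Z_k\mathbf\Sigma_k^{1/2}\mathbf M_k^{-1/2}$. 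The middle block is independent of $\mathbf Z_k$ and has bounded operator norm, since $\Vert\mathbf P_m\mathbf M_m\Vert\le u/l$ along with similar bounds. It is also symmetric after conjugating by $\mathbf M_{k+1}^{1/2}$.

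We then apply \cref{thm:romain} with covariance $\mathbf\Sigma_k\mathbf M_k^{-1}$, which is diagonal, bounded and commuting. This gives $\bar{\mathbf Q}_k$ together with the fixed-point equation for $\tilde m_p^k(-1)$. Next we apply \cref{thm:zhao_resolvent}, which yields
\[
\mathbf A_k\asymp\bar{\mathbf Q}_k\mathbf B_{k+1}\bar{\mathbf Q}_k+\tilde\mu_k\tilde\rho_k\bar{\mathbf Q}_k\mathbf\Sigma_k\mathbf M_k^{-1}\bar{\mathbf Q}_k ,
\]
where $\mathbf B_{k+1}$ is the middle block, expressed through $\mathbf M_{k+1}^{1/2}\mathbf A_{k+1}\mathbf M_{k+1}^{1/2}$ conjugated by $\mathbf M_k^{-1/2}$. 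Because every matrix involved is diagonal, all factors commute, and iterating from $k=T$ down to $k=1$ gives the bracketed expression with products $\prod\bar{\mathbf Q}_r$ and weights $\tilde\mu_i\tilde\rho_i$. The coefficients $\tilde\rho_j$ come out as normalized traces of the deeper equivalent, via \cref{prop:1}, which is exactly the implicit backward definition in the statement.

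\textbf{Step 3: noise terms.} For each $k$ we use the identity $\frac1n\mathbf P_k\mathbf\Sigma_k^{1/2}\mathbf Z_k^\top\mathbf Z_k\mathbf\Sigma_k^{1/2}\mathbf P_k=\mathbf P_k-\mathbf P_k\mathbf M_k\mathbf P_k$. The term $\mathbf P_k\mathbf M_k\mathbf P_k=\mathbf M_k^{-1/2}\mathbf Q_k^2\mathbf M_k^{-1/2}$ is evaluated by \cref{thm:zhao_resolvent} with $\mathbf A=\mathbf I$. The remaining factor depends only on tasks $k+1,\dots,T$ and is replaced by the Step 2 equivalent, started at index $k+1$. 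Since the two factors are independent, we can condition on one, then use \cref{def:1} and the trace property in \cref{prop:1} to pass to the product of deterministic equivalents.

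\textbf{Main obstacle.} I expect the hardest part to be the bookkeeping, not new analysis. Two things need care:
\begin{itemize}
\item with $\mathbf M_k\neq\mathbf M_{k+1}$, the inner matrix must be reconjugated correctly so that \cref{thm:zhao_resolvent} sees a symmetric bounded $\mathbf A$, and the boundary terms $i=1$ and $i=k+1$ must be placed correctly;
\item composing $\asymp$ along a chain of length $T$ needs justification; since $T$ is fixed, finitely many applications of \cref{prop:1} suffice.
\end{itemize}
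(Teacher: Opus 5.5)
Your proposal is essentially the paper's proof. It uses the same unrolling via \cref{lem:4}, the same removal of noise cross terms, the same one-resolvent-at-a-time backward recurrence of diagonal deterministic equivalents via \cref{thm:romain} and \cref{thm:zhao_resolvent}, and the same noise treatment through $\frac1n\mathbf P_k\mathbf X_k^\top\mathbf X_k\mathbf P_k=\mathbf P_k-\mathbf P_k\mathbf M_k\mathbf P_k$ plus independence of the two factors; your $\mathbf A_k$ only differs by keeping the outer $\mathbf P_k$ factors, as in the proof of \cref{thm:4}.

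When you do the boundary bookkeeping you flagged, you will get the following: $\tilde\rho_1\tilde\mu_1\mathbf\Sigma_1\bar{\mathbf Q}_1^2$ and $\tilde\rho_{k+1}\tilde\mu_{k+1}\mathbf\Sigma_{k+1}\bar{\mathbf Q}_{k+1}^2$, which are just the $i=1$ and $i=k+1$ summands; the noise factor $\frac1n\mathrm{Tr}[\mathbf\Sigma_k\mathbf M_k^{-1}\bar{\mathbf Q}_k^2]\tilde\mu_k\mathbf\Sigma_k\mathbf M_k^{-1}$; and the bracket equal to $\mathbf I$ for $k=T$. This agrees with the paper's intermediate \cref{eq:26} and with \cref{thm:4}, so the mismatches with the displayed \cref{eq:46} (missing $\mathbf\Sigma_1$ and $\mathbf\Sigma_{k+1}$, and $\mathbf\Sigma_k^{-1}$ in place of $\mathbf\Sigma_k$) are typos in the statement, not errors in your argument.
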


\begin{proof}
For each $t\in [1,T]$, we are interested in solving 
\begin{align*}
\mathbf{w}^{\mathrm{gen}}_{t}=\min_{\mathbf{w}\in\mathbb{R}^{d}}\frac{1}{n}\left\Vert \mathbf{X}_{t}\mathbf{w}-\mathbf{y}\right\Vert _{2}^{2}+\left\Vert \mathbf{w}-\mathbf{w}^{\mathrm{gen}}_{t-1}\right\Vert^2_{\mathbf{M}_{t}}.
\end{align*}
By \cref{lem:4} we have,
\begin{align*}
\mathbf{w}^{\mathrm{gen}}_{t}=\left(\frac{1}{n}\mathbf{X}_{t}^{\top}\mathbf{X}_{t}+\mathbf{M}_{t}\right)^{-1}\left(\frac{1}{n}\mathbf{X}_{t}^{\top}\mathbf{y}_{t}+\mathbf{M}_{t}\mathbf{w}^{\mathrm{gen}}_{t-1}\right)
\end{align*}
so,
\begin{align*}
\mathbf{w}_{T}^{\mathrm{gen}}-\mathbf{w}^{\star}&=\left(\frac{1}{n}\mathbf{X}_{T}^{\top}\mathbf{X}_{T}+\mathbf{M}_{T}\right)^{-1}\left(\frac{1}{n}\mathbf{X}_{T}^{\top}\left(\mathbf{X}_{T}\mathbf{w}^{\star}+\mathbf{z}_{T}\right)+\mathbf{M}_{T}\mathbf{w}_{T-1}^{\mathrm{gen}}\right)-\mathbf{w}^{\star}\\&=\left(\frac{1}{n}\mathbf{X}_{T}^{\top}\mathbf{X}_{T}+\mathbf{M}_{T}\right)^{-1}\left(\frac{1}{n}\mathbf{X}_{T}^{\top}\left(\mathbf{X}_{T}\mathbf{w}^{\star}+\mathbf{z}_{T}\right)+\mathbf{M}_{T}\mathbf{w}_{T-1}^{\mathrm{gen}}-\left(\frac{1}{n}\mathbf{X}_{T}^{\top}\mathbf{X}_{T}+\mathbf{M}_{T}\right)\mathbf{w}^{\star}\right)\\&=\left(\frac{1}{n}\mathbf{X}_{T}^{\top}\mathbf{X}_{T}+\mathbf{M}_{T}\right)^{-1}\left(\frac{1}{n}\mathbf{X}_{T}^{\top}\mathbf{z}_{T}+\mathbf{M}_{T}\left(\mathbf{w}_{T-1}^{\mathrm{gen}}-\mathbf{w}^{\star}\right)\right)\end{align*}\begin{align*}&=\mathbf{P}_{T}\mathbf{M}_{T}\left(\mathbf{w}_{T-1}^{\mathrm{gen}}-\mathbf{w}^{\star}\right)+\mathbf{P}_{T}\frac{1}{n}\mathbf{X}_{T}^{\top}\mathbf{z}_{T}\\&=\ldots\\&=\left[\prod_{m=T}^{1}\mathbf{P}_{m}\mathbf{M}_{m}\right]\left(\mathbf{w}_{0}-\mathbf{w}^{\star}\right)+\sum_{k=1}^{T}\left[\prod_{m=T}^{k+1}\mathbf{P}_{m}\mathbf{M}_{m}\right]\mathbf{P}_{k}\frac{1}{n}\mathbf{X}_{k}^{\top}\mathbf{z}_{k},
\end{align*}
where $\mathbf{P}_{t}=\left(\frac{1}{n}\mathbf{X}_{t}^{\top}\mathbf{X}_{t}+\mathbf{M}_{t}\right)^{-1}=\mathbf{M}_{t}^{-\frac{1}{2}}\left(\mathbf{I}+\frac{1}{n}\mathbf{M}_{t}^{-\frac{1}{2}}\mathbf{X}_{t}^{\top}\mathbf{X}_{t}\mathbf{M}_{t}^{-\frac{1}{2}}\right)^{-1}\mathbf{M}_{t}^{-\frac{1}{2}}$ for all $t \in [1,T]$.

Note that $\X_t$ can be re-written as $\X_t=\mathbf Z_t\mSigma_t^{1/2}$ where $\mathbf Z_t$ has i.i.d. entries with zero mean and unit variance.
Substituting into $G_T^{\mathrm{gen}}=\left\Vert\mathbf{w}^{\mathrm{gen}}_{T}-\mathbf{w}^{\star}\right\Vert ^{2}$,
\begin{align}
G_T^{\mathrm{gen}}&=\left[\left[\prod_{m=T}^{1}\mathbf{P}_{m}\mathbf{M}_{m}\right]\left(\mathbf{w}_{0}-\mathbf{w}^{\star}\right)+\sum_{k=1}^{T}\left[\prod_{m=T}^{k+1}\mathbf{P}_{m}\mathbf{M}_{m}\right]\mathbf{P}_{k}\frac{1}{n}\mathbf{X}_{k}^{\top}\mathbf{z}_{k}\right]^{\top}\notag\\&\qquad\qquad\left[\left[\prod_{m=T}^{1}\mathbf{P}_{m}\mathbf{M}_{m}\right]\left(\mathbf{w}_{0}-\mathbf{w}^{\star}\right)+\sum_{k=1}^{T}\left[\prod_{m=T}^{k+1}\mathbf{P}_{m}\mathbf{M}_{m}\right]\mathbf{P}_{k}\frac{1}{n}\mathbf{X}_{k}^{\top}\mathbf{z}_{k}\right]\notag\\&=\underbrace{\left(\mathbf{w}_{0}-\mathbf{w}^{\star}\right)^{\top}\left[\prod_{m=T}^{1}\mathbf{P}_{m}\mathbf{M}_{m}\right]^{\top}\left[\prod_{m=T}^{1}\mathbf{P}_{m}\mathbf{M}_{m}\right]\left(\mathbf{w}_{0}-\mathbf{w}^{\star}\right)}_{\text{term 1}}\notag\\&\hspace{4em}+\underbrace{\frac{\labelnoise}{n^{2}}\sum_{k=1}^{T}\mathbf{z}_{k}^{\top}\mathbf{X}_{k}\mathbf{P}_{k}\left[\prod_{m=T}^{k+1}\mathbf{P}_{m}\mathbf{M}_{m}\right]^{\top}\sum_{k'=1}^{T}\left[\prod_{m=T}^{k'+1}\mathbf{P}_{m}\mathbf{M}_{m}\right]\mathbf{P}_{k'}\mathbf{X}_{k'}^{\top}\mathbf{z}_{k'}}_{\text{term 2}}
    \label{eq:25}.
\end{align}
Let $\mathbf{A}_{k}:=\mathbf{M}_{k}^{-\frac{1}{2}}\left[\prod_{m=T}^{k+1}\mathbf{P}_{m}\mathbf{M}_{m}\right]^{\top}\left[\prod_{m=T}^{k+1}\mathbf{P}_{m}\mathbf{M}_{m}\right]\mathbf{M}_{k}^{-\frac{1}{2}}$.
Using this definition we can find the equivalent to $\left[\prod_{m=T}^{1}\mathbf{P}_{m}\mathbf{M}_{m}\right]^{\top}\left[\prod_{m=T}^{1}\mathbf{P}_{m}\mathbf{M}_{m}\right]$, in the following manner,
\begin{equation}
\label{eq:24}
\begin{aligned}
&\left[\prod_{m=T}^{1}\mathbf{P}_{m}\mathbf{M}_{m}\right]^{\top}\left[\prod_{m=T}^{1}\mathbf{P}_{m}\mathbf{M}_{m}\right]=\mathbf{M}_{1}\left[\mathbf{P}_{1}\ldots\mathbf{M}_{T}\mathbf{P}_{T}\mathbf{P}_{T}\ldots\mathbf{P}_{1}\right]\mathbf{M}_{1}&\\&=\mathbf{M}_{1}^{\frac{1}{2}}\left[\underbrace{\left[\mathbf{M}_{1}^{\frac{1}{2}}\mathbf{P}_{1}\mathbf{M}_{1}^{\frac{1}{2}}\right]}_{\mathbf{Q}_{1}(-1)}\underbrace{\mathbf{M}_{1}^{-\frac{1}{2}}\mathbf{M}_{2}\ldots\mathbf{M}_{T}\mathbf{P}_{T}\mathbf{P}_{T}\mathbf{M}_{T}\ldots\mathbf{M}_{2}\mathbf{M}_{1}^{-\frac{1}{2}}}_{\mathbf{A}_{1}}\underbrace{\left[\mathbf{M}_{1}^{\frac{1}{2}}\mathbf{P}_{1}\mathbf{M}_{1}^{\frac{1}{2}}\right]}_{\mathbf{Q}_{1}(-1)}\right]\mathbf{M}_{1}^{\frac{1}{2}}
\end{aligned}
\end{equation}
where we let $\mathbf Q_t(z):=\left(\frac{1}{n}\mathbf{M}_t^{-\frac{1}{2}}\mathbf{\Sigma}_{t}^{\frac{1}{2}}\mathbf{Z}^{\top}\mathbf{Z}\mathbf{\Sigma}_{t}^{\frac{1}{2}}\mathbf{M}_t^{-\frac{1}{2}}-z\mathbf I\right)^{-1}$ with $\mathbf Z$ having i.i.d. entries with zero mean and unit variance, then by \cref{thm:romain}, $\mathbf Q_t(-1)$ admits a deterministic equivalent, where,
\begin{align*}
\mathbf Q_t\left(-1\right)\asymp\bar{\mathbf Q}_t\left(-1\right)=\left(\mathbf{I}+\tilde{m}^t_{p}\left(-1\right)\mathbf{\Sigma}_t\mathbf{M}_t^{-1}\right)^{-1}\text{ and }\frac{1}{\tilde{m}^{t}_{p}\left(-1\right)}=1+\frac{1}{n}\sum_{i=1}^{d}\frac{\sigma_{i}^t({m_{i}^{t}})^{-1}}{1+\tilde{m}^{t}_{p}\left(-1\right)\sigma_{i}^t({m_{i}^{t}})^{-1}}.
\end{align*}

Here $\sigma_i^t,m_i^t$ denote respectively, the $i$'th eigenvalues of the matrices $\mSigma_t, \mathbf M_t$. For notational convenience, we set
$\mathbf Q_t:= \mathbf Q_t(-1)$, and $\bar{\mathbf Q}_t:=\bar {\mathbf Q}_t(-1)$. Both matrices are diagonal since both $\mathbf \Sigma_t$ and $\mathbf M_t$ are diagonal for all $t\in [1,T]$. Moreover, as previously mentioned, we note that under our standing assumption, we have $\sigma^t_i=\featnoise$, for all $i$ and $t$. Nevertheless, our analysis accommodates non-isotropic covariance matrices.

Using \cref{thm:zhao_resolvent}, we get,
\begin{align*}
&\mathbf{A}_{k}=\mathbf{M}_{k}^{-\frac{1}{2}}\left[\prod_{m=T}^{k+1}\mathbf{P}_{m}\mathbf{M}_{m}\right]^{\top}\left[\prod_{m=T}^{k+1}\mathbf{P}_{m}\mathbf{M}_{m}\right]\mathbf{M}_{k}^{-\frac{1}{2}}&\\\asymp&\mathbf{M}_{k}^{-\frac{1}{2}}\mathbf{M}_{k+1}^{\frac{1}{2}}\left(\bar{\mathbf{Q}}_{k+1}\mathbf{A}_{k+1}\bar{\mathbf{Q}}_{k+1}+\tilde{\mu}_{k+1}\tilde{\rho}_{k+1}\mathbf{D}_{k+1}\right)\mathbf{M}_{k+1}^{\frac{1}{2}}\mathbf{M}_{k}^{-\frac{1}{2}}\end{align*}\begin{align*}
\asymp&\mathbf{M}_{k}^{-\frac{1}{2}}\mathbf{M}_{k+1}^{\frac{1}{2}}\Big(\bar{\mathbf{Q}}_{k+1}\Big(\mathbf{M}_{k+1}^{-\frac{1}{2}}\mathbf{M}_{k+2}^{\frac{1}{2}}\Big(\bar{\mathbf{Q}}_{k+2}\mathbf{A}_{k+2}\bar{\mathbf{Q}}_{k+2}\\&\hspace{8em}+\tilde{\mu}_{k+2}\tilde{\rho}_{k+2}\mathbf{D}_{k+2}\Big)\mathbf{M}_{k+2}^{\frac{1}{2}}\mathbf{M}_{k+1}^{-\frac{1}{2}}\Big)\bar{\mathbf{Q}}_{k+1}+\tilde{\mu}_{k+1}\tilde{\rho}_{k+1}\mathbf{D}_{k+1}\Big)\mathbf{M}_{k+1}^{\frac{1}{2}}\mathbf{M}_{k}^{-\frac{1}{2}}\\\asymp&\mathbf{M}_{k}^{-\frac{1}{2}}\mathbf{M}_{T}^{\frac{1}{2}}\left[\prod_{m=k+1}^{T}\bar{\mathbf{Q}}_{m}\right]\underbrace{\mathbf{A}_{T}}_{\mathbf{M}_{T}^{-1}}\left[\prod_{m=k+1}^{T}\bar{\mathbf{Q}}_{m}\right]\mathbf{M}_{T}^{\frac{1}{2}}\mathbf{M}_{k}^{-\frac{1}{2}}\\&\hspace{6em}+\mathbf{M}_{k}^{-\frac{1}{2}}\sum_{i=k+1}^{T}\mathbf{M}_{i}^{\frac{1}{2}}\left[\prod_{r=k+1}^{i-1}\bar{\mathbf{Q}}_{r}\right]\tilde{\mu}_{i}\tilde{\rho}_{i}\mathbf{D}_{i}\left[\prod_{r=k+1}^{i-1}\bar{\mathbf{Q}}_{r}\right]\mathbf{M}_{i}^{\frac{1}{2}}\mathbf{M}_{k}^{-\frac{1}{2}},
\end{align*}
where 
\begin{align*}   
\tilde{\rho}_{k}&=\frac{1}{n}\mathrm{Tr}\left[\mathbf{\mathbf{\Sigma}}_{k}\mathbf{M}_{k}^{-1}\bar{\mathbf{Q}}_{k}\mathbf{A}_{k}\bar{\mathbf{Q}}_{k}\right],\\\tilde{\mu}_{k}&=\left[\left[1+\frac{1}{n}\mathrm{Tr}\left(\mathbf{\mathbf{\Sigma}}_{k}\mathbf{M}_{k}^{-1}\bar{\mathbf{Q}}_{k}\right)\right]^{2}-\frac{1}{n}\mathrm{Tr}\left(\mathbf{\mathbf{\Sigma}}_{k}\mathbf{M}_{k}^{-1}\bar{\mathbf{Q}}_{k}\right)^{2}\right]^{-1},\\\mathbf{D}_{k}&=\bar{\mathbf{Q}}_{k}\mathbf{\mathbf{\Sigma}}_{k}\mathbf{M}_{k}^{-1}\bar{\mathbf{Q}}_{k}.
\end{align*}

Substituting back into \cref{eq:24} and using \cref{prop:1},
\begin{align}
&\left[\prod_{m=T}^{k}\mathbf{P}_{m}\mathbf{M}_{m}\right]^{\top}\left[\prod_{m=T}^{k}\mathbf{P}_{m}\mathbf{M}_{m}\right]=\mathbf{M}_{k}^{\frac{1}{2}}\mathbf{Q}_{k}\mathbf{A}_{k}\mathbf{Q}_{k}\mathbf{M}_{k}^{\frac{1}{2}}\notag\\&\hspace{4em}\asymp\mathbf{M}_{k}^{\frac{1}{2}}\left[\bar{\mathbf{Q}}_{k}\mathbf{A}_{k}\bar{\mathbf{Q}}_{k}+\tilde{\rho}_{k}\tilde{\mu}_{k}\bar{\mathbf{Q}}_{k}\mathbf{\Sigma}_{k}\mathbf{M}_{k}^{-1}\bar{\mathbf{Q}}_{k}\right]\mathbf{M}_{k}^{\frac{1}{2}}\notag\\&\hspace{8em}\asymp\left[\prod_{m=k}^{T}\bar{\mathbf{Q}}_{m}\right]^{2}+\sum_{i=k+1}^{T}\left[\prod_{r=k}^{i}\bar{\mathbf{Q}}_{r}\right]^{2}\mathbf{\Sigma}_{i}\tilde{\mu}_{i}\tilde{\rho}_{i}+\tilde{\rho}_{k}\tilde{\mu}_{k}\mathbf{\Sigma}_{k}\bar{\mathbf{Q}}_{k}^{2}.
\label{eq:26}
\end{align}
Additionally, using \cref{prop:1}, we have, 
\begin{align*}
\tilde{\rho}_{k}&=\frac{1}{n}\mathrm{Tr}\left[\mathbf{\Sigma}_{k}\mathbf{M}_{k}^{-1}\bar{\mathbf{Q}}_{k}\mathbf{A}_{k}\bar{\mathbf{Q}}_{k}\right]=\frac{1}{n}\mathrm{Tr}\left[\mathbf{M}_{k}^{-1}\bar{\mathbf{Q}}_{k}^{2}\mathbf{A}_{k}\right]\\&\to\frac{1}{n}\mathrm{Tr}\left[\mathbf{\Sigma}_{k}\mathbf{M}_{k}^{-2}\left[\left[\prod_{m=k}^{T}\bar{\mathbf{Q}}_{m}\right]^{2}+\sum_{i=k+1}^{T}\mathbf{\Sigma}_{i}\left[\prod_{r=k}^{i}\bar{\mathbf{Q}}_{r}\right]^{2}\tilde{\mu}_{i}\tilde{\rho}_{i}\right]\right].
\end{align*}
almost surely.

And term 2 in \cref{eq:25} is,
\begin{align*}
&\frac{1}{n^{2}}\left[\sum_{k,k'=1}^{T}\mathbf{z}_{k}^{\top}\mathbf{W}_{k}\mathbf{\Sigma}_{k}^{\frac{1}{2}}\mathbf{P}_{k}\left[\prod_{m=T}^{k+1}\mathbf{P}_{m}\mathbf{M}_{m}\right]^{\top}\left[\prod_{m=T}^{k'+1}\mathbf{P}_{m}\mathbf{M}_{m}\right]\mathbf{P}_{k'}\mathbf{\Sigma}_{k'}^{\frac{1}{2}}\mathbf{W}_{k'}^{\top}\mathbf{z}_{k'}\right]&\\&\stackrel{\explain{1}}{\xrightarrow[]{P}}\frac{1}{n^{2}}\left[\sum_{k=1}^{T}\mathbf{z}_{k}^{\top}\mathbf{W}_{k}\mathbf{\Sigma}_{k}^{\frac{1}{2}}\mathbf{P}_{k}\left[\prod_{m=T}^{k+1}\mathbf{P}_{m}\mathbf{M}_{m}\right]^{\top}\left[\prod_{m=T}^{k+1}\mathbf{P}_{m}\mathbf{M}_{m}\right]\mathbf{P}_{k}\mathbf{\Sigma}_{k}^{\frac{1}{2}}\mathbf{W}_{k}^{\top}\mathbf{z}_{k}\right]\\&=\frac{1}{n^{2}}\left[\sum_{k=1}^{T}\mathrm{Tr}\left[\mathbf{z}_{k}^{\top}\mathbf{W}_{k}\mathbf{\Sigma}_{k}^{\frac{1}{2}}\mathbf{P}_{k}\left[\prod_{m=T}^{k+1}\mathbf{P}_{m}\mathbf{M}_{m}\right]^{\top}\left[\prod_{m=T}^{k+1}\mathbf{P}_{m}\mathbf{M}_{m}\right]\mathbf{P}_{k}\mathbf{\Sigma}_{k}^{\frac{1}{2}}\mathbf{W}_{k}^{\top}\mathbf{z}_{k}\right]\right]\\&=\frac{1}{n^{2}}\left[\sum_{k=1}^{T}\mathrm{Tr}\left[\mathbf{z}_{k}\mathbf{z}_{k}^{\top}\mathbf{W}_{k}\mathbf{\Sigma}_{k}^{\frac{1}{2}}\mathbf{P}_{k}\left[\prod_{m=T}^{k+1}\mathbf{P}_{m}\mathbf{M}_{m}\right]^{\top}\left[\prod_{m=T}^{k+1}\mathbf{P}_{m}\mathbf{M}_{m}\right]\mathbf{P}_{k}\mathbf{\Sigma}_{k}^{\frac{1}{2}}\mathbf{W}_{k}^{\top}\right]\right]\\&=\frac{1}{n^{2}}\sum_{k=1}^{T}\mathrm{Tr}\left[\mathbf{z}_{k}\mathbf{z}_{k}^{\top}\left[\mathbf{W}_{k}\mathbf{\Sigma}_{k}^{\frac{1}{2}}\mathbf{P}_{k}\left[\prod_{m=T}^{k+1}\mathbf{P}_{m}\mathbf{M}_{m}\right]^{\top}\left[\prod_{m=T}^{k+1}\mathbf{P}_{m}\mathbf{M}_{m}\right]\mathbf{P}_{k}\mathbf{\Sigma}_{k}^{\frac{1}{2}}\mathbf{W}_{k}^{\top}\right]\right]\end{align*}\begin{align*}&\stackrel{\explain{2}}{\xrightarrow[]{P}}\frac{\labelnoise}{n^{2}}\left[\sum_{k=1}^{T}\mathrm{Tr}\left[\mathbf{W}_{k}\mathbf{\Sigma}_{k}^{\frac{1}{2}}\mathbf{P}_{k}\left[\prod_{m=T}^{k+1}\mathbf{P}_{m}\mathbf{M}_{m}\right]^{\top}\left[\prod_{m=T}^{k+1}\mathbf{P}_{m}\mathbf{M}_{m}\right]\mathbf{P}_{k}\mathbf{\Sigma}_{k}^{\frac{1}{2}}\mathbf{W}_{k}^{\top}\right]\right]\\&=\frac{\labelnoise}{n^{2}}\left[\sum_{k=1}^{T}\mathrm{Tr}\left[\mathbf{P}_{k}\mathbf{\Sigma}_{k}^{\frac{1}{2}}\mathbf{W}_{k}^{\top}\mathbf{W}_{k}\mathbf{\Sigma}_{k}^{\frac{1}{2}}\mathbf{P}_{k}\left[\prod_{m=T}^{k+1}\mathbf{P}_{m}\mathbf{M}_{m}\right]^{\top}\left[\prod_{m=T}^{k+1}\mathbf{P}_{m}\mathbf{M}_{m}\right]\right]\right]\\&=\frac{\labelnoise}{n^{2}}\mathrm{Tr}\left[\sum_{k=1}^{T}\left[\mathbf{P}_{k}\mathbf{\Sigma}_{k}^{\frac{1}{2}}\mathbf{W}_{k}^{\top}\mathbf{W}_{k}\mathbf{\Sigma}_{k}^{\frac{1}{2}}\mathbf{P}_{k}\right]\left[\left[\prod_{m=T}^{k+1}\mathbf{P}_{m}\mathbf{M}_{m}\right]^{\top}\left[\prod_{m=T}^{k+1}\mathbf{P}_{m}\mathbf{M}_{m}\right]\right]\right]
\end{align*}
where [1] and [2] follows since the noise $\mathbf z_k$ is sample independently of the data matrices $\mathbf X_1,\ldots, \mathbf X_T$ and $\mathbb E[\mathbf z_k\mathbf z_k^\top]=\labelnoise\mathbf I$ with the weak law of large numbers. 

We already know the equivalent corresponding to $\left[\prod_{m=T}^{k+1}\mathbf{P}_{m}\mathbf{M}_{m}\right]^{\top}\left[\prod_{m=T}^{k+1}\mathbf{P}_{m}\mathbf{M}_{m}\right]$ from \cref{eq:26}. Hence, we now focus on the remaining term. 
\begin{align*}
&\mathbf{P}_{k}\mathbf{\Sigma}_{k}^{\frac{1}{2}}\mathbf{W}_{k}^{\top}\mathbf{W}_{k}\mathbf{\Sigma}_{k}^{\frac{1}{2}}\mathbf{P}_{k}&\\&=n^{2}\left(\mathbf{\Sigma}_{k}^{\frac{1}{2}}\mathbf{W}_{k}^{\top}\mathbf{W}_{k}\mathbf{\Sigma}_{k}^{\frac{1}{2}}+n\mathbf{M}_{k}\right)^{-1}\left(\mathbf{\Sigma}_{k}^{\frac{1}{2}}\mathbf{W}_{k}^{\top}\mathbf{W}_{k}\mathbf{\Sigma}_{k}^{\frac{1}{2}}+n\mathbf{M}_{k}-n\mathbf{M}_{k}\right)\left(\mathbf{\Sigma}_{k}^{\frac{1}{2}}\mathbf{W}_{k}^{\top}\mathbf{W}_{k}\mathbf{\Sigma}_{k}^{\frac{1}{2}}+n\mathbf{M}_{k}\right)^{-1}\\&=n\left(\mathbf{P}_{k}-\mathbf{P}_{k}\mathbf{M}_{k}\mathbf{P}_{k}\right).
\end{align*}
Thus,
\begin{align*}
&\overline{\frac{1}{n}\mathbf{P}_{k}\mathbf{\Sigma}_{k}^{\frac{1}{2}}\mathbf{W}_{k}^{\top}\mathbf{W}_{k}\mathbf{\Sigma}_{k}^{\frac{1}{2}}\mathbf{P}_{k}}=\mathbf{M}_{k}^{-\frac{1}{2}}\bar{\mathbf{Q}}_{k}\mathbf{M}_{k}^{-\frac{1}{2}}-\overline{\mathbf{M}_{k}^{-\frac{1}{2}}{\mathbf{Q}}_{k}^{2}\mathbf{M}_{k}^{-\frac{1}{2}}}\\&=\mathbf{M}_{k}^{-\frac{1}{2}}\Bigg(\bar{\mathbf{Q}}_{k}-\Big[\bar{\mathbf{Q}}_{k}^{2}+\frac{1}{n}\mathrm{Tr}\left[\mathbf{\Sigma}_{k}\mathbf{M}_{k}^{-1}\bar{\mathbf{Q}}_{k}^{2}\right]\\&\hspace{10em}\left[\left[1+\frac{1}{n}\mathrm{Tr}\left(\mathbf{\Sigma}_{k}\mathbf{M}_{k}^{-1}\bar{\mathbf{Q}}_{k}\right)\right]^{2}-\frac{1}{n}\mathrm{Tr}\left(\mathbf{\Sigma}_{k}\mathbf{M}_{k}^{-1}\bar{\mathbf{Q}}_{k}\right)^{2}\right]^{-1}\mathbf{\Sigma}_{k}\mathbf{M}_{k}^{-1}\bar{\mathbf{Q}}_{k}^{2}\Big]\Bigg)\mathbf{M}_{k}^{-\frac{1}{2}}.
\end{align*}
Since both $\left[\prod_{m=T}^{k+1}\mathbf{P}_{m}\mathbf{M}_{m}\right]^{\top}\left[\prod_{m=T}^{k+1}\mathbf{P}_{m}\mathbf{M}_{m}\right]$ and $\mathbf{P}_{k}\mathbf{\Sigma}_{k}^{\frac{1}{2}}\mathbf{Z}_{k}^{\top}\mathbf{Z}_{k}\mathbf{\Sigma}_{k}^{\frac{1}{2}}\mathbf{P}_{k}$ have almost surely finite operator norm, then by the properties of deterministic equivalent we have 
\begin{align*}
&\frac{\labelnoise}{n^{2}}\mathrm{Tr}\left[\sum_{k=1}^{T}\mathbf{P}_{k}\mathbf{\Sigma}_{k}^{\frac{1}{2}}\mathbf{Z}_{k}^{\top}\mathbf{Z}_{k}\mathbf{\Sigma}_{k}^{\frac{1}{2}}\mathbf{P}_{k}\left[\prod_{m=T}^{k+1}\mathbf{P}_{m}\mathbf{M}_{m}\right]^{\top}\left[\prod_{m=T}^{k+1}\mathbf{P}_{m}\mathbf{M}_{m}\right]\right]\\&\xrightarrow[]{P}
\frac{\labelnoise}{n}\mathrm{Tr}\left[\sum_{k=1}^{T}\overline{\frac{1}{n}\mathbf{P}_{k}\mathbf{\Sigma}_{k}^{\frac{1}{2}}\mathbf{Z}_{k}^{\top}\mathbf{Z}_{k}\mathbf{\Sigma}_{k}^{\frac{1}{2}}\mathbf{P}_{k}}\left[\prod_{m=T}^{k+1}\mathbf{P}_{m}\mathbf{M}_{m}\right]^{\top}\left[\prod_{m=T}^{k+1}\mathbf{P}_{m}\mathbf{M}_{m}\right]\right]\\&\xrightarrow[]{P}
\frac{\labelnoise}{n}\mathrm{Tr}\left[\sum_{k=1}^{T}\overline{\frac{1}{n}\mathbf{P}_{k}\mathbf{\Sigma}_{k}^{\frac{1}{2}}\mathbf{Z}_{k}^{\top}\mathbf{Z}_{k}\mathbf{\Sigma}_{k}^{\frac{1}{2}}\mathbf{P}_{k}}\overline{\left[\prod_{m=T}^{k+1}\mathbf{P}_{m}\mathbf{M}_{m}\right]^{\top}\left[\prod_{m=T}^{k+1}\mathbf{P}_{m}\mathbf{M}_{m}\right]}\right],
\end{align*}
where in each step we condition on the event that each one of the matrices has finite operator norm and then use \cref{def:1}.
Summing the terms, we get
\begin{align*}
&G_{T}^{\mathrm{gen}}\xrightarrow{P}\left(\mathbf{w}_{0}-\mathbf{w}^{\star}\right)^{\top}\left[\left[\prod_{m=1}^{T}\bar{\mathbf{Q}}_{m}\right]^{2}+\sum_{i=2}^{T}\mathbf{\Sigma}_{i}\left[\prod_{r=1}^{i}\bar{\mathbf{Q}}_{r}\right]^{2}\tilde{\mu}_{i}\tilde{\rho}_{i}+\tilde{\rho}_{1}\tilde{\mu}_{1}\bar{\mathbf{Q}}_{1}^{2}\right]\left(\mathbf{w}_{0}-\mathbf{w}^{\star}\right)\\&\hspace{6em}+\frac{\labelnoise}{n}\mathrm{Tr}\Bigg[\sum_{k=1}^{T}\bar{\mathbf{Q}}_{k}\mathbf{M}_{k}^{-1}\left[\mathbf{I}-\bar{\mathbf{Q}}_{k}\left[\mathbf{I}+\frac{1}{n}\mathrm{Tr}\left[\mathbf{\Sigma}_{k}^{-1}\mathbf{M}_{k}^{-1}\bar{\mathbf{Q}}_{k}^{2}\right]\tilde{\mu}_{k}\mathbf{M}_{k}^{-1}\right]\right]\\&\hspace{12em}\left[\left[\prod_{m=k+1}^{T}\bar{\mathbf{Q}}_{m}\right]^{2}+\sum_{i=k+2}^{T}\mathbf{\Sigma}_{i}\left[\prod_{r=k+1}^{i}\bar{\mathbf{Q}}_{r}\right]^{2}\tilde{\mu}_{i}\tilde{\rho}_{i}+\tilde{\rho}_{k+1}\tilde{\mu}_{k+1}\bar{\mathbf{Q}}_{k+1}^{2}\right]\Bigg].
\end{align*}
\end{proof}
\subsection{Proof of \cref{thm:sparse}: A Failure of Isotropic Ridge Under Sparse Teachers}
\begin{recall}[\cref{thm:sparse}]
Assume that $\alpha:=\frac{n}{d}<1$. Let $\mathbf{w}_1^\star-\mathbf{w}_0$ be $k$-sparse with support $S$ so that $\mathbf{w}_1^\star\neq\mathbf{w}_0$, and assume that $\mathbf X_S$ (the submatrix of $\mathbf X$ consisting of the columns indexed by $S$) has full column rank, then as $n,d\to \infty$ after a single task with no noise, it holds almost surely that,
$$
\sup_{\mathbf M\in \mathcal M}\inf_{\lambda>0} \frac{G_1^{\mathrm{ridge}}}{G_1^{\mathrm{gen}}}=\infty.$$
where $\mathcal M=\bigcup_{0<l\leq u<\infty}
\left\{\mathbf M:l\mathbf I_d\preceq\mathbf M\preceq u\mathbf I_d\right\}$ and $u,l$ are constants independent of $n,d$.
\end{recall}
\begin{proof}
In this proof, we show that ridge has an error floor in the overparmaterized regime, while the generalized ridge can be optimized to sufficiently small error. 

Since $\mathbf y=\mathbf X\mathbf w^\star$, the ridge solution is
\[
\mathbf{w}_{1}^{\mathrm{ridge}}=\left(\frac{1}{n}\X^{\top}\X+\lambda\mathbf{I}_{d}\right)^{-1}\left(\frac{1}{n}\X^{\top}\X\mathbf{w}_{1}^{\star}+\lambda\mathbf{w}_{0}\right).
\]
Equivalently,
\begin{align*}  
\mathbf{w}_{1}^{\mathrm{ridge}}-\mathbf{w}_{1}^{\star}&=\left(\left(\frac{1}{n}\X^{\top}\X+\lambda\mathbf{I}_{d}\right)^{-1}\left(\frac{1}{n}\X^{\top}\X\mathbf{w}_{1}^{\star}+\lambda\mathbf{w}_{0}\right)-\mathbf{I}_{d}\right)\mathbf{w}_{1}^{\star}\\&=-\lambda\left(\frac{1}{n}\X^{\top}\X+\lambda\mathbf{I}_{d}\right)^{-1}\left(\mathbf{w}_{1}^{\star}-\mathbf{w}_{0}\right).
\end{align*}

Hence the squared estimation error is
\[
G_{1}^{\mathrm{ridge}}=\left\Vert \mathbf{w}_{1}^{\mathrm{ridge}}-\mathbf{w}^{\star}\right\Vert _{2}^{2}=\lambda^{2}\left(\mathbf{w}_{1}^{\star}-\mathbf{w}_{0}\right)^{\top}\left(\frac{1}{n}\X^{\top}\X+\lambda\mathbf{I}_{d}\right)^{-2}\left(\mathbf{w}_{1}^{\star}-\mathbf{w}_{0}\right).
\]
where $m_\alpha$ is the Stieltjes transform of the MP law with aspect ratio $\alpha$.
Therefore, almost surely, as $d\to\infty$ by the Marchenko-Pastur law,
\[
G_{1}^{\mathrm{ridge}}=\Vert\mathbf{w}_{1}^{\mathrm{ridge}}-\mathbf{w}^{\star}\Vert_{2}^{2}\to\frac{\lambda^{2}}{\featnoise^{2}}m_{\alpha}'\left(-\frac{\lambda}{\featnoise}\right)\left\Vert \mathbf{w}^{\star}-\mathbf{w}_{0}\right\Vert ^{2}.
\]
$\frac{\lambda^{2}}{\featnoise^{2}}m_{\alpha}'\left(-\frac{\lambda}{\featnoise}\right)=\tfrac{1}{2}\biggprn{1-\ratio+\frac{\lambda(1+\ratio)+\ratio\featnoise(1-\ratio)^{2}}{\sqrt{\lambda^{2}+\frac{2\lambda\featnoise\prn{1+\ratio}}{\ratio}+\frac{\featnoise^{2}\left(1-\ratio\right)^{2}}{\ratio^{2}}}}}$ is monotonic increasing as function of $\lambda$ as established in~\citet[Lemma E.3]{karpel2026optimal}, therefore $\frac{\lambda^2}{\featnoise^2}m_\alpha'\left(-\frac{\lambda}{\featnoise}\right)$ is bounded below by $1-\ratio > 0$, which implies that $\min_\lambda G^{\mathrm{ridge}}_1 > (1-\ratio)\left\Vert \mathbf{w_1}^{\star}-\mathbf w_0\right\Vert ^{2}$, almost surely, as $d\to \infty$.

Write $\mathbf P=\tfrac1n\mathbf X^\top \mathbf X$ and partition, 
\[
\mathbf{P}=\begin{pmatrix}\mathbf{P}_{SS} & \mathbf{P}_{SS^{c}}\\
\mathbf{P}_{S^{c}S} & \mathbf{P}_{S^{c}S^{c}}
\end{pmatrix},\qquad\mathbf{M}_{\tilde{\lambda},\psi}=\begin{pmatrix}\tilde{\lambda}\mathbf{I} & \mathbf{0}\\
\mathbf{0} & \psi\mathbf{I}
\end{pmatrix}.
\]
Without loss of generality, we assume $S=\{1,\dots,k\}$; otherwise, apply a permutation of coordinates. Since $\mathbf y=\mathbf X\mathbf w_1^\star$ with
$\mathbf w_1^\star-\mathbf{w}_0=(\left(\mathbf w_{1}^\star\right)_{S}-(\mathbf{w}_0)_S,\mathbf 0)$, we have
\[
\tfrac{1}{n}\mathbf{X}^{\top}(\mathbf{y}-\mathbf{X}\mathbf{w}_{0})=\mathbf{P}(\mathbf{w}_{1}^{\star}-\mathbf{w}_{0})=\binom{\mathbf{P}_{SS}\left(\left(\mathbf{w}_{1}^{\star}\right)_{S}-(\mathbf{w}_{0})_{S}\right)}{\mathbf{P}_{S^{c}S}\left(\left(\mathbf{w}_{1}^{\star}\right)_{S}-(\mathbf{w}_{0})_{S}\right)}.
\]
The estimator $\hat{\mathbf w}=(\hat{\mathbf w}_S,\hat{\mathbf w}_{S^c})$
satisfies
\[
(\mathbf{P}+\mathbf{M}_{\tilde{\lambda},\psi})(\hat{\mathbf{w}}-\mathbf{w}_{0})=\mathbf{P}(\mathbf{w}_{1}^{\star}-\mathbf{w}_{0}),
\]
that is,
\begin{align*}
(\mathbf{P}_{SS}+\tilde{\lambda}\mathbf{I})\left(\hat{\mathbf{w}}_{S}-(\mathbf{w}_{0})_{S}\right)+\mathbf{P}_{SS^{c}}\left(\hat{\mathbf{w}}_{S^{c}}-(\mathbf{w}_{0})_{S^{c}}\right)&=\mathbf{P}_{SS}\left(\left(\mathbf{w}_{1}^{\star}\right)_{S}-(\mathbf{w}_{0})_{S}\right),\\\mathbf{P}_{S^{c}S}\left(\hat{\mathbf{w}}_{S}-(\mathbf{w}_{0})_{S}\right)+(\mathbf{P}_{S^{c}S^{c}}+\psi\mathbf{I})\left(\hat{\mathbf{w}}_{S^{c}}-(\mathbf{w}_{0})_{S^{c}}\right)&=\mathbf{P}_{S^{c}S}\left(\left(\mathbf{w}_{1}^{\star}\right)_{S}-(\mathbf{w}_{0})_{S}\right).
\end{align*}
Solving the second equation gives
\[
\hat{\mathbf{w}}_{S^{c}}-(\mathbf{w}_{0})_{S^{c}}=(\mathbf{P}_{S^{c}S^{c}}+\psi\mathbf{I})^{-1}\mathbf{P}_{S^{c}S}\left(\left(\mathbf{w}_{1}^{\star}\right)_{S}-(\mathbf{w}_{0})_{S}-\left(\hat{\mathbf{w}}_{S}-(\mathbf{w}_{0})_{S}\right)\right).
\]
Substituting into the first equation yields
\begin{align*}
&\left(\mathbf{P}_{SS}+\tilde{\lambda}\mathbf{I}-\mathbf{P}_{SS^{c}}(\mathbf{P}_{S^{c}S^{c}}+\psi\mathbf{I})^{-1}\mathbf{P}_{S^{c}S}\right)\left(\hat{\mathbf{w}}_{S}-(\mathbf{w}_{0})_{S}\right)\\&\qquad=\mathbf{P}_{SS}\left(\left(\mathbf{w}_{1}^{\star}\right)_{S}-(\mathbf{w}_{0})_{S}\right)-\mathbf{P}_{SS^{c}}(\mathbf{P}_{S^{c}S^{c}}+\psi\mathbf{I})^{-1}\mathbf{P}_{S^{c}S}\left(\left(\mathbf{w}_{1}^{\star}\right)_{S}-(\mathbf{w}_{0})_{S}\right).
\end{align*}

Since $\Vert(\mathbf P_{S^cS^c}+\psi\mathbf I)^{-1}\Vert_2\leq\psi^{-1}$, the terms involving
$(\mathbf P_{S^cS^c}+\psi\mathbf I)^{-1}$ vanish as $\psi\to\infty$, giving
\[
(\mathbf{P}_{SS}+\tilde{\lambda}\mathbf{I})\left(\hat{\mathbf{w}}_{S}-(\mathbf{w}_{0})_{S}\right)\to\mathbf{P}_{SS}\left(\left(\mathbf{w}_{1}^{\star}\right)_{S}-(\mathbf{w}_{0})_{S}\right),\qquad\hat{\mathbf{w}}_{S^{c}}-(\mathbf{w}_{0})_{S^{c}}\to\mathbf{0}.
\]
Thus
\[
\lim_{\psi\to\infty}\hat{\mathbf{w}}(\tilde{\lambda},\psi)=\mathbf{w}_{0}+\left((\mathbf{P}_{SS}+\tilde{\lambda}\mathbf{I})^{-1}\mathbf{P}_{SS}\left(\left(\mathbf{w}_{1}^{\star}\right)_{S}-(\mathbf{w}_{0})_{S}\right),\mathbf{0}\right).
\]
Since $\mathbf P_{SS}\succ0$, then
$(\mathbf P_{SS}+\tilde{\lambda}\mathbf I)^{-1}\mathbf P_{SS}\to\mathbf I$
as $\tilde{\lambda}\to0$, hence $\mathbf{w}_1\to\mathbf w^\star_1$, which implies that
$G_1^{\mathrm{gen}}\to0$, proving the claim.

In the statement of \cref{thm:sparse} the supremum is taken over the set of matrices which are spectrally bounded. This is an immediate consequence of the proof. 
For each $q\geq 1$ we may define the set $\mathcal M_q=\{\frac{1}{q}\mathbf I \preceq \mathbf M \preceq q\mathbf I\}\in \mathcal M$ for this choice for all finite $q$ we have that $\mathcal M_q$ is spectrally bounded. In the construction of the regularization matrix, we take $\psi\to\infty$ and $\tilde\lambda\to0$, this corresponds to taking $\tilde\lambda=1/q$ and $\psi=q$ . The resulting matrix belongs to $\mathcal M_q$ for every finite $q$. The proof shows that, along this sequence, $G_1^{\mathrm{gen}}\to0$, while the optimal isotropic-ridge loss remains bounded away from zero. Consequently, $\lim_{q\to\infty}\sup_{\mathbf M\in\mathcal M_q}\inf_{\lambda>0}\frac{G_1^{\mathrm{ridge}}}{G_1^{\mathrm{gen}}}=\infty.$
\end{proof}

\newpage

\subsection{Necessary and Sufficient Symmetry Conditions For a Guaranteed Optimal Isotropic Diagonal Regularizer}
\label{sec:nspr}
\begin{theorem}[Necessary and sufficient condition to a guaranteed optimal isotropic regularization]
Assume that the generalization loss admits a unique diagonal minimizer 
$\mathbf M^\star=\mathrm{diag}(m_1^\star,\dots,m_d^\star)$. 
Then $\mathbf M^\star=\lambda^\star \mathbf I_d$ for some $\lambda^\star>0$ 
if and only if, for each coordinate, the teacher is equally far from the initialization (i.e., $|w_1^\star-\left(w_{0}\right)_{1}|=|w_2^\star-\left(w_{0}\right)_{2}|=\ldots=|w_d^\star-\left(w_{0}\right)_{d}|$), either deterministically or with probability tending to 1 (as $n,d\to\infty$) for random teacher.
\end{theorem}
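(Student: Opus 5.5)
We work directly with the closed-form limit of \cref{thm:4} and treat it as a function of the diagonal entries $m_1,\dots,m_d$ of $\mathbf M$. With $\mathbf\Sigma=\featnoise\mathbf I$, every matrix in that expression ($\bar{\mathbf Q}$, $\mathbf M^{-1}$, and the powers and products appearing in the bias term) is diagonal. The only non-diagonal dependence on $\mathbf M$ enters through the scalars $\tilde m_p(-1)$, $\tilde\mu$ and $\tilde\rho_k$, which are normalized traces. We therefore write
\[
G_T^{\mathrm{gen}}(\mathbf M)=\sum_{i=1}^d \beta_i^2\, g\big(m_i;\theta(\mathbf M)\big)+\frac{\labelnoise}{n}\sum_{i=1}^d h\big(m_i;\theta(\mathbf M)\big),
\]
with $\beta_i:=w^\star_i-(w_0)_i$ and $\theta=(\tilde m_p(-1),\tilde\mu,\tilde\rho_1,\dots,\tilde\rho_T)$. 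Here $g$ and $h$ are explicit scalar functions that are the same for every coordinate. The key structural fact is that the only coordinate-specific input is $\beta_i^2$.

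\textbf{Sufficiency.} Assume $\beta_i^2=\beta^2$ for all $i$, either deterministically or with probability tending to $1$. Then $G_T^{\mathrm{gen}}$ is invariant under every permutation of $(m_1,\dots,m_d)$, because $\theta$ is built from symmetric traces. If $\mathbf M^\star$ is a minimizer, so is $\pi(\mathbf M^\star)$ for every permutation $\pi$. Uniqueness of the minimizer then forces $\pi(\mathbf M^\star)=\mathbf M^\star$ for all $\pi$, so $\mathbf M^\star=\lambda^\star\mathbf I$. For random teachers we apply the same argument on the high-probability event where all $|\beta_i|$ coincide; the loss is a continuous function of the correlations, so the conclusion carries over in the limit.

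\textbf{Necessity.} Assume $\mathbf M^\star=\lambda^\star\mathbf I$ is the unique minimizer. We impose the first-order stationarity conditions $\partial G/\partial m_j=0$ for each $j$, evaluated at $\mathbf M=\lambda^\star\mathbf I$. Each derivative splits into a direct part and an indirect part. The direct part is
\[
\beta_j^2\,\partial_m g(\lambda^\star;\theta)+\frac{\labelnoise}{n}\,\partial_m h(\lambda^\star;\theta).
\]
The indirect part flows through $\theta$, via $\partial\theta/\partial m_j$. At the isotropic point this indirect contribution is the same for every $j$, since $\partial\theta/\partial m_j$ is obtained by differentiating traces at a symmetric point. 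Writing $C$ for this common indirect term, we get $\beta_j^2\,\partial_m g(\lambda^\star)=C'$ for a constant $C'$ that does not depend on $j$. Hence all $\beta_j^2$ are equal provided $\partial_m g(\lambda^\star;\theta)\neq 0$.

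\textbf{Main obstacle.} The hard step is to prove $\partial_m g(\lambda^\star;\theta)\neq 0$, i.e. that the bias weight of a single coordinate strictly depends on its own regularization. The attempt starts with $T=1$, where $g(m)=\bar Q(m)^2+\featnoise\tilde\mu\tilde\rho_1\bar Q(m)^2$ with $\bar Q(m)=(1+\featnoise\tilde m_p/m)^{-1}$. Since $\bar Q$ is strictly increasing in $m$ and the factor $1+\featnoise\tilde\mu\tilde\rho_1$ is positive, $\partial_m g>0$. For general $T$, $g$ is a positive combination of powers $\bar Q^{2k}$, so the same monotonicity argument should apply, once we check that $\tilde\mu,\tilde\rho_k>0$.

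A secondary subtlety concerns the random-teacher case. If the $\beta_i^2$ are not asymptotically equal, the per-coordinate equations $\beta_j^2\,\partial_m g=C'$ fail on a set of coordinates of non-vanishing proportion. Showing this requires passing from exact equalities to statements about the empirical distribution of $\beta_i^2$. The plan is to phrase stationarity in terms of that empirical distribution and use the uniform integrability assumption of \cref{sec:statistical_setting}.
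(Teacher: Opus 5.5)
Your proof is correct. Sufficiency matches the paper's: permutation invariance of the loss once all $\beta_i^2$ agree, plus uniqueness of the minimizer. Necessity has the same skeleton as the paper's (stationarity at $\lambda^\star\mathbf 1$, subtracted across coordinates), but you prove the key nondegeneracy by a different route.

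\textbf{The paper's route.} It writes the gradient as $\beta_i^2A_T(\lambda)+B_T(\lambda)$. It argues that the noise part has $B_T(\lambda^\star)\neq0$, using the isotropic closed form of \citet{karpel2026optimal}. It then deduces $A_T(\lambda^\star)\neq0$, since otherwise stationarity would force $B_T(\lambda^\star)=0$.

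\textbf{Your route.} You freeze the trace scalars $\theta$ and note that everything routed through $\theta$ adds the same amount to every $\partial G/\partial m_j$ at the isotropic point. You then show directly that the surviving coefficient of $\beta_j^2$ is $\partial_m g(\lambda^\star;\theta)>0$. This holds because $g=\bar Q^{2T}+\featnoise\tilde\mu\sum_k\tilde\rho_k\bar Q^{2k}$ and $\bar Q(m)=m/(m+\featnoise\tilde m_p(-1))$ is strictly increasing.

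\textbf{What each buys.} Your argument is self-contained and needs no information about the noise term. In particular, you never have to control the sign of the noise derivative along $\lambda\mathbf 1$. The paper infers that sign from $a$ being increasing in $\lambda$, although the isotropic noise term also contains the decreasing factor $c$. Your split also pins down exactly the right coefficient. The contributions of the other coordinates $\beta_i^2$, $i\neq j$, through $\theta$ are left out of the paper's gradient formula; you correctly absorb them into your constant $C'$. The paper's route, in return, reuses known isotropic formulas rather than examining the structure of $g$.

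\textbf{Positivity of $\tilde\mu$ and $\tilde\rho_k$.} This is the one step you left open, and it takes one line. The fixed-point equation reads $1/\tilde m_p(-1)=1+x$ with $x=\frac{\featnoise}{n}\mathrm{Tr}(\mathbf M^{-1}\bar{\mathbf Q})$. Every diagonal entry $u_i$ of $\featnoise\mathbf M^{-1}\bar{\mathbf Q}$ lies in $(0,1/\tilde m_p(-1))=(0,1+x)$, so $\frac{1}{n}\sum_i u_i^2<(1+x)x$. Hence $\tilde\mu^{-1}>(1+x)^2-(1+x)x=1+x>0$. Positivity of $\tilde\rho_k$ then follows by backward induction from $k=T$.

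\textbf{Random teachers.} The empirical-distribution step you plan here is unnecessary. Stationarity gives exact equality of all $\beta_j^2$ for every teacher realization on which the hypotheses hold. That realization-wise argument is all the statement needs, and it is how the paper treats that case.
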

\begin{proof}
We write $\bm m=(m_1,\dots,m_d)$ and $\mathbf M=\mathrm{diag}(\bm m)$, 

Recall \cref{thm:4}, under \cref{assum:1}, the expected generalization loss after $T$ tasks of the generalized ridge algorithm (\cref{schm:generalized}), satisfies,
\begin{align}
\label{eq:30}
&G^{\mathrm{gen}}_T\xrightarrow[]{P}\left(\mathbf{w}_{0}-\mathbf{w}^{\star}\right)^{\top}\left(\bar{\mathbf Q}^{2T}+\sum_{k=1}^{T}\bar{\mathbf Q}^{2k}\tilde{\mu}\tilde{\rho}_{k}\right)\left(\mathbf{w}_{0}-\mathbf{w}^{\star}\right)\\&+\frac{\labelnoise}{n}\mathrm{Tr}\Bigg[\sum_{k=1}^{T}\left[\mathbf{I}-\bar{\mathbf Q}\left[\mathbf{I}+\frac{1}{n}\mathrm{Tr}\left[\mathbf{M}^{-1}\bar{\mathbf Q}^{2}\right]\tilde{\mu}\mathbf{M}^{-1}\right]\right]\left[\bar{\mathbf Q}^{2\left(T-k\right)}+\sum_{i=k+1}^{T}\bar{\mathbf Q}^{2\left(i-k\right)}\tilde{\mu}\tilde{\rho}_{i}\right]\mathbf{M}^{-1}\bar{\mathbf Q}\Bigg]
\end{align}
where
$$
\bar{\mathbf Q}=\big(\mathbf I+\featnoise \tilde m_p(-1)\mathbf M^{-1}\big)^{-1},
$$
with $\tilde m_p(-1)$ being the unique positive solution of
$$
\frac{1}{\tilde m_p(-1)}=1+\frac{1}{n}\sum_{i=1}^d\frac{\featnoise  M_{i,i}^{-1}}{1+\tilde m_p(-1)\featnoise M_{i,i}^{-1}}.
$$
And,
\begin{align*}   
\tilde{\rho}_{k}=\frac{1}{n}\mathrm{Tr}\Big[\mathbf{M}^{-2}\Big(\bar{\mathbf{Q}}^{2(T-k+1)}+\tilde{\mu}\sum_{i=k+1}^{T}\tilde{\rho}_{i}\bar{\mathbf{Q}}^{2(i-k+1)}\Big)\Big],\\\tilde{\mu}=\left[\left[1+\frac{1}{n}\mathrm{Tr}\left(\mathbf{M}^{-1}\bar{\mathbf{Q}}\right)\right]^{2}-\frac{1}{n}\mathrm{Tr}\left(\mathbf{M}^{-1}\bar{\mathbf{Q}}\right)^{2}\right]^{-1}.
\end{align*}

With diagonal $\mathbf M$, the deterministic equivalent resolvent appearing in \cref{eq:30} reduces to
\begin{equation}
\label{eq:Qbar_diag_pf}
\bar{\mathbf{Q}}=\left(\mathbf{I}_{d}+\featnoise\tilde{m}_{p}(-1)\mathbf{M}^{-1}\right)^{-1}=\left(\mathbf{I}_{d}+\tilde{m}_{p}(-1)\featnoise\mathbf{M}^{-1}\right)^{-1}.
\end{equation}
Hence $\bar{\mathbf Q}$ is diagonal with entries
\[
q_i=q_i(m_i)=\frac{1}{1+\frac{\tilde m_{p}(-1)\featnoise}{m_i}}=\frac{m_i}{m_i+\tilde m_{p}(-1)\featnoise}.
\]
All matrices built from $\bar{\mathbf Q}$, $\mathbf M^{\pm1}$
(e.g.\ powers of $\bar{\mathbf Q}$, $\bar{\mathbf Q}\mathbf M^{-1}\bar{\mathbf Q}$,
and trace terms such as $\frac1n\trace[(\mathbf M^{-1}\bar{\mathbf Q})^k]$)
are therefore diagonal, and depend on $\mathbf m$ only through the coordinate-wise pairs
$(m_i,q_i(m_i))$ and through symmetric trace scalars.

Using the closed-form expression in \cref{eq:30} and the diagonal structure
from \cref{eq:Qbar_diag_pf}, the expected error decomposes as
\[
\mathcal R_T(\bm m)=\mathcal R_T^{\mathrm{teacher}}(\bm m)+\mathcal R_T^{\mathrm{noise}}(\bm m).
\]
The teacher-dependent term can be written as
\begin{equation}
\label{eq:teach_term_diag}
\mathcal R_T^{\mathrm{teacher}}(\bm m)=(\mathbf w^\star-\mathbf{w}_0)^\top \mathbf B_T(\bm m)(\mathbf w^\star-\mathbf{w}_0)=\sum_{i=1}^d b_{T,i}(\bm m)({w^\star_i}-{w_0}_i)^2,
\end{equation}
where $\mathbf B_T(\bm m)$ is diagonal and $b_{T,i}(\bm m)$ denotes its $i$'th diagonal entry.
The noise term $\mathcal R_T^{\mathrm{noise}}(\bm m)$ does not depend on $\mathbf w^\star$ and depends on $\bm m$ only through symmetric trace scalars.
Consequently, $\mathcal R_T^{\mathrm{noise}}(\bm m)$ is invariant under any permutation
of the coordinates of $\bm m$.

\medskip
Fix $\lambda>0$ and consider the isotropic point $\bm m=\lambda\mathbf 1$.
By permutation symmetry of $\mathcal R_T^{\mathrm{noise}}$, there exists a scalar $B_T(\lambda)$ such that
\[
\frac{\partial}{\partial m_{i}}\mathcal{R}_{T}^{\mathrm{noise}}(\bm{m})\Big|_{\bm{m}=\lambda\mathbf{1}}=B_{T}(\lambda)\qquad\text{for all }i.
\]
Likewise, at $\bm m=\lambda\mathbf 1$ all diagonal entries $b_{t,i}(\bm m)$ coincide, and their
coordinate derivatives are equal. We define
\begin{equation*}
A_{T}(\lambda):=\frac{\partial}{\partial m_{i}}b_{T,i}(\bm{m})\Big|_{\bm{m}=\lambda\mathbf{1}},\qquad B_{T}(\lambda):=\frac{\partial}{\partial m_{i}}\mathcal{R}_{T}^{\mathrm{noise}}(\bm{m})\Big|_{\bm{m}=\lambda\mathbf{1}}.
\end{equation*}
Differentiating $\mathcal R_T(\bm m)$ and evaluating at $\bm m=\lambda\mathbf 1$ yields
\begin{equation}
\label{eq:grad_iso_pf}
\frac{\partial}{\partial m_i}\mathcal R_T(\bm m)\Big|_{\bm m=\lambda\mathbf 1}=({w^\star_i}-{w_0}_i)^2A_T(\lambda)+B_T(\lambda),
\qquad i=1,\dots,d .
\end{equation}

Assume that $\mathbf M=\lambda^\star\mathbf I_d$ is a diagonal minimizer.
Then the first-order optimality conditions imply that
\cref{eq:grad_iso_pf} vanishes for every $i$, i.e.
\[
({w^\star_i}-{w_0}_i)^2A_T(\lambda^\star)+B_T(\lambda^\star)=0\qquad\text{for all }i .
\]
Subtracting the conditions for indices $i$ and $j$ gives
$$
\left(\left({w^\star_i}-{w_0}_i\right)^2-\left({w^\star_j}-{w_0}_j\right)^2\right)A_T(\lambda)=0.
$$
In the case of $\mathbf M=\lambda \mathbf I_d$. We have a compact closed solution given in~\citet[Theorem 3]{karpel2026optimal} where\footnote{While their expression is stated for the expectation, their analysis can be naturally extended to yield the corresponding convergence-in-probability statement under the same asymptotic regime.}, 
\begin{align*}
\left\Vert\mathbf{w}^{\mathrm{ridge}}_{T}-\mathbf{w}^{\star}\right\Vert ^{2}\xrightarrow[]{P}&\labelnoise{c}\frac{1-a^{T}}{1-a}+a^{T}\left\Vert\mathbf{w}_0-\mathbf{w}^{\star}\right\Vert ^{2}.
\end{align*}
and 
\[
\begin{aligned}
a&=\tfrac{1}{2}\biggprn{1-\ratio+\frac{\lambda\ratio(1+\ratio)+\featnoise(1-\ratio)^{2}}{\sqrt{\left(\lambda\ratio\right)^{2}+2\lambda\ratio\featnoise\prn{1+\alpha}+\featnoise^{2}\prn{1-\ratio}^{2}}}},\\c&=\tfrac{1}{2\featnoise}\biggprn{\frac{\featnoise(1+\ratio)+\lambda\ratio}{\sqrt{\left(\lambda\ratio\right)^{2}+2\lambda\ratio\featnoise(1+\ratio)+\featnoise^{2}(1-\ratio)^{2}}}-1}.
\end{aligned}
\]

By~\citet[Lemma D.3]{karpel2026optimal} $\frac{\dd}{\dd \lambda}a=\frac{2\tilde\lambda\ratio}{D^3\featnoise}$ where $D=\sqrt{\tilde \lambda^2+2\tilde\lambda(1+\ratio)+(1-\ratio)^2}$ and $\tilde \lambda=\frac{\lambda\ratio}{\featnoise}$ which is therefore strictly positive for all $\lambda>0$, hence $\frac{\dd}{\dd\lambda}\mathcal R_T^{\mathrm{noise}}(\lambda^\star\mathbf 1)\neq 0$.\footnote{In general, if $\left\Vert\mathbf w^\star-\mathbf{w}_0\right\Vert=0$, then the optimal value of the regularization parameter would force $\lambda^\star\to\infty$. Consequently, an optimal diagonal minimizer does not exist, contrary to the assumption of the Theorem.} By permutation symmetry, $\frac{\dd}{\dd\lambda}\mathcal R_T^{\mathrm{noise}}(\lambda\mathbf 1)=dB_T(\lambda)$,
therefore $B_T(\lambda^\star)\neq 0$.
Using the isotropic first-order conditions $\left({w^\star_i}-{w_0}_i\right)^2A_T(\lambda^\star)+B_T(\lambda^\star)=0$ for all $i$, we conclude that $A_T(\lambda^\star)\neq 0$ (otherwise stationarity would force $B_T(\lambda^\star)=0$), and therefore $\left({w^\star_i}-{w_0}_i\right)^2=\left({w^\star_j}-{w_0}_j\right)^2$ for all $i,j$ deterministically, or with probability tending to 1 (as $n,d \to\infty$) for random teacher.

Conversely, assume $\left({w^\star_1}-{w_0}_1\right)^2=\ldots=\left({w^\star_d}-{w_0}_d\right)^2=\tau^2$  deterministically, or with probability tending to 1 (as $n,d \to\infty$) for random teacher.
Then \cref{eq:teach_term_diag} reduces to
$\tau^2\sum_{i=1}^d b_{T,i}(\bm m)$, which is permutation invariant.
Together with the invariance of $\mathcal R_T^{\mathrm{noise}}(\bm m)$,
this shows that $\mathcal R_T(\bm m)$ is a symmetric function of $\bm m$.

Let $\bm m^\star$ be a diagonal minimizer.
For any permutation $\pi$ of $\{1,\dots,d\}$,
symmetry implies $\mathcal R_T(\bm m^\star)=\mathcal R_T(\pi\bm m^\star)$.
If the diagonal minimizer is unique, it must be fixed by all permutations,
which is only possible if $m_1^\star=\ldots=m_d^\star$.
Thus $\mathbf M^\star=\lambda^\star\mathbf I_d$ for some $\lambda^\star>0$.

\end{proof}

\newpage
\section{Proofs for \cref{sec:3}}
\subsection{Proof of \cref{thm:sin_closed}: Closed-Form Expression for Generalization Error Under the Sinusoidal Model}
\label{pr:sin}
\begin{recall}[\cref{thm:sin_closed}]
Under \cref{assump:sin_teachers} we have,
$$
G_{\infty}\xrightarrow[]{P}C_{0}+\sqrt{C_{1}^{2}+C_{2}^{2}}\sup_{T \ge 1}\cos\left(4\pi f T + \varphi\right)
$$
where,
\begin{equation}
\label{eq:40}    
\begin{aligned}
&\varphi=2\pi f-\arctan2\left(C_{2},C_{1}\right)
\\&C_{0}=v_{z}c'\frac{1}{1-a'}+2\left\Vert \mathbf B\right\Vert ^{2}\sin^{2}\left(\pi f\right)\left(\frac{1}{1-a'}-1\right)\left(2c_{2}-1\right),\\&C_{1}=4\left\Vert \mathbf B\right\Vert ^{2}\sin^{2}\left(\pi f\right)\left(\frac{1}{2}\left(c_{1}-1\right)+\left(c_{2}-1\right)\left(c_{1}-1\right)-c_{4}c_{3}\right),\\&C_{2}=4\left\Vert \mathbf B\right\Vert ^{2}\sin^{2}\left(\pi f\right)\left(-\frac{1}{2}c_{3}+c_{4}\left(c_{1}-1\right)+c_{2}c_{3}\right),\\&c_1=\frac{1-a'\cos\left(4\pi f\right)}{1-2a'\cos\left(4\pi f\right)+a'^{2}},\\&
c_2=\frac{1-b'\cos\left(2\pi f\right)}{1-2b'\cos\left(2\pi f\right)+b'^{2}},\\&
c_3=\frac{a'\sin\left(4\pi f\right)}{1-2a'\cos\left(4\pi f\right)+a'^{2}},\\&
c_4=\frac{b'\sin\left(2\pi f\right)}{1-2b'\cos\left(2\pi f\right)+b'^{2}},\\&
a'=1+\sum_{k=0}^{2N}{\binom{2N}{k}}\left(-\gamma\right)^{k}\sum_{r=0}^{k-1}\frac{\featnoise^{k}}{r+1}{\binom{k}{r}}{\binom{k-1}{r}}\frac{1}{\ratio^r} ,\\&b'=1+\sum_{k=0}^{N}{\binom{N}{k}}\left(-\gamma\right)^{k}\sum_{r=0}^{k-1}\frac{\featnoise^{k}}{r+1}{\binom{k}{r}}{\binom{k-1}{r}}\frac{1}{\ratio^r},\\&c'=\sum_{s=0}^{2N-2}m_{s}\sum_{l=0}^{s}{\binom{s}{l}}\left(-\gamma\right)^{l+2}\sum_{r=0}^{l}\frac{\featnoise^{l+1}}{r+1}{\binom{l+1}{r}}{\binom{l}{r}}\frac{1}{\ratio^{r+1}}.
\end{aligned}
\end{equation}
\end{recall}

\begin{proof}

We first show that, for all $i\in [T]$,

\begin{align*}
&\limsup_{T\to\infty}\left\Vert \mathbf{w}_{T}-\mathbf{w}_{i}^{\star}\right\Vert ^{2}\\&\xrightarrow{P}\limsup_{T\to\infty}\Bigg\{ v_{z}c\frac{1}{1-a}+2\left\Vert \mathbf{B}\right\Vert ^{2}\sin^{2}\left(\pi f\right)\\&\hspace{8em}\Bigg(\left(\frac{1}{1-a}-1\right)+\cos\left(4\pi fT+2\pi f\right)\left(c_{1}-1\right)+\sin\left(4\pi fT+2\pi f\right)c_{3}\Bigg)\\&\hspace{4em}+4\left\Vert \mathbf{B}\right\Vert ^{2}\sin^{2}\left(\pi f\right)\Big\{\left(\frac{1}{1-a}-1\right)\left(c_{2}-1\right)+\Big[\left(c_{2}-1\right)\cos\left(4\pi fT+2\pi f\right)\\&\hspace{18em}+\frac{b\sin\left(2\pi f\right)}{1-2b\cos\left(2\pi f\right)+b^{2}}\sin\left(4\pi fT+2\pi f\right)\Big]\left(c_{1}-1\right)\Big\}\\&\quad+\left[-c_{4}\cos\left(4\pi fT+2\pi f\right)+\left(c_{2}-1\right)\sin\left(4\pi fT+2\pi f\right)\right]c_{3}\end{align*}\begin{align*}
&-8\left\Vert \mathbf{B}\right\Vert ^{2}\cos\left(\pi f\left(T+i\right)\right)\sin\left(\pi f\left(T-i\right)\right)\sin\left(\pi f\right)\left(\cos\left(2\pi fT+\pi f\right)\left(c_{2}-1\right)+\sin\left(2\pi fT+\pi f\right)c_{4}\right)\\&\hspace{12em}+4\left\Vert \mathbf{B}\right\Vert ^{2}\cos^{2}\left(\pi f\left(T+i\right)\right)\sin^{2}\left(\pi f\left(T-i\right)\right)\Bigg\}
\end{align*}

Recall \cref{thm:closedloss},
\[
\begin{aligned}
G_{T}^{\mathrm{esgd}}&\xrightarrow{P}\labelnoise c'\frac{1-a'^{T}}{1-a'}+\frac{1}{T}\sum_{i=1}^{T}\left\Vert \mathbf{w}_{T}^{\star}-\mathbf{w}_{i}^{\star}\right\Vert ^{2}+2\big(\mathbf{w}_{T}^{\star}-\bar{\mathbf{w}}^{\star}\big)^{\top}\sum_{i=1}^{T}b'^{T-i+1}\big(\mathbf{w}_{i-1}^{\star}-\mathbf{w}_{i}^{\star}\big)\\&\quad+\sum_{i=1}^{T}\sum_{i'=1}^{T}a'^{T-\max(i,i')+1}b'^{|i-i'|}\big(\mathbf{w}_{i-1}^{\star}-\mathbf{w}_{i}^{\star}\big)^{\top}\big(\mathbf{w}_{i'-1}^{\star}-\mathbf{w}_{i'}^{\star}\big),
\end{aligned}
\]
where we define $\mathbf w^\star_0 \triangleq \mathbf{w}_0$ (by abuse of notation), the average teacher $\bar{\mathbf{w}}^{\star} \triangleq \frac{1}{T}\sum_{i=1}^T \mathbf w^\star_i$, and

\begin{align*}
&a'=1+\sum_{k=0}^{2N}{\binom{2N}{k}}\left(-\gamma\right)^{k}\sum_{r=0}^{k-1}\frac{\featnoise^{k}}{r+1}{\binom{k}{r}}{\binom{k-1}{r}}\frac{1}{\ratio^r} ,\\&b'=1+\sum_{k=0}^{N}{\binom{N}{k}}\left(-\gamma\right)^{k}\sum_{r=0}^{k-1}\frac{\featnoise^{k}}{r+1}{\binom{k}{r}}{\binom{k-1}{r}}\frac{1}{\ratio^r},\\&c'=\sum_{s=0}^{2N-2}m_{s}\sum_{l=0}^{s}{\binom{s}{l}}\left(-\gamma\right)^{l+2}\sum_{r=0}^{l}\frac{\featnoise^{l+1}}{r+1}{\binom{l+1}{r}}{\binom{l}{r}}\frac{1}{\ratio^{r+1}}.
\end{align*}

In particular, it is shown in the proof of the Theorem that for all $i\in [T]$ (see \cref{eq:mainresult1}), 
\begin{equation}
\label{eq:34} 
\begin{aligned}
\left\Vert \mathbf{w}^{\mathrm{esgd}}_{T}-\mathbf{w}_{i}^{\star}\right\Vert ^{2}&\xrightarrow[]{P}a'^{T}\left\Vert \mathbf{w}_{0}-\mathbf{w}_{1}^{\star}\right\Vert ^{2}+v_{z}c'\frac{1-a'^{T}}{1-a'}\\&\quad+2\underbrace{\sum_{k=2}^{T}a'^{T-k+1}b'^{k-1}\left(\mathbf{w}_{k-1}^{\star}-\mathbf{w}_{k}^{\star}\right)^{\top}\left(\mathbf{w}_{0}-\mathbf{w}_{1}^{\star}\right)}_{\text{term 1}}+2b'^{T}\left(\mathbf{w}_{T}^{\star}-\mathbf{w}_{i}^{\star}\right)^{\top}\left(\mathbf{w}_{0}-\mathbf{w}_{1}^{\star}\right)\\&\quad+\underbrace{\sum_{k=2}^{T}\sum_{k'=2}^{T}a'^{T-(k \lor k')+1}b'^{\left|k-k'\right|}\left(\mathbf{w}_{k-1}^{\star}-\mathbf{w}_{k}^{\star}\right)^{\top}\left(\mathbf{w}_{k'-1}^{\star}-\mathbf{w}_{k'}^{\star}\right)}_{\text{term 2}}\\&\quad+2\underbrace{\sum_{k=2}^{T}b'^{T-k+1}\left(\mathbf{w}_{T}^{\star}-\mathbf{w}_{i}^{\star}\right)^{\top}\left(\mathbf{w}_{k-1}^{\star}-\mathbf{w}_{k}^{\star}\right)}_{\text{term 3}}+\left\Vert \mathbf{w}_{T}^{\star}-\mathbf{w}_{i}^{\star}\right\Vert ^{2}
\end{aligned}
\end{equation}
Let $\mathbf{w}_i^\star=\mathbf A+\sin\left(2\pi fi\right) \mathbf B$, for all $k \in [2, T]$ we have 
\begin{align*}
\mathbf{w}_{k-1}^{\star}-\mathbf{w}_{k}^{\star}&=\mathbf A+\sin\left(2\pi f\left(k-1\right)\right)\cdot \mathbf B-\left(\mathbf A+\sin\left(2\pi fk\right)\cdot \mathbf B\right)\\&=\mathbf B\left(\sin\left(2\pi f\left(k-1\right)\right)-\sin\left(2\pi fk\right)\right)\\&=-2\mathbf B\sin\left(\pi f\right)\cos\left(\pi f\left(2k-1\right)\right)
\end{align*}
and for all $i\in [1,T]$,
\begin{align*}
\mathbf{w}_{T}^{\star}-\mathbf{w}_{i}^{\star}&=\mathbf A+\sin\left(2\pi fT\right)\cdot \mathbf B-\left(\mathbf A+\sin\left(2\pi fi\right)\cdot \mathbf B\right)\\&=\mathbf B\left(\sin\left(2\pi fT\right)-\sin\left(2\pi fi\right)\right)\\&=2\mathbf B\cos\left(\pi f\left(T+i\right)\right)\sin\left(\pi f\left(T-i\right)\right)
\end{align*}

We first evaluate the following geometric sums,
\begin{equation}
\label{eq:31}
\begin{aligned}
&\sum_{n=0}^{\infty}q^{n}\cos\left(nx\right)=\frac{1}{2}\sum_{n=0}^{\infty}\left[\left(qe^{ix}\right)^{n}+\left(qe^{-ix}\right)^{n}\right]=\frac{1}{2}\left[\frac{1}{1-qe^{ix}}+\frac{1}{1-qe^{-ix}}\right]\\&=\frac{1}{2}\left[\frac{1-qe^{-ix}}{\left(1-qe^{ix}\right)\left(1-qe^{-ix}\right)}+\frac{1-qe^{ix}}{\left(1-qe^{-ix}\right)\left(1-qe^{ix}\right)}\right]\\&=\frac{1}{2}\left[\frac{2-q\left(e^{-ix}+e^{ix}\right)}{1-qe^{ix}-qe^{-ix}+q^{2}}\right]=\frac{1-q\cos\left(x\right)}{1-2q\cos\left(x\right)+q^{2}}
\end{aligned}
\end{equation}

\begin{equation}
\label{eq:32}
\begin{aligned}
&\sum_{n=0}^{\infty}q^{n}\sin\left(nx\right)=\frac{1}{2i}\sum_{n=0}^{\infty}\left[\left(qe^{ix}\right)^{n}-\left(qe^{-ix}\right)^{n}\right]=\frac{1}{2i}\left[\frac{1}{1-qe^{ix}}-\frac{1}{1-qe^{-ix}}\right]\\&=\frac{1}{2i}\left[\frac{1-qe^{-ix}}{\left(1-qe^{ix}\right)\left(1-qe^{-ix}\right)}-\frac{1-qe^{ix}}{\left(1-qe^{-ix}\right)\left(1-qe^{ix}\right)}\right]\\&=\frac{1}{2i}\left[\frac{1-qe^{-ix}}{1-qe^{ix}-qe^{-ix}+q^{2}}-\frac{1-qe^{ix}}{1-qe^{ix}-qe^{-ix}+q^{2}}\right]\\&=\frac{1}{2i}\frac{q\left(e^{ix}-e^{-ix}\right)}{1-2q\cos\left(x\right)+q^{2}}=\frac{q\sin\left(x\right)}{1-2q\cos\left(x\right)+q^{2}}
\end{aligned}
\end{equation}
Then, we proceed with evaluating term 1 in \cref{eq:34}
\begin{align*}
&\left|\limsup_{T\to\infty}\sum_{k=2}^{T}a'^{T-k+1}b'^{k-1}\cos\left(\pi f(2k-1)\right)\right|
\le
\limsup_{T\to\infty}\sum_{k=2}^{T}a'^{T-k+1}b'^{k-1}=
\limsup_{T\to\infty}\sum_{k=2}^{T}a'^{T+1}\left(\frac{b'}{a'}\right)^{k-1}\\
&=
\limsup_{T\to\infty} a'^{T+1}
\sum_{j=1}^{T-1}\left(\frac{b'}{a'}\right)^j =
\limsup_{T\to\infty}
a'^{T+1}
\frac{\frac{b'}{a'}\left(\left(\frac{b'}{a'}\right)^{T-1}-1\right)}{\frac{b'}{a'}-1}\stackrel{\explain{1}}{=}
\limsup_{T\to\infty}
\frac{a'b'^{T}-a'^{T} b'}{b'-a'}=0.
\end{align*}
Where [1] follows as $a',b'<1$ for $\gamma\in \big(0,2\featnoise^{-1}\left(1+{\alpha}^{-1/2}\right)^{-2}\big)$. That implies,
$$
\limsup_{T\to\infty}\sum_{k=2}^{T}a'^{T-k+1}b'^{k-1}\cos\left(\pi f(2k-1)\right)=0.
$$
And term 2 in \cref{eq:34} becomes,
\begin{align*}
&\limsup_{T\to\infty}4\left\Vert \mathbf{B}\right\Vert ^{2}\sin^{2}\left(\pi f\right)\sum_{k=2}^{T}\sum_{k'=2}^{T}a'^{T-(k\lor k')+1}b'^{|k-k'|}\cos\left(\pi f\left(2k-1\right)\right)\cos\left(\pi f\left(2k'-1\right)\right)\\=&\limsup_{T\to\infty}4\left\Vert \mathbf{B}\right\Vert ^{2}\sin^{2}\left(\pi f\right)\sum_{k=0}^{T-2}a'^{T-k-1}\cos^{2}\left(\pi f\left(2k+3\right)\right)\\&\quad+8\left\Vert \mathbf{B}\right\Vert ^{2}\sin^{2}\left(\pi f\right)\sum_{p=0}^{T-3}b'^{p+1}\sum_{k=1}^{T-p-2}a'^{k}\cos\left(\pi f\left(2\left(T-k-p\right)-1\right)\right)\cos\left(\pi f\left(2\left(T-k\right)+1\right)\right)\\=&\limsup_{T\to\infty}4\left\Vert \mathbf{B}\right\Vert ^{2}\sin^{2}\left(\pi f\right)\sum_{k=1}^{T-1}a'^{k}\left(\frac{1+\cos\left(2\left(\pi f\left(2\left(T-k\right)+1\right)\right)\right)}{2}\right)\\&\quad+8\left\Vert \mathbf{B}\right\Vert ^{2}\sin^{2}\left(\pi f\right)\sum_{p=0}^{T-3}b'^{p+1}\sum_{r=1}^{T-p-1}a'^{r}\left(\frac{1}{2}\cos\left(-2\pi f\left(1+p\right)\right)+\frac{1}{2}\cos\left(4\pi f\left(T-r\right)-2p\pi f\right)\right)\\=&\limsup_{T\to\infty}4\left\Vert \mathbf{B}\right\Vert^{2}\sin^{2}\left(\pi f\right)\sum_{k=1}^{T-1}a'^{k}\left(\frac{1+\cos\left(-4\pi fk+4\pi fT+2\pi f\right)}{2}\right)\\&\quad+4\left\Vert \mathbf{B}\right\Vert ^{2}\sin^{2}\left(\pi f\right)\sum_{p=0}^{T-2}b'^{p+1}\sum_{r=1}^{T-p-1}a'^{r}\left(\cos\left(-2\pi f\left(1+p\right)\right)+\cos\left(-4\pi fr+4\pi fT-2p\pi f\right)\right)\end{align*}
\begin{align*}&=\limsup_{T\to\infty}2\left\Vert \mathbf{B}\right\Vert ^{2}\sin^{2}\left(\pi f\right)\\&\qquad\left[\sum_{k=1}^{T-1}a'^{k}+\sum_{k=1}^{T-1}a'^{k}\left(\cos\left(-4\pi fk\right)\cos\left(4\pi fT+2\pi f\right)-\sin\left(-4\pi fk\right)\sin\left(4\pi fT+2\pi f\right)\right)\right]\\&\quad+4\left\Vert \mathbf{B}\right\Vert ^{2}\sin^{2}\left(\pi f\right)\Big[\sum_{p=0}^{T-2}b'^{p+1}\sum_{r=1}^{T-p-1}a'^{r}\cos\left(-2\pi f\left(1+p\right)\right)\\&\hspace{10em}+\cos\left(4\pi fr\right)\cos\left(4\pi fT-2p\pi f\right)+\sin\left(4\pi fr\right)\sin\left(4\pi fT-2p\pi f\right)\Big]
\\
&=\limsup_{T\to\infty}2\left\Vert \mathbf{B}\right\Vert ^{2}\sin^{2}\left(\pi f\right)\\&\qquad\Bigg(\left(\frac{1}{1-a'}-1\right)+\cos\left(4\pi fT+2\pi f\right)\left(c_{1}-1\right)+\sin\left(4\pi fT+2\pi f\right)c_{3}\Bigg)\\&+4\left\Vert \mathbf{B}\right\Vert ^{2}\sin^{2}\left(\pi f\right)\left[\sum_{p=0}^{T-2}b'^{p+1}\Bigg[\cos\left(-2\pi f\left(1+p\right)\right)\left(\frac{1}{1-a'}-1\right)\right.\\&\hspace{10em}+\cos\left(4\pi fT-2p\pi f\right)\left(c_{1}-1\right)+\sin\left(4\pi fT-2p\pi f\right)c_{3}\Bigg]\\=&\limsup_{T\to\infty}2\left\Vert \mathbf{B}\right\Vert ^{2}\sin^{2}\left(\pi f\right)\left(\left(\frac{1}{1-a'}-1\right)+\cos\left(4\pi fT+2\pi f\right)\left(c_{1}-1\right)+\sin\left(4\pi fT+2\pi f\right)c_{3}\right)\\&+4\left\Vert \mathbf{B}\right\Vert ^{2}\sin^{2}\left(\pi f\right)\\&\quad\Bigg[\sum_{p=0}^{T-2}b'^{p+1}\Big[\left(\frac{1}{1-a'}-1\right)\cos\left(-2\pi f\left(p+1\right)\right)+\left(\cos\left(-2\left(p+1\right)\pi f\right)\cos\left(4\pi fT+2\pi f\right)\right.\\&\qquad+\sin\left(2\left(p+1\right)\pi f\right)\sin\left(4\pi fT+2\pi f\right)\left(c_{1}-1\right)+\sin\left(4\pi fT-2\left(p+1\right)\pi f+2\pi f\right)c_{3}\Big]\Bigg]\\=&\limsup_{T\to\infty}2\left\Vert \mathbf{B}\right\Vert ^{2}\sin^{2}\left(\pi f\right)\left(\left(\frac{1}{1-a'}-1\right)+\cos\left(4\pi fT+2\pi f\right)\left(c_{1}-1\right)+\sin\left(4\pi fT+2\pi f\right)c_{3}\right)\\&+4\left\Vert \mathbf{B}\right\Vert ^{2}\sin^{2}\left(\pi f\right)\\&\qquad\biggr[\left(\frac{1}{1-a'}-1\right)\left(c_{2}-1\right)+\left(\left(c_{2}-1\right)\cos\left(4\pi fT+2\pi f\right)+c_{4}\sin\left(4\pi fT+2\pi f\right)\right)\left(c_{1}-1\right)\\&\hspace{2em}+\sum_{p=0}^{t-2}b'^{p+1}\left[\left(-\sin\left(2\left(p+1\right)\pi f\right)\cos\left(4\pi fT+2\pi f\right)+\cos\left(2\left(p+1\right)\pi f\right)\sin\left(4\pi fT+2\pi f\right)\right)c_{3}\right]\biggl]\\&=\limsup_{t\to\infty}2\left\Vert \mathbf{B}\right\Vert ^{2}\sin^{2}\left(\pi f\right)\left(\left(\frac{1}{1-a'}-1\right)+\cos\left(4\pi fT+2\pi f\right)\left(c_{1}-1\right)+\sin\left(4\pi fT+2\pi f\right)c_{3}\right)\\&+4\left\Vert \mathbf{B}\right\Vert ^{2}\sin^{2}\left(\pi f\right)\\&\qquad\Bigg[\left(\frac{1}{1-a'}-1\right)\left(c_{2}-1\right)+\left[\left(c_{2}-1\right)\cos\left(4\pi fT+2\pi f\right)+c_{4}\sin\left(4\pi fT+2\pi f\right)\right]\left(c_{1}-1\right)\\&\hspace{14em}+\left[-c_{4}\cos\left(4\pi fT+2\pi f\right)+\left(c_{2}-1\right)\sin\left(4\pi fT+2\pi f\right)\right]c_{3}\Bigg].
\end{align*}
And term 3 in \cref{eq:34} is,
\begin{align*}
&\limsup_{T\to\infty}\sum_{k=2}^{T}b'^{T-k+1}\cos\left(\pi f\left(2k-1\right)\right)=\limsup_{T\to\infty}\sum_{j=1}^{T-1}b'^{j}\cos\left(\pi f\left(2\left(T-j+1\right)-1\right)\right)\\&=\limsup_{T\to\infty}\sum_{j=1}^{T-1}b'^{j}\cos\left(-2\pi fj+2\pi fT+\pi f\right)\\&=\limsup_{T\to\infty}\sum_{j=1}^{T-1}b'^{j}\cos\left(-2\pi fj\right)\cos\left(2\pi fT+\pi f\right)-\sum_{j=1}^{T-1}b^{j}\sin\left(-2\pi fj\right)\sin\left(2\pi fT+\pi f\right)\\&=\limsup_{T\to\infty}\cos\left(2\pi fT+\pi f\right)\left(\frac{1-b'\cos\left(-2\pi f\right)}{1-2b'\cos\left(-2\pi f\right)+b'^{2}}-1\right)\\&\hspace{16em}-\sin\left(2\pi fT+\pi f\right)\left(\frac{b'\sin\left(-2\pi f\right)}{1-2b\cos\left(-2\pi f\right)+b'^{2}}\right)
\end{align*}
Finally, the expression to $\limsup_{T\to\infty}\left\Vert \mathbf{w}_{T}-\mathbf{w}_{i}^{\star}\right\Vert^{2}$ follows by adding the expressions, and noting that,
\begin{align*}
&\limsup_{T\to\infty}a'^{T}\left\Vert \mathbf{w}_{0}-\mathbf{w}_{1}^{\star}\right\Vert ^{2}=0,\\&
\limsup_{T\to\infty}v_{z}c'\frac{-a'^{T}}{1-a'}=0,\\&\limsup_{T\to\infty}2b'^{T}\left(\mathbf{w}_{T}^{\star}-\mathbf{w}_{i}^{\star}\right)^{\top}\left(\mathbf{w}_{0}-\mathbf{w}_{1}^{\star}\right)=0,\end{align*}\begin{align*}&
\left\Vert \mathbf{w}_{T}^{\star}-\mathbf{w}_{i}^{\star}\right\Vert ^{2}=4\left\Vert \mathbf B\right\Vert^2\cos^2(\pi f(T+i))\sin^2(\pi f(T-i)).
\end{align*}
Note that, $\frac{1}{|A_T|}\sum_{i\in A_T}\left\Vert\mathbf{w}_T-\mathbf{w}_i^\star \right\Vert^2=\left\Vert\mathbf{w}_T-\mathbf{w}_i^\star \right\Vert^2$ since $
\mathbf{w}_i^\star=\mathbf{w}_j^\star$ for all $i,j\in A_T$.

Therefore, evaluating at the last predictor, we get,
\begin{align*}
\limsup_{T\to\infty}\frac{1}{|A_T|}\sum_{i\in A_T}\left\Vert\mathbf{w}_T-\mathbf{w}_i^\star \right\Vert^2=\limsup_{T\to\infty}C_{0}+C_{1}\cos\left(4\pi fT+2\pi f\right)+C_{2}\sin\left(4\pi fT+2\pi f\right)\\=\limsup_{T\to\infty}C_{0}+\sqrt{C_{1}^{2}+C_{2}^{2}}\cos\left(4\pi fT+2\pi f-\arctan2\left(C_{2},C_{1}\right)\right),
\end{align*}
where 
\begin{align*}
C_{0}&=v_{z}c'\frac{1}{1-a'}+2\left\Vert \mathbf B\right\Vert ^{2}\sin^{2}\left(\pi f\right)\left(\frac{1}{1-a'}-1\right)\left(2c_{2}-1\right),\\C_{1}&=4\left\Vert \mathbf B\right\Vert ^{2}\sin^{2}\left(\pi f\right)\left(\frac{1}{2}\left(c_{1}-1\right)+\left(c_{2}-1\right)\left(c_{1}-1\right)-c_{4}c_{3}\right),\\C_{2}&=4\left\Vert \mathbf B\right\Vert ^{2}\sin^{2}\left(\pi f\right)\left(-\frac{1}{2}c_{3}+c_{4}\left(c_{1}-1\right)+c_{2}c_{3}\right).
\end{align*}
\end{proof}

\end{document}